\title{Sinkhorn Natural Gradient for Generative Models}
\author{%
	Zebang Shen$^*$ \quad Zhenfu Wang$^\dagger$ \quad Alejandro Ribeiro$^*$  \quad Hamed Hassani$^*$\\
	$^*$Department of Electrical and Systems Engineering \quad $^\dagger$Department of Mathematics\\
	University of Pennsylvania\\
	\texttt{\{zebang@seas,zwang423@math,aribeiro@seas,hassani@seas\}.upenn.edu}
}
\newcommand{\OTgamma}{{\mathrm{OT}_\gamma}}
\newcommand{\OT}{{\mathrm{OT}}}
\DeclareMathAlphabet\mathbfcal{OMS}{cmsy}{b}{n}
\title{Sinkhorn Natural Gradient for Generative Models}
\begin{document}
\maketitle

\begin{abstract}
	We consider the problem of minimizing a functional over a parametric family of probability measures, where the parameterization is characterized via a push-forward structure. 
	An important application of this problem is in training generative adversarial networks.  
	In this regard, we propose a novel Sinkhorn Natural Gradient (SiNG) algorithm which acts as a steepest descent method on the probability space endowed with the Sinkhorn divergence.
	We show that the Sinkhorn information matrix (SIM), a key component of SiNG, has an explicit expression and can be evaluated accurately in complexity that scales logarithmically with respect to the desired accuracy. This is in sharp contrast to  existing natural gradient methods that can only be carried out approximately.
	Moreover, in practical applications when only Monte-Carlo type integration is available, we design an empirical estimator for SIM and provide the stability analysis.
	In our experiments, we quantitatively compare SiNG with state-of-the-art SGD-type solvers on generative tasks to demonstrate its efficiency and efficacy of our method.
\end{abstract}

\section{Introduction} \label{section_introuction}
Consider the minimization of a functional $\FM$ over a parameterized family probability measures 
$\{\alpha_\theta\}$:
\begin{equation}\label{eqn_main}
	\min_{\theta\in\Theta} \left\{ F(\theta) \defi \FM(\alpha_\theta) \right\},
\end{equation}
where $\Theta\subseteq\RBB^d$ is the feasible domain of the parameter $\theta$.  
We assume that the measures $\alpha_\theta$ are defined over  a common ground set $\XM\subseteq\RBB^q$ with the following structure:  $\alpha_\theta = {T_\theta}_\sharp\mu$, where $\mu$ is a fixed and known measure and $T_\theta$ is a push-forward mapping. 
More specifically, $\mu$ is a simple measure on a latent space $\ZM\subseteq\RBB^{\bar q}$, such as the standard Gaussian measure $\mu = \NM(\zeroB_{\bar{q}}, \IB_{\bar{q}})$, and the parameterized map $T_\theta:\ZM\rightarrow\XM$ transforms the measure $\mu$ to $\alpha_\theta$. This type of push-forward  parameterization is commonly used in deep generative models, where $T_\theta$  represents a neural network parametrized by weights $\theta$ \citep{goodfellow2014generative,salimans2018improving,genevay2018learning}.
Consequently, methods to efficiently and accurately solve problem~\eqref{eqn_main} are of great importance in machine learning.

The de facto solvers for problem \eqref{eqn_main} are generic nonconvex optimizers such as Stochastic Gradient Descent (SGD) and its variants, Adam \citep{kingma2014adam}, Amsgrad \citep{reddi2019convergence}, RMSProp \citep{hinton2012neural}, etc.
These optimization algorithms directly work on the parameter space and are agnostic to the fact that $\alpha_\theta$'s are probability measures.  Consequently, SGD type solvers suffer from the complex optimization landscape induced from the neural-network mappings $T_\theta$.


An alternative to SGD type methods is the natural gradient method, which is originally motivated from Information Geometry \citep{amari1998natural,amari1987differential}. Instead of simply using the Euclidean structure of the parameter space $\Theta$ in the usual SGD, the natural gradient method endows the parameter space with a ``natural" metric structure by pulling back a known metric on the probability space and then searches the steepest descent direction of $F(\theta)$ in the ``curved" neighborhood of $\theta$. In particular,  the natural gradient update is invariant to reparametrization. 
This allows natural gradient to avoid the undesirable saddle point or local minima that are artificially created by the highly nonlinear maps $T_\theta$.
The classical Fisher-Rao Natural Gradient (FNG) \citep{amari1998natural} as well as its many variants \citep{martens2015optimizing,thomas2016energetic,song2018accelerating} endows the probability space with the KL divergence and admits update direction in closed form.
However, the update rules of these methods all require the evaluation of the score function of the variable measure.
Leaving aside its existence, this quantity is in general difficult to compute for push-forward measures, which limits the application of FNG type methods in the generative models.
Recently, \citet{li2018natural} propose to replace the KL divergence in FNG by the Wasserstein distance and propose the Wasserstein Natural Gradient (WNG) algorithm.
WNG shares the merit of reparameterization invariance as  FNG while avoiding the requirement of the score function.
However, the Wasserstein information matrix (WIM) is very difficult to compute as it does not attain a closed form expression when the dimension $d$ of parameters is greater than 1, rendering WNG impractical.



Following the line of natural gradient, in this paper, we propose Sinkhorn Natural Gradient (SiNG), an algorithm that performs the steepest descent of the objective functional $\FM$ on the probability space with the Sinkhorn divergence as the underlying metric.
Unlike FNG, SiNG requires only to sample from the variable measure $\alpha_\theta$.
Moreover, the Sinkhorn information matrix (SIM), a key component in SiNG, can be computed in logarithmic time in contrast to WIM in WNG.
Concretely, we list our contributions as follows:

\begin{enumerate}
	\item We derive the Sinkhorn Natural Gradient (SiNG) update rule as the exact direction that minimizes the objective functional $\FM$ within the Sinkhorn ball of radius $\epsilon$ centered at the current measure. 
	In the asymptotic case $\epsilon\rightarrow 0$, we show that the SiNG direction only depends on the Hessian of the Sinkhorn divergence and the gradient of the function $F$, while the effect of the Hessian of $F$ becomes negligible.
	Further, we prove that SiNG is invariant to reparameterization in its continuous-time limit (i.e. using the infinitesimal step size).
	\item We explicitly derive the expression of the Sinkhorn information matrix (SIM), i.e. the Hessian of the Sinkhorn divergence with respect to the parameter $\theta$.
	We then show the SIM can be computed using logarithmic (w.r.t. the target accuracy) function operations and integrals with respect to $\alpha_\theta$.
	\item When only Monte-Carlo integration w.r.t. $\alpha_\theta$ is available, we propose to approximate SIM with its empirical counterpart (eSIM), i.e. the Hessian of the empirical Sinkhorn divergence. Further, we prove stability of eSIM. Our analysis relies on the fact that the Fr\'echet derivative of Sinkhorn potential with respect to the parameter $\theta$ is continuous with respect to the underlying measure $\mu$. Such result can be of general interest.
\end{enumerate}



In our experiments, we pretrain the discriminators for the celebA and cifar10 datasets.
Fixing the discriminator, we compare SiNG with state-of-the-art SGD-type solvers in terms of the generator loss.
The result shows the remarkable superiority of SiNG in both efficacy and efficiency.

\vspace{-.1cm}
\noindent{\bf Notation:}
Let $\XM\subseteq\RBB^q$ be a compact ground set.
We use $\MM_1^+(\XM)$ to denote the space of probability measures on $\XM$ and use $\CM(\XM)$ to denote the family of continuous functions mapping from $\XM$ to $\RBB$.
For a function $f\in\CM(\XM)$, we denote its $L^\infty$ norm by $\|f\|_\infty \defi \max_{x\in\XM} |f(x)|$ and its gradient by $\nabla f$.\\
For a functional on general vector spaces, the Fr\'echet derivative is formally defined as follows.
Let V and W be normed vector spaces, and $U\subseteq V$ be an open subset of $V$.
A function $\FM:U \rightarrow W$ is called Fr\'echet differentiable at $x\in U$ if there exists a bounded linear operator $A:V\to W$ such that
\begin{equation} \label{definition_frechet_derivative}
\lim _{\|h\|\to 0}{\frac {\|\FM(x+h)-\FM(x)-Ah\|_{W}}{\|h\|_{V}}}=0.
\end{equation}
If there exists such an operator $A$, it will be unique, so we denote $D\FM(x) = A$ and call it the \emph{Fr\'echet derivative}.
From the above definition, we know that $D \FM :U\rightarrow T(V, W)$ where $T(V, W)$ is the family of bounded linear operators from $V$ to $W$.
Given $x \in U$, the linear map $D\FM(x)$ takes one input $y\in V$ and outputs $z \in W$. This is denoted by $z = D\FM(x)[y]$.
We then define the operator norm of $D\FM$ at $x$ as $\|D\FM(x)\|_{op}\defi\max_{h\in V}\frac{\|D \FM (x)[h]\|_W}{\|h\|_V}$.
Further, the second-order Fr\'echet derivative of $\FM$ is denoted as $D^2\FM:U\rightarrow L^2(V\times V, W)$, where $L^2(V\times V, W)$ is the family of all continuous bilinear maps from $V$ to $W$.
Given $x \in U$, the bilinear map $D^2\FM(x)$ takes two inputs $y_1, y_2 \in V$ and outputs $z \in W$. We denote this by $z = D^2\FM(x)[y_1, y_2]$.
If a function $\FM$ has multiple variables, we use $D_if$ to denote the Fr\'echet derivative with its $i^{th}$ variable and use $D^2_{ij}\FM$ to denote the corresponding second-order terms.
Finally, $\circ$ denotes the composition of functions.

\section{Related Work on Natural Gradient}
The Fisher-Rao natural gradient (FNG) \citep{amari1998natural} is a now classical algorithm for the functional minimization over a class of parameterized probability measures.
However, unlike SiNG, 
FNG as well as its many variants \citep{martens2015optimizing,thomas2016energetic,song2018accelerating} requires to evaluate the score function $\nabla_{\theta} \log p_{\theta}$ ($p_{\theta}$ denotes the p.d.f. of $\alpha_\theta$).
Leaving aside its existence issue, the score function for the generative model $\alpha_\theta$ is difficult to compute as it involves $T_\theta^{-1}$, the inversion of the push-forward mapping, and $\det(J{T_\theta^{-1}})$, the determinant of the Jacobian of $T_\theta^{-1}(z)$.
One can possibly recast the computation of the score function as a dual functional minimization problem over all continuous functions on $\XM$ \citep{essid2019adaptive}. 
However, such functional minimization problem itself is difficult to solve.
As a result, FNG has limited applicability in our problem of interest.

Instead of using the KL divergence, \citet{li2018natural} propose to measure the distance between (discrete) probability distributions using the optimal transport and develop the Wasserstein Natural Gradient (WNG). WNG inherits FNG's merit of reparameterization invariance. However, WNG requires to compute the Wasserstein information matrix (WIM), which does not attain a closed form expression when $d>1$, rendering WNG impractical \citep{li2019wasserstein, li2020ricci}.
As a workaround, one can recast a single WNG step to a dual functional maximization problem via the Legendre duality.
While itself remains challenging and can hardly be globally optimized, \citet{li2019affine} simplify the dual subproblem by restricting the optimization domain to an affine space of functions (a linear combinations of several bases).
Clearly, the quality of this solver depends heavily on the accuracy of this affine approximation.
Alternatively, \citet{arbel2019kernelized} restrict the dual functional optimization to a Reproducing Kernel Hilbert Space (RKHS).
By adding two additional regularization terms, the simplified dual subproblem admits a closed form solution.
However, in this way, the gap between the original WNG update and its kernelized version cannot be properly quantified without overstretched assumptions.

\section{Preliminaries} \label{section_preliminary}
We first introduce the entropy-regularized optimal transport distance and then its debiased version, i.e. the Sinkhorn divergence.
Given two probability measures $\alpha, \beta \in \MM_1^+(\XM)$, the entropy-regularized optimal transport distance  $\OTgamma(\alpha, \beta):\MM_1^+(\XM)\times\MM_1^+(\XM)\rightarrow\RBB_+$ is defined as 
\begin{equation}
\OTgamma(\alpha, \beta) = \min_{\pi\in\Pi(\alpha, \beta)} \langle c, \pi\rangle + \gamma \rm{KL}(\pi||\alpha\otimes\beta).
\label{eqn_OTepsilon}
\end{equation}
Here, $\gamma> 0$ is a fixed regularization parameter, $\Pi(\alpha, \beta)$ is the set of joint distributions over $\XM^2$ with marginals $\alpha$ and $\beta$, and we use $\langle c, \pi\rangle$ to denote $\langle c, \pi\rangle = \int_{\XM^2} c(x, y)\dB\pi(x, y)$.
We also use $\rm{KL}(\pi||\alpha\otimes\beta)$ to denote the \emph{Kullback-Leibler divergence} between the candidate transport plan $\pi$ and the product measure $\alpha\otimes\beta$.

Note that $\OTgamma(\alpha, \beta)$ is not a valid metric as there exists $\alpha \in \MM_1^+(\XM)$ such that $\OTgamma(\alpha, \alpha)\neq 0$ when $\gamma\neq 0$.
To remove this bias, consider the \emph{Sinkhorn divergence} $\SM(\alpha, \beta):\MM_1^+(\XM)\times\MM_1^+(\XM)\rightarrow\RBB_+$ introduced in \cite{peyre2019computational}:
\begin{equation} \label{s-o}
\SM(\alpha, \beta) \defi \OTgamma(\alpha, \beta) - \frac{\OTgamma(\alpha, \alpha)}{2} - \frac{\OTgamma(\beta, \beta)}{2},
\end{equation}
which can be regarded as a \emph{debiased version} of $\OTgamma(\alpha, \beta)$.
Since $\gamma$ is fixed throughout this paper, we omit the subscript $\gamma$ for simplicity.
It has been proved that $\SM(\alpha, \beta)$ is nonnegative, bi-convex and metrizes the convergence in law for a compact $\XM$ and a Lipschitz metric $c$ \cite{peyre2019computational}.
\paragraph{The Dual Formulation and Sinkhorn Potentials.} 
The entropy-regularized optimal transport problem $\OTgamma$, given in \eqref{eqn_OTepsilon}, is convex with respect to the joint distribution $\pi$: Its objective is a sum of a linear functional and the convex KL-divergence, and the feasible set $\Pi(\alpha, \beta)$ is convex.
Consequently, there is no gap between the primal problem \eqref{eqn_OTepsilon} and its Fenchel dual.
Specifically, define 
\begin{equation} \label{eqn_OTepsilon_dual_two_variable}
	\HM_2(f, g; \alpha, \beta) \defi \langle f, \alpha\rangle 
	+ \langle g, \beta\rangle  - \gamma\langle \exp(\frac{1}{\gamma}(f\oplus g - c)) - 1, \alpha\otimes\beta\rangle,
\end{equation}
where we denote $\big(f\oplus g\big)(x, y) = f(x) + g(y)$.
We have 
\begin{equation}	\label{eqn_OTepsilon_dual}
\OTgamma(\alpha, \beta) = \max_{f, g\in\CM(\XM)} \bigl\{\HM_2(f, g; \alpha, \beta) \bigr\} = \langle f_{\alpha, \beta}, \alpha\rangle +  \langle g_{\alpha, \beta}, \beta\rangle,
\end{equation}
where $f_{\alpha, \beta}$ and  $g_{\alpha, \beta}$, called the \emph{Sinkhorn potentials} of $\OTgamma(\alpha, \beta)$, are the maximizers of \eqref{eqn_OTepsilon_dual}. 


\paragraph{Training Adversarial Generative Models.}
We briefly describe how \eqref{eqn_main} captures the generative adversarial model (GAN):
In training a GAN, the objective functional in \eqref{eqn_main} itself is defined through a maximization subproblem
$\FM(\alpha_\theta) = \max_{\xi\in\Xi} \GM(\xi; \alpha_{\theta})$.
Here $\xi\in\Xi\subseteq\RBB^{\bar d}$ is some dual adversarial variable encoding an adversarial discriminator or ground cost.
For example, in the ground cost adversarial optimal transport formulation of GAN \citep{salimans2018improving, genevay2018learning}, we have 
$\GM(\xi; \alpha_\theta) = \SM_{c_\xi}(\alpha_\theta, \beta)$.
Here, with a slight abuse of notation, $\SM_{c_\xi}(\alpha_\theta, \beta)$ denotes the Sinkhorn divergence between the parameterized measure $\alpha_\theta$ and a given target measure $\beta$.
Notice that the symmetric ground cost $c_\xi$ in $\SM_{c_\xi}$ is no longer fixed to any pre-specified distance like $\ell_1$ or $\ell_2$ norm.
Instead, $c_\xi$ is encoded by a parameter $\xi$ so that $\SM_{c_\xi}$ can distinguish $\alpha_\theta$ and $\beta$ in an adaptive and adversarial manner.
By plugging the above $\FM(\alpha_\theta)$ to \eqref{eqn_main}, we recover the generative adversarial model proposed in \citep{genevay2018learning}:
\begin{equation} \label{eqn_experiment_gan}
\min_{\theta\in\Theta} \max_{\xi\in\Xi} \SM_{c_\xi}(\alpha_\theta, \beta).
\end{equation}
\section{Methodology} \label{section_methodology}
In this section, we derive the Sinkhorn Natural Gradient (SiNG) algorithm as a steepest descent method in the probability space endowed with the Sinkhorn divergence metric.
Specifically,  SiNG updates the parameter $\theta^t$ by 
\begin{equation}
	\theta^{t+1} := \theta^t + \eta\cdot \dB^t
\end{equation}
where $\eta > 0$ is the step size and the update direction $\dB^t$ is obtained by solving the following problem. 
Recall the objective $F$ in \eqref{eqn_main} and the Sinkhorn divergence $\SM$ in \eqref{s-o}.
Let $\dB^t = \lim_{\epsilon\rightarrow0}\frac{\Delta\theta^t_\epsilon}{\sqrt{\epsilon}}$, where 
\begin{equation} \label{eqn_natural_gradient_subproblem}
	\begin{aligned}
		\Delta\theta^t_\epsilon \defi \argmin_{\Delta\theta\in\RBB^d}  F(\theta^t+\Delta\theta) 
		\quad\mathrm{s.t.} \quad \|\Delta\theta\|\leq \epsilon^{c_1}, \SM(\alpha_{\theta^t + \Delta\theta}, \alpha_{\theta^t})\leq \epsilon + \epsilon^{c_2}.
	\end{aligned}
\end{equation}
Here the exponent $c_1$ and $c_2$ can be arbitrary real satisfying $1 < c_2 < 1.5$, $c_1<0.5$ and $3c_1 - 1\geq c_2$.
Proposition \ref{proposition_update_direction} depicts a simple expression of $\dB^t$.
Before proceeding to derive this expression, we note that $\Delta\theta = 0$ globally minimizes the non-negative function $\SM(\alpha_{\theta^t + \Delta\theta}, \alpha_{\theta^t})$, which leads to the following first and second order optimality criteria:
\begin{equation} \label{eqn_optimality_of_Sinkhorn_divergence}
	\nabla_\theta \SM(\alpha_{\theta}, \alpha_{\theta^t})_{\vert {\theta} = {\theta^t}} = 0 \quad \mathrm{and}\quad\HB(\theta^t)\defi\nabla^2_\theta \SM(\alpha_{\theta}, \alpha_{\theta^t})_{\vert {\theta} = {\theta^t}} \succcurlyeq 0.
\end{equation}
This property is critical in deriving the explicit formula of the Sinkhorn natural gradient.
From now on, the term $\HB(\theta^t)$, which is a key component of SiNG, will be referred to as the \emph{Sinkhorn information matrix (SIM)}.

\begin{proposition}\label{proposition_update_direction}
	Assume that the minimum eigenvalue of $\HB(\theta^t)$ is strictly positive (but can be arbitrary small) and that $\nabla^2_\theta F(\theta)$ and $\HB(\theta)$ are continuous w.r.t. $\theta$.
	The SiNG direction has the following  explicit expression
	\begin{equation} \label{eqn_update_direction}
		\dB^t = -\frac{\sqrt{2}}{\sqrt{\langle\HB(\theta^t)^{-1}\nabla_\theta F(\theta^t), \nabla_\theta F(\theta^t)\rangle}}\cdot\HB(\theta^t)^{-1}\nabla_\theta F(\theta^t).
	\end{equation}
\end{proposition}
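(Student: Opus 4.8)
The strategy is to solve the constrained subproblem \eqref{eqn_natural_gradient_subproblem} asymptotically as $\epsilon\to 0$ via a second-order Taylor expansion, using the optimality conditions \eqref{eqn_optimality_of_Sinkhorn_divergence} crucially. First I would expand both $F$ and $\SM$ around $\theta^t$. For the objective, $F(\theta^t+\Delta\theta) = F(\theta^t) + \langle\nabla_\theta F(\theta^t),\Delta\theta\rangle + O(\|\Delta\theta\|^2)$. For the constraint, since $\Delta\theta=0$ globally minimizes $\SM(\alpha_{\theta^t+\Delta\theta},\alpha_{\theta^t})$, the gradient term vanishes and we get $\SM(\alpha_{\theta^t+\Delta\theta},\alpha_{\theta^t}) = \tfrac12\langle\HB(\theta^t)\Delta\theta,\Delta\theta\rangle + o(\|\Delta\theta\|^2)$. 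The idea is that the trust-region constraint $\|\Delta\theta\|\le\epsilon^{c_1}$ with $c_1<1/2$ forces $\|\Delta\theta\|$ to be large enough that the Sinkhorn constraint is the binding one, while $c_1<1/2$ also guarantees $\|\Delta\theta\| = O(\sqrt{\epsilon})$ is consistent (since $\sqrt{\epsilon} < \epsilon^{c_1}$ for small $\epsilon$ when $c_1<1/2$), so the quadratic remainder in $F$ is $O(\epsilon)$, i.e.\ negligible at the $\sqrt{\epsilon}$ scale that $\dB^t$ lives on.

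Next I would solve the idealized problem: minimize $\langle\nabla_\theta F(\theta^t),\Delta\theta\rangle$ subject to $\tfrac12\langle\HB(\theta^t)\Delta\theta,\Delta\theta\rangle\le\epsilon$. Since $\HB(\theta^t)\succ 0$ by assumption, this is a standard quadratically-constrained linear program; the constraint is active at the optimum (because $\nabla_\theta F(\theta^t)\ne 0$, which follows since otherwise $\dB^t=0$ trivially), and a Lagrange multiplier argument gives $\Delta\theta^\star = -\lambda\,\HB(\theta^t)^{-1}\nabla_\theta F(\theta^t)$ with $\lambda$ chosen so the constraint holds with equality: $\lambda^2\langle\HB(\theta^t)^{-1}\nabla_\theta F(\theta^t),\nabla_\theta F(\theta^t)\rangle = 2\epsilon$, hence $\lambda = \sqrt{2\epsilon}/\sqrt{\langle\HB(\theta^t)^{-1}\nabla_\theta F(\theta^t),\nabla_\theta F(\theta^t)\rangle}$. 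Dividing by $\sqrt{\epsilon}$ and taking $\epsilon\to 0$ yields exactly \eqref{eqn_update_direction}.

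The main obstacle is making rigorous the claim that the error terms — the $O(\|\Delta\theta\|^2)$ in the expansion of $F$, the $o(\|\Delta\theta\|^2)$ in the expansion of $\SM$, and the slack $\epsilon^{c_2}$ in the constraint — do not perturb the limiting direction. Here the precise exponent conditions $1<c_2<1.5$, $c_1<0.5$, and $3c_1-1\ge c_2$ come into play: one must show the true minimizer $\Delta\theta^t_\epsilon$ satisfies $\|\Delta\theta^t_\epsilon\| = \Theta(\sqrt{\epsilon})$ (sandwiching it using the positive-definiteness and continuity of $\HB$ to control the quadratic form from both sides on a neighborhood), then verify that replacing the exact $\SM$ by its quadratic model shifts $\Delta\theta^t_\epsilon$ by $o(\sqrt{\epsilon})$ — the relaxed budget $\epsilon+\epsilon^{c_2}$ with $c_2>1$ absorbs the cubic remainder in $\SM$, while $3c_1-1\ge c_2$ ensures the cubic remainder on the trust region of radius $\epsilon^{c_1}$ is itself $O(\epsilon^{c_2})$ and thus consistent with the relaxation. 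A continuity/stability argument for the solution of the perturbed QCLP (using that $\HB$ and $\nabla^2 F$ are continuous, so they are locally essentially constant on an $\epsilon^{c_1}$-ball) then closes the gap. I would organize this as: (i) establish $\|\Delta\theta^t_\epsilon\|=\Theta(\sqrt\epsilon)$; (ii) show the quadratic surrogate problem's solution equals $\Delta\theta^t_\epsilon$ up to $o(\sqrt\epsilon)$; (iii) solve the surrogate in closed form; (iv) divide by $\sqrt\epsilon$ and pass to the limit.
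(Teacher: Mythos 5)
Your proposal is correct in substance and arrives at the same surrogate problem and the same closed-form solution as the paper, but it justifies the passage to the limit by a different mechanism. The paper works with the Lagrangian $G_\lambda(\Delta\theta)=F(\theta^t+\Delta\theta)+\lambda(\SM(\alpha_{\theta^t+\Delta\theta},\alpha_{\theta^t})-\epsilon-\epsilon^{c_2})$ and a weak-duality sandwich: it lower-bounds the optimal value of \eqref{eqn_natural_gradient_subproblem} by $\max_{\lambda\ge 0}\min_{\|\Delta\theta\|\le\epsilon^{c_1}}G_\lambda$, Taylor-expands the Lagrangian while \emph{keeping} $\nabla^2_\theta F$ (whose contribution is then killed by choosing $\lambda=\sqrt{a/(2\epsilon)}\to\infty$, $a=\langle\HB(\theta^t)^{-1}\nabla F,\nabla F\rangle$), and matches this lower bound by exhibiting the explicit feasible point $\Delta\theta^t_\epsilon=-\sqrt{2\epsilon}\,\HB(\theta^t)^{-1}\nabla F(\theta^t)/\sqrt{a}$, whose feasibility uses exactly your observation that the $\epsilon^{c_2}$ slack absorbs the cubic remainder of $\SM$ and that $\|\Delta\theta^t_\epsilon\|=\OM(\sqrt\epsilon)<\epsilon^{c_1}$. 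You instead discard $\nabla^2_\theta F$ at the outset (legitimate at the $\sqrt\epsilon$ scale, by continuity) and propose a direct perturbation analysis of the argmin of the quadratically-constrained linear surrogate. What each buys: the paper's value sandwich avoids any quantitative argmin-stability estimate (at the cost of leaving the convergence of the argmin direction somewhat implicit once the values match), whereas your route addresses the argmin head-on but then step (ii) is the place that needs real care — for a linear objective over the ellipsoid $\{\frac12\langle\HB(\theta^t)\Delta\theta,\Delta\theta\rangle\le\epsilon\}$ a value gap $\delta$ only yields a direction error of order $\sqrt{\delta/\sqrt\epsilon}$, so you must check that all perturbations (cubic remainder of $\SM$, quadratic remainder of $F$, the $\epsilon^{c_2}$ slack) contribute $\delta=o(\sqrt\epsilon)$, which is where the exponent conditions on $c_1,c_2$ enter, just as they enter the paper's error term $\OM((\lambda+1)\epsilon^{3c_1})$. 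Also note your phrase that the trust region "forces $\|\Delta\theta\|$ to be large enough" is backwards — its role is only to be loose relative to $\sqrt\epsilon$ while still taming the Taylor remainders — but your parenthetical and the rest of the plan use it correctly.
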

Interestingly, the SiNG direction does not involve the Hessian of $F$. This is due to a Lagrangian-based argument that we sketch here.
Note that the continuous assumptions on $\nabla^2_\theta F(\theta)$ and $\HB(\theta)$ enable us to approximate the objective and the constraint in \eqref{eqn_natural_gradient_subproblem} via the second-order Taylor expansion.
\begin{proof}[Proof sketch for Proposition~\ref{proposition_update_direction}]
	The second-order Taylor expansion of the Lagrangian of \eqref{eqn_natural_gradient_subproblem} is
	\begin{equation}
	\bar G(\Delta\theta) = F(\theta^t) + \langle \nabla_\theta F(\theta^t), \Delta\theta\rangle + \frac{1}{2}\langle \nabla^2_\theta F(\theta^t) \Delta\theta, \Delta\theta\rangle + \frac{\lambda}{2} \langle\HB(\theta^t)\Delta\theta, \Delta\theta\rangle -  \lambda\epsilon - \lambda \epsilon^{c_2},
	\end{equation}
	where $\lambda\geq 0$ is the dual variable.
	Since the minimum eigenvalue of $\HB(\theta^t)$ is strictly positive, for a sufficiently small $\epsilon$, by taking $\lambda = \OM(\frac{1}{\sqrt{\epsilon}})$, we have that $\HB(\theta^t) + \frac{1}{\lambda} \nabla^2_\theta F(\theta^t)$ is also positive definite.
	In such case, a direct computation reveals that $\bar G$ is minimized at
	\begin{equation}
		\widebar{\Delta \theta^*} = -\frac{1}{\lambda}\left(\HB(\theta^t) + \frac{1}{\lambda} \nabla^2_\theta F(\theta^t)\right)^{-1}\nabla_\theta F(\theta^t).
	\end{equation}
	Consequently, the term involving $\nabla^2_\theta F(\theta^t)$ vanishes when $\epsilon$ approaches zero and we obtain the result.
	
	The above argument is made precise in Appendix \ref{appendix_proof_of_proposition_update_direction}.
\end{proof}
\begin{remark}
	Note that our derivation also applies to the Fisher-Rao natural gradient or the Wasserstein natural gradient: If we replace the Sinkhorn divergence by the KL divergence (or the Wasserstein distance), the update direction $\dB^t \simeq \left[\HB(\theta^t)\right]^{-1}\nabla_\theta F(\theta^t)$ still holds, where $\HB(\theta^t)$ is the Hessian matrix of the KL divergence (or the Wasserstein distance). This observation works for a general functional as a local metric \cite{thomas2016energetic} as well. 
\end{remark}
The following proposition states that SiNG is invariant to reparameterization in its continuous time limit ($\eta\rightarrow 0$).
The proof is stated in Appendix \ref{section_proof_proposition_invariance}.
\begin{proposition} \label{proposition_invariance}
	Let $\Phi$ be an invertible and smoothly differentiable function and denote  a re-parameterization $\phi = \Phi(\theta)$.
	Define $\tilde{\HB}(\bar \phi) \defi \nabla^2_\phi \SM(\alpha_{\Phi^{-1}(\phi)}, \alpha_{\Phi^{-1}(\bar \phi)})_{\vert \phi = \bar \phi}$ and $\tilde F(\bar \phi) \defi F(\Phi^{-1}(\bar \phi))$. 
	Use $\dot \theta$ and $\dot \phi$ to denote the time derivative of $\theta$ and $\phi$ respectively.
	Consider SiNG in its continuous-time limit under these two parameterizations:
	\begin{equation}
		\dot \theta_s = - \HB(\theta_s)^{-1}\nabla F(\theta_s) \quad \mathrm{and} \quad \dot \phi_s = - \tilde \HB(\phi_s)^{-1}\nabla \tilde F(\phi_s) \quad \mathrm{with} \quad \phi_0 = \Phi(\theta_0).
	\end{equation}
	Then $\theta_s$ and $\phi_s$ are related by the equation $\phi_s = \Phi(\theta_s)$ at all time $s\geq 0$.
\end{proposition}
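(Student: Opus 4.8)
The plan is to show that the two continuous-time trajectories agree by verifying that $\Phi(\theta_s)$ satisfies the ODE defining $\phi_s$, and then invoking uniqueness of solutions of ODEs. Concretely, set $\psi_s \defi \Phi(\theta_s)$ and compute $\dot\psi_s = J\Phi(\theta_s)\,\dot\theta_s = -J\Phi(\theta_s)\,\HB(\theta_s)^{-1}\nabla F(\theta_s)$, where $J\Phi$ denotes the Jacobian of $\Phi$. We then want this to equal $-\tilde\HB(\psi_s)^{-1}\nabla\tilde F(\psi_s)$. Since $\psi_0 = \Phi(\theta_0) = \phi_0$ and both sides are (by the smoothness of $\Phi$, $\Phi^{-1}$, $F$, and the continuity/invertibility of $\HB$) locally Lipschitz vector fields, the Picard–Lindelöf theorem gives $\psi_s = \phi_s$ for all $s\geq 0$.

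The core of the argument is therefore the two transformation identities for the gradient and the SIM under reparameterization. First, by the chain rule applied to $\tilde F = F\circ\Phi^{-1}$, we get $\nabla\tilde F(\bar\phi) = J\Phi^{-1}(\bar\phi)^\top\nabla F(\Phi^{-1}(\bar\phi)) = \big(J\Phi(\theta)\big)^{-\top}\nabla F(\theta)$ where $\theta = \Phi^{-1}(\bar\phi)$ and we used $J\Phi^{-1}(\bar\phi) = \big(J\Phi(\theta)\big)^{-1}$. Second, I need to show that the Sinkhorn information matrix transforms as a $(0,2)$-tensor: $\tilde\HB(\bar\phi) = \big(J\Phi(\theta)\big)^{-\top}\HB(\theta)\big(J\Phi(\theta)\big)^{-1}$. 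This is where the specific structure of the SIM — that it is the \emph{Hessian} of $\SM(\alpha_\cdot, \alpha_{\bar\cdot})$ evaluated at the diagonal — is used crucially. The key point is that $\SM(\alpha_{\Phi^{-1}(\phi)}, \alpha_{\Phi^{-1}(\bar\phi)})$ as a function of $\phi$, near $\phi=\bar\phi$, attains its global minimum (value zero) at $\phi=\bar\phi$ by the optimality conditions \eqref{eqn_optimality_of_Sinkhorn_divergence}; hence its first-order term vanishes and the Hessian of the composition equals $J\Phi^{-\top}$ times the Hessian of the inner function times $J\Phi^{-1}$, with no contribution from the second derivative of $\Phi$ (that contribution is contracted against the vanishing gradient). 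Combining these two identities yields $\tilde\HB(\bar\phi)^{-1}\nabla\tilde F(\bar\phi) = J\Phi(\theta)\,\HB(\theta)^{-1}\nabla F(\theta)$, which is exactly the identity needed to match $\dot\psi_s$ with the $\phi$-ODE.

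The main obstacle is making the tensorial transformation of the SIM rigorous: one must carefully apply the chain rule for second derivatives to $\SM(\alpha_{\Phi^{-1}(\phi)}, \alpha_{\Phi^{-1}(\bar\phi)})$ and argue that the "$\nabla(\text{inner})\cdot\nabla^2\Phi^{-1}$'' term drops because $\nabla_\theta\SM(\alpha_\theta,\alpha_{\theta^t})\vert_{\theta=\theta^t}=0$. A secondary technical point is confirming that $\tilde\HB$ inherits strict positive-definiteness from $\HB$ (immediate since $J\Phi$ is invertible), so that $\tilde\HB^{-1}$ is well-defined and the $\phi$-field is genuinely locally Lipschitz, legitimizing the uniqueness step. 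The rest is routine differentiation and an appeal to standard ODE theory.
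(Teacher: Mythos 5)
Your proposal is correct and follows the same skeleton as the paper's proof: differentiate $\Phi(\theta_s)$ in time, show it satisfies the $\phi$-ODE via the transformation laws $\nabla\tilde F = J_{\Phi^{-1}}^\top\nabla F$ and $\tilde\HB = J_{\Phi^{-1}}^\top\HB\, J_{\Phi^{-1}}$, and conclude by uniqueness of the ODE solution. The difference lies in how the tensorial transformation of the SIM is established. The paper proves it (Lemma \ref{lemma_appendix_invariance}) by substituting the reparameterization into the explicit Fr\'echet-derivative expression of $\nabla^2_\theta\OTgamma$ from Proposition \ref{proposition_SIM_expression}, tracking how each of the four $D^2_{ij}\HM_1$ terms picks up factors of $J_{\Phi^{-1}}$. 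You instead argue abstractly: since $\SM(\alpha_{\theta},\alpha_{\theta^t})$ has vanishing gradient at $\theta=\theta^t$ by \eqref{eqn_optimality_of_Sinkhorn_divergence}, the chain rule for the Hessian of the composition with $\Phi^{-1}$ loses the term where $\nabla^2\Phi^{-1}$ is contracted against the (zero) gradient, leaving exactly the $(0,2)$-tensor law. Your route is more elementary and arguably cleaner --- it needs no structural information about the SIM beyond the first-order optimality on the diagonal, and so it applies verbatim to any local divergence with that property (consistent with the paper's own Remark 4.1); the paper's route buys nothing extra here beyond reusing machinery it has already built, and in fact it silently relies on the same diagonal optimality to discard the second-derivative-of-$\Phi^{-1}$ contributions that would otherwise appear term by term. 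You are also more careful than the paper on two minor points: noting that $\tilde\HB$ inherits positive definiteness from $\HB$ since $J_\Phi$ is invertible, and invoking Picard--Lindel\"of explicitly to justify the uniqueness step.
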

The SiNG direction is a ``curved" negative gradient of the loss function $F(\theta)$ and the ``curvature" is exactly given by the Sinkhorn Information Matrix (SIM), i.e. the Hessian $\HB(\theta^t) = \nabla^2_\theta \SM(\alpha_{\theta}, \alpha_{\theta^t})_{\vert {\theta} = {\theta^t}}$ of the Sinkhorn divergence. An important question is whether SIM is computationally tractable.
In the next section, we derive its explicit expression and describe how it can be efficiently computed.
This is in sharp contrast to the Wasserstein information matrix (WIM) as in the WNG method proposed in \cite{li2018natural}, which  does not attain an explicit form for $d>1$ ($d$ is the parameter dimension).

While computing  the update direction $\mathbf{d}_t$ involves the inversion of $\HB(\theta^t)$, it can be computed using the classical conjugate gradient algorithm, requiring only  a matrix-vector product.
Consequently,  our Sinkhorn Natural Gradient (SiNG) admits a simple and elegant implementation based on modern auto-differential mechanisms such as PyTorch.
{We will elaborate this point in Appendix \ref{appendix_pytorch}.}

\section{Sinkhorn Information Matrix}
In this section, we describe the explicit expression of the \emph{Sinkhorn information matrix (SIM)} and show that it can be computed very efficiently using simple function operations (e.g. $\log$ and $\exp$) and integrals with respect to $\alpha_\theta$ (with complexity logarithmic in terms of the reciprocal of the target accuracy).
The computability of SIM and hence SiNG is the key contribution of our paper.
In the case when we can only compute the integration with respect to $\alpha_\theta$ in a Monte Carlo manner, 
an empirical estimator of SIM (eSIM) is proposed in the next section with a delicate stability analysis.\\
Since  $\SM(\cdot, \cdot)$ is a linear combination  of terms like $\OTgamma(\cdot, \cdot)$--see \eqref{s-o}, we can focus on the term $\nabla^2_\theta \OTgamma(\alpha_{\theta}, \alpha_{\theta^t})_{\vert {\theta} = {\theta^t}}$ in $\HB(\theta^t)$ and 
the other term $\nabla^2_\theta \OTgamma(\alpha_{\theta}, \alpha_{\theta})_{\vert {\theta} = {\theta^t}}$ can be handled similarly.
Having these two terms, SIM is computed as $\HB(\theta^t) = [\nabla_\theta^2 \mathrm{OT}_\gamma(\alpha_\theta, \alpha_{\theta^t}) + \nabla_\theta^2 \mathrm{OT}_\gamma(\alpha_\theta, \alpha_{\theta})]_{\vert {\theta} = {\theta^t}}$.

Recall that the entropy regularized optimal transport distance $\OTgamma$ admits an equivalent dual concave-maximization form \eqref{eqn_OTepsilon_dual}.
Due to the concavity of $\HM_2$ w.r.t. $g$ in \eqref{eqn_OTepsilon_dual_two_variable}, the corresponding optimal $g_f = \argmax_{g\in\CM(\XM)} \HM_2(f, g; \alpha, \beta)$ can be explicitly computed for any fixed $f\in\CM(\XM)$:
Given a function $\bar f\in\CM(\XM)$ and a measure $\alpha\in\MM_1^+(\XM)$, define the Sinkhorn mapping as
\begin{equation} \label{eqn_sinkhorn_mapping}
	\AM\big(\bar f, \alpha\big)(y) \defi -\gamma\log\int_\XM\exp\left(-\frac{1}{\gamma}c(x, y) + \frac{1}{\gamma}\bar f(x)\right)\dB\alpha(x).
\end{equation}
The first-order optimality of $g_f$ writes $g_f = \AM(f, \alpha)$. Then, \eqref{eqn_OTepsilon_dual} can be simplified to the following problem with a single potential variable: 
\begin{equation} \label{eqn_OTgamma_dual_single_variable}
	\OTgamma(\alpha_\theta, \beta) = \max_{f\in\CM(\XM)} \left\{\HM_1(f, \theta) \defi \langle f, \alpha_\theta\rangle + \langle \AM \big(f, \alpha_\theta\big), \beta\rangle  \right\},
\end{equation}
where we emphasize the impact of $\theta$ to $\HM_1$ by writing it explicitly as a variable for $\HM_1$.
Moreover, in $\HM_1$ the dependence on $\beta$ is dropped as $\beta$ is fixed.
We also denote the optimal solution to the R.H.S. of \eqref{eqn_OTgamma_dual_single_variable} by $f_\theta$ which is one of the Sinkhorn potentials for $\OTgamma(\alpha_\theta, \beta)$.

The following proposition describes the explicit expression of $\nabla^2_\theta \OTgamma(\alpha_{\theta}, \alpha_{\theta^t})_{\vert {\theta} = {\theta^t}}$ based on the above dual representation. The proof is provided in Appendix \ref{appendix_proof_of_proposition_SIM_expression}.
\begin{proposition} \label{proposition_SIM_expression}
	Recall the definition of the dual-variable function $\HM_1:\CM(\XM)\times \Theta\rightarrow \RBB$ in \eqref{eqn_OTgamma_dual_single_variable} and the definition of the second-order Fr\'echet derivative at the end of Section \ref{section_introuction}.
	For a parameterized push-forward measure $\alpha_\theta = {T_\theta}_\sharp \mu$ and a fixed measure $\beta\in\MM_1^+(\XM)$, we have
	\begin{equation}
		\nabla^2_\theta \OTgamma(\alpha_{\theta}, \beta) = - D^2_{11} \HM_1(f_\theta, \theta)\circ (Df_\theta, Df_\theta) + D^2_{22}\HM_1(f_\theta, \theta),
	\end{equation}
	where $Df_\theta$ denotes the Fr\'echet derivative of the Sinkhorn potential $f_\theta$ w.r.t. the parameter $\theta$.
\end{proposition}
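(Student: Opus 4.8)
The plan is to obtain $\nabla^2_\theta \OTgamma(\alpha_\theta,\beta)$ by differentiating the dual-maximum representation \eqref{eqn_OTgamma_dual_single_variable} twice in $\theta$, exploiting the envelope structure. Write $F(\theta)\defi\OTgamma(\alpha_\theta,\beta)=\HM_1(f_\theta,\theta)$ where $f_\theta=\argmax_{f}\HM_1(f,\theta)$. The first step is to record the stationarity condition in the potential variable: $D_1\HM_1(f_\theta,\theta)=0$ as an element of $\CM(\XM)^*$, which holds for all $\theta$ by optimality of $f_\theta$ (this uses concavity of $\HM_1$ in $f$, which follows from concavity of $\HM_2$ in $(f,g)$ and the explicit reduction $g_f=\AM(f,\alpha_\theta)$ already established). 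Differentiating the identity $F(\theta)=\HM_1(f_\theta,\theta)$ once and invoking $D_1\HM_1(f_\theta,\theta)=0$ gives the envelope formula $\nabla_\theta F(\theta)=D_2\HM_1(f_\theta,\theta)$, i.e. the chain-rule term through $f_\theta$ drops out at first order.

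The second step is to differentiate $\nabla_\theta F(\theta)=D_2\HM_1(f_\theta,\theta)$ once more in $\theta$. By the chain rule applied to $\theta\mapsto D_2\HM_1(f_\theta,\theta)$, treating $f_\theta$ as a $\theta$-dependent element of $\CM(\XM)$ with Fréchet derivative $Df_\theta$, we get
\begin{equation}
\nabla^2_\theta F(\theta) = D^2_{21}\HM_1(f_\theta,\theta)\circ(Df_\theta,\cdot) + D^2_{22}\HM_1(f_\theta,\theta).
\end{equation}
It then remains to express $D^2_{21}\HM_1(f_\theta,\theta)\circ Df_\theta$ in terms of $D^2_{11}$. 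For this, differentiate the stationarity identity $D_1\HM_1(f_\theta,\theta)=0$ in $\theta$: the chain rule yields $D^2_{11}\HM_1(f_\theta,\theta)\circ Df_\theta + D^2_{12}\HM_1(f_\theta,\theta)=0$, so $D^2_{12}\HM_1(f_\theta,\theta) = -D^2_{11}\HM_1(f_\theta,\theta)\circ Df_\theta$ as operators on $\Theta$. Using symmetry of the mixed second Fréchet derivative (Schwarz), $D^2_{21}\HM_1\circ(Df_\theta,\cdot)$ equals $D^2_{12}\HM_1$ evaluated so that its $\CM(\XM)$-slot is fed $Df_\theta[\cdot]$; substituting gives $D^2_{21}\HM_1(f_\theta,\theta)\circ(Df_\theta,\cdot) = -D^2_{11}\HM_1(f_\theta,\theta)\circ(Df_\theta,Df_\theta)$, and plugging into the displayed expression produces exactly the claimed formula.

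I expect two points to require care rather than deep ideas. First, one must justify the differentiability of $\theta\mapsto f_\theta$ as a map into $\CM(\XM)$ (or at least that a Fréchet derivative $Df_\theta$ exists and the formal chain rule applications are valid) — this is an implicit-function-theorem argument using that $D^2_{11}\HM_1(f_\theta,\theta)$ is a boundedly invertible operator, which in turn follows from the strong concavity of $\HM_1$ in $f$ afforded by the entropic regularization $\gamma>0$ on the compact set $\XM$; the continuity and smoothness of $T_\theta$ in $\theta$ feeds the requisite regularity. This is presumably the main obstacle, and I would either carry it out in the appendix or cite the corresponding regularity results for Sinkhorn potentials (e.g. from \cite{peyre2019computational} and the stability statement the paper itself develops). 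Second, one must be careful with the bookkeeping of which argument slot of the bilinear map $D^2_{11}\HM_1$ receives $Df_\theta$ and which receives the free direction in $\Theta$; the symmetry of $D^2\HM_1$ in its two $\CM(\XM)$-arguments makes the final contraction $D^2_{11}\HM_1(f_\theta,\theta)\circ(Df_\theta,Df_\theta)$ well-defined and symmetric in $\Theta$, as it must be since it is a Hessian. Everything else is a routine envelope-theorem computation.
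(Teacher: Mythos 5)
Your proposal is correct and follows essentially the same route as the paper's proof in Appendix B.1: stationarity $D_1\HM_1(f_\theta,\theta)=0$, differentiation of that identity to get $D^2_{11}\HM_1(f_\theta,\theta)\circ(Df_\theta,Df_\theta) = -D^2_{12}\HM_1(f_\theta,\theta)\circ(Df_\theta,\IM_d)$, and symmetry of the mixed derivative to collapse the cross terms. The only cosmetic difference is that you invoke the envelope theorem at first order and so never write the terms that the paper carries along and then cancels; your flagged concern about justifying differentiability of $\theta\mapsto f_\theta$ is reasonable but is likewise left implicit in the paper.
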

\begin{remark}[SIM for $1d$-Gaussian]
	It is in general difficult to give closed form expression of the SIM.
	However, in the simplest case when $\alpha_{\theta}$ is a one-dimensional Gaussian distribution with a parameterized mean, i.e. $\alpha_{\theta} = \NM(\mu(\theta), \sigma^2)$, SIM can be explicitly computed as $\nabla_{\theta}^2 \SM(\alpha_{\theta}, \beta) = 2\nabla^2_\theta \mu(\theta)$ due to the closed form expression of the entropy regularized optimal transport between Gaussian measures \citep{janati2020entropic}.
\end{remark}
Suppose that we have the Sinkhorn potential $f_\theta$ and its the Fr\'echet derivative $Df_\theta$.
Then the terms $D^2_{ij}\HM_1(f, \theta), i,j = 1,2$ can all be evaluated using a constant amount of simple function operations, e.g. $\log$ and $\exp$, since we know the explicit expression of $\HM_1$.
Consequently, it is sufficient to have estimators $f_\theta^\epsilon$ and $g_\theta^\epsilon$ of $f_\theta$ and $Df_\theta$ respectively, such that $\|f_\theta^\epsilon - f_\theta\|_\infty\leq \epsilon$ and $\|g_\theta^\epsilon - Df_\theta\|_{op}\leq \epsilon$ for an arbitrary target accuracy $\epsilon$.
This is because the high accuracy approximation of $f_\theta$ and $Df_\theta$ imply the high accuracy approximation of $\nabla^2_\theta \OTgamma(\alpha_{\theta}, \beta)$ due to the Lipschitz continuity of the terms $D^2_{ij}\HM_1(f, \theta), i,j = 1,2$. We derive these expressions and their Lipschitz continuity in Appendix \ref{appendix_SIM}.

For the Sinkhorn Potential $f_\theta$, its estimator $f_\theta^\epsilon$ can be efficiently computed using the Sinkhorn-Knopp algorithm \cite{sinkhorn1967concerning}.
We provide more details on this in Appendix \ref{appendix_proposition_sinkhorn_potential}.
\begin{proposition}[Computation of the Sinkhorn Potential $f_\theta$ -- (Theorem 7.1.4 in \citep{lemmens2012nonlinear} and Theorem B.10 in \citep{NIPS2019_9130}] \label{proposition_sinkhorn_potential}
	Assume that the ground cost function $c$ is bounded, i.e. $ 0 \leq c(x, y)\leq M_c, \forall x, y\in\XM$.
	Denote $\lambda\defi\frac{\exp(M_c/\gamma) -1 }{\exp(M_c/\gamma) + 1 }<1$ and define
	\begin{equation} \label{eqn_main_fix_point}
	\BM\big(f, \theta\big) \defi \AM\big(\AM\big(f, \alpha_\theta\big), \beta\big).
	\end{equation}
Then the fixed point iteration $f^{t+1} = \BM\big(f^t, \theta\big)$	converges linearly:
	$\|f^{t+1} - f_\theta\|_\infty=\OM(\lambda^t)$.
\end{proposition}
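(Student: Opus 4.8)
The plan is to recognize Proposition~\ref{proposition_sinkhorn_potential} as an instance of the Birkhoff--Hopf contraction theorem for positive linear operators, applied after passing to exponential (multiplicative) coordinates, which is precisely the route taken in Theorem~7.1.4 of \citep{lemmens2012nonlinear} and Theorem~B.10 of \citep{NIPS2019_9130}. First I would record that the target $f_\theta$ is indeed a fixed point of $\BM(\cdot,\theta)$: by the stationarity conditions of the two-potential dual \eqref{eqn_OTepsilon_dual}--\eqref{eqn_OTepsilon_dual_two_variable}, the optimal pair satisfies $g_\theta = \AM(f_\theta,\alpha_\theta)$ and, by the symmetry of $\HM_2$ under swapping $(f,\alpha)\leftrightarrow(g,\beta)$, also $f_\theta = \AM(g_\theta,\beta)$; composing yields $f_\theta = \AM(\AM(f_\theta,\alpha_\theta),\beta) = \BM(f_\theta,\theta)$. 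So the statement is a quantitative convergence claim for the fixed-point iteration of $\BM(\cdot,\theta)$.

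\textbf{Linearization and the Birkhoff step.} Next I would set $u = \exp(f/\gamma)$ and $K(x,y) = \exp(-c(x,y)/\gamma)$, so that the Sinkhorn mapping \eqref{eqn_sinkhorn_mapping} reads $\exp\!\big(-\AM(f,\alpha)(y)/\gamma\big) = \int_\XM K(x,y)\,u(x)\,\dB\alpha(x)$; thus one application of $\AM$ is a positive linear integral operator followed by the pointwise reciprocal. Since $\XM$ is compact and $0\le c\le M_c$, the kernel obeys $\exp(-M_c/\gamma)\le K(x,y)\le 1$ uniformly, so every operator in sight maps the positive cone of $\CM(\XM)$ into its interior. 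Equipping the cone with Hilbert's projective metric $d_H$ — for which $d_H(e^{f/\gamma},e^{g/\gamma})$ is the oscillation seminorm $\gamma^{-1}\big(\max(f-g)-\min(f-g)\big)$ — I would use that the reciprocal map is a $d_H$-isometry and that, by Birkhoff--Hopf, a positive kernel with projective diameter $\Delta(K)\defi \sup\log\frac{K(x,y)K(x',y')}{K(x,y')K(x',y)}$ contracts $d_H$ by the factor $\tanh(\Delta(K)/4)$. Here $\Delta(K)\le 2M_c/\gamma$, hence each $\AM$-step is $d_H$-Lipschitz with constant $\tanh(M_c/(2\gamma)) = \frac{\exp(M_c/\gamma)-1}{\exp(M_c/\gamma)+1} = \lambda<1$, so $\BM(\cdot,\theta)$ is a $\lambda$-contraction (in fact $\lambda^2$) for $d_H$.

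\textbf{Conclusion.} The cone modulo scaling is $d_H$-complete, so Banach's fixed-point theorem gives a unique (modulo scaling / additive constant) fixed point and $d_H(e^{f^t/\gamma},e^{f_\theta/\gamma}) = \OM(\lambda^t)$, i.e. $\max(f^t-f_\theta)-\min(f^t-f_\theta) = \OM(\lambda^t)$. To upgrade this oscillation bound to the stated $\|f^{t+1}-f_\theta\|_\infty = \OM(\lambda^t)$, I would exploit that $\BM(\cdot,\theta)$ is $1$-Lipschitz for $\|\cdot\|_\infty$ and commutes with additive constants, so after fixing the normalization carried by the iterates (e.g. matching means against $\alpha_\theta$) the additive gap between $f^t$ and $f_\theta$ is controlled by the oscillation and decays at the same geometric rate; alternatively one cites the $\|\cdot\|_\infty$ form of Theorem~B.10 in \citep{NIPS2019_9130} directly.

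\textbf{Main obstacle.} The crux is the Birkhoff step: verifying that the Gibbs kernel $K=\exp(-c/\gamma)$ has projective diameter at most $2M_c/\gamma$ and that the associated contraction coefficient $\tanh(M_c/(2\gamma))$ equals exactly $\lambda = \tfrac{\exp(M_c/\gamma)-1}{\exp(M_c/\gamma)+1}$ — plus the bookkeeping needed to pass from projective-metric (oscillation) convergence to honest $\|\cdot\|_\infty$ convergence, which requires pinning down the additive-constant ambiguity inherent in the single-potential dual \eqref{eqn_OTgamma_dual_single_variable}. Everything else is a routine application of Banach's fixed-point theorem.
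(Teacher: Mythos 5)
Your proposal is correct and follows essentially the same route as the paper: the paper's own argument is likewise the Birkhoff--Hopf contraction of the Gibbs kernel under the Hilbert projective metric (the same factor $\lambda=\tanh(M_c/(2\gamma))$ per application of $\AM$, hence $\lambda^2$ per $\BM$), with the fixed-point characterization of $f_\theta$ and the final $\|\cdot\|_\infty$ convergence delegated to Theorem 7.1.4 of \citep{lemmens2012nonlinear} and Theorem B.10 of \citep{NIPS2019_9130}. Your explicit projective-diameter computation and the oscillation-to-sup-norm normalization bookkeeping simply fill in details the paper leaves to those citations.
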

For the Fr\'echet derivative $Df_\theta$, we construct its estimator in the following proposition.
\begin{proposition}[Computation of the Fr\'echet derivative $Df_\theta$] \label{proposition_frechet_derivative}
	Let $f_\theta^\epsilon$ be an approximation of $f_\theta$ such that $\|f_\theta^\epsilon-f_\theta\|_\infty\leq  \epsilon$. Choose a large enough $l$, for instance $l = \lceil\log_{\lambda}\frac{1}{3}\rceil/2$.
	Define $\EM\big(f, \theta\big) = \BM\big(\cdots\BM\big(f, \theta\big)\cdots,\theta\big)$, the $l$ times composition of $\BM$ in its first variable. 
	Then the sequence 
	\begin{equation}
		g_{\theta}^{t+1} = D_1\EM\big(f_\theta^\epsilon, \theta\big)\circ g_{\theta}^{t} + D_2\EM\big(f_\theta^\epsilon, \theta\big)
	\end{equation}
	converges linearly to a $\epsilon$-neighborhood of $Df_\theta$, i.e.  $\|g_{\theta}^{t+1} - Df_\theta\|_{op} = \OM(\epsilon + (\frac{2}{3})^t\|g_{\theta}^{0} - Df_\theta\|_{op})$.
\end{proposition}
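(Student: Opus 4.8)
The plan is to exploit the fixed-point identity $f_\theta = \BM(f_\theta,\theta)$ from Proposition~\ref{proposition_sinkhorn_potential}, differentiate it implicitly in $\theta$ to obtain a closed expression for $Df_\theta$, pass to the $l$-fold composition to get a safe contraction factor, and finally recognize the stated recursion as a linear contraction whose fixed point is an $\OM(\epsilon)$-perturbation of $Df_\theta$.

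\textbf{Step 1 (implicit equation for $Df_\theta$).} First I would verify that $\AM$ — and hence $\BM$ and $\EM$ — is continuously Fréchet differentiable in both arguments, with first and second derivatives that are bounded in operator norm and Lipschitz in the $f$-argument (uniformly in $\theta$). This follows from \eqref{eqn_sinkhorn_mapping}: since $c$ is bounded and $\XM$ compact, the integrand $\exp(-\tfrac{1}{\gamma}c(x,y)+\tfrac{1}{\gamma}\bar f(x))$ is bounded above and below by positive constants uniformly for $\bar f$ in any $\|\cdot\|_\infty$-ball, so differentiation under the integral and the chain rule through $\log$ and $\exp$ are licit. Applying the Banach-space implicit function theorem to $\Psi(f,\theta):=f-\BM(f,\theta)$ at $(f_\theta,\theta)$: the contraction estimate underlying Proposition~\ref{proposition_sinkhorn_potential} gives $\|D_1\BM(f,\theta)\|_{op}\le\lambda<1$ for all $f$, so $D_1\Psi(f_\theta,\theta)=I-D_1\BM(f_\theta,\theta)$ is invertible with $\|(I-D_1\BM(f_\theta,\theta))^{-1}\|_{op}\le(1-\lambda)^{-1}$ by the Neumann series. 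Hence $\theta\mapsto f_\theta$ is differentiable and $Df_\theta=D_1\BM(f_\theta,\theta)\circ Df_\theta+D_2\BM(f_\theta,\theta)$.

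\textbf{Step 2 (pass to the $l$-fold composition).} Since $f_\theta$ is a fixed point of $\BM(\cdot,\theta)$ it is also one of $\EM(\cdot,\theta)$, and $\EM(\cdot,\theta)$, being a composition of $l$ maps each $\lambda$-Lipschitz, is a $\lambda^l$-contraction; in particular $\|D_1\EM(f,\theta)\|_{op}\le\lambda^l$ for every $f$, and for $l$ as specified $\lambda^l\le 1/\sqrt{3}<2/3$. The implicit-function argument with $\EM$ in place of $\BM$ then gives $Df_\theta=(I-D_1\EM(f_\theta,\theta))^{-1}D_2\EM(f_\theta,\theta)$; equivalently this follows from Step 1 by expanding, via the chain rule, $D_1\EM(f_\theta,\theta)=(D_1\BM(f_\theta,\theta))^l$ and $D_2\EM(f_\theta,\theta)=\sum_{j=0}^{l-1}(D_1\BM(f_\theta,\theta))^jD_2\BM(f_\theta,\theta)$ and using the telescoping identity $\sum_{j=0}^{l-1}A^j=(I-A^l)(I-A)^{-1}$.

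\textbf{Step 3 (convergence of the recursion).} Write the recursion as $g^{t+1}=Lg^t+b$ with $L:=D_1\EM(f_\theta^\epsilon,\theta)$ and $b:=D_2\EM(f_\theta^\epsilon,\theta)$ on the Banach space $T(\RBB^d,\CM(\XM))$ of bounded operators. Since $\|L\|_{op}\le\lambda^l<2/3$, the affine map $g\mapsto Lg+b$ is a contraction with unique fixed point $g^\star=(I-L)^{-1}b$ and $\|g^{t+1}-g^\star\|_{op}\le(2/3)^{t+1}\|g^0-g^\star\|_{op}$. To compare $g^\star$ with $Df_\theta$, use $\|f_\theta^\epsilon-f_\theta\|_\infty\le\epsilon$ and the Lipschitz-in-$f$ bounds from Step 1 (whose constants depend only on the fixed $l$) to get $\|L-D_1\EM(f_\theta,\theta)\|_{op}=\OM(\epsilon)$ and $\|b-D_2\EM(f_\theta,\theta)\|_{op}=\OM(\epsilon)$; then the resolvent identity $(I-L)^{-1}-(I-D_1\EM(f_\theta,\theta))^{-1}=(I-L)^{-1}(L-D_1\EM(f_\theta,\theta))(I-D_1\EM(f_\theta,\theta))^{-1}$, together with the uniform bound $\|(I-A)^{-1}\|_{op}\le 3$ whenever $\|A\|_{op}\le 2/3$, yields $\|g^\star-Df_\theta\|_{op}=\OM(\epsilon)$. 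Combining with $\|g^0-g^\star\|_{op}\le\|g^0-Df_\theta\|_{op}+\OM(\epsilon)$ gives $\|g^{t+1}-Df_\theta\|_{op}\le\|g^{t+1}-g^\star\|_{op}+\|g^\star-Df_\theta\|_{op}=\OM\big(\epsilon+(2/3)^t\|g^0-Df_\theta\|_{op}\big)$, as claimed.

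\textbf{Main obstacle.} The hard part is the analytic groundwork in Step 1: establishing, with controllable constants, the uniform operator-norm bounds and the Lipschitz-in-$f$ continuity of the first and second Fréchet derivatives of the nested soft-min maps $\AM$, $\BM$, $\EM$ — i.e. quantifying how $-\gamma\log\int\exp(\cdot)$ propagates perturbations through $l$ layers of composition. This is exactly where boundedness of $c$ and compactness of $\XM$ enter, to keep the exponential integrands bounded away from $0$ and $\infty$, and it is also where one must handle the additive-constant ambiguity of the Sinkhorn potentials — by passing to the quotient by constants or fixing a normalization — so that the inverses $(I-D_1\EM)^{-1}$ are genuinely bounded operators. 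Once these estimates are in hand, the contraction-mapping and resolvent bookkeeping above are routine.
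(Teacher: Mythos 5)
Your overall architecture --- differentiate the fixed-point identity $f_\theta=\EM(f_\theta,\theta)$, treat the recursion as an affine contraction, and control the perturbation caused by replacing $f_\theta$ with $f_\theta^\epsilon$ --- coincides with the paper's, and your Step 3 resolvent bookkeeping is a legitimate substitute for the paper's direct one-step estimate $\|g_\theta^{t+1}[h]-Df_\theta[h]\|_\infty\le\frac23\|g_\theta^{t}[h]-Df_\theta[h]\|_\infty+\OM(\epsilon)\|h\|$. However, two of your supporting claims have genuine gaps.

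First, the claim $\|D_1\BM(f,\theta)\|_{op}\le\lambda<1$ in the sup-norm is false. Since $\AM(f+c,\alpha)=\AM(f,\alpha)-c$ for any constant $c$, one has $\BM(f+c,\theta)=\BM(f,\theta)+c$, so $D_1\BM(f,\theta)$ fixes the constant functions and its operator norm equals $1$ (the paper's Lemma \ref{appendix_lemma_property_1} only asserts non-expansiveness). In particular $I-D_1\BM(f_\theta,\theta)$ has the constants in its kernel and cannot be inverted by a Neumann series, so your Step 1 implicit-function argument does not go through as written. The Birkhoff--Hopf theorem yields a $\lambda$-contraction only in the Hilbert projective metric, which quotients out precisely these constants; converting back to $\|\cdot\|_\infty$ costs a factor of $2$, and this is exactly why the proposition composes $l$ copies of $\BM$ so that $2\lambda^{2l}\le\frac23$ (note the exponent $2l$, each $\BM$ being two applications of $\AM$), rather than your $\lambda^{l}\le 1/\sqrt3$. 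The normalization issue you defer to your ``main obstacle'' paragraph is thus not a technicality to be handled later but the reason the $l$-fold composition appears at all; the paper works with $\EM=\BM^l$ and \eqref{eqn_contraction_EM} from the outset and never needs to invert $I-D_1\BM$.

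Second, in Step 3 you need $\|D_2\EM(f_\theta^\epsilon,\theta)-D_2\EM(f_\theta,\theta)\|_{op}=\OM(\epsilon)$ using only $\|f_\theta^\epsilon-f_\theta\|_\infty\le\epsilon$. Because $\alpha_\theta={T_\theta}_\sharp\mu$, the operator $D_2\BM$ involves $\nabla f\circ T_\theta$, and the relevant continuity estimate (Lemma \ref{lemma_appendix_lipschitz_D2EM}) is only of the form
\begin{equation*}
\|D_2\EM(f,\theta)-D_2\EM(f',\theta)\|_{op}\le L_l\big(\|f-f'\|_\infty+\|\nabla f-\nabla f'\|_{2,\infty}\big).
\end{equation*}
Hence one must additionally prove that the $\ell_\infty$-approximation produced by Sinkhorn--Knopp is also a $C^1$-approximation; the paper does this separately via uniform bounds on $\nabla^2 f^t$ and a mean-value argument (Lemma \ref{lemma_convergence_of_gradient}). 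Your proposal never controls $\|\nabla f_\theta^\epsilon-\nabla f_\theta\|_{2,\infty}$, so the $\OM(\epsilon)$ bound on the perturbation of $b=D_2\EM(f_\theta^\epsilon,\theta)$ is unjustified as stated.
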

We deferred the proof to the above proposition to Appendix \ref{appendix_proof_of_proposition_frechet_derivative}.
The high-accuracy estimators $f_\theta^\epsilon$ and $g_\theta^\epsilon$ derived in the above propositions can both be obtained using $\OM(\log\frac{1}{\epsilon})$ function operations and integrals.
With the expression of SIM and the two propositions discussing the efficient computation of $f_\theta$ and $Df_\theta$, we obtain the following theorem.
\begin{theorem}[Computability of SIM] \label{thm_computability_of_SIM}
	For any given target accuracy $\epsilon>0$, there exists an estimator $\HB_\epsilon(\theta)$, such that $\|\HB_\epsilon(\theta) - \HB(\theta)\|_{op}\leq \epsilon$, and the estimator can be computed using $\OM(\log\frac{1}{\epsilon})$ simple function operations and integrations with respect to $\alpha_\theta$.
\end{theorem}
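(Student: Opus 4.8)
The plan is to assemble the estimator $\HB_\epsilon(\theta)$ from the ingredients provided by Proposition~\ref{proposition_SIM_expression} (the closed form of the Hessian of $\OTgamma$ in terms of $\HM_1$, $f_\theta$, and $Df_\theta$), Proposition~\ref{proposition_sinkhorn_potential} (linear convergence of the Sinkhorn fixed-point iteration to $f_\theta$), and Proposition~\ref{proposition_frechet_derivative} (linear convergence of the induced iteration to $Df_\theta$). First I would recall that $\HB(\theta)$ is, by the discussion following Proposition~\ref{proposition_SIM_expression}, a fixed linear combination of two Hessians of the form $\nabla_\theta^2\OTgamma(\alpha_\theta,\beta)$, each of which equals $-D^2_{11}\HM_1(f_\theta,\theta)\circ(Df_\theta,Df_\theta)+D^2_{22}\HM_1(f_\theta,\theta)$. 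So it suffices to produce, for each such term, an estimator accurate to $\OM(\epsilon)$ in operator norm using $\OM(\log\frac1\epsilon)$ function operations and integrals; the triangle inequality then closes the argument for $\HB(\theta)$ itself (rescaling $\epsilon$ by the fixed number of terms and their fixed coefficients $\lambda$ from \eqref{s-o}).

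Next I would run the two iterations to a controlled tolerance. Given a target $\epsilon$, set an intermediate tolerance $\delta = c\,\epsilon$ for a suitable constant $c$ to be fixed at the end. By Proposition~\ref{proposition_sinkhorn_potential}, running the fixed-point iteration $f^{t+1}=\BM(f^t,\theta)$ for $t = \OM(\log_{1/\lambda}\frac1\delta) = \OM(\log\frac1\epsilon)$ steps yields $f_\theta^\delta$ with $\|f_\theta^\delta - f_\theta\|_\infty \le \delta$; each step is a constant number of $\log$/$\exp$ operations and integrals against $\alpha_\theta$ and $\beta$. Feeding this $f_\theta^\delta$ into Proposition~\ref{proposition_frechet_derivative} with its fixed choice of $l$, running the linear recursion for $\OM(\log\frac1\delta)=\OM(\log\frac1\epsilon)$ steps produces $g_\theta^\delta$ with $\|g_\theta^\delta - Df_\theta\|_{op} = \OM(\delta)$; again each step costs a constant number of operations and integrals. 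So after $\OM(\log\frac1\epsilon)$ total operations we hold approximations $f_\theta^\delta, g_\theta^\delta$ to $f_\theta, Df_\theta$ at accuracy $\OM(\epsilon)$.

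Then I would plug these into the closed form: define
\begin{equation}
\HB_\epsilon(\theta) := -D^2_{11}\HM_1(f_\theta^\delta,\theta)\circ(g_\theta^\delta, g_\theta^\delta) + D^2_{22}\HM_1(f_\theta^\delta,\theta)
\end{equation}
(plus the analogous term for the $\OTgamma(\alpha_\theta,\alpha_\theta)$ piece). The error analysis is a stability/perturbation estimate: write the difference $\HB_\epsilon(\theta)-\HB(\theta)$ and bound it using (i) the Lipschitz continuity of the maps $(f,\theta)\mapsto D^2_{ij}\HM_1(f,\theta)$ in $f$ with respect to $\|\cdot\|_\infty$ — established, as the excerpt notes, in Appendix~\ref{appendix_SIM} — which controls the terms where only $f_\theta$ is perturbed; (ii) boundedness of $D^2_{11}\HM_1$ as a bilinear map and of $Df_\theta, Df_\theta^\delta$ in operator norm, so that the bilinear term $D^2_{11}\HM_1\circ(Df_\theta,Df_\theta) - D^2_{11}\HM_1\circ(g_\theta^\delta,g_\theta^\delta)$ is controlled by splitting $Df_\theta\otimes Df_\theta - g^\delta\otimes g^\delta = (Df_\theta-g^\delta)\otimes Df_\theta + g^\delta\otimes(Df_\theta - g^\delta)$ and using $\|g_\theta^\delta\|_{op}\le\|Df_\theta\|_{op}+\OM(\delta)$. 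Each piece is $\OM(\delta)=\OM(\epsilon)$; choosing $c$ so the accumulated constants land below $1$ gives $\|\HB_\epsilon(\theta)-\HB(\theta)\|_{op}\le\epsilon$.

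The main obstacle is step (i)–(ii): establishing the required uniform boundedness and Lipschitz bounds on $D^2_{ij}\HM_1(f,\theta)$ and on the Sinkhorn potentials $f_\theta$ and their Fréchet derivatives $Df_\theta$, uniformly over the relevant $f$ (those in an $\OM(1)$ ball around $f_\theta$). This rests on the compactness of $\XM$, the boundedness $0\le c\le M_c$, and the explicit soft-min structure of $\HM_1$ and $\AM$, which make the exponentials and their derivatives uniformly bounded; the cleanest route is to verify these bounds directly from the formula \eqref{eqn_sinkhorn_mapping} and the definition of $\HM_1$, which is precisely the content deferred to Appendix~\ref{appendix_SIM}. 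Granting those, everything else is bookkeeping: composition of two linearly-convergent iterations followed by a Lipschitz-stable plug-in.
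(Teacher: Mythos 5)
Your overall strategy matches the paper's: run the Sinkhorn--Knopp iteration and the Fr\'echet-derivative recursion to tolerance $\OM(\epsilon)$ in $\OM(\log\frac1\epsilon)$ steps, plug the results into the closed form of Proposition~\ref{proposition_SIM_expression}, and close with a perturbation estimate; your splitting of the bilinear term $D^2_{11}\HM_1\circ(Df_\theta,Df_\theta)-D^2_{11}\HM_1\circ(g^\delta,g^\delta)$ is exactly the triangle-inequality decomposition the paper uses.

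There is, however, one concrete gap in your step (i): you assert that the maps $f\mapsto D^2_{ij}\HM_1(f,\theta)$ are Lipschitz in $f$ \emph{with respect to $\|\cdot\|_\infty$}, and that this suffices. For $D^2_{22}\HM_1$ this is false as stated: its explicit expression (equation \eqref{eqn_appendix_D_22_H} in the appendix) contains $\nabla f(T_\theta(z))$ and $\nabla^2 f(T_\theta(z))$ directly, so the perturbation of this term is controlled by $\|f_\theta^\delta-f_\theta\|_\infty+\|\nabla f_\theta^\delta-\nabla f_\theta\|_{2,\infty}+\|\nabla^2 f_\theta^\delta-\nabla^2 f_\theta\|_{op,\infty}$, not by the sup-norm gap alone; two functions can be uniformly close while their Hessians differ by $\OM(1)$. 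The paper fills this in with Lemmas~\ref{lemma_convergence_of_gradient} and~\ref{lemma_convergence_of_Hessian}, which show that the gradients and Hessians of the Sinkhorn--Knopp iterates also converge to $\nabla f_\theta$ and $\nabla^2 f_\theta$ in logarithmically many steps; the proof interpolates a Taylor expansion against the uniform bounds on the next-order derivative of the iterates (Lemma~\ref{lemma_appendix_sinkhorn_potential_boundedness}), which in turn requires the extra smoothness Assumptions~\ref{ass_bounded_infty_c_hessian} and~\ref{ass_lipschitz_c_hessian} on the ground cost. Your argument goes through once you replace ``Lipschitz in $\|\cdot\|_\infty$'' by Lipschitz in the $C^2$-type norm and add the convergence of $\nabla f^t$ and $\nabla^2 f^t$ to your iteration budget (it costs only another $\OM(\log\frac1\epsilon)$ factor), but as written the plug-in stability estimate for the $D^2_{22}$ piece does not follow from what you have established.
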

This result shows a significantly broader applicability of SiNG than WNG, as the latter  can only be used in limited situations due to the intractability of computing WIM. 

\section{Empirical Estimator of SIM}
In the previous section, we derived an explicit expression for the Sinkhorn information matrix (SIM) and described how it can be computed efficiently.
In this section, we provide an empirical estimator for SIM (eSIM) in the case where the integration w.r.t. $\alpha_\theta$ can only be computed in a Monte-Carlo manner. 
Moreover, we prove the stability of eSIM by showing that the Fr\'echet derivative of the Sinkhorn potential with respect to the parameter $\theta$ is continuous with respect to the underlying measure $\mu$, which is interesting on its own.

Recall that the parameterized measure has the structure $\alpha_\theta = {T_{\theta}}_{\sharp}\mu$, where $\mu\in\MM_1^+(\ZM)$ is some probability measure on the latent space $\ZM\subseteq\RBB^{\bar q}$ and $T_\theta:\ZM\rightarrow\XM$ is some push-forward mapping parameterized by $\theta\in\Theta$.
We use $\bar{\mu}$ to denote an empirical measure of $\mu$ with $n$ Dirac measures: $\bar{\mu} = \frac{1}{n}\sum_{i=1}^{n}\delta_{z_i}$ with $z_i\stackrel{\mathrm{iid}}{\sim}\mu$ and we use $\bar{\alpha}_\theta$ to denote the corresponding empirical measure of $\alpha_\theta$: $\bar{\alpha}_\theta = {T_{\theta}}_\sharp\bar\mu = \frac{1}{n}\sum_{i=1}^{n}\delta_{T_{\theta}(z_i)}$.
Based on the above definition, we propose the following empirical estimator for the Sinkhorn information matrix (eSIM)  
\begin{equation}
	\bar \HB(\theta^t) = \nabla^2_\theta \SM(\bar \alpha_{\theta}, \bar \alpha_{\theta^t})_{\vert {\theta} = {\theta^t}}.
\end{equation}
The following theorem shows stability of eSIM. The proof is provided in Appendix \ref{appendix_eSIM}. 
\begin{theorem} \label{theorem_consistency}
	Define the bounded Lipschitz metric of measures $d_{bl}:\MM_1^+(\XM)\times \MM_1^+(\XM)\rightarrow\RBB_+$ by
	\begin{equation}
	d_{bl}(\alpha, \beta) \defi \sup_{\|\xi\|_{bl}\leq 1} |\langle \xi, \alpha\rangle - \langle \xi, \beta \rangle|,
	\end{equation}
	where we denote $\|\xi\|_{bl} \defi \max\{\|\xi\|_\infty, \|\xi\|_{Lip} \}$ with $\|\xi\|_{Lip}\defi \max_{x, y\in\XM}\frac{|\xi(x)-\xi(y)|}{\|x-y\|}$.
	Assume that the ground cost function is bounded and Lipschitz continuous. Then
	\begin{equation}
		\|\bar \HB(\theta^t) - \HB(\theta^t)\|_{op} = {\OM(d_{bl}(\mu, \bar\mu))}.
	\end{equation}
\end{theorem}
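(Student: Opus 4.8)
The plan is to reduce the matrix-level stability bound to a sequence of scalar/function-level continuity statements, each controlling how a building block of SIM changes when the reference measure $\mu$ is replaced by its empirical version $\bar\mu$ (equivalently, $\alpha_\theta$ by $\bar\alpha_\theta$). By Proposition~\ref{proposition_SIM_expression} and the decomposition $\HB = \nabla_\theta^2\OTgamma(\alpha_\theta,\alpha_{\theta^t}) + \nabla_\theta^2\OTgamma(\alpha_\theta,\alpha_\theta)$ (evaluated at $\theta=\theta^t$), it suffices to bound, in operator norm, the difference between $-D^2_{11}\HM_1(f_\theta,\theta)\circ(Df_\theta,Df_\theta) + D^2_{22}\HM_1(f_\theta,\theta)$ and its empirical analogue, where $\HM_1$ is now built from $\bar\alpha_\theta$ and $f_\theta$ is replaced by the empirical Sinkhorn potential $\bar f_\theta$ (and likewise $Df_\theta$ by $D\bar f_\theta$). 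So the three ingredients I must control are: (i) $\|\bar f_\theta - f_\theta\|_\infty$, (ii) $\|D\bar f_\theta - Df_\theta\|_{op}$, and (iii) the perturbation of the explicit bilinear forms $D^2_{ij}\HM_1$ themselves, which are integrals against $\alpha_\theta$ versus $\bar\alpha_\theta$. Throughout I would use that $\langle h, \alpha_\theta\rangle - \langle h,\bar\alpha_\theta\rangle = \langle h\circ T_\theta, \mu\rangle - \langle h\circ T_\theta,\bar\mu\rangle$, so any such difference is bounded by $\|h\circ T_\theta\|_{bl}\, d_{bl}(\mu,\bar\mu)$; since $T_\theta$ is (locally) Lipschitz and $c$ is bounded and Lipschitz, the relevant integrands — which are compositions of $\exp$, $\log$, $c$, and the potentials — are themselves bounded and Lipschitz with constants independent of $n$, so each such swap costs $\OM(d_{bl}(\mu,\bar\mu))$.

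For step (i), I would invoke Proposition~\ref{proposition_sinkhorn_potential}: both $f_\theta$ and $\bar f_\theta$ are the unique fixed points of the contraction $\BM(\cdot,\theta)$ (resp.\ its empirical version $\bar\BM$) in the Hilbert/Thompson metric, with the same contraction factor $\lambda<1$ because $\lambda$ depends only on $M_c$. A standard fixed-point perturbation bound then gives $\|f_\theta - \bar f_\theta\|_\infty \le \frac{1}{1-\lambda}\sup_f\|\BM(f,\theta) - \bar\BM(f,\theta)\|_\infty$, and the Sinkhorn map $\AM(\cdot,\alpha)$ differs from $\AM(\cdot,\bar\alpha)$ only through the integral $\int \exp(\tfrac1\gamma(\bar f - c))\,d\alpha$ versus $d\bar\alpha$, whose integrand is bounded and Lipschitz, hence the difference is $\OM(d_{bl}(\mu,\bar\mu))$ uniformly in $f$; the $-\gamma\log(\cdot)$ outer map is Lipschitz on the (bounded-away-from-zero) range. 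For step (iii), once $\bar f_\theta$ is $\OM(d_{bl})$-close to $f_\theta$ in $\|\cdot\|_\infty$, the explicit formulas for $D^2_{ij}\HM_1$ (derived in Appendix~\ref{appendix_SIM}) are Lipschitz in their function argument and are integrals against $\alpha_\theta$; replacing the potential costs $\OM(d_{bl})$ via this Lipschitz property, and replacing the measure costs another $\OM(d_{bl})$ via the bounded-Lipschitz swap above.

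The main obstacle — and the reason the theorem statement singles it out — is step (ii): showing $\|D\bar f_\theta - Df_\theta\|_{op} = \OM(d_{bl}(\mu,\bar\mu))$, i.e.\ that the Fr\'echet derivative of the Sinkhorn potential with respect to $\theta$ is itself continuous in the reference measure $\mu$. I would get this from the characterization in Proposition~\ref{proposition_frechet_derivative}: $Df_\theta$ is the fixed point of the affine contraction $g\mapsto D_1\EM(f_\theta,\theta)\circ g + D_2\EM(f_\theta,\theta)$, and $D\bar f_\theta$ is the fixed point of the same map with $f_\theta\to\bar f_\theta$ and $\EM\to\bar\EM$ (empirical measure). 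The contraction factor is $\le (2/3)$ for both, again uniformly in $n$. A fixed-point perturbation argument for affine maps $g = Lg + b$ and $\bar g = \bar L\bar g + \bar b$ gives $\|g-\bar g\|_{op}\le \frac{1}{1-\|L\|}(\|L-\bar L\|_{op}\|\bar g\|_{op} + \|b - \bar b\|_{op})$, so I must show the linear operator $D_1\EM$ and the inhomogeneous term $D_2\EM$ — which are themselves built by differentiating the $l$-fold composition $\BM\circ\cdots\circ\BM$, hence are explicit expressions in $\exp$, $\log$, $c$, the potentials, and integrals against $\alpha_\theta$ — each move by $\OM(d_{bl}(\mu,\bar\mu))$ when $(\mu,f_\theta)\to(\bar\mu,\bar f_\theta)$. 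This is the delicate part: one must differentiate through the integral defining $\AM$, check that the resulting kernels are bounded and Lipschitz with $n$-independent constants (using compactness of $\XM$, boundedness and Lipschitzness of $c$, and the uniform $L^\infty$ bound on all Sinkhorn potentials that follows from $0\le c\le M_c$), and then chain-rule through the $l$ compositions while keeping all constants uniform. I would also need $\|\bar g_\theta\|_{op} = \|D\bar f_\theta\|_{op}$ bounded uniformly in $n$, which follows from the fixed-point identity and the uniform bound on $\|D_2\bar\EM\|_{op}$. Assembling (i), (ii), (iii) by the triangle inequality yields $\|\bar\HB(\theta^t) - \HB(\theta^t)\|_{op} = \OM(d_{bl}(\mu,\bar\mu))$, as claimed.
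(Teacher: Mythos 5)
Your proposal follows essentially the same route as the paper: the paper's proof consists of exactly your three ingredients, namely (i) Lipschitz continuity of the Sinkhorn potential in $\mu$ via the Birkhoff--Hopf/Hilbert-metric contraction and a fixed-point perturbation bound, (ii) Lipschitz continuity of $Df_\theta$ in $\mu$ via the affine fixed-point identity $Df_\theta = D_1\EM(f_\theta,\theta)\circ Df_\theta + D_2\EM(f_\theta,\theta)$ with contraction factor $2/3$ and a four-term perturbation decomposition, and (iii) reuse of the Lipschitz perturbation bounds on $D^2_{ij}\HM_1$ from the computability analysis, with each measure swap costing $\|h\circ T_\theta\|_{bl}\,d_{bl}(\mu,\bar\mu)$. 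The one ingredient you under-specify is that "Lipschitz in the function argument" for $D^2_{22}\HM_1$ actually requires controlling not just $\|\bar f_\theta - f_\theta\|_\infty$ but also $\|\nabla \bar f_\theta - \nabla f_\theta\|_{2,\infty}$ and $\|\nabla^2 \bar f_\theta - \nabla^2 f_\theta\|_{op,\infty}$, since the explicit expression involves the gradient and Hessian of the potential pointwise; the paper devotes two additional lemmas to these, though they follow from the same integral-representation and bounded-Lipschitz-swap technique you already use, so this is an omission of bookkeeping rather than of ideas.
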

In the rest of this subsection, we analyze the structure of $\bar \HB(\theta^t)$ and describe how it can be efficiently computed.
Similar to the previous section, we focus  on the term $\nabla^2_\theta \OTgamma(\bar \alpha_{\theta}, \beta)$ with $\bar \alpha_\theta = \frac{1}{n}\sum_{i=1}^{n}\delta_{T_{\theta}(z_i)}$ and $ \beta = \frac{1}{n}\sum_{i=1}^{n} \delta_{y_i}$ for arbitrary $y_i\in\XM$.

First, notice that the output of the Sinkhorn mapping \eqref{eqn_sinkhorn_mapping} is determined solely by the function values of the input $\bar{f}$ at the support of $\alpha$.
Using $\fB = [\fB_1, \ldots, \fB_n] \in \RBB^n$ with $\fB_i = \bar f(x_i)$ to denote the value extracted from $\bar f$ on $\supp(\bar \alpha)$,
we define for a discrete probability measures $\bar \alpha = \frac{1}{n}\sum_{i=1}^{n}\delta_{x_i}$ the discrete Sinkhorn mapping $\bar\AM\big(\fB, \bar \alpha\big):\RBB^n\times\MM_1^+(\XM)\rightarrow \CM(\XM)$ as
\begin{equation}
	\bar\AM\big(\fB, \bar \alpha\big) (y) \defi -\gamma\log \Big(\frac{1}{n}\sum_{i=1}^{n}\exp\Big(-\frac{1}{\gamma}c(x_i, y) + \frac{1}{\gamma} \fB_i\Big) \Big) = \AM\big(\bar f, \bar\alpha\big) (y),
\end{equation}
where the last equality should be understood as two functions being identical.
Since both $\bar \alpha_\theta$ and $ \beta$ in $\OTgamma(\bar \alpha_{\theta},  \beta)$ are discrete, \eqref{eqn_OTgamma_dual_single_variable} can be reduced to 
\begin{equation}\label{eqn_OTepsilon_dual_discrete_single}
	\OTgamma(\bar \alpha_\theta, \beta) = \max_{\fB\in \RBB^n} \left\{\bar \HM_1(\fB, \theta) = \frac{1}{n} \fB^\top \oneB_n + \frac{1}{n} \sum_{i=1}^{n} \bar \AM \big(\fB, \bar \alpha_\theta\big)(y_i) \right\}.
\end{equation}

Now, let $\fB_\theta$ be the solution to the above problem.
We can compute the first order gradient of $\OTgamma(\bar \alpha_{\theta}, \beta)$ with respect to $\theta$ by
\begin{equation}
	\nabla_\theta \OTgamma(\bar \alpha_{\theta}, \beta) = J_{\fB_\theta}^\top\cdot\nabla_1 \bar\HM_1(\fB_\theta, \theta)  + \nabla_2 \bar\HM_1(\fB_\theta, \theta) .
\end{equation}
Here $J_{\fB_\theta} = \frac{\partial \fB_\theta}{\partial \theta}\in\RBB^{n\times d}$ denotes the Jacobian matrix of $\fB_\theta$ with respect to $\theta$ and $\nabla_i \bar\HM_1$ denotes the gradient of $\bar\HM_1$ with respect to its $i^{th}$ variable for $i=1,2$.
Importantly, the optimality condition of $\fB_\theta$ implies $\nabla_1 \bar\HM_1(\fB_\theta, \theta) = \zeroB_n$.
Further, we compute the second order gradient of $\OTgamma(\bar \alpha_{\theta}, \beta)$ with respect to $\theta$ by (we omit the parameter $(\fB_\theta, \theta)$ of $\bar \HM_1$)
\begin{equation}
\nabla^2_\theta \OTgamma(\bar \alpha_{\theta}, \beta) = 
T_{\fB_\theta}\times_1 \nabla_{1} \bar\HM_1 + J_{\fB_\theta}^\top\cdot\nabla_{11} \bar\HM_1\cdot J_{\fB_\theta} + J_{\fB_\theta}^\top\cdot\nabla_{12} \bar\HM_1
+ \nabla_{21} \bar\HM_1^\top\cdot J_{\fB_\theta} + \nabla_{22} \bar\HM_1,
\end{equation}
where $T_{\fB_\theta}=\frac{\partial^2 \fB_\theta}{\partial \theta^2}\in\RBB^{n\times d\times d}$ is a tensor denoting the second-order Jacobian matrix of $\fB_\theta$ with respect to $\theta$ and  $\times_1$ denotes the tensor product along its first dimension.
Using the fact that $\nabla_1 \bar\HM_1(\fB_\theta, \theta) = \zeroB_n$, we drop the first term and simplify $\nabla^2_\theta \OTgamma(\bar \alpha_{\theta}, \beta)$ to (again we omit the parameter $(\fB_\theta, \theta)$ of $\bar \HM_1$)
\begin{equation}
\nabla^2_\theta \OTgamma(\bar \alpha_{\theta}, \beta) = 
J_{\fB_\theta}^\top\cdot\nabla_{11} \bar\HM_1\cdot J_{\fB_\theta} + J_{\fB_\theta}^\top\cdot\nabla_{12} \bar\HM_1 
+ \nabla_{21} \bar\HM_1^\top\cdot J_{\fB_\theta} + \nabla_{22} \bar\HM_1.
\end{equation}
As we have the explicit expression of $\bar\HM_1$, we can explicitly compute $\nabla_{ij} \bar\HM_1$ given that we have the Sinkhorn potential $\fB_\theta$.
Further, if we can  compute $J_{\fB_\theta}$, we are then able to compute $\nabla^2_\theta \OTgamma(\bar \alpha_{\theta}, \beta)$.
The following propositions can be viewed as discrete counterparts of Proposition \ref{proposition_sinkhorn_potential} and Proposition \ref{proposition_frechet_derivative} respectively.
Both $\fB_\theta$ and $J_{\fB_\theta}$ can be well-approximated using a number of finite dimensional vector/matrix operations which is logarithmic in the desired accuracy.
Besides, given these two quantities, one can easily check that $\nabla_{ij} \bar\HM_1$ can be evaluated within $\OM((n+d)^2)$ arithmetic operations.
Consequently, we can compute an $\epsilon$-accurate approximation of eSIM in time $\OM((n+d)^2\log\frac{1}{\epsilon})$.
\begin{proposition}[Computation of the Sinkhorn Potential $\fB_\theta$]
	Assume that the ground cost function $c$ is bounded, i.e. $0 \leq c(x, y)\leq M_c, \forall x, y\in\XM$.
	Denote $\lambda\defi\frac{\exp(M_c/\gamma) -1}{ \exp(M_c/\gamma) + 1 }<1$ and define
	\begin{equation} 
		\bar\BM \big(\fB, \theta\big) \defi \bar\AM\big(\gB, \beta\big) \ \mathrm{with}\ \gB = [\bar\AM \big(\fB, \bar \alpha_{\theta}\big)(y_1), \ldots, \bar\AM\big(\fB, \bar \alpha_{\theta}\big)(y_n)]\in\RBB^n.
	\end{equation}
	Then the fixed point iteration $\fB^{t+1} = \bar \BM\big(\fB^t, \theta\big)$ converges linearly:
	$\|\fB^{t+1} - \fB_\theta\|_\infty=\OM(\lambda^t)$
\end{proposition}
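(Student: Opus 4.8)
The plan is to \emph{reduce} the statement to the continuous version already proved, Proposition~\ref{proposition_sinkhorn_potential}. The measures $\bar\alpha_\theta=\frac1n\sum_{i=1}^n\delta_{x_i}$ (with $x_i=T_\theta(z_i)$) and $\beta=\frac1n\sum_{i=1}^n\delta_{y_i}$ are probability measures on the compact set $\XM$, and the ground cost $c$ is the same bounded function, so the hypotheses of Proposition~\ref{proposition_sinkhorn_potential} hold for the pair $(\bar\alpha_\theta,\beta)$, with the same rate $\lambda=\tfrac{\exp(M_c/\gamma)-1}{\exp(M_c/\gamma)+1}$. The one identity that links the functional and the finite-dimensional pictures is already recorded in the excerpt: whenever $\bar f$ agrees with $\fB$ on $\{x_i\}$ one has $\bar\AM(\fB,\bar\alpha_\theta)=\AM(\bar f,\bar\alpha_\theta)$ \emph{as functions in $\CM(\XM)$}, since the Sinkhorn mapping \eqref{eqn_sinkhorn_mapping} sees its argument only through its values on the support of the measure and outputs a genuine continuous function on all of $\XM$; the same holds for $\bar\AM(\cdot,\beta)$. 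Hence $\bar\BM(\fB,\theta)$, viewed in $\CM(\XM)$, coincides with $\BM(\bar f,\theta)$, where $\BM(f,\theta)=\AM(\AM(f,\bar\alpha_\theta),\beta)$ is exactly the map of \eqref{eqn_main_fix_point} with $\bar\alpha_\theta$ in place of $\alpha_\theta$.

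Using this, I would run the functional iteration in parallel with the finite-dimensional one: pick $f^0$ with $f^0(x_i)=\fB^0_i$ (so that $f^t=\BM(f^{t-1},\theta)$ lies in $\CM(\XM)$ for $t\ge1$) and prove by induction that $f^t|_{\{x_i\}}=\fB^t$ for all $t$. The inductive step is immediate from the identity above: if $f^t|_{\{x_i\}}=\fB^t$, then $\gB^t=[\bar\AM(\fB^t,\bar\alpha_\theta)(y_j)]_{j=1}^n$ is precisely the restriction of the continuous function $\AM(f^t,\bar\alpha_\theta)$ to $\supp(\beta)=\{y_j\}$, so $\bar\BM(\fB^t,\theta)=\bar\AM(\gB^t,\beta)=\AM(\AM(f^t,\bar\alpha_\theta),\beta)=f^{t+1}$ in $\CM(\XM)$ and in particular $\fB^{t+1}=f^{t+1}|_{\{x_i\}}$. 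The same identity gives $\bar\HM_1(\fB,\theta)=\HM_1(\bar f,\theta)$ for interpolants $\bar f$ — this is the reduction of \eqref{eqn_OTgamma_dual_single_variable} to \eqref{eqn_OTepsilon_dual_discrete_single} noted in the text — so the two dual problems share the optimal value and $\fB_\theta$ can be taken to be $\tilde f|_{\{x_i\}}$, where $\tilde f$ is the Sinkhorn potential of $\OTgamma(\bar\alpha_\theta,\beta)$. Then $\|\fB^{t+1}-\fB_\theta\|_\infty=\|(f^{t+1}-\tilde f)|_{\{x_i\}}\|_\infty\le\|f^{t+1}-\tilde f\|_\infty=\OM(\lambda^t)$ by Proposition~\ref{proposition_sinkhorn_potential}, which is the claim.

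The main obstacle is not an estimate but a normalization subtlety: $\HM_1(\cdot,\theta)$ and $\bar\HM_1(\cdot,\theta)$ are invariant, and $\BM(\cdot,\theta)$ and $\bar\BM(\cdot,\theta)$ are equivariant, under adding a constant to the potential, so the Sinkhorn potential is pinned down only up to a multiple of $\oneB_n$; for the $\|\cdot\|_\infty$ statement to be literally true, $\fB_\theta$ must be read as the particular limit of $\{\fB^t\}$ (equivalently the restriction of the particular $\tilde f$ that Proposition~\ref{proposition_sinkhorn_potential} produces), and one should check that this limit exists in $\|\cdot\|_\infty$ and not merely modulo constants — which it does because $\oneB_n^\top\bar\BM(\fB,\theta)-\oneB_n^\top\fB$ is translation-invariant and decays geometrically along the iteration. (There is also the harmless degenerate case of coinciding support points, where $f^0$ should be introduced only from $t=1$ onward, the iterates being automatically consistent with a function on $\supp(\bar\alpha_\theta)$ from the first step.) A fully self-contained alternative would bypass Proposition~\ref{proposition_sinkhorn_potential} and show directly that $\bar\BM(\cdot,\theta)$ contracts the Hilbert/Thompson projective metric on the positive cone: in the coordinates $u_i=\exp(\fB_i/\gamma)$ each half-step $\bar\AM$ is ``multiply by a positive Gibbs matrix, then invert componentwise,'' componentwise inversion is a Hilbert-metric isometry, and Birkhoff--Hopf bounds the contraction ratio of multiplication by the Gibbs matrix $K_{ij}=e^{-c(x_i,y_j)/\gamma}\in[e^{-M_c/\gamma},1]$ by $\tanh(\Delta(K)/4)\le\tanh(M_c/2\gamma)=\lambda$; the residual work in that route is again precisely the additive-constant bookkeeping needed to turn the projective contraction into a bound on $\|\fB^{t+1}-\fB_\theta\|_\infty$.
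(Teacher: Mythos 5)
Your proposal is correct and takes essentially the same route as the paper: the paper offers no separate proof for this discrete statement, presenting it as the immediate "discrete counterpart" of Proposition~\ref{proposition_sinkhorn_potential} via exactly the identity $\bar\AM(\fB,\bar\alpha)=\AM(\bar f,\bar\alpha)$ that you use as the bridge, and the continuous proposition itself is justified in Appendix~\ref{appendix_proposition_sinkhorn_potential} by the same Birkhoff--Hopf/Hilbert-metric contraction with factor $\lambda^2$ that you sketch as your self-contained alternative. Your remark on the additive-constant normalization is a genuine subtlety that the paper delegates to the cited results (Theorem~7.1.4 of \citep{lemmens2012nonlinear}) rather than addressing explicitly, and your handling of it is sound.
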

\begin{proposition}[Computation of the Jacobian $J_{\fB_\theta}$]
	Let $\fB_\epsilon$ be an approximation of $\fB_\theta$ such that $\|\fB_\epsilon-\fB_\theta\|_\infty\leq  \epsilon$. Pick $l = \lceil\log_{\lambda}\frac{1}{3}\rceil/2$. Define $\bar \EM\big(\fB, \theta\big) = \bar\BM\big(\cdots\bar\BM\big(\fB, \theta\big)\cdots, \theta\big)$, the $l$ times composition of $\bar\BM$ in its first variable. 
	Then the sequence of matrices
	\begin{equation}
		\JB^{t+1} = J_{1}{\bar\EM\big(\fB_\epsilon, \theta\big)} \cdot\JB^{t} + J_2\bar\EM\big(\fB_\epsilon, \theta\big),
	\end{equation}
	converges linearly to an $\epsilon$ neighbor of $J_{\fB_\theta}$: $\|\JB^{t+1} - J_{\fB_\theta}\|_{op}= \OM(\epsilon + (\frac{2}{3})^t\|\JB^0 - J_{\fB_\theta}\|_{op})$.
	Here $J_{i}\bar\EM$ denotes the Jacobian matrix of $\bar\EM$ with respect to its $i^{th}$ variable.
\end{proposition}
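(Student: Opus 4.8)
The plan is to run the discrete counterpart of the argument behind Proposition~\ref{proposition_frechet_derivative}: replace $\CM(\XM)$ by $\RBB^n$, Fr\'echet derivatives by Jacobian matrices, and $\|\cdot\|_\infty$ on functions by the $\ell_\infty$-norm on $\RBB^n$. First I would record the structural facts. By the preceding proposition, $\bar\BM(\cdot,\theta)$ contracts at rate $\lambda$ (in the same metric that underlies Proposition~\ref{proposition_sinkhorn_potential}), so its $l$-fold composition $\bar\EM(\cdot,\theta)$ contracts at rate $\lambda^l$, and $\fB_\theta$ — the maximizer of the concave problem~\eqref{eqn_OTepsilon_dual_discrete_single} — is a fixed point of $\bar\EM(\cdot,\theta)$. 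Since $\bar\AM$, hence $\bar\BM$ and $\bar\EM$, are assembled from $\log$, $\exp$ and the smooth-in-$\theta$ maps $\theta\mapsto c(T_\theta(z_i),\cdot)$, all of them are jointly $C^1$ in $(\fB,\theta)$; differentiating the identity $\fB_\theta=\bar\EM(\fB_\theta,\theta)$ in $\theta$ (implicit function theorem) yields the Sylvester-type fixed-point equation
\[
	J_{\fB_\theta}\;=\;J_1\bar\EM(\fB_\theta,\theta)\cdot J_{\fB_\theta}\;+\;J_2\bar\EM(\fB_\theta,\theta).
\]
Because a $C^1$ map that is $\lambda^l$-Lipschitz has Jacobian of operator norm at most $\lambda^l$ everywhere, $\|J_1\bar\EM(\cdot,\theta)\|_{op}\le\lambda^l$; with $l=\lceil\log_\lambda\tfrac{1}{3}\rceil/2$ one has $\lambda^l\le(\tfrac{1}{3})^{1/2}=1/\sqrt3<\tfrac{2}{3}$, so $I-J_1\bar\EM(\fB_\theta,\theta)$ is invertible and the displayed equation determines $J_{\fB_\theta}$.

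Next I would set up the error recursion for $\DB^{t}\defi\JB^{t}-J_{\fB_\theta}$. Subtracting the fixed-point identity from the iteration $\JB^{t+1}=J_1\bar\EM(\fB_\epsilon,\theta)\JB^{t}+J_2\bar\EM(\fB_\epsilon,\theta)$ and inserting $\pm J_1\bar\EM(\fB_\epsilon,\theta)J_{\fB_\theta}$ gives
\[
	\DB^{t+1}=J_1\bar\EM(\fB_\epsilon,\theta)\,\DB^{t}+\bigl[J_1\bar\EM(\fB_\epsilon,\theta)-J_1\bar\EM(\fB_\theta,\theta)\bigr]J_{\fB_\theta}+\bigl[J_2\bar\EM(\fB_\epsilon,\theta)-J_2\bar\EM(\fB_\theta,\theta)\bigr].
\]
On the compact region containing the iterates, $J_1\bar\EM$ and $J_2\bar\EM$ are Lipschitz in their first argument with some constant $L$; using $\|\fB_\epsilon-\fB_\theta\|_\infty\le\epsilon$ and $\|J_1\bar\EM(\fB_\epsilon,\theta)\|_{op}\le\lambda^l+L\epsilon$ and taking operator norms,
\[
	\|\DB^{t+1}\|_{op}\;\le\;\bigl(\tfrac{1}{\sqrt3}+L\epsilon\bigr)\|\DB^{t}\|_{op}\;+\;L\epsilon\bigl(\|J_{\fB_\theta}\|_{op}+1\bigr).
\]
For $\epsilon$ small enough that $L\epsilon\le\tfrac{2}{3}-\tfrac{1}{\sqrt3}$ the multiplier is $\le\tfrac{2}{3}$, and unrolling the affine recursion $a_{t+1}\le\tfrac{2}{3}a_t+C\epsilon$ gives $a_{t+1}\le(\tfrac{2}{3})^{t+1}a_0+3C\epsilon$; hence $\|\JB^{t+1}-J_{\fB_\theta}\|_{op}=\OM(\epsilon+(\tfrac{2}{3})^{t}\|\JB^{0}-J_{\fB_\theta}\|_{op})$, which is the claim. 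Producing $\fB_\epsilon$ to accuracy $\epsilon$ costs $\OM(\log\tfrac{1}{\epsilon})$ Sinkhorn steps by the preceding proposition, and $\OM(\log\tfrac{1}{\epsilon})$ of the above Jacobian iterations then suffice.

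The main obstacle is the bookkeeping in the first paragraph: one must justify that $\bar\BM(\cdot,\theta)$ genuinely contracts at rate $\lambda$ and that this transfers, via the chain rule, to $\|J_1\bar\EM(\cdot,\theta)\|_{op}\le\lambda^l$ (so that the chosen $l$ forces the rate strictly below $\tfrac{2}{3}$), and that the joint $C^1$ regularity is strong enough for the implicit differentiation of the fixed-point identity and the local Lipschitzness of $J_i\bar\EM$. The contraction statement is exactly the Birkhoff/Hilbert-metric estimate quoted in Proposition~\ref{proposition_sinkhorn_potential} and the preceding proposition — including the usual additive-constant normalization of the Sinkhorn potentials handled as in Proposition~\ref{proposition_frechet_derivative} — so it can be cited rather than reproved; the regularity is immediate from the explicit form of $\bar\AM$ once $T_\theta$ is assumed smooth in $\theta$. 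Everything else is the routine affine-recursion estimate above.
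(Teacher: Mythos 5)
Your proposal is correct and mirrors the paper's proof of Proposition \ref{proposition_frechet_derivative} (the continuous analogue, to which the paper defers for this discrete statement): differentiate the fixed-point identity $\fB_\theta=\bar\EM(\fB_\theta,\theta)$, split the error into a contraction term plus Lipschitz-perturbation terms coming from $\|\fB_\epsilon-\fB_\theta\|_\infty\leq\epsilon$, and unroll the resulting affine recursion. The only cosmetic difference is the contraction bookkeeping: the paper obtains the factor $\frac{2}{3}$ as $2\lambda^{2l}$ via the Hilbert-metric estimate for the two $\bar\AM$'s inside each $\bar\BM$ together with the factor-$2$ conversion to $\ell_\infty$, whereas you quote a rate $\lambda^{l}\leq 1/\sqrt{3}$; both land strictly below $\frac{2}{3}$, so the argument goes through unchanged.
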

The SiNG direction $\mathbf{d}_t$ involves the inversion of $\bar\HB(\theta^t)$.
This can be (approximately) computed using the classical conjugate gradient (CG) algorithm, using only matrix-vector products.
Combining eSIM and CG, we describe a simple and elegant PyTorch-based implementation for SiNG in Appendix \ref{appendix_pytorch}, 

%

\section{Experiment}
In this section, we compare SiNG with other SGD-type solvers by training generative models.
We did not compare with WNG \cite{li2018natural} since WNG can only be implemented for the case where the parameter dimension $d$ is $1$.
We also tried to implement KWNG \cite{arbel2019kernelized}, which however diverges in our setting. In particular, we encounter the case when the KWNG direction has negative inner product with the euclidean gradient direction, leading to its divergence. As we discussed in the related work, the gap between KWNG and WNG cannot be quantified with reasonable assumptions, which explains our observation.
In all the following experiments, we pick the push-forward map $T_\theta$ to be the generator network in DC-GAN \citep{radford2015unsupervised}.
For more detailed experiment settings, please see Appendix \ref{appendix_experiment}.

\subsection{Squared-$\ell_2$-norm as Ground Metric}
We first consider the distribution matching problem, 
where our goal is to minimize the Sinkhorn divergence between the parameterized generative model $\alpha_{\theta}  = {T_{\theta}}_\sharp \mu$ and a given target distribution $\beta$,
\begin{equation} \label{eqn_squared_l2_ground_cost}
	\min_{\theta \in \Theta }  F(\theta) = \SM(\alpha_\theta, \beta).
\end{equation}
\begin{wrapfigure}{r}{.3\textwidth}
	\centering
	\raisebox{0pt}[\dimexpr\height-0.2\baselineskip\relax]{\includegraphics[width=0.3\textwidth]{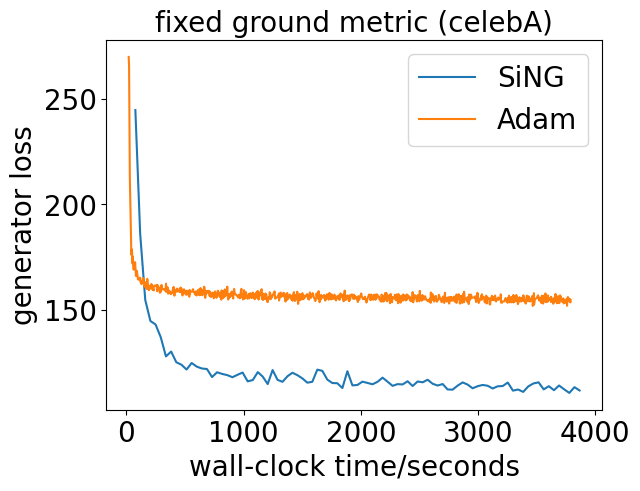}}
\end{wrapfigure}
Here, $T_\theta$ is a neural network describing the push-forward map with its parameter summarized in $\theta$ and $\mu$ is a zero-mean isometric Gaussian distribution.
In particular, the metric on the ground set $\XM$ is set to the vanilla squared-$\ell_2$ norm, i.e. $c(x, y) = \|x - y\|^2$ for $x, y\in\XM$.
Our experiment considers a specific instance of problem \eqref{eqn_squared_l2_ground_cost} where we take the measure $\beta$ to be the distribution of the images in the CelebA dataset.
We present the comparison of the generator loss (the objective value) vs time plot in right figure. The entropy regularization parameter $\gamma$ is set to $0.01$ for both the objective and the constraint.
We can see that SiNG is much more efficient at reducing the objective value than ADAM given the same amount of time.


\subsection{Squared-$\ell_2$-norm with an Additional Encoder as Ground Metric}
\begin{figure}[h]
	\centering
	\includegraphics[width=0.45\textwidth]{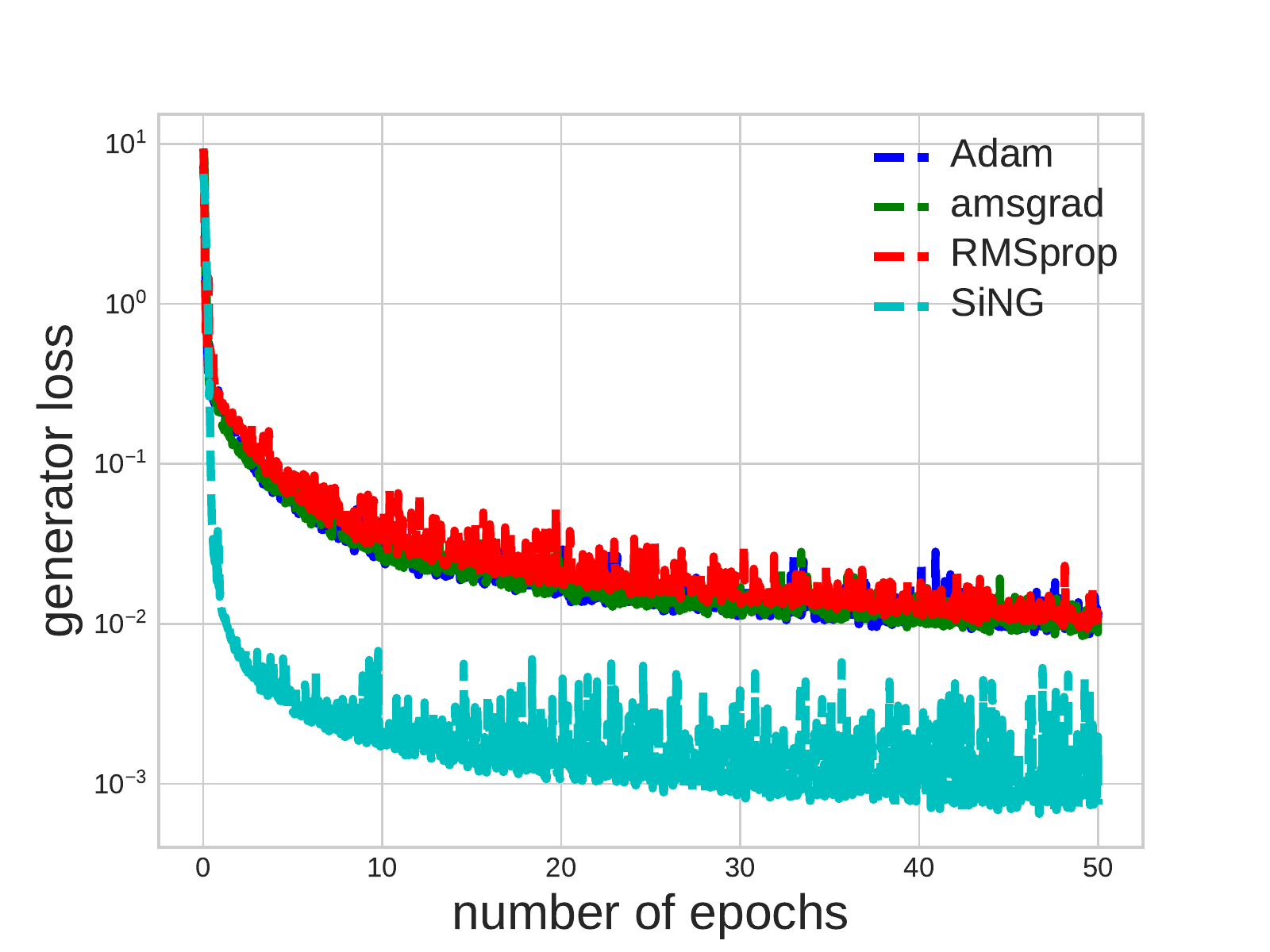}
	\includegraphics[width=0.45\textwidth]{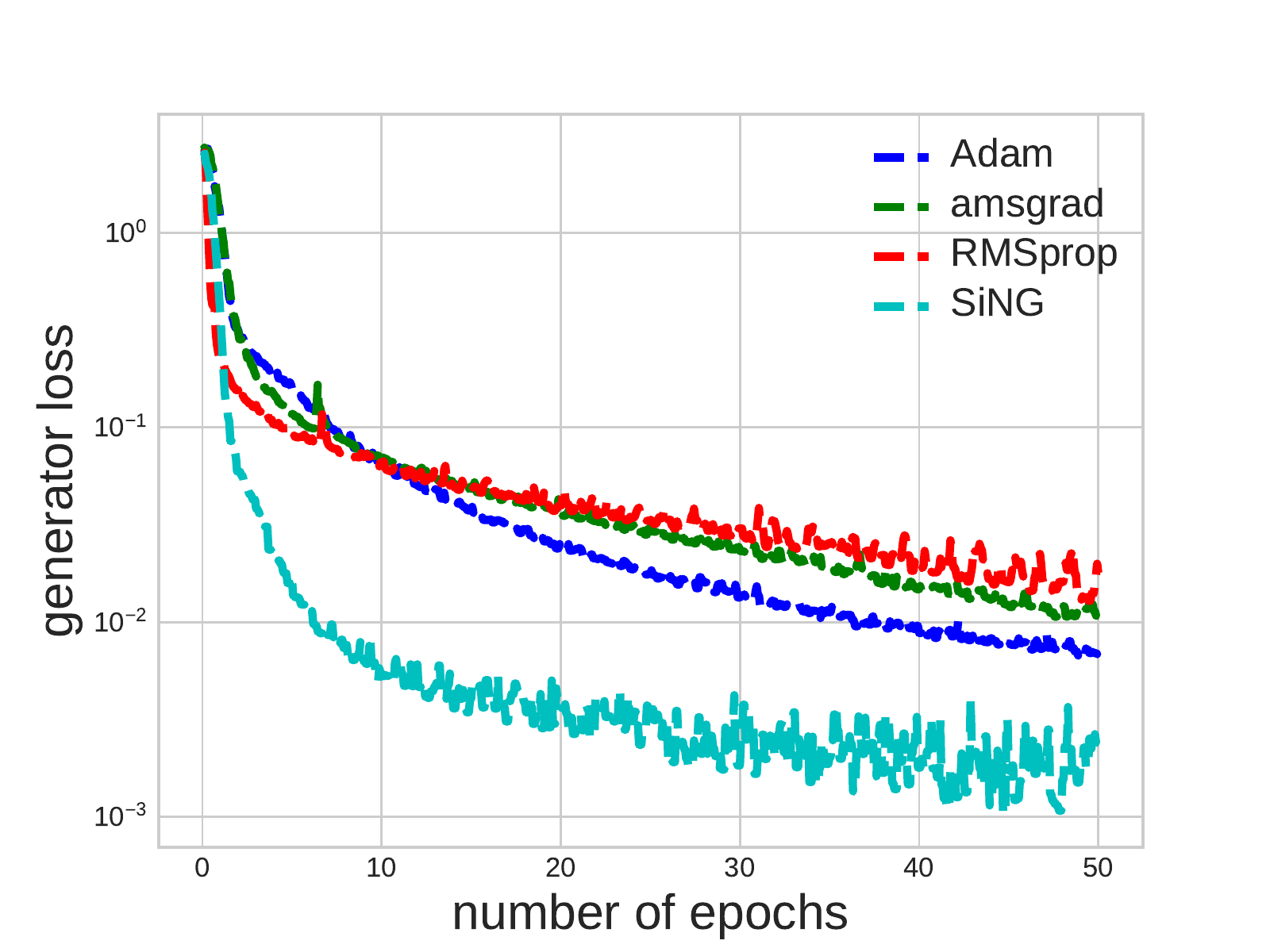}
	\caption{Generator losses on CelebA (left) and Cifar10 (right).}
	\label{fig_result}
\end{figure}
We then consider a special case of problem \eqref{eqn_experiment_gan}, where the metric on the ground set $\XM$ is set to squared-$\ell_2$-norm with a \emph{fixed} parameterized encoder (i.e. we fix the variable $\xi$ in the $\max$ part of \eqref{eqn_experiment_gan}): $c_\xi(x, y) = \|\phi_\xi(x) - \phi_\xi(y)\|^2$.
Here $\phi_\xi(\cdot):\XM\rightarrow \RBB^{\hat q}$ is a neural network encoder that outputs an embedding of the input in a high dimensional space ($\hat q > q$, where we recall $q$ is the dimension of the ground set $\XM$).
In particular, we set $\phi_\xi(\cdot)$ to be the discriminator network in DC-GAN without the last classification layer \citep{radford2015unsupervised}.
Two specific instances are considered: we take the measure $\beta$ to be the distribution of the images in either the CelebA or the Cifar10 dataset.
The parameter $\xi$ of the encoder $\phi$ is obtained in the following way: we first use SiNG to train a generative model by alternatively taking a SiNG step on $\theta$ and taking an SGD step on $\xi$.
After sufficiently many iterations (when the generated image looks real or specifically 50 epochs), we fix the encoder $\phi_{\xi}$.
We then set the objective functional \eqref{eqn_main} to be
	$\FM(\alpha_\theta) = \SM_{c_\xi}(\alpha_\theta, \beta)$ (see \eqref{eqn_experiment_gan}),
and compare SiNG and SGD-type algorithms in the minimization of $\FM$ under a consensus random initialization.
We report the comparison in Figure \ref{fig_result}, where we observe the significant improvement from SiNG in both accuracy and efficiency. 
Such phenomenon is due to the fact that SiNG is able to use geometry information by considering SIM while other method does not.
Moreover, the pretrained ground cost $c_\xi$ may capture some non-trivial metric structure of the images and consequently geometry-faithfully method like our SiNG can thus do better.

%
%

\subsection{Training GAN with SiNG}
\begin{figure}[h]
	\centering
	\begin{tabular}{c @{ } c}
		\includegraphics[width=0.45\textwidth]{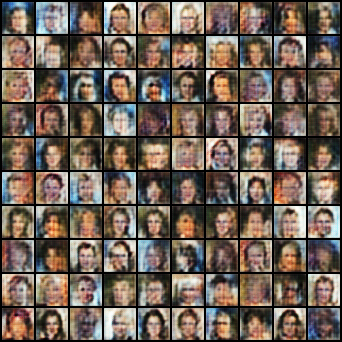}   &   
		\includegraphics[width=0.45\textwidth]{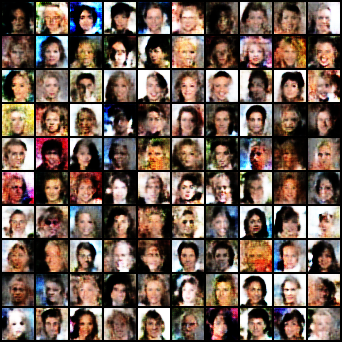}
	\end{tabular}
	\caption{Comparison of the visual quality of the images generated by Adam (left) and SiNG (right).}
	\label{fig_gan}
\end{figure}
Finally, we showcase the the advantage of training a GAN model using SiNG over SGD-based solvers.
Specifically, we consider the GAN model \eqref{eqn_experiment_gan}.
The entropy regularization of the Sinkhorn divergence objective is set to $\gamma=100$ as suggested in Table 2 of \citep{genevay2018learning}.
The regularization for the constraint is set to $\gamma=1$ in SiNG. We used {ADAM} as the optimizer for the discriminators (with step size $10^{-3}$ and batch size 4000).
The result is reported in Figure \ref{fig_gan}.
We can see that the images generated using SiNG are much more vivid than the ones obtained using SGD-based optimizers.
We remark that our main goal has been to showcase that SiNG is more efficient in reducing the objective value compared to SGD-based solvers, and hence, 
we have used a relatively simpler DC-GAN type generator and discriminator (details given in the supplementary materials). 
If more sophisticated ResNet type generators and discriminators are used, the image quality can be further improved. 
\clearpage
\section{Broader Impact}
We propose the Sinkhorn natural gradient (SiNG) algorithm for minimizing an objective functional over a parameterized family of generative-model type measures. While our results do not immediately lead to broader societal impacts (as they are mostly theoretical), they can lead to new potential positive impacts.
SiNG admits explicit update rule which can be efficiently carried out in an exact manner under both continuous and discrete settings.
Being able to exploit the geometric information provided in the Sinkhorn information matrix, we observe the remarkable advantage of SiNG over existing state-of-the-art SGD-type solvers.
Such algorithm is readily applicable to many types of existing generative adversarial models and possibly helps the development of the literature.
\section*{Acknowledgment}
This work is supported by NSF CPS-1837253.
\bibliographystyle{abbrvnat}
\bibliography{refs}

\begin{thebibliography}{24}
\providecommand{\natexlab}[1]{#1}
\providecommand{\url}[1]{\texttt{#1}}
\expandafter\ifx\csname urlstyle\endcsname\relax
  \providecommand{\doi}[1]{doi: #1}\else
  \providecommand{\doi}{doi: \begingroup \urlstyle{rm}\Url}\fi

\bibitem[Amari(1998)]{amari1998natural}
S.-I. Amari.
\newblock Natural gradient works efficiently in learning.
\newblock \emph{Neural computation}, 10\penalty0 (2):\penalty0 251--276, 1998.

\bibitem[Amari et~al.(1987)Amari, Barndorff-Nielsen, Kass, Lauritzen, Rao,
  et~al.]{amari1987differential}
S.-I. Amari, O.~Barndorff-Nielsen, R.~Kass, S.~Lauritzen, C.~Rao, et~al.
\newblock Differential geometrical theory of statistics.
\newblock In \emph{Differential geometry in statistical inference}, pages
  19--94. Institute of Mathematical Statistics, 1987.

\bibitem[Arbel et~al.(2019)Arbel, Gretton, Li, and
  Mont{\'u}far]{arbel2019kernelized}
M.~Arbel, A.~Gretton, W.~Li, and G.~Mont{\'u}far.
\newblock Kernelized wasserstein natural gradient.
\newblock \emph{arXiv preprint arXiv:1910.09652}, 2019.

\bibitem[Essid et~al.(2019)Essid, Laefer, and Tabak]{essid2019adaptive}
M.~Essid, D.~F. Laefer, and E.~G. Tabak.
\newblock Adaptive optimal transport.
\newblock \emph{Information and Inference: A Journal of the IMA}, 8\penalty0
  (4):\penalty0 789--816, 2019.

\bibitem[Feydy et~al.(2019)Feydy, S{\'e}journ{\'e}, Vialard, Amari, Trouve, and
  Peyr{\'e}]{feydy2019interpolating}
J.~Feydy, T.~S{\'e}journ{\'e}, F.-X. Vialard, S.-i. Amari, A.~Trouve, and
  G.~Peyr{\'e}.
\newblock Interpolating between optimal transport and mmd using sinkhorn
  divergences.
\newblock In \emph{The 22nd International Conference on Artificial Intelligence
  and Statistics}, pages 2681--2690, 2019.

\bibitem[Genevay et~al.(2018)Genevay, Peyre, and Cuturi]{genevay2018learning}
A.~Genevay, G.~Peyre, and M.~Cuturi.
\newblock Learning generative models with sinkhorn divergences.
\newblock In \emph{International Conference on Artificial Intelligence and
  Statistics}, pages 1608--1617, 2018.

\bibitem[Goodfellow et~al.(2014)Goodfellow, Pouget-Abadie, Mirza, Xu,
  Warde-Farley, Ozair, Courville, and Bengio]{goodfellow2014generative}
I.~Goodfellow, J.~Pouget-Abadie, M.~Mirza, B.~Xu, D.~Warde-Farley, S.~Ozair,
  A.~Courville, and Y.~Bengio.
\newblock Generative adversarial nets.
\newblock In \emph{Advances in neural information processing systems}, pages
  2672--2680, 2014.

\bibitem[Hinton et~al.()Hinton, Srivastava, and Swersky]{hinton2012neural}
G.~Hinton, N.~Srivastava, and K.~Swersky.
\newblock Neural networks for machine learning lecture 6a overview of
  mini-batch gradient descent.

\bibitem[Janati et~al.(2020)Janati, Muzellec, Peyr{\'e}, and
  Cuturi]{janati2020entropic}
H.~Janati, B.~Muzellec, G.~Peyr{\'e}, and M.~Cuturi.
\newblock Entropic optimal transport between (unbalanced) gaussian measures has
  a closed form.
\newblock \emph{arXiv preprint arXiv:2006.02572}, 2020.

\bibitem[Kingma and Ba(2014)]{kingma2014adam}
D.~P. Kingma and J.~Ba.
\newblock Adam: A method for stochastic optimization.
\newblock \emph{arXiv preprint arXiv:1412.6980}, 2014.

\bibitem[Lemmens and Nussbaum(2012)]{lemmens2012nonlinear}
B.~Lemmens and R.~Nussbaum.
\newblock \emph{Nonlinear Perron-Frobenius Theory}, volume 189.
\newblock Cambridge University Press, 2012.

\bibitem[Li and Mont{\'u}far(2018)]{li2018natural}
W.~Li and G.~Mont{\'u}far.
\newblock Natural gradient via optimal transport.
\newblock \emph{Information Geometry}, 1\penalty0 (2):\penalty0 181--214, 2018.

\bibitem[Li and Mont{\'u}far(2020)]{li2020ricci}
W.~Li and G.~Mont{\'u}far.
\newblock Ricci curvature for parametric statistics via optimal transport.
\newblock \emph{Information Geometry}, pages 1--29, 2020.

\bibitem[Li and Zhao(2019)]{li2019wasserstein}
W.~Li and J.~Zhao.
\newblock Wasserstein information matrix.
\newblock \emph{arXiv preprint arXiv:1910.11248}, 2019.

\bibitem[Li et~al.(2019)Li, Lin, and Mont{\'u}far]{li2019affine}
W.~Li, A.~T. Lin, and G.~Mont{\'u}far.
\newblock Affine natural proximal learning.
\newblock In \emph{International Conference on Geometric Science of
  Information}, pages 705--714. Springer, 2019.

\bibitem[Luise et~al.(2019)Luise, Salzo, Pontil, and Ciliberto]{NIPS2019_9130}
G.~Luise, S.~Salzo, M.~Pontil, and C.~Ciliberto.
\newblock Sinkhorn barycenters with free support via frank-wolfe algorithm.
\newblock In \emph{Advances in Neural Information Processing Systems 32}. 2019.

\bibitem[Martens and Grosse(2015)]{martens2015optimizing}
J.~Martens and R.~Grosse.
\newblock Optimizing neural networks with kronecker-factored approximate
  curvature.
\newblock In \emph{International conference on machine learning}, pages
  2408--2417, 2015.

\bibitem[Peyr{\'e} et~al.(2019)Peyr{\'e}, Cuturi,
  et~al.]{peyre2019computational}
G.~Peyr{\'e}, M.~Cuturi, et~al.
\newblock Computational optimal transport.
\newblock \emph{Foundations and Trends{\textregistered} in Machine Learning},
  11\penalty0 (5-6):\penalty0 355--607, 2019.

\bibitem[Radford et~al.(2015)Radford, Metz, and
  Chintala]{radford2015unsupervised}
A.~Radford, L.~Metz, and S.~Chintala.
\newblock Unsupervised representation learning with deep convolutional
  generative adversarial networks, 2015.

\bibitem[Reddi et~al.(2019)Reddi, Kale, and Kumar]{reddi2019convergence}
S.~J. Reddi, S.~Kale, and S.~Kumar.
\newblock On the convergence of adam and beyond.
\newblock \emph{arXiv preprint arXiv:1904.09237}, 2019.

\bibitem[Salimans et~al.(2018)Salimans, Zhang, Radford, and
  Metaxas]{salimans2018improving}
T.~Salimans, H.~Zhang, A.~Radford, and D.~Metaxas.
\newblock Improving gans using optimal transport.
\newblock In \emph{International Conference on Learning Representations}, 2018.

\bibitem[Sinkhorn and Knopp(1967)]{sinkhorn1967concerning}
R.~Sinkhorn and P.~Knopp.
\newblock Concerning nonnegative matrices and doubly stochastic matrices.
\newblock \emph{Pacific Journal of Mathematics}, 21\penalty0 (2):\penalty0
  343--348, 1967.

\bibitem[Song et~al.(2018)Song, Song, and Ermon]{song2018accelerating}
Y.~Song, J.~Song, and S.~Ermon.
\newblock Accelerating natural gradient with higher-order invariance.
\newblock \emph{Proceedings of the 35th International Conference on Machine
  Learning, ICML 2018, Stockholmsmässan, Stockholm}, 2018.

\bibitem[Thomas et~al.(2016)Thomas, Silva, Dann, and
  Brunskill]{thomas2016energetic}
P.~Thomas, B.~C. Silva, C.~Dann, and E.~Brunskill.
\newblock Energetic natural gradient descent.
\newblock In \emph{International Conference on Machine Learning}, pages
  2887--2895, 2016.

\end{thebibliography}
\clearpage
\appendix
\section{Appendix Section for Methodology} \label{appendix_methodology}
\subsection{Proof of Proposition \ref{proposition_update_direction}} \label{appendix_proof_of_proposition_update_direction}
Denote the Lagrangian function by
\begin{equation}
G_{\lambda}(\Delta\theta) = F(\theta^t+\Delta\theta) + \lambda \left(\SM(\alpha_{\theta^t + \Delta\theta}, \alpha_{\theta^t})- \epsilon -  \epsilon^{c_2}\right).
\end{equation}
We have the following inequality which characterize a lower bound of the solution to \eqref{eqn_natural_gradient_subproblem} (recall that $1 < c_2 < 1.5$, $c_1<0.5$ and $3c_1 - 1\geq c_2$) ,
\begin{equation}\label{eqn_minmax_maxmin}
\begin{aligned}
	\min_{\Delta\theta\in\RBB^d} &\ F(\theta^t+\Delta\theta)\\
	\mathrm{s.t.} &\ \|\Delta \theta\|\leq \epsilon^{c_1}\ \\
	&\ \SM(\alpha_{\theta^t + \Delta\theta}, \alpha_{\theta^t})\leq \epsilon + \epsilon^{c_2}
\end{aligned}
   = \min_{\|\Delta \theta\|\leq \epsilon^{c_1}}\max_{\lambda\geq 0} G_{\lambda}(\Delta\theta) \geq \max_{\lambda\geq 0} \min_{\|\Delta \theta\|\leq \epsilon^{c_1}} G_{\lambda}(\Delta\theta).
\end{equation}
We now focus on the R.H.S. of the above inequality.
Denote the second-order Taylor expansion of the Lagrangian $G_{\lambda}$ by $\bar{G}_{\lambda}$:
\begin{equation*}
\bar G_{\lambda}(\Delta\theta) = F(\theta^t) + \langle \nabla_\theta F(\theta^t), \Delta\theta\rangle + \frac{1}{2}\langle \nabla^2_\theta F(\theta^t) \Delta\theta, \Delta\theta\rangle + \frac{\lambda}{2} \langle\HB(\theta^t)\Delta\theta, \Delta\theta\rangle -  \lambda\epsilon - \lambda \epsilon^{c_2},
\end{equation*}
where we used the optimality condition \eqref{eqn_optimality_of_Sinkhorn_divergence} of $\SM(\alpha, \alpha^t)$ so that the first-order term of $\SM(\alpha, \alpha^t)$ vanishes. Besides, $\HB(\theta)$ is defined in \eqref{eqn_optimality_of_Sinkhorn_divergence}.
{The error of such approximation can be bounded as }
\begin{equation}
G_{\lambda}(\Delta\theta) - \bar G_{\lambda}(\Delta\theta) = \OM((\lambda+1) \|\Delta\theta\|^3).
\end{equation}
Further, for any fixed $\lambda$, denote $\Delta\theta_\lambda^* = \argmin_{\|\Delta \theta\|\leq \epsilon^{c_1}} G_{\lambda}(\Delta\theta)$.

We can then derive the following lower bound on the minimization subproblem of the R.H.S. of \eqref{eqn_minmax_maxmin}:
\begin{equation*}
\begin{aligned}
\max_{\lambda\geq 0}\min_{\|\Delta \theta\|\leq \epsilon^{c_1}} G_{\lambda}(\Delta\theta) =&\ \max_{\lambda\geq 0} \bar G_{\lambda}(\Delta\theta_{\lambda}^*) - \OM((\lambda+1) \|\Delta\theta_{\lambda}^*\|^3) \\
\geq&\ \max_{\lambda\geq 0} \bar G_{\lambda}(\Delta\theta_{\lambda}^*) - \OM((\lambda+1) \epsilon^{3c_1})  \\
\geq&\ \max_{\lambda\geq 0}\min_{\|\Delta \theta\|\leq \epsilon^{c_1}} \bar G_{\lambda}(\Delta\theta) - \OM((\lambda+1) \epsilon^{3c_1}),
\end{aligned}
\end{equation*}
Note that for sufficiently large $\lambda$, $\HB(\theta^t) + \frac{1}{\lambda} \nabla^2_\theta F(\theta^t) \succ 0 $ by recalling the positive definiteness of $\HB(\theta^t)$.
In this case, as a convex program, $\min_{\|\Delta \theta\|\leq \epsilon^{c_1}} \bar G_{\lambda}(\Delta\theta)$ admits the closed form solution:
Denote $\widebar{\Delta \theta_{\lambda}^*} = \argmin \bar G_{\lambda}(\Delta\theta)$. We have
\begin{equation}
\widebar{\Delta \theta_{\lambda}^*} = -\frac{1}{\lambda}\left(\HB(\theta^t) + \frac{1}{\lambda} \nabla^2_\theta F(\theta^t)\right)^{-1}\nabla_\theta F(\theta^t)\ \mathrm{and} \ \bar G(\widebar{\Delta \theta_{\lambda}^*}) = F(\theta^t) - \frac{\bar a}{2\lambda} - \lambda\epsilon - \lambda\epsilon^{c_2},
\end{equation}
where we denote $\bar a\defi \langle\left[\HB(\theta^t)  + \frac{1}{\lambda} \nabla^2_\theta F(\theta^t)\right]^{-1}\nabla_\theta F(\theta^t), \nabla_\theta F(\theta^t)\rangle > 0$.

For sufficiently small $\epsilon$, by taking $\lambda = \sqrt{\frac{a}{2\epsilon}}$ with $a \defi \langle\left[\HB(\theta^t) \right]^{-1}\nabla_\theta F(\theta^t), \nabla_\theta F(\theta^t)\rangle > 0$ (note that $\|\widebar{\Delta \theta_{\lambda}^*}\| = \OM(\sqrt{\epsilon}) < \epsilon^{c_1}$ and is hence feasible for $c_1<0.5$), the R.H.S. of \eqref{eqn_minmax_maxmin} has the following lower bound (recall that we have $3c_1-1\geq c_2$)
\begin{equation}
\max_{\lambda\geq 0} \min_{\|\Delta \theta\|\leq \epsilon^{c_1}} G_{\lambda}(\Delta\theta) \geq F(\theta^t) - (\frac{\bar{a}}{\sqrt{2a}}+\sqrt{\frac{a}{2}})\sqrt{\epsilon} - \OM(\epsilon^{c_2-0.5}).
\end{equation}

This result leads to the following lower bound on \eqref{eqn_natural_gradient_subproblem}:
\begin{equation}
\lim_{\epsilon\rightarrow 0}\frac{F(\theta^t+\Delta\theta^t_\epsilon) - F(\theta^t)}{\sqrt{\epsilon}} \geq -\sqrt{2\langle\left[\HB(\theta^t)\right]^{-1}\nabla_\theta F(\theta^t), \nabla_\theta F(\theta^t)\rangle},
\end{equation}
where $\Delta\theta^t_\epsilon$ is the solution to \eqref{eqn_natural_gradient_subproblem}.
Finally, observe that the equality is achieved by taking $\Delta\theta^t_\epsilon = -\frac{\sqrt{2\epsilon} \left(\HB(\theta^t) \right)^{-1}\nabla_\theta F(\theta^t)}{\sqrt{\langle\left[\HB(\theta^t)\right]^{-1}\nabla_\theta F(\theta^t), \nabla_\theta F(\theta^t)\rangle}}$:
\begin{equation}
\lim_{\epsilon\rightarrow 0}\frac{F(\theta^t+\Delta\theta^t_\epsilon) - F(\theta^t)}{\sqrt{\epsilon}} =  \lim_{\epsilon\rightarrow 0} \frac{1}{\sqrt{\epsilon}}\langle\nabla F(\theta^t), \Delta\theta^t_\epsilon	\rangle = -\sqrt{2\langle\left[\HB(\theta^t)\right]^{-1}\nabla_\theta F(\theta^t), \nabla_\theta F(\theta^t)\rangle},
\end{equation}
and $\Delta\theta^t_\epsilon$ is feasible for sufficiently small $\epsilon$ (note that we have $\frac{1}{2}\langle\HB(\theta^t)\Delta\theta^t_\epsilon, \Delta\theta^t_\epsilon\rangle = \epsilon$):
\begin{equation}
\SM(\alpha_{\theta^t + \Delta\theta^t_\epsilon}, \alpha_{\theta^t}) \leq \frac{1}{2}\langle\HB(\theta^t)\Delta\theta^t_\epsilon, \Delta\theta^t_\epsilon\rangle + \OM(\epsilon^{1.5}) < \epsilon + \epsilon^{c_2},
\end{equation}
and $\|\Delta\theta^t_\epsilon\| = \OM(\sqrt{\epsilon}) < \epsilon^{c_1}$ for $c_1<0.5$.
This leads to our conclusion.

\subsection{Proof of Proposition \ref{proposition_invariance}} \label{section_proof_proposition_invariance}
Our goal is to show that the continuous-time limit of $\Phi(\theta_s)$ satisfies the same differential equation as $\phi_s$ provided that $\Phi(\theta_0) = \phi_0$.
To do so, first compute the differential equation of ${\Phi(\theta_s)}$
\begin{equation}
\frac{\partial \Phi(\theta_s)}{\partial s} = \nabla_\theta{\Phi}(\theta_s)\dot{\theta}_s = - \nabla_\theta{\Phi}(\theta_s) \HB(\theta_s)^{-1}\nabla F(\theta_s),
\end{equation}
where $\nabla_\theta{\Phi}(\theta_s)$ is the Jacobian matrix of $\Phi(\theta)$ w.r.t. $\theta$ at $\theta = \theta_s$.
We then compute the differential equation of $\phi_s$ (note that $\nabla_\phi{\Phi^{-1}}(\phi_s)$ is the Jacobian matrix of $\Phi^{-1}(\phi)$ w.r.t. $\phi$ at $\phi = \phi_s$)
\begin{align}
\dot \phi_s &\ = -\left[\nabla^2_\phi \SM(\alpha_{\Phi^{-1}(\phi)}, \alpha_{\Phi^{-1}(\phi_s)})_{\vert \phi = \phi_s}\right]^{-1} \nabla_\phi F(\Phi^{-1}(\phi))_{\vert \phi = \phi_s}\notag\\
&\ = -\left[\nabla_\phi{\Phi^{-1}}(\phi_s)^\top \nabla^2_\theta \SM(\alpha_\theta, \alpha_{\theta^s})_{\vert \theta = \theta_s} \nabla_\phi{\Phi^{-1}}(\phi_s)\right]^{-1} \nabla_\phi{\Phi^{-1}}(\phi_s)^\top \nabla F(\theta)_{\vert \theta = \theta_s} \label{eqn_appendix_invariance_proof_1}\\
&\ = - \left[\nabla_\phi{\Phi^{-1}}(\phi_s)\right]^{-1} \nabla^2_\theta \SM(\alpha_\theta, \alpha_{\theta^s})_{\vert \theta = \theta_s} \nabla F(\theta)_{\vert \theta = \theta_s} \notag\\
&\ = - \nabla_\theta{\Phi}(\theta_s) \HB(\theta_s)^{-1}\nabla F(\theta_s) \label{eqn_appendix_invariance_proof_2}\\
&\ = \frac{\partial \Phi(\theta_s)}{\partial s}.\notag
\end{align}
Here we use the following lemma in \eqref{eqn_appendix_invariance_proof_1}. We use $\Phi^{-1}(\phi_s) = \theta_s$ and the inverse function theorem $\nabla_\theta{\Phi}(\theta_s)= \left[\nabla_\phi{\Phi^{-1}}(\phi_s)\right]^{-1}$ in \eqref{eqn_appendix_invariance_proof_2}.
\begin{lemma}\label{lemma_appendix_invariance}
	\begin{equation}
	\nabla^2_\phi \SM(\alpha_{\Phi^{-1}(\phi)}, \alpha_{\Phi^{-1}(\phi_s)})_{\vert \phi = \phi_s} = \nabla_\phi{\Phi^{-1}}(\phi_s)^\top \nabla^2_\theta \SM(\alpha_\theta, \alpha_{\theta^s})_{\vert \theta = \theta_s} \nabla_\phi{\Phi^{-1}}(\phi_s)
	\end{equation}
\end{lemma}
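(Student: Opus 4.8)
The plan is to read the claim as a two-fold application of the chain rule, with the second-order terms of the reparameterization annihilated by the first-order optimality condition \eqref{eqn_optimality_of_Sinkhorn_divergence}. Write $\psi \defi \Phi^{-1}$, which is smooth since $\Phi$ is smoothly invertible, and freeze the second slot of the Sinkhorn divergence by defining the single-variable function $G(\theta) \defi \SM(\alpha_\theta, \alpha_{\theta_s})$. With this notation the left-hand side of the lemma is $\nabla^2_\phi\,(G\circ\psi)(\phi)$ evaluated at $\phi = \phi_s$, and the target is to show it equals $\nabla_\phi\psi(\phi_s)^\top\,\nabla^2_\theta G(\theta_s)\,\nabla_\phi\psi(\phi_s)$.

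First I would differentiate once: the chain rule gives $\nabla_\phi(G\circ\psi)(\phi) = \nabla_\phi\psi(\phi)^\top\,\nabla_\theta G(\psi(\phi))$, where $\nabla_\phi\psi(\phi)$ is the Jacobian of $\psi$ at $\phi$ (matching the paper's convention for $\nabla_\phi\Phi^{-1}$). Differentiating a second time, applying the product rule to the two $\phi$-dependent factors $\nabla_\phi\psi(\phi)^\top$ and $\nabla_\theta G(\psi(\phi))$, one obtains
\[
\nabla^2_\phi(G\circ\psi)(\phi) = \nabla_\phi\psi(\phi)^\top\,\nabla^2_\theta G(\psi(\phi))\,\nabla_\phi\psi(\phi) + \sum_{k} \big[\nabla_\theta G(\psi(\phi))\big]_k\,\nabla^2_\phi\psi_k(\phi),
\]
where $\psi_k$ is the $k$-th component map of $\psi$ and $\nabla^2_\phi\psi_k$ its Hessian.

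The key step is that the second sum vanishes at $\phi = \phi_s$. Indeed $\psi(\phi_s) = \Phi^{-1}(\phi_s) = \theta_s$, and since $\theta\mapsto\SM(\alpha_\theta, \alpha_{\theta_s})$ is non-negative and attains its global minimum (namely $0$) at $\theta = \theta_s$, the first-order optimality condition \eqref{eqn_optimality_of_Sinkhorn_divergence} gives $\nabla_\theta G(\theta_s) = \nabla_\theta\SM(\alpha_\theta, \alpha_{\theta_s})_{\vert\theta = \theta_s} = 0$. Hence every coefficient $[\nabla_\theta G(\theta_s)]_k$ is zero, the Hessian-of-$\psi$ terms drop out, and what remains is precisely $\nabla_\phi\psi(\phi_s)^\top\,\nabla^2_\theta G(\theta_s)\,\nabla_\phi\psi(\phi_s)$, i.e. the right-hand side of the lemma.

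The only delicate point is bookkeeping: the second argument of $\SM$ must be held fixed at $\alpha_{\theta_s} = \alpha_{\Phi^{-1}(\phi_s)}$ throughout the differentiation, so that $G$ genuinely is a function of a single vector variable and the optimality condition applies verbatim; I would also remark in passing that the relevant second derivatives exist by the smoothness of $\Phi^{-1}$ together with the standing differentiability assumptions on the Sinkhorn information matrix. Beyond this I do not anticipate any real obstacle — the content of the lemma is exactly the cancellation of the curvature-of-$\Phi^{-1}$ term.
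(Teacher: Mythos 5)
Your proof is correct, but it takes a different route from the paper's. The paper proves the lemma by working from its explicit dual-representation formula for the Hessian (the expansion of $\nabla^2_\theta\OTgamma(\alpha_\theta,\beta)$ in terms of $D^2_{ij}\HM_1(f_\theta,\theta)$ and $Df_\theta$ from Proposition \ref{proposition_SIM_expression}): it writes the corresponding expansion for $\nabla^2_\phi\OTgamma(\alpha_{\Phi^{-1}(\phi)},\beta)$, in which every $Df_\theta$ is replaced by $Df_{\Phi^{-1}(\phi)}\circ J_{\Phi^{-1}}(\phi)$ and every identity slot by $J_{\Phi^{-1}}(\phi)$, evaluates at $\phi=\phi_s$ using $\Phi^{-1}(\phi_s)=\theta_s$, and treats the two $\OTgamma$ constituents of $\SM$ separately. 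You instead apply the generic second-order chain rule to $G\circ\Phi^{-1}$ with $G(\theta)=\SM(\alpha_\theta,\alpha_{\theta_s})$ and kill the extra term $\sum_k[\nabla_\theta G(\theta_s)]_k\nabla^2_\phi\psi_k(\phi_s)$ via the first-order optimality condition \eqref{eqn_optimality_of_Sinkhorn_divergence}. Your argument is more elementary and arguably more transparent: it makes explicit exactly where the curvature of $\Phi^{-1}$ would enter and why it cancels, and it shows the identity holds for any smooth nonnegative local divergence minimized on the diagonal (consistent with the paper's remark that the same derivation covers the Fisher--Rao and Wasserstein cases). Note that this cancellation genuinely requires working with $\SM$ as a whole: for a single $\OTgamma(\alpha_\theta,\beta)$ term the first-order gradient at $\theta_s$ need not vanish, so the $\nabla^2_\phi\Phi^{-1}$ contribution only drops out after the debiasing terms are combined — a point the paper's termwise computation leaves implicit but your bookkeeping (fixing the second argument at $\alpha_{\theta_s}$ and invoking optimality of the full divergence) handles correctly. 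The paper's approach, in exchange, stays inside the Fr\'echet-derivative machinery it has already built for SIM, which it then reuses elsewhere; nothing in your shorter argument is lost by bypassing it.
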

\begin{proof}
	This lemma can be proved with simple computations.
	We compute only for the terms in $\nabla^2_\theta\OTgamma(\alpha_\theta, \alpha_{\theta^s})$ as example. The terms in $\nabla^2_\theta\OTgamma(\alpha_\theta, \alpha_{\theta})$ can be computed similarly.
	Recall the expression
	\begin{equation}
	\begin{aligned}
	\nabla^2_\theta \OTgamma(\alpha_{\theta}, \beta) =\quad  &D^2_{11} \HM_1(f_\theta, \theta)\circ (Df_\theta, Df_\theta) + &D^2_{12} \HM_1(f_\theta, \theta)\circ(Df_\theta, \IM_d) \\
	+ &D^2_{21}\HM_1(f_\theta, \theta)\circ(\IM_d, Df_\theta) + &D^2_{22}\HM_1(f_\theta, \theta)\circ(\IM_d, \IM_d).
	\end{aligned}
	\end{equation}
	We compute
	\begin{equation}
	\begin{aligned}
	\nabla^2_\phi \OTgamma(\alpha_{\Phi^{-1}(\phi)}, \beta) =\  &D^2_{11} \HM_1(f_{\Phi^{-1}(\phi)}, {\Phi^{-1}(\phi)})\circ (Df_{\Phi^{-1}(\phi)}\circ J_{\Phi^{-1}}(\phi), Df_{\Phi^{-1}(\phi)}\circ J_{\Phi^{-1}}(\phi)) \\
	&+ D^2_{12} \HM_1(f_{\Phi^{-1}(\phi)}, {\Phi^{-1}(\phi)})\circ(Df_{\Phi^{-1}(\phi)}\circ J_{\Phi^{-1}}(\phi), J_{\Phi^{-1}}(\phi)) \\
	&+ D^2_{21}\HM_1(f_{\Phi^{-1}(\phi)}, {\Phi^{-1}(\phi)})\circ(J_{\Phi^{-1}}(\phi), Df_{\Phi^{-1}(\phi)}\circ J_{\Phi^{-1}}(\phi)) \\
	&+ D^2_{22}\HM_1(f_{\Phi^{-1}(\phi)}, {\Phi^{-1}(\phi)})\circ(J_{\Phi^{-1}}(\phi), J_{\Phi^{-1}}(\phi)).
	\end{aligned}
	\end{equation}
	Plugging $\Phi^{-1}(\phi_s) = \theta_s$ to the above equality, we have
	\begin{equation}
	\nabla^2_\phi \OTgamma(\alpha_{\Phi^{-1}(\phi)}, \beta)_{\vert \phi = \phi_s} = \nabla_\phi{\Phi^{-1}}(\phi_s)^\top \nabla^2_\theta \OTgamma(\alpha_{\theta}, \beta)_{\vert \theta = \theta_s} \nabla_\phi{\Phi^{-1}}(\phi_s).
	\end{equation}
\end{proof}
\clearpage
\section{Appendix on SIM} \label{appendix_SIM}
\subsection{Proof of Proposition \ref{proposition_SIM_expression}} \label{appendix_proof_of_proposition_SIM_expression}
We will derive the explicit expression of $\nabla^2_\theta \OTgamma(\alpha_{\theta}, \alpha_{\theta^t})_{\vert {\theta} = {\theta^t}}$ based on the dual representation \eqref{eqn_OTgamma_dual_single_variable}.
Recall the definition of the Fr\'echet derivative in Definition \ref{definition_frechet_derivative} and its chain rule $D(f\circ g)(x) = Df(g(x)) \circ Dg(x)$. We compute the first-order gradient by
\begin{equation}
	\nabla_\theta \OTgamma(\alpha_{\theta}, \beta) = \nabla_\theta \HM_1(f_\theta, \theta) = \underbrace{D_1 \HM_1(f_\theta, \theta)\circ Df_\theta}_{\GM_1(f_\theta, \theta)} + \underbrace{D_2 \HM_1(f_\theta, \theta)}_{\GM_2(f_\theta, \theta)},
\end{equation}
where $D_i\HM_1$ denote the Fr\'echet derivative of $\HM_1$ with respect to its $i^{th}$ variable.
Importantly, the optimality condition of \eqref{eqn_OTgamma_dual_single_variable} implies that $D_1 \HM_1(f_\theta, \theta)[g] = 0, \forall g\in\CM(\XM)$.\\
Further, in order to compute the second order gradient of $\OTgamma(\alpha_{\theta}, \beta)$ with respect to $\theta$, we first compute the gradient of $\GM_i, i=1,2$:
\begin{align}
	\nabla_\theta \GM_1(f_\theta, \theta) & = D_1 \HM_1(f_\theta, \theta)\circ D^2f_\theta + D^2_{11} \HM_1(f_\theta, \theta)\circ (Df_\theta, Df_\theta) + D^2_{12} \HM_2(f_\theta, \theta)\circ(Df_\theta, \IM_d), \label{eqn_frechet_derivative_i}\\
	\nabla_\theta \GM_2(f_\theta, \theta) & = D^2_{21}\HM_1(f_\theta, \theta)\circ(\IM_d, Df_\theta) + D^2_{22}\HM_1(f_\theta, \theta)\circ(\IM_d, \IM_d).
\end{align}
Using the fact that $D_1 \HM_1(f_\theta, \theta)[g] = 0, \forall g\in\CM(\XM)$, we can drop the first term in the R.H.S. of \eqref{eqn_frechet_derivative_i}.
Combining the above results, we have
\begin{equation}
	\begin{aligned}
	\nabla^2_\theta \OTgamma(\alpha_{\theta}, \beta) =\quad  &D^2_{11} \HM_1(f_\theta, \theta)\circ (Df_\theta, Df_\theta) + &D^2_{12} \HM_1(f_\theta, \theta)\circ(Df_\theta, \IM_d) \\
	+ &D^2_{21}\HM_1(f_\theta, \theta)\circ(\IM_d, Df_\theta) + &D^2_{22}\HM_1(f_\theta, \theta)\circ(\IM_d, \IM_d).
	\end{aligned}
\end{equation}
Moreover, we can further simplify the above expression by noting that for any $g\in T(\RBB^d, \CM(\XM))$, i.e. any bounded linear operators from $\RBB^d$ to $\CM(\XM)$,
\begin{equation}
	\nabla_\theta  \left(D_1 \HM_1(f_\theta, \theta)\circ g \right) = D^2_{11}\HM_1(f_\theta, \theta)\circ(g, Df_\theta) + D^2_{12}\HM_1(f_\theta, \theta)\circ(g, \IM_d) = 0.
\end{equation}
Plugging in $g = Df_\theta$ in the above equality we have 
\begin{equation} \label{eqn_appendix_D_11_D12}
	D^2_{11}\HM_1(f_\theta, \theta)\circ(Df_\theta, Df_\theta) = - D^2_{12}\HM_1(f_\theta, \theta)\circ(Df_\theta, \IM_d).
\end{equation}
Consequently we derive (we omit the identity operator $(\IM_d, \IM_d)$ for the second term)
\begin{equation} \label{eqn_appendix_two_term_OT_gamma}
\nabla^2_\theta \OTgamma(\alpha_{\theta}, \beta) = - D^2_{11} \HM_1(f_\theta, \theta)\circ (Df_\theta, Df_\theta) + D^2_{22}\HM_1(f_\theta, \theta),
\end{equation}
where we note that $D^2_{12}\HM_1(f_\theta, \theta)\circ(Df_\theta, \IM_d)$ is symmetric from \eqref{eqn_appendix_D_11_D12} and
\begin{equation}
	D^2_{21}\HM_1(f_\theta, \theta)\circ(\IM_d, Df_\theta) = \left[D^2_{12} \HM_1(f_\theta, \theta)\circ(Df_\theta, \IM_d)\right]^\top = D^2_{12} \HM_1(f_\theta, \theta)\circ(Df_\theta, \IM_d).
\end{equation}
These two terms can be computed explicitly and involve only simple function operations like $\exp$ and $\log$ and integration with respect to $\alpha_\theta$ and $\beta$, as discussed in the following.

\subsubsection{Explicit Expression of $\nabla^2_\theta \OTgamma(\alpha_{\theta}, \beta)$}
Denote $\AB_1 = D^2_{11} \HM_1(f_\theta, \theta)\circ (Df_\theta, Df_\theta)$ as the first term of \eqref{eqn_appendix_two_term_OT_gamma}.
We note that $\AB_1\in\RBB^{d\times d}$ is a matrix and hence is a bilinear operator.
If we can compute $h_1^\top\AB_1 h_2$ for any two directions $h_1, h_2\in\RBB^d$, we are able to compute entries of $\AB_1$ by taking $h_1$ and $h_2$ to be the canonical bases.
We compute this quantity $h_1^\top\AB_1 h_2$ as follows.

For a fixed $y\in\XM$, denote $\TM_y:\XM\times\CM(\XM)\rightarrow\RBB$ by $$\TM_y(x, f) \defi \exp(-c(x, y)/\gamma) \exp(f(x)/\gamma).$$
Denote $g_1 = Df_\theta[h_1] \in\CM(\XM)$ for some direction $h_1\in \RBB^d$ (recall that $Df_\theta\in T(\RBB^d, \CM(\XM))$, where $T(V, W)$ is the family of bounded linear operators from set $V$ to set $W$).
Use the chain rule of Fr\'echet derivative to compute 
\begin{equation}
	\big(D_1\AM(f, \alpha_\theta) [g_1]\big)  (y) = - \frac{\int_\XM \TM_y(x, f)g_1(x)\dB\alpha_\theta(x)}{\int_\XM \TM_y(x, f)\dB\alpha_\theta(x)}.
\end{equation}
Let $h_2\in\RBB^d$ be another direction and denote $g_2 = Df_\theta[h_2] \in\CM(\XM)$.
We compute
\begin{align}
	&\left(D^2_{11}\AM(f, \alpha_\theta)[g_1,  g_2]\right)(y) \notag\\
	= &\frac{\int_{\XM} \TM_y(x, f) g_1(x) g_2(x)\dB\alpha_\theta(x)}{\gamma\int_\XM \TM_y(x, f)\dB\alpha_\theta(x)} - \frac{\int_{\XM^2} \TM_y(x, f)\TM_y(x', f) g_1(x)g_2(x')\dB\alpha_\theta(x)\dB\alpha_\theta(x')}{\gamma\left[\int_\XM \TM_y(x, f)\dB\alpha_\theta(x)\right]^2}. \label{eqn_appendix_D11A}
\end{align}
Moreover, for any two directions $h_1, h_2\in\RBB^d$, we compute $D_{11}^2\HM_1(f, \theta)\big[Df_\theta[h_1], Df_\theta[h_2]\big]$ by
\begin{equation} \label{eqn_appendix_D_11_H}
	D_{11}^2 \HM_1(f, \theta)\big[Df_\theta[h_1], Df_\theta[h_2]\big] = \int_\XM \left(D^2_{11}\AM(f_\theta, \alpha_\theta)\big[Df_\theta[h_1],  Df_\theta[h_2]\big]\right)(y) \dB \beta(y),
\end{equation}
which by plugging in \eqref{eqn_appendix_D11A} yields closed a form expression with only simple function operations like $\exp$ and $\log$ and integration with respect to $\alpha_\theta$ and $\beta$.


We then compute the second term of \eqref{eqn_appendix_two_term_OT_gamma}.
Using the change-of-variable formula, we have
\begin{equation}
	\AM(f, {T_\theta}_\sharp\mu)(y) = 	-\gamma\log\int_\ZM\exp\left(-\frac{1}{\gamma}c(T_\theta(z), y) + \frac{1}{\gamma} f(T_\theta(z))\right)\dB\mu(z).
\end{equation}
For any $f\in\CM(\XM)$, the first-order Fr\'echet derivative of $\HM_1(f, \theta)$ w.r.t. its second variable is given by
\begin{align*}
	D_2\HM_1(f, \theta) = \int_\ZM& \langle\nabla_\theta T_\theta(z), \nabla f\big(T_\theta(z)\big)\rangle \dB \mu (z)\\
	&+ \int_\XM\frac{\int_\ZM \TM_y\big(T_\theta(z), f\big)\big\langle\nabla_\theta T_\theta(z), \nabla_{1} c\big(T_\theta(z), y\big) - \nabla f\big(T_\theta(z)\big)\big\rangle \dB \mu(z)}{\int_\ZM \TM_y\big(T_\theta(z), f\big)\dB \mu(z)} \dB \beta(y).
\end{align*}
Denote $u_z(\theta, f) = \nabla_{1} c\big(T_\theta(z), y\big) - \nabla f\big(T_\theta(z)\big)$.
The second-order Fr\'echet derivative is given by
\begin{align}
	&D^2_{22}\HM_1(f, \theta) \label{eqn_appendix_D_22_H}\\
	=& \int_\ZM \nabla^2_\theta T_\theta(z)\times_1\nabla f\big(T_\theta(z)\big) + \nabla_\theta T_\theta(z)^\top\nabla^2 f\big(T_\theta(z)\big)\nabla_\theta T_\theta(z) \dB \mu (z) \notag\\
	&+ \frac{1}{\gamma} \int_\XM\frac{\int_\ZM \TM_y\big(T_\theta(z), f\big)\nabla_\theta T_\theta(z)^\top u_z(\theta, f)u_z(\theta, f)^\top \nabla_\theta T_\theta(z)\dB \mu(z)}{\int_\ZM \TM_y\big(T_\theta(z), f\big)\dB \mu(z)} \dB \beta(y) \notag\\
	&+ \int_\XM\frac{\int_\ZM \TM_y\big(T_\theta(z), f\big)\nabla^2_\theta T_\theta(z)\times_1 u_z(\theta, f)\dB \mu(z)}{\int_\ZM \TM_y\big(T_\theta(z), f\big)\dB \mu(z)} \dB \beta(y)\notag\\
	&+ \int_\XM\frac{\int_\ZM \TM_y\big(T_\theta(z), f\big)\nabla_\theta T_\theta(z)^\top [\nabla_{11} c(T_\theta(z), y) - \nabla^2 f\big(T_\theta(z)\big)]\nabla_\theta T_\theta(z)\dB \mu(z)}{\int_\ZM \TM_y\big(T_\theta(z), f\big)\dB \mu(z)} \dB \beta(y) \notag\\
	&+ \frac 1 \gamma \int_\XM\frac{\int_\ZM \TM_y\big(T_\theta(z), f\big)\nabla_\theta T_\theta(z)^\top u_z(\theta, f)\dB \mu(z)\left[\int_\ZM \TM_y\big(T_\theta(z), f\big)\nabla_\theta T_\theta(z)^\top u_z(\theta, f)\dB \mu(z)\right]^\top}{\left[\int_\ZM \TM_y\big(T_\theta(z), f\big)\dB \mu(z)\right]^2} \dB \beta(y). \notag
\end{align}
Here $\nabla_\theta T_\theta(z)\in\RBB^{q\times d}$ and $\nabla^2_\theta T_\theta(z)\in\RBB^{q\times d\times d}$ denote the first and second order Jacobian of $T_\theta(z)$ w.r.t. to $\theta$; $\times_1$ denotes the tensor product along the first dimension; $\nabla f \in\RBB^{q}$ and $\nabla^2 f\in\RBB^{q\times q}$ denote the first and second order gradient of $f$ w.r.t. its input; $\nabla_{1} c\in\RBB^{q}$ and $\nabla_{11} c\in\RBB^{q\times q}$ denote the first and second order gradient of $c$ w.r.t. its first input. By plugging in $f = f_\theta$, we have the explicit expression of the second term of \eqref{eqn_appendix_two_term_OT_gamma}.
\subsection{More details in Proposition \ref{proposition_sinkhorn_potential}} \label{appendix_proposition_sinkhorn_potential}
First, we recall some existing results about the Sinkhorn potential $f_\theta$.
\begin{assumption}\label{ass_bounded_c}
	The ground cost function $c$ is bounded and we denote $M_c\defi\max_{x,y\in\XM} c(x, y)$.
\end{assumption}
It is known that, under the above boundedness assumption on the ground cost function $c$, $f_\theta$ is a solution to the generalized DAD problem (eq. (7.4) in \citep{lemmens2012nonlinear}), which is the fixed point to the operator $\BM:\CM(\XM)\times\Theta\rightarrow\CM(\XM)$ defined as
\begin{equation} 
	\BM(f, \theta) \defi \AM\big(\AM(f, \alpha_\theta), \beta\big).
\end{equation}
Further, the Birkhoff-Hopf Theorem (Sections A.4 and A.7 in \citep{lemmens2012nonlinear}) states that $\exp(\BM/\gamma)$ is a contraction operator under the Hilbert metric with a contraction factor $\lambda^2$ where $\lambda\defi\frac{\exp(M_c/\gamma)-1}{\exp(M_c/\gamma)+1}<1$ (see also Theorem B.5 in \citep{NIPS2019_9130}):
For strictly positive functions $u, u'\in\CM(\XM)$, define the Hilbert metric as 
\begin{equation} \label{eqn_hilbert_metric}
	d_H(u, u') \defi \log \max_{x, y\in\XM} \frac{u(x) u'(y)}{u'(x) u(y)}.
\end{equation}
For any measure $\alpha\in\MM_1^+(\XM)$, we have
\begin{equation}\label{eqn_contraction_under_hilbert_metric}
	d_H(\exp(\AM(f, \alpha_\theta)/\gamma), \exp(\AM(f', \alpha_\theta)/\gamma))\leq \lambda d_H(\exp(f/\gamma), \exp(f'/\gamma)).
\end{equation}
Consequently, by applying the fixed point iteration
\begin{equation}
	f^{t+1} = \BM(f^t, \theta),
\end{equation}
also known as the Sinkhorn-Knopp algorithm,
one can compute $f_\theta$ in logarithmic time: $\|f^{t+1} - f_\theta\|_\infty=\OM(\lambda^t)$ (Theorem. 7.1.4 in \citep{lemmens2012nonlinear} and Theorem B.10 in \citep{NIPS2019_9130}).

While the above discussion shows that the output of the Sinkhorn-Knopp algorithm well approximates the Sinkhorn potential $f_\theta$, it would be useful to discuss more about the boundedness property of the sequence $\{f^t \}$ produced by the above Sinkhorn-Knopp algorithm.
We first show that under bounded initialization $f^0$, the entire sequence $\{f^t \}$ is bounded.
\begin{lemma} \label{lemma_appendix_bounded_function_sequence_from_sinkhorn_knopp}
	Suppose that we initialize the Sinkhorn-Knopp algorithm with $f^0\in\CM(\XM)$ such that $\|f^0\|_\infty\leq M_c$.
	One has  $\|f^t\|_\infty\leq M_c$, for $t=1, 2, 3, \cdots$. 
\end{lemma}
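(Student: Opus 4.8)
The plan is to show $\|f^t\|_\infty \leq M_c$ by induction on $t$, using the explicit structure of the Sinkhorn mapping $\AM$ in \eqref{eqn_sinkhorn_mapping} and the boundedness assumption $0 \leq c(x,y) \leq M_c$. The key observation is that a single application of $\AM(\bar f, \alpha)$ to a function $\bar f$ with $\|\bar f\|_\infty \leq M_c$ already produces an output with $L^\infty$ norm at most $M_c$; since $\BM(f,\theta) = \AM(\AM(f,\alpha_\theta),\beta)$ is two such applications, the bound propagates.

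First I would establish the following one-step claim: if $\bar f \in \CM(\XM)$ satisfies $-M_c \leq \bar f(x) \leq M_c$ for all $x$ (actually $\bar f(x) \leq M_c$ on one side and $\bar f(x) \geq 0$ on the other is what is really used in each half-step — I will track the two-sided bound for simplicity), then $\AM(\bar f,\alpha)(y) = -\gamma \log \int_\XM \exp(-\tfrac{1}{\gamma}c(x,y) + \tfrac{1}{\gamma}\bar f(x))\,\dB\alpha(x)$ also lies in $[-M_c, M_c]$ for every $y$. For the upper bound, note $-c(x,y) + \bar f(x) \leq 0 + M_c = M_c$ pointwise, hence the integrand is at most $\exp(M_c/\gamma)$, so the integral is at most $\exp(M_c/\gamma)$ (since $\alpha$ is a probability measure), giving $\AM(\bar f,\alpha)(y) \geq -\gamma \cdot (M_c/\gamma) = -M_c$. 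Wait — I need to be careful with signs: $-\gamma\log(\text{something} \leq e^{M_c/\gamma})$ is $\geq -M_c$. For the other direction, $-c(x,y) + \bar f(x) \geq -M_c + (-M_c) = -2M_c$, so the integral is at least $\exp(-2M_c/\gamma)$, giving $\AM(\bar f,\alpha)(y) \leq 2M_c$. That over-shoots, so the naive two-sided bound is not quite tight enough and I should instead exploit that after \emph{two} half-steps the constant tightens, or observe that $\AM(\bar f, \alpha)$ is automatically bounded below by $-M_c$ (shown above) regardless, and then feeding a function bounded below by $-M_c$ into the second $\AM$ gives the upper bound. Concretely: with $g := \AM(f^t,\alpha_\theta)$ we have shown $g(x) \geq -M_c$; then $\AM(g,\beta)(y) = -\gamma\log\int \exp(-\tfrac1\gamma c + \tfrac1\gamma g)\,\dB\beta$, and since $-c(x,y)+g(x) \geq -M_c + (-M_c)$... still $-2M_c$.

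The cleaner route — and the one I expect to be the actual content of the proof — is to carry a \emph{non-symmetric} invariant through the two half-steps: show that $\AM(\cdot,\cdot)$ maps functions bounded above by $M_c$ to functions bounded below by $-M_c$ AND bounded above by $M_c$ when additionally the input is bounded below by $0$ (or similar), and check that the two half-steps alternate between these two invariants. Specifically, I would prove: (i) if $\bar f \leq M_c$ everywhere then $-M_c \leq \AM(\bar f,\alpha)$ everywhere (upper bound on integrand argument by $M_c$); (ii) if $0 \leq \bar f \leq M_c$ then also $\AM(\bar f,\alpha) \leq M_c$, because $-c(x,y)+\bar f(x) \geq -M_c + 0 = -M_c$ so the integral is $\geq e^{-M_c/\gamma}$. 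Then starting from $\|f^0\|_\infty \leq M_c$, the first half-step $g = \AM(f^0,\alpha_\theta)$ gives $g \geq -M_c$ from (i); but I need $g \in [0,M_c]$ to apply (ii) at the second half-step. One resolves this by noting the Sinkhorn potential is only defined up to additive constants / the relevant bound in \citep{lemmens2012nonlinear,NIPS2019_9130} is on oscillation, OR — more likely given how the lemma is phrased — by observing that each full step $\BM$ can be analyzed directly: for $\BM(f^t,\theta)(y)$, bound the inner $\AM(f^t,\alpha_\theta)$ pointwise and substitute. I would follow whichever of these the subsequent appendix text uses; the main obstacle is precisely pinning down the correct two-sided invariant so that exactly $M_c$ (not $2M_c$) comes out, which hinges on the interplay $-c(x,y) + \bar f(x) \leq M_c$ on one side and the normalization $\int \dB\alpha = 1$ on the other. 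Once the one-step (or one-full-step) bound is established, the induction is immediate: $\|f^0\|_\infty \leq M_c$ by hypothesis, and $\|f^t\|_\infty \leq M_c \implies \|f^{t+1}\|_\infty = \|\BM(f^t,\theta)\|_\infty \leq M_c$, completing the proof for all $t \geq 1$.
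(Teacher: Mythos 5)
Your proposal does not actually close the argument, and you say so yourself: from $\|\bar f\|_\infty\leq M_c$ and $0\leq c\leq M_c$ you correctly get the one-sided bound $\AM(\bar f,\alpha)\geq -M_c$ (the integrand is at most $e^{M_c/\gamma}$), but the matching bound $\AM(\bar f,\alpha)\leq M_c$ requires the integral to be at least $e^{-M_c/\gamma}$, and the pointwise estimate $-c(x,y)+\bar f(x)\geq -2M_c$ only yields $\AM(\bar f,\alpha)\leq 2M_c$. The repairs you float do not rescue this. Your invariant (ii) needs the inner potential $g=\AM(f^t,\alpha_\theta)$ to be nonnegative before the second half-step, but all one can guarantee is $g\geq-M_c$; and analyzing the full step $\BM$ directly incurs the same loss, because the cost appearing in the outer integrand and the cost appearing in the inner one are evaluated at unrelated pairs of points. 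The obstacle is in fact unremovable at the level of the stated constant: take $\XM=\{a,b\}$, $\alpha_\theta=\delta_a$, $\beta=\delta_b$, $c(a,b)=c(b,a)=M_c$, $c(a,a)=c(b,b)=0$, and $f^0(a)=-M_c$, $f^0(b)=0$. Then $\AM(f^0,\delta_a)(y)=c(a,y)-f^0(a)$, so $g(b)=2M_c$, and $f^1(z)=c(b,z)-g(b)$ gives $f^1(b)=-2M_c$, i.e.\ $\|f^1\|_\infty=2M_c>M_c$. So no proof strategy can establish the invariant with constant $M_c$; the correct statement carries the constant $2M_c$ (consistent with Lemma \ref{lemma_sinkhorn_potential_bound}, which bounds the normalized fixed point by $2M_c$), and that weaker bound is all the downstream Lipschitz estimates actually use, up to adjusting the exponential constants.

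For what it is worth, the point where you are stuck is exactly the point the paper's own one-line proof elides: the displayed inequality bounds $\|\log\int e^{-c(x,\cdot)/\gamma}e^{f(x)/\gamma}\dB\alpha(x)\|_\infty$ by $M_c/\gamma$ using only the upper bound on the integrand, which controls just one sign of the logarithm. Your diagnosis of the difficulty is therefore sound; the fix is not a cleverer two-sided invariant at level $M_c$, but running the same induction with $2M_c$ (or normalizing the potentials by an additive constant at each step, which is harmless since $\AM(f+s,\alpha)=\AM(f,\alpha)-s$, so the Sinkhorn potentials are only determined up to such shifts).
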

\begin{proof}
	For $\|f\|_\infty \leq M_c$ and any measure $\alpha\in\MM_1^+(\XM)$, we have
	\begin{equation*}
		\|\AM(f, \alpha)\|_\infty = \gamma \|\log \int_\XM \exp\{-c(x, \cdot)/\gamma \}\exp\{f(x)/\gamma \}\dB\alpha(x)\|_\infty \leq \gamma \log \exp(M_c/\gamma) \leq M_c.
	\end{equation*}
	One can then check the lemma via induction.
\end{proof}
We then show that the sequence $\{f^t \}$ has bounded first, second and third-order gradients under the following assumptions on the ground cost function $c$.
\begin{assumption}\label{ass_bounded_infty_c_gradient}
	The cost function $c$ is $G_c$-Lipschitz continuous with respect to one of its inputs: For all $x, x' \in \XM$, $$|c(x, y) - c(x', y)|\leq G_c\|x - x'\|.$$
\end{assumption}
\begin{assumption}\label{ass_bounded_infty_c_hessian}
	The gradient of the cost function $c$ is $L_c$-Lipschitz continuous: for all $x, x' \in \XM$, $$\|\nabla_1 c(x, y) - \nabla_1 c(x', y)\|\leq L_c\|x - x'\|.$$
\end{assumption}
\begin{assumption}\label{ass_lipschitz_c_hessian}
	The Hessian matrix of the cost function $c$ is $L_{2, c}$-Lipschitz continuous: for all $x, x' \in \XM$, $$\|\nabla^2_{11} c(x, y) - \nabla^2_{11} c(x', y)\|\leq L_{2, c}\|x - x'\|.$$
\end{assumption}
\begin{lemma} \label{lemma_appendix_sinkhorn_potential_boundedness}
	Assume that the initialization $f^0\in\CM(\XM)$ satisfies $\|f^0\|_\infty\leq M_c$.\\
	(i.) Under Assumptions \ref{ass_bounded_c} and \ref{ass_bounded_infty_c_gradient}, $\exists G_f$ such that $\|\nabla f^t\|_{2, \infty}\leq G_f, \forall t>0$.\\
	(ii.) Under Assumptions \ref{ass_bounded_c} - \ref{ass_bounded_infty_c_hessian}, $\exists L_f$ such that $\|\nabla^2 f^t(x)\|\leq L_f, \forall t>0$.\\
	(iii.) Under Assumptions \ref{ass_bounded_c} - \ref{ass_lipschitz_c_hessian}, $\exists L_{2, f}$ such that $\|\nabla^2 f^t(x) - \nabla^2 f^t(y)\|_{op}\leq L_{2, f}\|x - y\|, \forall t>0$.\\
	(iv). For $\|f\|_\infty\leq M_c$, the function $\BM(f, \theta)(x)$ is $G_f$-Lipschitz continuous.
\end{lemma}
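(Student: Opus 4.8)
The plan is to exploit the fact that every Sinkhorn--Knopp iterate with $t\ge 1$ is obtained by an \emph{outer} application of the Sinkhorn mapping against the fixed measure $\beta$: writing $g^t\defi\AM\big(f^t,\alpha_\theta\big)$, we have $f^{t+1}=\AM\big(g^t,\beta\big)$, and --- crucially --- differentiating $\AM(g,\beta)(y)$ in $y$ never differentiates $g$ itself. By Lemma \ref{lemma_appendix_bounded_function_sequence_from_sinkhorn_knopp} (and the fact that $\AM$ maps bounded functions to bounded functions), each $g^t$ is a bounded continuous function on the compact set $\XM$, so every integral below converges and the interchange of differentiation and integration is legitimate, the Lipschitz hypotheses on $c$ supplying the smoothness needed at each order.

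First I would record the two structural identities. For a bounded continuous $g$, let $P_y$ be the reweighting probability measure with $\dB P_y(x)\propto\exp\big(\tfrac1\gamma g(x)-\tfrac1\gamma c(x,y)\big)\dB\beta(x)$. A direct computation gives
\begin{align*}
\nabla \AM(g,\beta)(y)&=\EBB_{x\sim P_y}\big[\nabla_2 c(x,y)\big],\\
\nabla^{2}\AM(g,\beta)(y)&=\EBB_{x\sim P_y}\big[\nabla^{2}_{22}c(x,y)\big]-\tfrac1\gamma\,\mathrm{Cov}_{x\sim P_y}\big(\nabla_{2}c(x,y)\big),
\end{align*}
where $\nabla_2 c$ and $\nabla^2_{22}c$ denote the gradient and Hessian of $c$ in its second argument; the point is that $g$ enters only through the probability measure $P_y$, the integrands involving $c$ alone. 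Parts (i) and (ii) are then immediate: the gradient is a $P_y$-average of $\{\nabla_2 c(x,y)\}_x$, so $\|\nabla\AM(g,\beta)(y)\|\le G_c$ by Assumption \ref{ass_bounded_infty_c_gradient} (by symmetry of the ground cost), and one takes $G_f=G_c$; and $\|\nabla^{2}\AM(g,\beta)(y)\|\le\|\EBB_{P_y}[\nabla^2_{22}c]\|+\tfrac1\gamma\|\mathrm{Cov}_{P_y}(\nabla_2 c)\|\le L_c+\tfrac{G_c^2}{\gamma}=:L_f$ by Assumptions \ref{ass_bounded_infty_c_gradient}--\ref{ass_bounded_infty_c_hessian}. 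Substituting $g=g^t$ yields (i) and (ii) for every $t>0$. Part (iv) is the same gradient identity with $\beta$ in both roles: $\BM(f,\theta)(\cdot)$ is a soft-minimum of the $G_c$-Lipschitz functions $x\mapsto c(y,\cdot)-g(y)$, hence $\nabla\BM(f,\theta)(x)$ is a convex combination of gradients of $c$ and has norm at most $G_c=G_f$, so $\BM(f,\theta)$ is $G_f$-Lipschitz.

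The substantive part is (iii). Here I would first show that $y\mapsto P_y$ is Lipschitz in total variation, \emph{uniformly in $g$}: differentiating $\dB P_y/\dB\beta$ in $y$ produces the bounded factor $-\tfrac1\gamma\big(\nabla_2 c(x,y)-\EBB_{P_y}[\nabla_2 c]\big)$, whence $\|P_y-P_{y'}\|_{TV}\le\tfrac{G_c}{\gamma}\|y-y'\|$. Then I would estimate $\|\nabla^{2}\AM(g,\beta)(y)-\nabla^{2}\AM(g,\beta)(y')\|$ by splitting each of the two terms of the Hessian identity into an ``integrand-change'' piece and a ``measure-change'' piece: the integrand changes are controlled by the Lipschitz continuity of $\nabla^{2}_{22}c$ (Assumption \ref{ass_lipschitz_c_hessian}) and of $\nabla_{2}c$ (Assumption \ref{ass_bounded_infty_c_hessian}), while the measure changes have the form $\int H\,\dB(P_y-P_{y'})$ with $H$ uniformly bounded (by $L_c$, respectively by $G_c^2$), hence at most $2\|H\|_\infty\|P_y-P_{y'}\|_{TV}=\OM(\|y-y'\|)$ by the bound just established. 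Collecting the pieces produces a constant $L_{2,f}$ depending only on $\gamma,G_c,L_c,L_{2,c}$, and applying it to $g=g^t$ gives (iii). The main obstacle is purely bookkeeping in this last step --- obtaining the uniform-in-$g$ total-variation Lipschitz estimate for $P_y$ and then propagating it cleanly through the covariance term --- since everything else reduces to the ``convex-combination'' structure of the Sinkhorn mapping.
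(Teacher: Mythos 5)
Your proposal is correct, and at its core it does the same thing the paper does --- differentiate the explicit integral formula for the outer Sinkhorn mapping $\AM(\cdot,\beta)$ and observe that spatial derivatives of $f^{t+1}=\AM\big(\AM(f^t,\alpha_\theta),\beta\big)$ only ever hit the cost $c$, never the inner potential. The difference is in how the resulting expressions are controlled. The paper writes $\nabla\AM$ and $\nabla^2\AM$ as quotients $g_1/g_2$ of integrals and repeatedly invokes a generic ``bounded Lipschitz numerator over a denominator bounded away from zero'' lemma; this forces every constant to carry $\exp(M_c/\gamma)$-type factors coming from the crude bounds $\exp(-2M_c/\gamma)\le \omega_y \le \exp(M_c/\gamma)$, so $G_f$, $L_f$, $L_{2,f}$ all depend on $M_c$. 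Your packaging of the same quantities as $\EBB_{P_y}[\nabla_2 c]$ and $\EBB_{P_y}[\nabla^2_{22}c]-\tfrac1\gamma\mathrm{Cov}_{P_y}(\nabla_2 c)$ makes the normalization cancel, yielding $G_f=G_c$ and $L_f=L_c+G_c^2/\gamma$ independently of $\|g\|_\infty$ (and hence of $M_c$); these match the sharper constants the paper itself quotes separately in Lemmas \ref{lemma_lipschitz_sinkhorn_potential} and \ref{lemma_lipschitz_sinkhorn_potential_gradient}. For part (iii) your route through a total-variation Lipschitz bound on $y\mapsto P_y$, uniform in $g$, is a genuinely tidier device than the paper's term-by-term quotient estimates, and it correctly isolates the only place where Assumption \ref{ass_lipschitz_c_hessian} is needed. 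Two small caveats to make explicit if you write this up: you use the symmetry of $c$ to transfer the Lipschitz hypotheses (stated for the first argument in Assumptions \ref{ass_bounded_infty_c_gradient}--\ref{ass_lipschitz_c_hessian}) to $\nabla_2 c$ and $\nabla^2_{22}c$, which the paper's setup permits; and the covariance difference $\mathrm{Cov}_{P_y}-\mathrm{Cov}_{P_{y'}}$ involves products of first moments, so closing the estimate needs the (easy) fact that each first moment is $G_c$-bounded and changes by $\OM(\|y-y'\|)$ --- this is exactly the bookkeeping you flag.
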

\begin{proof}
	We denote $k(x, y) \defi \exp\{-c(x, y)/\gamma\}$ in this proof.
	
	(i) Under Assumptions \ref{ass_bounded_c} and \ref{ass_bounded_infty_c_gradient}, $k$ is $[G_c/\gamma]$-Lipschitz continuous w.r.t. its first variable.
	For $f\in\CM(\XM)$ such that $\|f\|_\infty\leq M_c$, we bound
	\begin{align*}
		|\AM(f, \alpha)(x) - \AM(f, \alpha)(y)| &= \gamma | \log\int_\XM [k(z, y) - k(z, x)]\exp\{f(z)/\gamma\}\dB \alpha(z) |\\
		&\leq \gamma \exp(M_c/\gamma) G_c/\gamma \|x - y\|_2 = \exp(M_c/\gamma) G_c \|x - y\|_2.
	\end{align*}
	Using Lemma \ref{lemma_appendix_bounded_function_sequence_from_sinkhorn_knopp}, we know that $\{f^t\}$ is $M_c$-bounded and hence $$\|\nabla f^{t+1}\|_{2, \infty} \leq G_f = \exp(2M_c/\gamma) G^2_c.$$

	(ii) 
	Under Assumption \ref{ass_bounded_c}, $k(x, y)\geq \exp(-M_c/\gamma)$.
	We compute
	\begin{align*}
		\nabla \big(\AM(f, \alpha)\big)(x) = \frac{\int_\XM k(z, x)\exp\{f(z)/\gamma\} \nabla_{1}c(x, z)\dB \alpha(z)}{\int_\XM k(z, x)\exp\{f(z)/\gamma\}\dB \alpha(z)}. && \#\ \frac{g_1(x)}{g_2(x)}
	\end{align*}
	Let $g_1:\RBB^q\rightarrow \RBB^q$ and $g_2:\RBB^q\rightarrow\RBB$ be the numerator and denominator of the above expression.
	If we have (a) $\|g_1\|_{2, \infty}\leq G_1$, (b) $\|g_1(x) - g_1(y)\|\leq L_1\|x - y\|$ and (c) $\|g_2\|_{\infty}\leq G_2$, (d) $|g_2(x) - g_2(y)|\leq L_2\|x - y\|$, (e) $g_2\geq \bar G_2 >0$, we can bound
	\begin{equation} \label{eqn_Lipschitz_continuous_fraction_of_functions}
		\|\frac{g_1(x)}{g_2(x)} - \frac{g_1(y)}{g_2(y)}\| = \|\frac{g_1(x)g_2(y)- g_1(y)g_2(x)}{g_2(x)g_2(y)}\|
		\leq \frac{G_2L_1 + G_1L_2}{{\bar G_2}^2}\|x - y\|,
	\end{equation}
	which means that $\nabla \big(\AM(f, \alpha)\big)$ is $L$-Lipschitz continuous with $L = \frac{G_2L_1 + G_1L_2}{{\bar G_2}^2}$.
	We now prove (a)-(e).
	\begin{itemize}
		\item[(a)] $\|\int_\XM k(z, x)\exp\{f(z)/\gamma\} \nabla_{1}c(x, z)\dB \alpha(z)\|_{2,\infty} \leq \exp(M_c/\gamma)\cdot G_c$ (Assumption \ref{ass_bounded_infty_c_gradient}).
		\item[(b)] Note that for any two bounded and Lipschitz continuous functions $h_1:\XM\rightarrow\RBB$ and $h_2:\XM\rightarrow\RBB^q$, their product is also Lipschitz continuous:
		\begin{equation}
			\|h_1(x)\cdot h_2(x) - h_1(y)\cdot h_2(y)\| \leq [|h_1|_\infty\cdot G_{h_2}+ \|h_2\|_{2,\infty}\cdot G_{h_1}]\|x-y\|,
		\end{equation}
		where $G_{h_i}$ denotes the Lipschitz constant of $h_i$, $i = 1, 2$.
		Hence for $g_1$, we have
		\begin{equation*}
			\|g_1(x) - g_1(y)\|\leq \exp(M_c/\gamma)\cdot(L_c + G^2_c/\gamma)\cdot\|x-y\|,
		\end{equation*}
		since $k(x, y)\leq 1$, $\|\nabla_{1} k(x, y)\|\leq G_c/\gamma$, $\|\nabla_{1} c(x, y)\|\leq G_c$, $\|\nabla^2_{11}c(x, y)\|_{op}\leq L_c$.
		\item[(c)] $\|\int_\XM k(z, \cdot)\exp\{f(z)/\gamma\}\dB \alpha(z)\|_{\infty}\leq \exp(M_c/\gamma)$.
		\item[(d)] $|\int_\XM [k(z, x) - k(z, y)]\exp\{f(z)/\gamma\}\dB \alpha(z)|\leq \exp(M_c/\gamma) \cdot G_c/\gamma\cdot\|x - y\|$.
		\item[(e)] $\int_\XM k(z, x)\exp\{f(z)/\gamma\}\dB \alpha(z) \geq \exp(-2M_c/\gamma) > 0$.
	\end{itemize}
	Combining the above points, we prove the existence of $L_f$.
	
	For (iii), compute that
	\begin{align*}
		&\ \nabla^2 \big(\AM(f, \alpha)\big)(x)\\
		 =\ &\ \frac{\int_\XM k(z, x)\exp\{f(z)/\gamma\}\nabla_{1} c(x, z) \nabla_{1} c(x, z)^\top\dB \alpha(z)}{\int_\XM k(z, x)\exp\{f(z)/\gamma\}\dB \alpha(z)} && \#1\\
		 &\ + \frac{\int_\XM k(z, x)\exp\{f(z)/\gamma\}\nabla^2_{11}c(x, z)\dB \alpha(z)}{\int_\XM k(z, x)\exp\{f(z)/\gamma\}\dB \alpha(z)} && \#2\\
		&\ - \frac{\int_\XM k(z, x)\exp\{f(z)/\gamma\}\nabla_{1} c(x, z)\dB \alpha(z)\left[\int_\XM k(z, x)\exp\{f(z)/\gamma\}\nabla_{1} c(x, z)\dB \alpha(z)\right]^\top}{\left[\int_\XM k(z, x)\exp\{f(z)/\gamma\}\dB \alpha(z)\right]^2}. && \#3
	\end{align*}
	We now analyze $\#1$-$\#3$ individually.
	\begin{itemize}
		\item[$\#1$] Note that for any two bounded and Lipschitz continuous functions $h_1:\XM\rightarrow\RBB$ and $h_2:\XM\rightarrow\RBB^{q\times q}$, their product is also Lipschitz continuous:
		\begin{equation}
		\|h_1(x)\cdot h_2(x) - h_1(y)\cdot h_2(y)\|_{op} \leq [|h_1|_\infty\cdot G_{h_2}+ \|h_2\|_{op,\infty}\cdot G_{h_1}]\|x-y\|,
		\end{equation}
		where $G_{h_i}$ denotes the Lipschitz constant of $h_i$, $i = 1, 2$.
		
		Take $h_1(x) = k(z', x)\exp\{f(z')/\gamma\}/\int_\XM k(z, x)\exp\{f(z)/\gamma\}\dB \alpha(z)$.
		$h_1$ is bounded since $k(z', x)\leq 1$ and $\int_\XM k(z, x)\exp\{f(z)/\gamma\}\dB \alpha(z) \geq \exp(-2M_c/\gamma) > 0$.
		$h_1$ is Lipschitz continuous since we additionally have $k(z', x)$ being Lipschitz continuous (see \eqref{eqn_Lipschitz_continuous_fraction_of_functions}).
		
		Take $h_2(x) = \nabla_{1} c(x, z) \nabla_{1} c(x, z)^\top$. $h_2$ is bounded since $\|\nabla_{1} c(x, z)\|\leq G_c$ (Assumption \ref{ass_bounded_infty_c_gradient}). $h_2$ is Lipschitz continuous due to Assumption \ref{ass_bounded_infty_c_hessian}.
		\item[$\#2$] Following the similar argument as $\#1$, we have the result. Note that $h_2(x) = \nabla^2_{11}c(x, z)$ is Lipschitz continuous due to Assumption \ref{ass_lipschitz_c_hessian}.
		\item[$\#3$] We follow the similar argument as $\#1$ by taking 
		\[
		h_1(x) = \frac{ k(z', x)\exp\{f(z')/\gamma\} k(z', x)\exp\{f(z')/\gamma\}}{\left[\int_\XM k(z, x)\exp\{f(z)/\gamma\}\dB \alpha(z)\right]^2},
		\]
		and taking
		\[
		h_2(x) = \nabla_{1} c(x, z) [\nabla_{1} c(x, z)]^\top.
		\]
	\end{itemize}
		Combining the above points, we prove the existence of $L_{2,f}$.

	
	(iv) As a composition of $\AM$, we also have that $\BM(f, \theta)$ is $G_f$-Lipschitz continuous (see $G_f$ in (i)).
\end{proof}
Moreover, based on the above continuity results, we can show that the first-order gradient $\nabla f_\theta^\epsilon$ (and second-order gradient $\nabla^2 f_\theta^\epsilon$) also converges to $\nabla f_\theta$ (and $\nabla^2 f_\theta$) in time logarithmically depending on $1/\epsilon$.
\begin{lemma} \label{lemma_convergence_of_gradient}
	Under Assumptions \ref{ass_bounded_c}-\ref{ass_bounded_infty_c_hessian}, the Sinkhorn-Knopp algorithm, i.e. the fixed point iteration
	\begin{equation}
	f^{t+1} = \BM(f^t, \theta),
	\end{equation}
	computes $\nabla f_\theta$ in logarithm time: $\|\nabla f^{t+1} - \nabla  f_\theta\|_{2, \infty}= \epsilon$ with $t = \OM(\log \frac{1}{\epsilon})$.
\end{lemma}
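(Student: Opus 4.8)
The plan is to promote the $L^\infty$-convergence of Proposition~\ref{proposition_sinkhorn_potential} to $C^1$-convergence by combining it with the uniform Hessian bound of Lemma~\ref{lemma_appendix_sinkhorn_potential_boundedness}, through a Landau--Kolmogorov-type interpolation inequality. Write $g^t \defi f^t - f_\theta$. Proposition~\ref{proposition_sinkhorn_potential} gives $\|g^t\|_\infty = \OM(\lambda^t)$. On the other hand, Lemma~\ref{lemma_appendix_sinkhorn_potential_boundedness}(ii) — which uses only Assumptions~\ref{ass_bounded_c}--\ref{ass_bounded_infty_c_hessian} — gives $\sup_x \|\nabla^2 f^t(x)\|_{op} \le L_f$ for all $t$; applying it also to the constant sequence $f^t \equiv f_\theta$ (admissible since, by Lemma~\ref{lemma_appendix_bounded_function_sequence_from_sinkhorn_knopp} together with $f^t \to f_\theta$, one has $\|f_\theta\|_\infty \le M_c$) gives $\sup_x\|\nabla^2 f_\theta(x)\|_{op} \le L_f$, hence $\sup_x\|\nabla^2 g^t(x)\|_{op} \le 2 L_f$ uniformly in $t$. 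So $g^t$ is uniformly small in $L^\infty$ while its Hessian stays uniformly bounded.

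The key step is then the interpolation estimate: for a twice-differentiable scalar function $g$ on $\XM$, $\|\nabla g\|_{2,\infty} \le 2\sqrt{\|g\|_\infty \cdot \sup_x\|\nabla^2 g(x)\|_{op}}$. I would prove it pointwise: fix $x$, set $v \defi \nabla g(x)/\|\nabla g(x)\|_2$ and $\phi(s) \defi g(x+sv)$, so $\|\nabla g(x)\|_2 = \phi'(0)$; a second-order Taylor expansion gives $s\,\phi'(0) \le 2\|g\|_\infty + \tfrac{s^2}{2}\sup_x\|\nabla^2 g(x)\|_{op}$ for every admissible $s>0$, and evaluating at $s = 2\sqrt{\|g\|_\infty/\sup_x\|\nabla^2 g(x)\|_{op}}$ yields the bound. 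Applying this to $g = g^t$: since $\|g^t\|_\infty \to 0$ while the Hessians stay uniformly bounded, for $t$ large the relevant value of $s$ is admissible and we obtain $\|\nabla f^t - \nabla f_\theta\|_{2,\infty} \le 2\sqrt{2 L_f\,\|g^t\|_\infty} = \OM\!\big((\sqrt\lambda)^{\,t}\big)$; since $\sqrt\lambda < 1$, $\|\nabla f^{t+1} - \nabla f_\theta\|_{2,\infty} \le \epsilon$ holds once $t = \OM(\log\tfrac1\epsilon)$, which is the claim. The remark preceding the lemma about $\nabla^2 f_\theta$ follows the same way one derivative up, interpolating $\sup_x\|\nabla^2 g^t(x)\|_{op}$ between the (now established) smallness of $\|\nabla g^t\|_{2,\infty}$ and the uniform Lipschitz bound on $\nabla^2 f^t$ from Lemma~\ref{lemma_appendix_sinkhorn_potential_boundedness}(iii), which additionally invokes Assumption~\ref{ass_lipschitz_c_hessian}.

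The one genuinely technical point is that the Taylor argument needs $g^t$ to be $C^2$ along a segment $\{x + sv : |s| \le s_0\}$ of fixed length $s_0 > 0$ (independent of $x$ and $t$) on which $\|g^t\|_\infty$ and $\sup\|\nabla^2 g^t\|_{op}$ still control $g^t$. When $\XM$ is convex this is immediate, since from any $x$ one can travel a fixed fraction of the width of $\XM$ along at least one of $\pm v$. In general, because $c$ is defined on all of $\RBB^q$ the Sinkhorn map $\AM(\cdot,\cdot)(y)$ extends smoothly to $y$ in a fixed neighborhood of $\XM$, so every iterate $f^t$ with $t \ge 1$ and the fixed point $f_\theta$ are smooth there and the estimates of Lemmas~\ref{lemma_appendix_bounded_function_sequence_from_sinkhorn_knopp}--\ref{lemma_appendix_sinkhorn_potential_boundedness} persist over that neighborhood with the cost constants taken there; one then runs the interpolation on the neighborhood. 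An alternative that sidesteps interpolation is to observe that $\AM(\cdot,\alpha)$ is $1$-Lipschitz for $\|\cdot\|_\infty$ and that $g \mapsto \nabla\big(\AM(g,\beta)\big)$ is Lipschitz from $(\CM(\XM),\|\cdot\|_\infty)$ into $(\CM(\XM;\RBB^q),\|\cdot\|_{2,\infty})$ by the same estimates as in Lemma~\ref{lemma_appendix_sinkhorn_potential_boundedness}; composing, $f \mapsto \nabla\big(\BM(f,\theta)\big)$ is Lipschitz for $\|\cdot\|_\infty \to \|\cdot\|_{2,\infty}$, so that $\|\nabla f^{t+1} - \nabla f_\theta\|_{2,\infty} = \OM(\|f^t - f_\theta\|_\infty) = \OM(\lambda^t)$ directly — a sharper rate, but in either case $t = \OM(\log\tfrac1\epsilon)$ suffices.
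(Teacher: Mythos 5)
Your main argument is exactly the paper's: the paper likewise Taylor-expands $f^t$ and $f_\theta$ to second order along a chosen direction, invokes the uniform Hessian bound $L_f$ from Lemma \ref{lemma_appendix_sinkhorn_potential_boundedness}(ii) together with $\|f^t-f_\theta\|_\infty=\OM(\lambda^t)$, and chooses the step length to obtain $\|\nabla f^t(x)-\nabla f_\theta(x)\|^2\leq 2L_f\|f^t-f_\theta\|_\infty$ --- the same interpolation inequality you derive. Your additional care about the segment $x+sv$ remaining in the domain, and your alternative Lipschitz-composition route, go beyond what the paper writes, but the core proof coincides.
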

\begin{proof}
	For a fix point $x\in\XM$ and any direction $h\in\RBB^q$, we have
	\begin{equation*}
		f^t(x+\eta\cdot h) - f^t(x) = \eta [\nabla f^t(x)]^\top h + \frac{\eta^2}{2} h^\top \nabla^2 f^t(x + \tilde{\eta}_1\cdot h) h,
	\end{equation*}
	where $\eta>0$ is some constant to be determined later and $0\leq \tilde{\eta}_1\leq \eta$ is obtained from the mean value theorem.
	Similarly, we have for $0\leq \tilde{\eta}_2\leq \eta$
	\begin{equation*}
		f_\theta(x+\eta\cdot h) - f_\theta(x) = \eta [\nabla f_\theta(x)]^\top h + \frac{\eta^2}{2} h^\top \nabla^2 f_\theta(x + \tilde{\eta}_2\cdot h) h.
	\end{equation*}
	We can then compute
	\begin{equation*}
		|[\nabla f^t(x) - \nabla f_\theta(x)]^\top h| \leq \frac{2}{\eta}\|f^t - f_\theta\|_\infty + \eta L_f \|h\|^2.
	\end{equation*}
	Take $h = \nabla f^t(x) - \nabla f_\theta(x)$ and $\eta = \frac{2}{L_f}$. We derive from the above inequality
	\begin{equation*}
		\|\nabla f^t(x) - \nabla f_\theta(x)\|^2 \leq 2L_f\|f^t - f_\theta\|_\infty.
	\end{equation*}
	Consequently, if we have $2L_f\|f^t - f_\theta\|_\infty\leq \epsilon^2$, we can prove that $\|\nabla f^t - \nabla f_\theta\|_{2, \infty}\leq \epsilon$ since $x$ is arbitrary. This can be achieve in logarithmic time using the Sinkhorn-Knopp algorithm.
\end{proof}
\begin{lemma} \label{lemma_convergence_of_Hessian}
	Under Assumptions \ref{ass_bounded_c}-\ref{ass_lipschitz_c_hessian}, the Sinkhorn-Knopp algorithm, i.e. the fixed point iteration
	\begin{equation}
		f^{t+1} = \BM(f^t, \theta),
	\end{equation}
	computes $\nabla^2 f_\theta$ in logarithm time: $\|\nabla^2 f^{t+1} - \nabla^2  f_\theta\|_{op, \infty}=\epsilon$ with $t = \OM(\log \frac{1}{\epsilon})$.
\end{lemma}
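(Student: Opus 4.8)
The plan is to mirror the proof of Lemma~\ref{lemma_convergence_of_gradient}: there a second-order Taylor expansion converted an $L^\infty$ bound on $f^t-f_\theta$ into a bound on $\nabla f^t-\nabla f_\theta$; here I would use a third-order expansion, controlled by the \emph{Lipschitz} continuity of the Hessian, to convert it into a bound on $\nabla^2 f^t-\nabla^2 f_\theta$. First I would record the regularity being used. Since the Sinkhorn map $\AM(\cdot,\alpha)(y)$ in \eqref{eqn_sinkhorn_mapping} is a well-defined function of $y$ on all of $\RBB^q$, each iterate $f^t=\BM(f^{t-1},\theta)$ and the potential $f_\theta$ extend to a neighborhood of $\XM$, and Lemma~\ref{lemma_appendix_sinkhorn_potential_boundedness}(iii) applies to all of them (to $f_\theta$ by initializing Sinkhorn--Knopp at a bounded $f^0$: by Lemma~\ref{lemma_appendix_bounded_function_sequence_from_sinkhorn_knopp} and the uniform convergence $f^t\to f_\theta$ one has $\|f_\theta\|_\infty\le M_c$, and since $f_\theta$ is a fixed point of $\BM$ the sequence started at $f^0=f_\theta$ is constant). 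Thus there is a single constant $L_{2,f}$ with $\|\nabla^2 f^t(x)-\nabla^2 f^t(x')\|_{op}\le L_{2,f}\|x-x'\|$ and $\|\nabla^2 f_\theta(x)-\nabla^2 f_\theta(x')\|_{op}\le L_{2,f}\|x-x'\|$, uniformly in $t$ and in $x,x'$ near $\XM$.

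Next I would fix $x\in\XM$, a unit vector $h\in\RBB^q$, and a scale $\eta>0$ to be chosen. For any function with an $L_{2,f}$-Lipschitz Hessian, the $C^{1,1}$ form of Taylor's theorem gives $\bigl|\,f(x\pm\eta h)-f(x)\mp\eta\nabla f(x)^\top h-\tfrac{\eta^2}{2}h^\top\nabla^2 f(x)h\,\bigr|\le\tfrac{L_{2,f}}{6}\eta^3$, and adding the $+$ and $-$ versions cancels the first-order term, leaving $\bigl|\,f(x+\eta h)+f(x-\eta h)-2f(x)-\eta^2 h^\top\nabla^2 f(x)h\,\bigr|\le\tfrac{L_{2,f}}{3}\eta^3$. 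Applying this with $f=f^t$ and with $f=f_\theta$ and subtracting, the difference of the two function-value combinations is at most $4\|f^t-f_\theta\|_\infty$, so
\[
\bigl|\,h^\top\bigl(\nabla^2 f^t(x)-\nabla^2 f_\theta(x)\bigr)h\,\bigr|\ \le\ \frac{4}{\eta^2}\|f^t-f_\theta\|_\infty+\frac{2L_{2,f}}{3}\,\eta .
\]
Because $\nabla^2 f^t(x)-\nabla^2 f_\theta(x)$ is symmetric, its operator norm is the supremum of the left-hand side over unit $h$, so the same bound holds for $\|\nabla^2 f^t(x)-\nabla^2 f_\theta(x)\|_{op}$. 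I would then choose $\eta\asymp\|f^t-f_\theta\|_\infty^{1/3}$ to balance the two terms and take the supremum over $x\in\XM$, obtaining $\|\nabla^2 f^t-\nabla^2 f_\theta\|_{op,\infty}=\OM\bigl(\|f^t-f_\theta\|_\infty^{1/3}\bigr)$. Finally, combining with the linear convergence $\|f^t-f_\theta\|_\infty=\OM(\lambda^t)$ of Proposition~\ref{proposition_sinkhorn_potential} gives $\|\nabla^2 f^t-\nabla^2 f_\theta\|_{op,\infty}=\OM(\lambda^{t/3})$, which falls below any target $\epsilon$ once $t=\OM(\log\tfrac1\epsilon)$.

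The main obstacle is a pair of technical points rather than anything conceptually deep: (i) the Taylor estimate must be invoked in its $C^{1,1}$ (Hessian-Lipschitz) form, since the iterates are not known to be three times differentiable; and (ii) the evaluation points $x\pm\eta h$ may protrude slightly outside $\XM$, which is why the first paragraph stresses that the iterates and $f_\theta$ are defined, with the same regularity constants, on a neighborhood of $\XM$. Everything else is bookkeeping parallel to Lemma~\ref{lemma_convergence_of_gradient}; one could improve the exponent $1/3$ using higher-order finite differences under stronger smoothness of $c$, but this is unnecessary for the logarithmic rate.
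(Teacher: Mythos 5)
Your proposal is correct and follows essentially the same route as the paper, whose proof is a one-line reference to repeating the Taylor-expansion argument of Lemma~\ref{lemma_convergence_of_gradient} one derivative order higher, using the uniform Hessian-Lipschitz bound of Lemma~\ref{lemma_appendix_sinkhorn_potential_boundedness}(iii). Your centered second-difference version is a clean and complete instantiation of that sketch (the cube-root loss in the rate is harmless to the $\OM(\log\frac{1}{\epsilon})$ claim), and your two flagged technicalities — extending the iterates and $f_\theta$ to a neighborhood of $\XM$, and applying the regularity lemma to $f_\theta$ via the constant fixed-point sequence — are exactly the details the paper glosses over.
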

\begin{proof}
	This follows a similar argument as Lemma \ref{lemma_convergence_of_gradient} by noticing that the third order gradient of $f^t$ (and $f_\theta$) is bounded due to Assumption \ref{ass_lipschitz_c_hessian}.
\end{proof}

\subsection{Proof of Proposition \ref{proposition_frechet_derivative}} \label{appendix_proof_of_proposition_frechet_derivative}
We now construct a sequence $\{g^t\}$ to approximate the Fr\'echet derivative of the Sinkhorn potential $Df_\theta$ such that for all $t\geq T(\epsilon)$ with some integer function $T(\epsilon)$ of the target accuracy $\epsilon$, we have $\|g^t_{\theta} - Df_\theta\|_{op} \leq \epsilon$. 
In particular, we show that such $\epsilon$-accurate approximation can be achieved using a logarithmic amount of simple function operations and integrations with respect to $\alpha_\theta$.

For a given target accuracy $\epsilon>0$, denote $\bar \epsilon = \epsilon/{L}_{l}$, where ${L}_{l}$ is a constant defined in Lemma \ref{lemma_appendix_lipschitz_D2EM}.
First, Use the Sinkhorn-Knopp algorithm to compute $f_\theta^{\bar \epsilon}$, an approximation of $f_\theta$ such that $\|f_\theta^{\bar \epsilon}-f_\theta\|_\infty\leq \bar \epsilon$. This computation can be done in $\OM(\log \frac{1}{\epsilon})$ from Proposition \ref{proposition_sinkhorn_potential}.

Denote $\EM(f, \theta) = \BM^l(f, \theta) = \BM\big(\cdots\BM(f, \theta),\cdots, \theta\big)$, the $l$ times composition of $\BM$ in its first variable. Pick $l = \lceil\log_{\lambda}\frac{1}{3}\rceil/2$. From the contraction of $\AM$ under the Hilbert metric \eqref{eqn_contraction_under_hilbert_metric}, we have
\begin{align*}
\|\EM(f, \theta) - \EM(f', \theta)\|_\infty &\ \leq \gamma d_H(\exp(\EM(f, \theta)/\gamma), \exp(\EM(f', \theta)/\gamma))\\
&\ \leq \gamma\lambda^{2l} d_H(\exp(f/\gamma), \exp(f'/\gamma))\leq 2\lambda^{2l}\|f - f'\|_\infty \leq \frac{2}{3}\|f - f'\|_\infty,
\end{align*}
where we use $\|f - f'\|_\infty\leq d_H(\exp(f), \exp(f'))\leq 2\|f - f'\|_\infty$ in the first and third inequalities.
Consequently, $\EM[f, \theta]$ is a contraction operator w.r.t. $f$ under the $l_\infty$ norm, which is equivalent to
\begin{equation} \label{eqn_contraction_EM}
\|D^1\EM(f, \theta)\|_{op} \leq \frac{2}{3}.
\end{equation}

Now, given arbitrary initialization $g_{\theta}^0:\Theta\rightarrow T(\RBB^d, \CM(\XM))$\footnote{Recall that $T(\RBB^d, \CM(\XM))$ is the family of bounded linear operators from $\RBB^d$ to $\CM(\XM)$}, construct iteratively 
\begin{equation}
g_{\theta}^{t+1} = D_1\EM(f_\theta^{\bar \epsilon}, \theta)\circ g_{\theta}^{t} + D_2\EM(f_\theta^{\bar \epsilon}, \theta),
\end{equation}
where $\circ$ denotes the composition of (linear) mappings.
In the following, we show that 
\begin{equation*}
	\|g_{\theta}^{t+1} - Df_\theta\|_{op}\leq 3\epsilon + (\frac{2}{3})^t\|g_{\theta}^0 - Df_\theta\|_{op}.
\end{equation*}
First, note that $f_\theta$ is a fixed point of $\EM(\cdot, \theta)$
\begin{equation*}
	f_\theta = \EM(f_\theta, \theta).
\end{equation*}
Take the Fr\'echet derivative w.r.t. $\theta$ on both sides of the above equation.
Using the chain rule, we compute
\begin{equation}
Df_\theta = D_1\EM(f_\theta, \theta)\circ Df_\theta + D_2\EM(f_\theta, \theta).
\end{equation}
For any direction $h\in\RBB^d$, we bound the difference of the directional derivatives by
\begin{align*}
&\ \|g_{\theta}^{t+1}[h] - Df_\theta[h]\|_\infty \\
\leq&\ \|D_1\EM(f_\theta, \theta)\big[ Df_\theta[h]\big] - D_1\EM(f_\theta^{\bar \epsilon}, \theta)\big[g_{\theta}^{t}[h]\big]\|_\infty + \|D_2\EM(f_\theta^{\bar \epsilon}, \theta)[h] - D_2\EM(f_\theta, \theta)[h]\|_\infty\\
\leq&\ \frac{2}{3}\|Df_\theta[h] - g_{\theta}^{t}[h]\|_\infty + {L}_{l}\big(\|f_\theta^{\bar \epsilon} - f_\theta\|_\infty + \|\nabla f_\theta^{\bar \epsilon} - \nabla f_\theta\|_\infty\big)\|h\|_\infty \\
\leq&\ \frac{2}{3}\|Df_\theta - g_{\theta}^{t}\|_{op}\|h\|_\infty + \epsilon\|h\|_\infty,
\end{align*}
where in the second inequality we use the bound on $D_1\EM$ in \eqref{eqn_contraction_EM} and the {${L}_{l}$-Lipschitz continuity of $D_2\EM$} with respect to its first argument (recall that $f_\theta^{\bar \epsilon}$ is obtained from the Sinkhorn-Knopp algorithm and hence $\|f_\theta^{\bar \epsilon}\|_\infty\leq M_c$ from Lemma \ref{lemma_appendix_bounded_function_sequence_from_sinkhorn_knopp} and $\|\nabla f_\theta^{\bar \epsilon}\|_{2, \infty}\leq G_f$ from (i) of Lemma \ref{lemma_appendix_sinkhorn_potential_boundedness}).
The above inequality is equivalent to 
\begin{equation*}
\|g_{\theta}^{t+1} - Df_\theta\|_{op} - 3\epsilon \leq \frac{2}{3}\left(\|Df_\theta - g_{\theta}^{t}\|_{op} - 3\epsilon \right) \Rightarrow \|g_{\theta}^{t+1} - Df_\theta\|_{op}\leq 3\epsilon + (\frac{2}{3})^t\|g_{\theta}^0 - Df_\theta\|_{op}.
\end{equation*}
Therefore, after $T(\epsilon) = \OM(\log\frac{1}{\epsilon})$ iterations, we find $g_{\theta}^{T(\epsilon)}$ such that $\|g_{\theta}^{T(\epsilon)} - Df_\theta\|_{op} \leq 4\epsilon$.

\begin{assumption}[Boundedness of $\nabla_\theta T_\theta(x)$] \label{ass_Lipschitz_continuity_T}
	There exists some $G_T>0$ such that for any $x\in\XM$ and $\theta\in\Theta$, $\|\nabla_\theta T_\theta(x)\|_{op}\leq G_T$.
\end{assumption}

\begin{lemma}[Lipschitz continuity of $D_2\EM$] \label{lemma_appendix_lipschitz_D2EM}
	Under Assumptions \ref{ass_bounded_c} - \ref{ass_bounded_infty_c_hessian} and \ref{ass_Lipschitz_continuity_T},
	$D_2\EM$ is Lipschitz continuous with respect to its first variable:
	For $f, f'\in\CM(\XM)$ such that $\|f\|_\infty \leq M_c$ ($\|f'\|_\infty \leq M_c$) and $\|\nabla f\|_\infty\leq G_f$ ($\|\nabla f'\|_\infty\leq G_f$), and $\theta\in\Theta$ there exists some $L_l$ such that
	\begin{equation}
		\|D_2\EM(f, \theta) - D_2\EM(f', \theta)\|_{op} \leq L_l \big(\|f - f'\|_\infty + \|\nabla f - \nabla f'\|_{2, \infty}\big).
	\end{equation}
\end{lemma}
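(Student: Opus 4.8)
The plan is to reduce the statement to three ingredients: a chain-rule expansion of $D_2\EM$ in terms of the single-step derivatives of $\BM$; uniform operator-norm bounds together with $C^1$-Lipschitz estimates for $D_1\BM$ and $D_2\BM$; and the boundedness/contraction properties of the Sinkhorn--Knopp iterates. Since $l$ is a fixed constant, assembling these is then a finite telescoping computation. Concretely, for \textbf{Step 1 (chain rule)}, set $f^{(0)}=f$ and $f^{(k)}=\BM(f^{(k-1)},\theta)$ for $k=1,\dots,l$, so that $\EM(f,\theta)=f^{(l)}$, and let $\tilde f^{(k)}$ denote the corresponding iterates started from $f'$. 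Since $f^{(0)}$ is independent of $\theta$, iterating the identity $D_2 f^{(k)}=D_1\BM(f^{(k-1)},\theta)\circ D_2 f^{(k-1)}+D_2\BM(f^{(k-1)},\theta)$ gives
\begin{equation*}
D_2\EM(f,\theta)=\sum_{k=0}^{l-1}\Bigl(D_1\BM(f^{(l-1)},\theta)\circ\cdots\circ D_1\BM(f^{(k+1)},\theta)\Bigr)\circ D_2\BM(f^{(k)},\theta),
\end{equation*}
a sum of $l$ terms, each composing an operator $D_2\BM(f^{(k)},\theta)\colon\RBB^d\to\CM(\XM)$ with a chain of at most $l-1$ operators $D_1\BM(f^{(j)},\theta)\colon\CM(\XM)\to\CM(\XM)$. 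By Lemmas \ref{lemma_appendix_bounded_function_sequence_from_sinkhorn_knopp} and \ref{lemma_appendix_sinkhorn_potential_boundedness}, every iterate obeys $\|f^{(k)}\|_\infty\le M_c$ and $\|\nabla f^{(k)}\|_{2,\infty}\le G_f$ (and likewise for $\tilde f^{(k)}$), so all the building blocks above stay in a fixed bounded set.

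\textbf{Step 2 (single-step estimates).} Writing $\AM$ in its change-of-variables form $\AM(f,{T_\theta}_\sharp\mu)(y)=-\gamma\log\int_\ZM\exp\bigl(-c(T_\theta(z),y)/\gamma+f(T_\theta(z))/\gamma\bigr)\,\dB\mu(z)$ and using Assumption \ref{ass_bounded_c} (so the kernel lies in $[\exp(-M_c/\gamma),1]$, keeping every denominator bounded away from $0$), one differentiates once in $f$ and once in $\theta$ to obtain explicit log-sum-exp expressions for $D_1\BM(f,\theta)$ and $D_2\BM(f,\theta)$; the $\theta$-differentiation introduces the factor $\nabla_\theta T_\theta(z)$ (bounded by $G_T$, Assumption \ref{ass_Lipschitz_continuity_T}) and the term $\nabla f(T_\theta(z))$, which is exactly why the modulus of continuity has to be measured in $C^1$ norm. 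Repeating the ``ratio of bounded Lipschitz functions is Lipschitz'' argument \eqref{eqn_Lipschitz_continuous_fraction_of_functions} with Assumptions \ref{ass_bounded_infty_c_gradient}--\ref{ass_bounded_infty_c_hessian} then yields, over $\{\|f\|_\infty\le M_c,\ \|\nabla f\|_{2,\infty}\le G_f\}$ and $\theta\in\Theta$, the uniform bounds $\|D_1\BM(\cdot,\theta)\|_{op}\le 2\lambda^2$ (from the Hilbert-metric contraction \eqref{eqn_contraction_under_hilbert_metric}) and $\|D_2\BM(\cdot,\theta)\|_{op}\le C_0$, together with
\begin{equation*}
\|D_i\BM(f,\theta)-D_i\BM(f',\theta)\|_{op}\le C_1\bigl(\|f-f'\|_\infty+\|\nabla f-\nabla f'\|_{2,\infty}\bigr),\qquad i=1,2.
\end{equation*}
By the same mechanism, differentiating $\BM(f,\theta)(x)$ in $x$ and invoking the Hessian bounds of Lemma \ref{lemma_appendix_sinkhorn_potential_boundedness}(ii)--(iii), $\BM(\cdot,\theta)$ is itself Lipschitz in the $C^1$ norm, so the iterates satisfy $\|f^{(k)}-\tilde f^{(k)}\|_\infty+\|\nabla f^{(k)}-\nabla\tilde f^{(k)}\|_{2,\infty}\le C_2^{\,k}\bigl(\|f-f'\|_\infty+\|\nabla f-\nabla f'\|_{2,\infty}\bigr)$.

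\textbf{Step 3 (telescoping).} Subtract the expansions of $D_2\EM(f,\theta)$ and $D_2\EM(f',\theta)$ term by term. For a single term, the difference of two operator compositions $A_1\circ\cdots\circ A_m$ and $\tilde A_1\circ\cdots\circ\tilde A_m$ is bounded by $\sum_i\|A_i-\tilde A_i\|_{op}\prod_{j\ne i}\max(\|A_j\|_{op},\|\tilde A_j\|_{op})$, where each factor $A$ is one of the $D_1\BM(f^{(j)},\theta)$ or $D_2\BM(f^{(k)},\theta)$. Each $\|A_i-\tilde A_i\|_{op}$ is controlled by Step 2 (the $i=1,2$ Lipschitz bound combined with the iterate-stability bound) by $C_1 C_2^{\,k}\bigl(\|f-f'\|_\infty+\|\nabla f-\nabla f'\|_{2,\infty}\bigr)$, and the product of the remaining $\|A_j\|_{op}$ is bounded by a constant depending only on $l$. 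Summing the $\OM(l^2)$ resulting terms gives the claim with $L_l$ a constant depending on $l$, $\gamma$, $M_c$, $G_f$, $G_T$ and the cost constants $G_c,L_c$.

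\textbf{Main obstacle.} Step 3 is bookkeeping; the real work is Step 2 --- carrying out the double differentiation of the log-sum-exp defining $\AM$, tracking how $\nabla f$ enters $D_2\BM$ (hence why the $C^1$ norm, rather than $\|\cdot\|_\infty$ alone, is unavoidable on the right-hand side), and verifying the $C^1$-Lipschitz continuity of $D_1\BM$, $D_2\BM$ and of $\BM$ itself while keeping all denominators bounded below. This is where Assumptions \ref{ass_bounded_c}--\ref{ass_bounded_infty_c_hessian} and \ref{ass_Lipschitz_continuity_T} are all used, and it amounts to a longer, second-order version of the estimates already carried out in the proof of Lemma \ref{lemma_appendix_sinkhorn_potential_boundedness}.
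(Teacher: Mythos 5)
Your proposal is correct and follows essentially the same route as the paper's proof: the chain-rule recursion $D_2\BM^{l}(f,\theta)=D_1\BM(\BM^{l-1}(f,\theta),\theta)\circ D_2\BM^{l-1}(f,\theta)+D_2\BM(\BM^{l-1}(f,\theta),\theta)$, uniform boundedness of $\|D_1\BM\|_{op}$ and $\|D_2\BM^{k}\|_{op}$, Lipschitz continuity of $D_1\BM$ and $D_2\BM$ in $f$ via the quotient-of-bounded-Lipschitz-functions argument, and $C^1$-stability of the iterates, with all the identified assumptions used in the same places. The only difference is that you unroll the recursion into an explicit sum and telescope, whereas the paper assembles the same single-step estimates by induction on $k$; this is cosmetic.
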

\begin{proof}
	Recall that $\EM(\cdot, \theta) = \BM^l(\cdot, \theta)$. Using the chain rule of Fr\'echet derivative, we compute
\begin{equation} \label{eqn_appendix_proof_D2E_expression}
	D_2\BM^{l}(f, \theta) = D_1\BM\big(\BM^{l-1}(f, \theta), \theta\big)\circ D_2\BM^{l-1}(f, \theta) + D_2\BM\big(\BM^{l-1}(f, \theta), \theta\big).
\end{equation}
We bound the two terms on the R.H.S. individually.

\paragraph{Analyze the first term of \eqref{eqn_appendix_proof_D2E_expression}.}
For a given $f$, use $A_f$ and $B_f$ to denote two linear operators depending on $f$.
We have $\|A_f\circ B_f - A_{f'}\circ B_{f'}\|_{op} = \OM(\|f - f'\|_\infty + \|\nabla f - \nabla f'\|_{2, \infty})$ if both $A_f$ and $B_f$ are bounded, $\|A_f - A_{f'}\|_{op} = \OM(\|f - f'\|_\infty + \|\nabla f - \nabla f'\|_{2, \infty})$, and $\|B_f - B_{f'}\|_{op} = \OM(\|f - f'\|_\infty + \|\nabla f - \nabla f'\|_{2, \infty})$:
\begin{align}
	\|A_{f}\circ B_{f} - A_{f'}\circ B_{f'}\|_{op} \leq \|A_{f}\circ B_{f} - A_{f}\circ B_{f'}\|_{op} + \|A_{f}\circ B_{f'} - A_{f'}\circ B_{f'}\|_{op}\notag \\
	\leq [\max_f \|B_{f}\|_{op}\cdot L_{A} +  \max_f \|A_{f}\|_{op}\cdot L_{B}]\big[\|f - f'\|_\infty + \|\nabla f - \nabla f'\|_{2, \infty}\big], \label{eqn_appendix_proof_composition_of_linear_operators}
\end{align}
where $L_{A}$ and $L_{B}$ denote the constants of operators $A_f$ and $B_f$ such that 
\begin{align*}
	\|A_f - A_{f'}\|\leq L_{A}\big[\|f - f'\|_\infty + \|\nabla f - \nabla f'\|_{2, \infty}\big]\\
	\|B_f - B_{f'}\|\leq L_{B}\big[\|f - f'\|_\infty + \|\nabla f - \nabla f'\|_{2, \infty}\big].
\end{align*}
We now take $$A_f = D_1\BM\big(\BM^{l-1}(f, \theta), \theta\big) \ \mathrm{and}\ B_f = D_2\BM^{l-1}(f, \theta).$$
$\|A_f\|_{op}$ is bounded from the following lemma.
\begin{lemma} \label{appendix_lemma_property_1}
	$\BM(f, \theta)$ is $1$-Lipschitz continuous with respect to its first variable.
\end{lemma}
\begin{proof}
	We compute that for any measure $\kappa$ and any function $g \in \CM(\XM)$,
	\begin{equation}\label{eqn_Gateaux_derivative}
	D_1\AM(f, \kappa)[g] = \frac{\int_\XM\exp\{-\frac{1}{\gamma}\big(c(x, y) - f(x)\big)\}g(x)\dB\kappa(x)}{\int_\XM\exp\{-\frac{1}{\gamma}\big(c(x, y) - f(x)\big)\}\dB\kappa(x)}.
	\end{equation}
	Note that 
	\begin{equation}\label{eqn_proof_bound_0}
	\|D_1\AM(f, \kappa)[g]\|_\infty \leq \|\frac{\int_\XM\exp\{-\frac{1}{\gamma}\big(c(x, y) - f(x)\big)\}\dB\kappa(x)}{\int_\XM\exp\{-\frac{1}{\gamma}\big(c(x, y) - f(x)\big)\}\dB\kappa(x)}\|_\infty\cdot\|g\|_\infty = \|g\|_\infty,
	\end{equation}
	and consequently we have $\|D_1\AM(f, \kappa)\|_{op} \leq 1$.
	Further, since $\BM$ is the composition of $\AM$ in its first variable, we have that $\|D_1\BM(f, \theta)\|_{op} \leq 1$.
\end{proof}
$\|B_f\|_{op}$ is bounded from the following lemma.
\begin{lemma}\label{appendix_lemma_property_4}
	Assume that $f\in\CM(\XM)$ satisfies $\|f\|_\infty\leq M_c$ and $\|\nabla f\|_{2, \infty}\leq G_f$.
	Under Assumptions \ref{ass_bounded_infty_c_gradient} and \ref{ass_Lipschitz_continuity_T}, $\forall l\geq 1, \|D_2\BM^{l}(f, \theta)\|_{op}$ is $M_l$-bounded, with $M_l = l\cdot\exp(3M_c/\gamma)\cdot G_T\cdot(G_c + G_f)$. 
\end{lemma}
\begin{proof}
	In this proof, we denote $\tilde{\AM}(f, \theta) \defi \AM(f, \alpha_\theta)$ to make the dependence of $\AM$ on $\theta$ explicit.
	Using the chain rule of Fr\'echet derivative, we compute
	\begin{equation}
	D_2\BM^{l}(f, \theta) = D_1\BM\big(\BM^{l-1}(f, \theta), \theta\big)\circ D_2\BM^{l-1}(f, \theta) + D_2\BM\big(\BM^{l-1}(f, \theta), \theta\big).
	\end{equation}
	We will use $M_l$ to denote the upper bound of $\|D_2\BM^{l}(f, \theta)\|_{op}$. Consequently we have
	\begin{align*}
	M_l \leq&\ \|D_1\BM\big(\BM^{l-1}(f, \theta), \theta\big)\|_{op} \|D_2\BM^{l-1}(f, \theta)\|_{op} + \|D_2\BM\big(\BM^{l-1}(f, \theta), \theta\big)\|_{op} \\
	\leq&\ M_{l-1}+ \|D_2\BM\big(\BM^{l-1}(f, \theta), \theta\big)\|_{op},
	\end{align*}
	where we use Lemma \ref{appendix_lemma_property_1} in the second inequality.
	Recall that $\BM(f, \theta) = \AM(\tilde{\AM}(f, \theta), \beta)$.
	Again using the chain rule of the Fr\'echet derivative, we compute
	\begin{equation}
	D_2\BM(f, \theta) = D_1\AM\big(\tilde{\AM}(f, \theta), \beta\big)\circ D_2\tilde{\AM}(f, \theta),
	\end{equation}
	and hence
	\begin{equation}
	\|D_2\BM(f, \theta)\|_{op} \leq \|D_1\AM\big(\tilde{\AM}(f, \theta), \beta\big)\|_{op}\cdot\|D_2\tilde{\AM}(f, \theta)\|_{op} \leq \|D_2\tilde{\AM}(f, \theta)\|_{op},
	\end{equation}
	where we use \eqref{eqn_proof_bound_0} in the second inequality.
	We now bound $\|D_2\tilde{\AM}(f, \theta)\|_{op}$. Denote
	$$\omega_y(x) \defi \exp(-c(x, y)/\gamma) \exp(f(x)/\gamma).$$
	We have $\exp(-2M_c/\gamma)\leq \omega_y(x)\leq \exp(M_c/\gamma)$ from $\|f\|_\infty\leq M_c$ and Assumption \ref{ass_bounded_c}.
	For any direction $h\in\RBB^q$ (note that $D_2\tilde{\AM}(f, \theta)[h]:\XM\rightarrow\RBB$) and any $y\in\XM$, we compute
	\begin{equation*}
	\big(D_2\tilde{\AM}(f, \theta)[h]\big)(y) = \frac{\int_\XM \omega_y(T_\theta(x)) \langle [\nabla_\theta T_\theta(x)]^\top\left[-\nabla_1 c(T_\theta(x), y) + \nabla f(T_\theta(x))\right], h\rangle\dB\mu(x)}{\int_\XM \omega_y(T_\theta(x)) \dB\mu(x)},
	\end{equation*}
	where $\nabla_\theta T_\theta(x)$ denotes the Jacobian matrix of $T_\theta(x)$ w.r.t. $\theta$.
	Consequently we bound
	\begin{align*}
	\|D_2\tilde{\AM}(f, \theta)[h]\|_\infty \leq&\ \exp(3M_c/\gamma)\|\nabla_\theta T_\theta(x)\|_{op}\cdot [\|\nabla_1 c\big(T_\theta(x), y\big)\| + \|\nabla f\big(T_\theta(x)\big)\|]\cdot\|h\| \\
	\leq&\ \exp(3M_c/\gamma)\cdot G_T\cdot(G_c + G_f) \|h\|,
	\end{align*}
	which implies
	\begin{equation} \label{eqn_appendix_D2A_bounded}
	\|D_2\tilde{\AM}(f, \theta)\|_{op} \leq \exp(3M_c/\gamma)\cdot G_T\cdot(G_c + G_f).
	\end{equation}
\end{proof}
To show the Lipschitz continuity of $A_f$, i.e. $\|A_f - A_{f'}\|\leq L_{A}\|f - f'\|_\infty$, we first establish the following continuity lemmas of $D_1\BM(\cdot, \theta)$ and $\BM^{l-1}(\cdot, \theta)$.
\begin{lemma} \label{lemma_appendix_lipschitz_continuity_D_1_B}
	For $f\in\CM(\XM)$ such $\|f\|_\infty \leq M_c$, 
	$D_1\BM(f, \theta)$ is $L$-Lipschitz continuous with respect to its first variable with $L = 2L_\AM$.
\end{lemma}
\begin{proof}
	Use the chain rule of Fr\'echet derivative to compute
	\begin{equation}
	D_1\BM(f, \theta) = \underbrace{D_1\AM\big(\AM(f, \alpha_\theta), \beta\big)}_{U_f}\circ \underbrace{D_1\AM(f, \alpha_\theta)}_{V_f}.
	\end{equation}
	We analyze the Lipschitz continuity of $\|D_1\BM(f, \theta)\|_{op}$ following the same logic as \eqref{eqn_appendix_proof_composition_of_linear_operators}:
	\begin{itemize}
		\item The $1$-boundedness of $U_f$ and $V_f$ is from Lemma \ref{appendix_lemma_property_1}.
		\item The $L_\AM$-Lipschitz continuity of $V_f$ is from Lemma \ref{lemma_appendix_bound_2_lemma}.
		\item The $L_\AM$-Lipschitz continuity of $U_f$ is from Lemmas \ref{appendix_lemma_property_1} and \ref{lemma_appendix_bound_2_lemma}.
	\end{itemize}
	Consequently, we have that $D_1\BM(f, \theta)$ is $2L_\AM$-Lipschitz continuous w.r.t. its first variable.
\end{proof}
\begin{lemma} \label{appendix_lemma_property_3}
	$\forall l, \BM^{l}(f, \theta)$ is $1$-Lipschitz continuous with respect to its first variable.
\end{lemma}
\begin{proof}
	Use the chain rule of Fr\'echet derivative to compute
	\begin{equation}
	D_1\BM^{l}(f, \theta) = D_1\BM\big(\BM^{l-1}(f, \theta), \theta\big)\circ D_1\BM^{l-1}(f, \theta).
	\end{equation}
	Consequently $\|D_1\BM^{l}(f, \theta)\|_{op} \leq \|D_1\BM(f, \theta)\|_{op}^l$.
	Further, we have $\|D_1\BM(f, \theta)\|_{op} \leq 1$ from Lemma \ref{appendix_lemma_property_1} which leads to the result.
\end{proof}
We have that $A_f$ is Lipschitz continuous since (i) $A_f$ is the composition of Lipschitz continuous operators $D_1\BM(\cdot, \theta)$ and $\BM^{l-1}(f\cdot, \theta)$ and (ii) for $\|f\|_\infty\leq M_c$, $\forall l\geq 0, \|\BM^l(f, \theta)\|_\infty\leq M_c$ (the argument is similar to Lemma \ref{lemma_appendix_bounded_function_sequence_from_sinkhorn_knopp}).

We prove $\|B_f - B_{f'}\|\leq L_{l}\big[\|f - f'\|_\infty + \|\nabla f - \nabla f'\|_{2, \infty}\big]$ via induction.
The following lemma establishes the base case for $D_2\BM(f, \theta)$ (when $l=2$).
Note that the boundedness of $\|f\|_\infty$ ($\|f'\|_\infty$) and $\|\nabla f\|_\infty$ ($\|\nabla f'\|_\infty$) remains valid after the operator $\BM$ (Lemma \ref{lemma_appendix_bounded_function_sequence_from_sinkhorn_knopp} and (i) of Lemma \eqref{lemma_appendix_sinkhorn_potential_boundedness}).
\begin{lemma} \label{appendix_lemma_property_5}
	There exists constant $L_{1}$ such that for $\|f\|_\infty \leq M_c$ ($\|f'\|_\infty \leq M_c$) and $\|\nabla f\|_\infty\leq G_f$ ($\|\nabla f'\|_\infty\leq G_f$)
	\begin{equation}
		\|D_2\BM(f, \theta) - D_2\BM(f', \theta)\|_{op} \leq L_{1}\big[\|f - f'\|_\infty + \|\nabla f - \nabla f'\|_{2, \infty}\big].
	\end{equation}
\end{lemma}
\begin{proof}
	In this proof, we denote $\tilde{\AM}(f, \theta) \defi \AM(f, \alpha_\theta)$ to make the dependence of $\AM$ on $\theta$ explicit.
	Recall that $\BM(f, \theta) = \AM(\tilde{\AM}(f, \theta), \beta)$.
	Use the chain rule of Fr\'echet derivative to compute
	\begin{equation}
	D_2\BM(f, \theta) = \underbrace{D_1\AM\big(\AM(f, \alpha_\theta), \beta\big)}_{U_f}\circ \underbrace{D_2\tilde \AM(f, \theta)}_{V_f}.
	\end{equation}
	We analyze the Lipschitz continuity of $\|D_2\BM(f, \theta)\|_{op}$ following the same logic as \eqref{eqn_appendix_proof_composition_of_linear_operators}:
	\begin{itemize}
		\item The $1$-boundedness of $U_f$ is from Lemma \ref{appendix_lemma_property_1}.
		\item The $\exp(3M_c/\gamma)\cdot G_T\cdot(G_c + G_f)$-boundedness of $V_f$ is from \eqref{eqn_appendix_D2A_bounded}.
		\item The $L_A$-Lipschitz continuity of $U_f$ is from Lemmas \ref{appendix_lemma_property_1} and \ref{lemma_appendix_bound_2_lemma} and the fact that for $\|f\|_\infty\leq M_c$, $\|\AM(f, \theta)\|_\infty\leq M_c$ (the argument is similar to Lemma \ref{lemma_appendix_bounded_function_sequence_from_sinkhorn_knopp}).
		\item Denote
		$$\TM_y(x, f) \defi \exp(-c(x, y)/\gamma) \exp(f(x)/\gamma).$$
		We compute
		\begin{align*}
		V_f = \frac{\int_\ZM \TM_y\big(T_\theta(z), f\big)  [\nabla_\theta T_\theta(z)]^\top\left[-\nabla_1 c(T_\theta(z), y) + \nabla f\big(T_\theta(z)\big)\right]\dB\mu(z)}{\int_\ZM \TM_y\big(T_\theta(z), f\big) \dB\mu(z)}, && \#\frac{P_f}{Q_f}
		\end{align*}
		Denote the numerator by $P_f$ and the denominator by $Q_f$.
		Following the similar idea as \eqref{eqn_Lipschitz_continuous_fraction_of_functions},
		we show that both $\|P_f\|_{op}$ and $\|Q_f\|_{\infty}$ are bounded, $Q_f$ is Lipschitz continuous w.r.t. $f$, $Q_f$ is positive and bounded from below, and $\|P_f - P_{f'}\|_{op} \leq L_v [\|f - f'\|_\infty + \|\nabla f - \nabla f'\|_{2,\infty}]$ for some constant $L_v$.
		
		\begin{itemize}
			\item The boundedness of $\|P_f\|_{op}$ is from the boundedness of $f$, Assumptions \ref{ass_Lipschitz_continuity_T}, \ref{ass_bounded_infty_c_gradient}, and the boundedness of $\nabla f$.
			\item The boundedness of $\|Q_f\|_{\infty}$ is from the boundedness of $f$.
			\item Use $DQ_f$ to denote the Fr\'echet derivative of $Q_f$ w.r.t. $f$. For any function $g\in\CM(\XM)$,
			\begin{equation}
				DQ_f[g] = \int_\XM \TM_y(x, f) g(x)/\gamma \dB \alpha_\theta(x),
			\end{equation}
			where we recall that $\alpha_\theta = {T_\theta}_\sharp \mu$. Further, we have $\|DQ_f[g]\|_\infty \leq \exp(M_c/\gamma)/\gamma \|g\|_\infty$, which implies the Lipschitz continuity of $Q_f$ (for $\|f\|_\infty \leq M_c$).
			\item We prove that for $\|f\|_\infty \leq M_c$ ($\|f'\|_\infty \leq M_c$) and $\|\nabla f\|_\infty\leq G_f$ ($\|\nabla f'\|_\infty\leq G_f$), $$\|P_f - P_{f'}\|_{op} \leq L_v [\|f - f'\|_\infty + \|\nabla f - \nabla f'\|_{2,\infty}].$$
			For a fixed $z\in\ZM$,
			denote $$p^z_f \defi\TM_y\big(T_\theta(z), f\big)  [\nabla_\theta T_\theta(z)]^\top\left[-\nabla_1 c(T_\theta(z), y) + \nabla f\big(T_\theta(z)\big)\right].$$
			Note that $P_f = \int_\ZM p^z_f\dB \mu(z)$.
			For any direction $h\in\RBB^d$, we bound
			\begin{align*}
			&\|p^z_f[h] - p^z_{f'}[h]\|_{op} \\
			\leq& \|D_2\TM_y\big(T_\theta(z), f\big)\|_{op}\|f - f'\|_\infty\cdot  \max_y|[\nabla_\theta T_\theta(z) h]^\top\left[-\nabla_1 c(T_\theta(z), y) + \nabla f\big(T_\theta(z)\big)\right]| \\
			&+ [\max_y \TM_y\big(T_\theta(z), f\big)]\cdot \|\nabla_\theta T_\theta(z) h\| \|\nabla f\big(T_\theta(z)\big) - \nabla f'\big(T_\theta(z)\big)\|\\
			\leq& \exp(M_c/\gamma)/\gamma\cdot G_T\cdot(G_c + G_f)\cdot\|f - f'\|_\infty\cdot\|h\| + \exp(M_c/\gamma)\cdot G_T\cdot\|h\|\cdot\|\nabla f - \nabla f'\|_{2,\infty}.
			\end{align*}
			Consequently, we have that there exists a constant $L_v$ such that
			\begin{equation*}
			\|p^z_f[h] - p^z_{f'}[h]\|_{\infty} \leq L_v [\|f - f'\|_\infty + \|\nabla f - \nabla f'\|_{2,\infty}]\cdot\|h\|.
			\end{equation*}
		\end{itemize}
	\end{itemize}
\end{proof}
The above lemma shows the base case for the induction.
Now suppose that the inequality $\|D_2\BM^{k}(f, \theta) - D_2\BM^{k}(f', \theta)\|_{op} \leq L_{k}\big[\|f - f'\|_\infty + \|\nabla f - \nabla f'\|_{2, \infty}\big]$ holds.\\
For the case of $k+1$, we compute the Fr\'echet derivative
\begin{equation*}
	D_2 \BM^{k+1}(f, \theta) = D_1\BM\big(\BM^{k}(f, \theta), \theta\big)\circ D_2\BM^{k}(f, \theta) + D_2\BM\big(\BM^{k}(f, \theta), \theta\big),
\end{equation*}
and hence we can bound
\begin{align}
	&\ \|D_2 \BM^{k+1}(f, \theta) - D_2 \BM^{k+1}(f', \theta)\|_{op} \notag \\
	\leq&\ \|D_1\BM\big(\BM^{k}(f, \theta), \theta\big)\circ \big(D_2\BM^{k}(f, \theta) - D_2\BM^{k}(f', \theta)\big)\|_{op} \notag \\
	&\ + \|\bigg(D_1\BM\big(\BM^{k}(f, \theta), \theta\big) - D_1\BM\big(\BM^{k}(f', \theta), \theta\big)\bigg) \circ D_2\BM^{k}(f', \theta)\|_{op}\notag\\
	&\ + \|D_2\BM\big(\BM^{k}(f, \theta), \theta\big) - D_2\BM\big(\BM^{k}(f', \theta), \theta\big)\|_{op}\notag\\
	\leq&\  \|D_2\BM^{k}(f, \theta) - D_2\BM^{k}(f', \theta)\|_{op} \\
	&\ + L_\AM\|\BM^{k}(f, \theta) - \BM^{k}(f', \theta)\|_\infty \|D_2\BM^{k}(f', \theta)\|_{op} \notag \\
	&\ + L_1\big[\|\BM^{k}(f, \theta) - \BM^{k}(f', \theta)\|_\infty + \|\nabla \BM^{k}(f, \theta) - \nabla \BM^{k}(f', \theta)\|_{2, \infty}\big] \notag \\
	\leq&\ L_k [\|f - f'\|_\infty + \|\nabla f - \nabla f'\|_{2, \infty}] + L_\AM\cdot M_k \cdot\|f - f'\|_\infty \notag  \\
	&\ + L_1\|f - f'\|_\infty + L_1\|\nabla \BM^{k}(f, \theta) - \nabla \BM^{k}(f', \theta)\|_{2, \infty}\notag \\
	\leq&\ (L_k+ L_1+L_\AM M_k) [\|f - f'\|_\infty + \|\nabla f - \nabla f'\|_{2, \infty}]+ L_1\|\nabla \BM^{k}(f, \theta) - \nabla \BM^{k}(f', \theta)\|_{2, \infty}. \label{eqn_appendix_D_2_BM_continuity}
\end{align}
Here in the third inequality, we use the induction for the first term, Lemma \ref{appendix_lemma_property_4} for the second term.
Notice that $\nabla\AM(f, \theta)$ is Lipschitz continuous w.r.t. $f$: Denote $k(x, y) \defi \exp\{-c(x, y)/\gamma\}$. For any fixed $x\in\XM$,
\begin{align*}
	\nabla \big(\AM(f, \alpha)\big)(x) = \frac{\int_\XM k(z, x)\exp\{f(z)/\gamma\} \nabla_{1}c(x, z)\dB \alpha(z)}{\int_\XM k(z, x)\exp\{f(z)/\gamma\}\dB \alpha(z)}, && \#\ \frac{g_1(f)}{g_2(f)}
\end{align*}
where we denote the numerator and denominator of the above expression by $g_1:\CM(\XM)\rightarrow \RBB^q$ and $g_2:\CM(\XM)\rightarrow\RBB$.
From the boundedness of $g_1$ and $g_2$, the Lipschitz continuity of $g_1$ and $g_2$ w.r.t. to $f$, and the fact that $g_2$ is positive and bounded away from zero, we conclude that there exists some constant $L_{\AM, f}$ such that for any $x\in\XM$ (this follows similarly as \eqref{eqn_Lipschitz_continuous_fraction_of_functions})
\begin{equation} \label{eqn_appendix_gradient_bounded_by_input_function}
	\|\nabla \big(\AM(f, \alpha)\big)(x) - \nabla \big(\AM(f', \alpha)\big)(x)\| \leq L_{\AM, f} \|f - f'\|_\infty.
\end{equation}
Recall that $\BM^k$ is the compositions of operators in the form of $\AM$.
Consequently, we have that 
\begin{equation*}
	\|\nabla \BM^{k}(f, \theta) - \nabla \BM^{k}(f', \theta)\|_{2, \infty} \leq L_{\AM, f} \|f - f'\|_\infty.
\end{equation*}
Plugging this result into \eqref{eqn_appendix_D_2_BM_continuity}, we prove that the induction holds for $k+1$:
\begin{equation*}
	\|D_2 \BM^{k+1}(f, \theta) - D_2 \BM^{k+1}(f', \theta)\|_{op} \leq (L_k+ L_1+L_\AM M_k+ L_1 L_{\AM, f}) [\|f - f'\|_\infty + \|\nabla f - \nabla f'\|_{2, \infty}].
\end{equation*}
Consequently, for any finite $l$, we have $\|B_f - B_{f'}\|\leq L_{l}\big[\|f - f'\|_\infty + \|\nabla f - \nabla f'\|_{2, \infty}\big]$, where $L_{l} = l\cdot(L_1+L_\AM M_k+ L_1 L_{\AM, f})$.
\begin{lemma} \label{lemma_appendix_bound_2_lemma}
	Under Assumption \ref{ass_bounded_c}, for $f\in\CM(\XM)$ such $\|f\|_\infty \leq M_c$, there exists constant $L_\AM$ such that
	$D_1\AM(f, \alpha_\theta)$ is $L_\AM$-Lipschitz continuous with respect to its first variable.
\end{lemma}
\begin{proof}
	Let $g\in\CM(\XM)$ any function.
	Denote $\TM_y(x, f) \defi \exp(-c(x, y)/\gamma) \exp(f(x)/\gamma)$.
	For a fixed point $y\in\XM$ and any function $g\in\CM(\XM)$, we compute that
	\begin{align*}
		\big(D_1\AM(f, \theta)[g]\big)(y) = \frac{\int_{\XM}\TM_y(x, f) g(x) \dB \alpha_\theta(x)}{\int_{\XM}\TM_y(x, f) \dB \alpha_\theta(x)}, && \#\ \frac{g_1(f)}{g_2(f)}
	\end{align*}
	where we denote the numerator and denominator of the above expression by $g_1:\CM(\XM)\rightarrow \RBB^q$ and $g_2:\CM(\XM)\rightarrow\RBB$.
	From the boundedness of $g_1$ and $g_2$, the Lipschitz continuity of $g_1$ and $g_2$ w.r.t. to $f$, and the fact that $g_2$ is positive and bounded away from zero, we conclude that there exists some constant $L_{\AM}$ such that for any $x\in\XM$ (this follows similarly as \eqref{eqn_Lipschitz_continuous_fraction_of_functions}).

\end{proof}

\paragraph{Analyze the second term of \eqref{eqn_appendix_proof_D2E_expression}.}
We bound the second term of \eqref{eqn_appendix_proof_D2E_expression} using Lemma \ref{appendix_lemma_property_5}:
\begin{align*}
	&\ \|D_2\BM(\BM^{l-1}(f, \theta), \theta) - D_2\BM(\BM^{l-1}(f', \theta), \theta)\|_{op} \\
	\leq&\ L_1 [\|\BM^{l-1}(f, \theta) - \BM^{l-1}(f', \theta)\|_\infty + \|\nabla \BM^{l-1}(f, \theta) - \nabla \BM^{l-1}(f', \theta)\|_{2, \infty}] \\
	\leq&\ L_1 [\|f - f'\|_\infty + L_{\AM, f}\|f - f'\|_{\infty}] = L_1\cdot(1+L_{\AM, f})\|f - f'\|_\infty,
\end{align*}
where we use \eqref{eqn_appendix_gradient_bounded_by_input_function} in the second inequality.

Combing the analysis for the two terms of \eqref{eqn_appendix_proof_D2E_expression}, we conclude the result. 

\end{proof}

\subsection{Proof of Theorem \ref{thm_computability_of_SIM}} \label{appendix_proof_thm_computability_of_SIM}
We prove that the approximation error of $\nabla^2_\theta\OT_\gamma(\alpha_\theta, \beta)$ using the estimated Sinkhorn potential $f_\theta^\epsilon$ and the estimated Fr\'echet derivative $g_\theta^\epsilon$ is of the order $$\OM(\|f_\theta^\epsilon - f_\theta\|_\infty + \|\nabla f_\theta^\epsilon - \nabla f_\theta\|_{2, \infty} + \|\nabla^2 f_\theta^\epsilon - \nabla^2 f_\theta\|_{op, \infty}+\|g_\theta^\epsilon - Df_\theta\|_{op}).$$
The other term $\nabla^2_\theta\OT_\gamma(\alpha_\theta, \alpha_\theta)$ is handled in a similar manner.

Recall the simplified expression of $\nabla^2_\theta\OT_\gamma(\alpha_\theta, \beta)$ in \eqref{eqn_appendix_two_term_OT_gamma}. Given the estimator $f_\theta^\epsilon$ ($g_\theta^\epsilon$) of $f_\theta$ ($Df_\theta$), we need to prove the following bounds of differences in terms of the estimation accuracy:
For any $h_1, h_2 \in \RBB^d$,
\begin{align} 
	&|D^2_{11}\HM_1(f_\theta, \theta)\big[Df_\theta[h_1], Df_\theta[h_2]\big] - D^2_{11}\HM_1(f_\theta^\epsilon, \theta)\big[g_\theta^\epsilon[h_1], g_\theta^\epsilon[h_2]\big]| \notag\\
	&\qquad =\OM\left(\|h_1\|\cdot\|h_2\|\cdot(\|f_\theta^\epsilon - f_\theta\|_\infty + \|g_\theta^\epsilon - Df_\theta\|_{op})\right), \label{eqn_appendix_proof_theorem_approximation_H_1}\\
	&\|D^2_{22}\HM_1(f_\theta, \theta) - D^2_{22}\HM_1(f_\theta^\epsilon, \theta)\|_{op} \notag \\
	&\qquad =\OM\left(\|f_\theta^\epsilon - f_\theta\|_\infty + \|\nabla f_\theta^\epsilon - \nabla f_\theta\|_{2, \infty} + \|\nabla^2 f_\theta^\epsilon - \nabla^2 f_\theta\|_{op, \infty}\right). \label{eqn_appendix_proof_theorem_approximation_H_2}
\end{align}
Note that from the definition of the operator norm the first results is equivalent to the bound in the operator norm.
Using Propositions \ref{proposition_sinkhorn_potential} and \ref{proposition_frechet_derivative} and Lemmas \ref{lemma_convergence_of_gradient}, \ref{lemma_convergence_of_Hessian}, we know that we can compute the estimators $f_\theta^\epsilon$ and $g_\theta^\epsilon$ such that $\|f_\theta^\epsilon - f_\theta\|_\infty\leq \epsilon$, $\|\nabla f_\theta^\epsilon - \nabla f_\theta\|_{2, \infty}\leq \epsilon$, and $\|\nabla^2 f_\theta^\epsilon - \nabla^2 f_\theta\|_{op, \infty}\leq \epsilon$, and $\|g_\theta^\epsilon - Df_\theta\|_{op}\leq \epsilon$ in logarithm time $\OM(\log \frac{1}{\epsilon})$.
Together with \eqref{eqn_appendix_proof_theorem_approximation_H_1} and \eqref{eqn_appendix_proof_theorem_approximation_H_2} proved above, we can compute an $\epsilon$-accurate estimation of $\nabla^2_\theta\OT_\gamma(\alpha_\theta, \beta)$ (in the operator norm) in logarithm time $\OM(\log \frac{1}{\epsilon})$.

\paragraph{Bounding \eqref{eqn_appendix_proof_theorem_approximation_H_1}.}
Recall the definition of $D^2_{11}\HM_1(f_\theta, \theta)\big[Df_\theta[h_1], Df_\theta[h_2]\big]$ in \eqref{eqn_appendix_D_11_H}.
Denote
\begin{align*}
	A_1 &\ = D^2_{11}\AM(f_\theta, \alpha_\theta), v_1 = Df_\theta[h_1], v_2 = Df_\theta[h_2],	\\
	A_2 &\ = D^2_{11}\AM(f_\theta^\epsilon, \alpha_\theta), u_1 = g_\theta^\epsilon[h_1], u_2 = g_\theta^\epsilon[h_2].
\end{align*}
Based on these definitions, we have 
\begin{align*}
&D^2_{11}\HM_1(f_\theta, \theta)\big[Df_\theta[h_1], Df_\theta[h_2]\big] = \int_\XM A_1[v_1, v_2](y) \dB \beta(y) \\
&D^2_{11}\HM_1(f_\theta^\epsilon, \theta)\big[g_\theta^\epsilon[h_1], g_\theta^\epsilon[h_2]\big] = \int_\XM A_2[u_1, u_2](y) \dB \beta(y).
\end{align*}
Using the triangle inequality, we have
\begin{align} \label{eqn_thm_computability_proof_1_1}
	\|A_1[v_1, v_2] - &A_2[u_1, u_2]\|_\infty \\
	&\leq \|A_1[v_1 - u_1, v_2]\|_\infty + \|A_1[u_1, v_2 - u_2]\|_\infty + \|(A_1 - A_2)[u_1, u_2]\|_\infty. \notag
\end{align}
We bound the three terms on the R.H.S. individually.

For the first term on the R.H.S. of \eqref{eqn_thm_computability_proof_1_1}, we recall the explicit expression of $A_1[v_1, v_2](y)$ in \eqref{eqn_appendix_D11A} as
\begin{align*}
A_1[v_1, v_2](y)  = \frac{\int_{\XM} \TM_y(x, f_\theta) v_1(x) v_2(x)\dB\alpha_\theta(x)}{\gamma\int_\XM \TM_y(x, f_\theta)\dB\alpha_\theta(x)} - \frac{\int_{\XM^2} \TM_y(x, f_\theta)\TM_y(x', f_\theta) v_1(x)v_2(x')\dB\alpha_\theta(x)\dB\alpha_\theta(x')}{\gamma\left[\int_\XM \TM_y(x, f_\theta)\dB\alpha_\theta(x)\right]^2}. 
\end{align*}
Here we recall $\TM_y(x, f) \defi \exp(-c(x, y)/\gamma) \exp(f(x)/\gamma)$.
We bound using the facts that $\TM_y(x, f_\theta)$ is bounded from above and bounded away from zero
\begin{align*}
	|A_1[v_1 - u_1, v_2](y)| \leq&\ |\frac{\int_{\XM} \TM_y(x, f_\theta) \big(v_1(x) - u_1(x)\big) v_2(x)\dB\alpha_\theta(x)}{\gamma\int_\XM \TM_y(x, f_\theta)\dB\alpha_\theta(x)}| \\
	&\quad + |\frac{\int_{\XM^2} \TM_y(x, f_\theta)\TM_y(x', f_\theta) \big(v_1(x) - u_1(x)\big)v_2(x')\dB\alpha_\theta(x)\dB\alpha_\theta(x')}{\gamma\left[\int_\XM \TM_y(x, f_\theta)\dB\alpha_\theta(x)\right]^2}| \\
	=&\ \OM(\|v_1 - u_1\|_\infty\cdot \|v_2\|_\infty).
\end{align*}
Further, we have $\|u_1 - v_1\|_\infty = \OM(\|Df_\theta - g_\theta^\epsilon\|_{op}\cdot\|h_1\|)$ and $\|v_1\|_\infty = \OM(\|h_2\|)$. Consequently, the first term on the R.H.S. of \eqref{eqn_thm_computability_proof_1_1} is of order $\OM(\|Df_\theta - g_\theta^\epsilon\|_{op}\cdot\|h_1\|\cdot\|h_2\|)$.

Following the same argument, we have the second term on the R.H.S. of \eqref{eqn_thm_computability_proof_1_1} is of order $\OM(\|Df_\theta - g_\theta^\epsilon\|_{op}\cdot\|h_1\|\cdot\|h_2\|)$.

To bound the third term on the R.H.S. of \eqref{eqn_thm_computability_proof_1_1}, denote
\begin{equation*}
	A_{11}[u_1, u_2] \defi \frac{\int_{\XM} \TM_y(x, f_\theta) u_1(x) u_2(x)\dB\alpha_\theta(x)}{\gamma\int_\XM \TM_y(x, f_\theta)\dB\alpha_\theta(x)}\ \mathrm{and} \
	A_{21}[u_1, u_2] \defi \frac{\int_{\XM} \TM_y(x, f_\theta^\epsilon) u_1(x) u_2(x)\dB\alpha_\theta(x)}{\gamma\int_\XM \TM_y(x, f_\theta^\epsilon)\dB\alpha_\theta(x)},
\end{equation*}
and denote
\begin{align*}
	A_{12}[u_1, u_2] \defi \frac{\int_{\XM} \TM_y(x, f_\theta)u_1(x)\dB\alpha_\theta(x)\int_{\XM}\TM_y(x', f_\theta) u_2(x')\dB\alpha_\theta(x')}{\gamma\left[\int_\XM \TM_y(x, f_\theta)\dB\alpha_\theta(x)\right]^2}, \\
	\mathrm{and} \ A_{22}[u_1, u_2] \defi \frac{\int_{\XM} \TM_y(x, f_\theta^\epsilon) u_1(x)\dB\alpha_\theta(x)\int_{\XM}\TM_y(x', f_\theta^\epsilon)u_2(x')\dB\alpha_\theta(x')}{\gamma\left[\int_\XM \TM_y(x, f_\theta^\epsilon)\dB\alpha_\theta(x)\right]^2}.
\end{align*}
We show that both $|\big(A_{11} - A_{21}\big)[u_1, u_2]|$ and $|\big(A_{12} - A_{22}\big)[u_1, u_2]|$ are of order $\OM(\|Df_\theta - g_\theta^\epsilon\|_{op}\cdot\|h_1\|\cdot\|h_2\|)$. This then implies $|\big(A_{1} - A_{2} \big)[u_1, u_2]| = \OM(\|Df_\theta - g_\theta^\epsilon\|_{op}\cdot\|h_1\|\cdot\|h_2\|)$.\\
With the argument similar to \eqref{eqn_Lipschitz_continuous_fraction_of_functions}, we obtain that $|\big(A_{11} - A_{21}\big)[u_1, u_2]| = \OM(\|Df_\theta - g_\theta^\epsilon\|_{op}\cdot\|u_1\|\cdot\|u_2\|)$ using the boundedness and Lipschitz continuity of the numerator and denominator of $A_{11}[u_1, u_2]$ w.r.t. to $f_\theta$ and the fact that the denominator is positive and bounded away from zero (see the discussion following \eqref{eqn_Lipschitz_continuous_fraction_of_functions}).
Further, since both $Df_\theta$ and $g_\theta^\epsilon$ are bounded linear operators, we have that $u_1 = \OM(h_1)$ and $u_2 = \OM(h_2)$.
Consequently, we prove that $|\big(A_{11} - A_{21}\big)[u_1, u_2]| = \OM(\|f_\theta - f_\theta^\epsilon\|_{op}\cdot\|h_1\|\cdot\|h_2\|)$.\\
Similarly, we can prove that $|\big(A_{12} - A_{22}\big)[u_1, u_2]| = \OM(\|f_\theta - f_\theta^\epsilon\|_{op}\cdot\|h_1\|\cdot\|h_2\|)$.

Altogether, we have proved \eqref{eqn_appendix_proof_theorem_approximation_H_1}.
\paragraph{Bounding \eqref{eqn_appendix_proof_theorem_approximation_H_2}.}
Recall that the expression of $D^2_{22}\HM_1(f, \theta)$ in \eqref{eqn_appendix_D_22_H}.
For a fixed $y\in\XM$ and a fixed $z'\in\ZM$, denote (recall that $u_z(\theta, f) = \nabla_{1} c\big(T_\theta(z), y\big) - \nabla f\big(T_\theta(z)\big)$)
\begin{align*}
	B_1(f) =&\ \nabla^2_\theta T_\theta(z')\times_1\nabla f\big(T_\theta(z')\big) \\
	B_2(f) =&\ \nabla_\theta T_\theta(z')^\top\nabla^2 f\big(T_\theta(z')\big)\nabla_\theta T_\theta(z') \\
	B_3(f) =&\ \frac{\int_\ZM \TM_y\big(T_\theta(z), f\big)\nabla_\theta T_\theta(z)^\top u_z(\theta, f)u_z(\theta, f)^\top \nabla_\theta T_\theta(z)\dB \mu(z)}{\int_\ZM \TM_y\big(T_\theta(z), f\big)\dB \mu(z)}\\
	B_4(f) =&\ \frac{\int_\ZM \TM_y\big(T_\theta(z), f\big)\nabla^2_\theta T_\theta(z)\times_1 u_z(\theta, f)\dB \mu(z)}{\int_\ZM \TM_y\big(T_\theta(z), f\big)\dB \mu(z)}\\
	B_5(f) =&\ \frac{\int_\ZM \TM_y\big(T_\theta(z), f\big)\nabla_\theta T_\theta(z)^\top \nabla_{11} c(T_\theta(z), y)\nabla_\theta T_\theta(z)\dB \mu(z)}{\int_\ZM \TM_y\big(T_\theta(z), f\big)\dB \mu(z)}\\
	B_6(f) =&\ -\frac{\int_\ZM \TM_y\big(T_\theta(z), f\big)\nabla_\theta T_\theta(z)^\top \nabla^2 f\big(T_\theta(z)\big)\nabla_\theta T_\theta(z)\dB \mu(z)}{\int_\ZM \TM_y\big(T_\theta(z), f\big)\dB \mu(z)}\\
	B_7(f) =&\ \frac{\int_\ZM \TM_y\big(T_\theta(z), f\big)\nabla_\theta T_\theta(z)^\top u_z(\theta, f)\dB \mu(z)\left[\int_\ZM \TM_y\big(T_\theta(z), f\big)\nabla_\theta T_\theta(z)^\top u_z(\theta, f)\dB \mu(z)\right]^\top}{\left[\int_\ZM \TM_y\big(T_\theta(z), f\big)\dB \mu(z)\right]^2}
\end{align*}
Based on these definitions, we have
\begin{equation*}
	D^2_{22}\HM_1(f, \theta) = \int_{\ZM}\sum_{i=1}^{2} B_i(f) \dB \mu(z') + \int_{\XM} \sum_{i=3}^{7} B_i(f) \dB \beta(y).
\end{equation*}
We bound the above seven terms individually.
\begin{assumption}\label{ass_boundedness_of_second_jacobian_T}
	For a fixed $z\in\ZM$ and $\theta\in\Theta$, use $\nabla^2_\theta T_\theta(z)\in T(\RBB^d\times\RBB^d\rightarrow\RBB^q)$\footnote{Recall that $T(U, W)$ is the family of bounded linear operators from $U$ to $W$.} to denote the second-order Jacobian of $T_\theta(z)$ w.r.t. $\theta$. Use $\times_1$ to denote the tensor product along the first dimension.
	For any two vectors $g, g'\in\RBB^d$, we assume that
	\begin{equation}
		\|\nabla^2_\theta T_\theta(z) \times_1 g - \nabla^2_\theta T_\theta(z) \times_1 g'\|_{op} = \OM(\|g - g'\|).
	\end{equation}
\end{assumption}
For the first term, using the boundedness of $\nabla^2_\theta T_\theta(z')$ (Assumption \ref{ass_boundedness_of_second_jacobian_T}), we have that $$\|B_1(f_\theta) - B_1(f_\theta^\epsilon)\|_{op} = \OM(\|\nabla f_\theta - \nabla f_\theta^\epsilon\|_{2, \infty}).$$
For the second term, using the boundedness of $\nabla_\theta T_\theta(z')$, we have that
\begin{equation*}
	\|B_2(f_\theta) - B_2(f_\theta^\epsilon)\|_{op} = \OM(\|\nabla^2 f_\theta - \nabla^2 f_\theta^\epsilon\|_{op, \infty}).
\end{equation*}
For the third term, note that $\|u_z(\theta, f_\theta) - u_z(\theta, f_\theta^\epsilon)\| = \OM(\|\nabla f_\theta - \nabla f_\theta^\epsilon\|_{2,\infty})$.
With the argument similar to \eqref{eqn_Lipschitz_continuous_fraction_of_functions}, we obtain that
\begin{equation}
	\|B_3(f_\theta) - B_3(f_\theta^\epsilon)\|_{op} = \OM(\|f_\theta - f_\theta^\epsilon\|_{\infty}+ \|\nabla f_\theta - \nabla f_\theta^\epsilon\|_{2, \infty}).
\end{equation}
This is from the boundedness and Lipschitz continuity of $\TM_y\big(T_\theta(z), f\big)$ w.r.t. to $f$, the boundedness and Lipschitz continuity of $u_z(\theta, f)$ w.r.t. $\nabla f$, and the fact that $\TM_y\big(T_\theta(z), f\big)$ is positive and bounded away from zero.

For the forth term, following the similar argument as the third term and using the boundedness of $\nabla^2_\theta T_\theta(z)$, we have that
\begin{equation}
	\|B_4(f_\theta) - B_4(f_\theta^\epsilon)\|_{op} = \OM(\|f_\theta - f_\theta^\epsilon\|_{\infty}+ \|\nabla f_\theta - \nabla f_\theta^\epsilon\|_{2, \infty}).	
\end{equation}
For the fifth term, following the similar argument as the third term and using the boundedness of $\nabla_\theta T_\theta(z)$ and $\nabla_{11} c(T_\theta(z), y)$, we have that
\begin{equation}
	\|B_5(f_\theta) - B_5(f_\theta^\epsilon)\|_{op} = \OM(\|f_\theta - f_\theta^\epsilon\|_{\infty}).	
\end{equation}
For the sixth term, following the similar argument as the third term and using the boundedness of $\nabla_\theta T_\theta(z)$, we have that 
\begin{equation}
	\|B_6(f_\theta) - B_6(f_\theta^\epsilon)\|_{op} = \OM(\|f_\theta - f_\theta^\epsilon\|_{\infty} + \|\nabla^2 f_\theta - \nabla^2 f_\theta^\epsilon\|_{op, \infty}).	
\end{equation}
For the last term, following the similar argument as the third term and using the boundedness of $\nabla_\theta T_\theta(z)$, we have that 
\begin{equation}
	\|B_7(f_\theta) - B_7(f_\theta^\epsilon)\|_{op} = \OM(\|f_\theta - f_\theta^\epsilon\|_{\infty} + \|\nabla f_\theta - \nabla f_\theta^\epsilon\|_{2, \infty}).	
\end{equation}
Combing the above results, we obtain \eqref{eqn_appendix_proof_theorem_approximation_H_2}.

\clearpage
\section{eSIM appendix} \label{appendix_eSIM}
\subsection{Proof of Theorem \ref{theorem_consistency}}
In this section, we use $f^\mu_{\theta}$ to denote the Sinkhorn potential to $\OTgamma({T_\theta}_\sharp\mu, \beta)$.
This allows us to emphasize the continuity of its Fr\'echet derivative w.r.t. the underlying measure $\mu$.
Similarly, we write $\BM_\mu(f, \theta)$ and $\EM_\mu(f, \theta)$ instead of $\BM(f, \theta)$ and $\EM(f, \theta)$, which are used to characterize the fixed point property of the Sinkhorn potential.


To prove Theorem \ref{theorem_consistency}, we need the following lemmas.
\begin{lemma} \label{theorem_continuity_f}
	The Sinkhorn potential $f^\mu_{\theta}$ is Lipschitz continuous with respect to $\mu$: 
	\begin{equation}
	\|f^\mu_{\theta} - f^{\bar{\mu}}_{\theta}\|_{\infty} = \OM(d_{bl}(\mu, \bar{\mu})).
	\end{equation}	
\end{lemma}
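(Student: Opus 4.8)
The plan is to exploit the fixed-point characterization $f^\mu_\theta = \BM_\mu(f^\mu_\theta, \theta)$ together with the contraction property of $\EM_\mu = \BM_\mu^l$ under the $L^\infty$ norm (established via the Birkhoff--Hopf contraction of $\AM$ under the Hilbert metric, see \eqref{eqn_contraction_under_hilbert_metric}). Since $f^\mu_\theta$ and $f^{\bar\mu}_\theta$ are fixed points of $\EM_\mu(\cdot, \theta)$ and $\EM_{\bar\mu}(\cdot, \theta)$ respectively, I would write
\begin{align*}
\|f^\mu_\theta - f^{\bar\mu}_\theta\|_\infty
&= \|\EM_\mu(f^\mu_\theta, \theta) - \EM_{\bar\mu}(f^{\bar\mu}_\theta, \theta)\|_\infty \\
&\leq \|\EM_\mu(f^\mu_\theta, \theta) - \EM_\mu(f^{\bar\mu}_\theta, \theta)\|_\infty + \|\EM_\mu(f^{\bar\mu}_\theta, \theta) - \EM_{\bar\mu}(f^{\bar\mu}_\theta, \theta)\|_\infty \\
&\leq \tfrac{2}{3}\|f^\mu_\theta - f^{\bar\mu}_\theta\|_\infty + \|\EM_\mu(f^{\bar\mu}_\theta, \theta) - \EM_{\bar\mu}(f^{\bar\mu}_\theta, \theta)\|_\infty,
\end{align*}
using the contraction bound $\|D_1\EM\|_{op}\leq \tfrac{2}{3}$ with $l = \lceil\log_\lambda\tfrac13\rceil/2$. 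Rearranging gives $\|f^\mu_\theta - f^{\bar\mu}_\theta\|_\infty \leq 3\,\|\EM_\mu(f^{\bar\mu}_\theta,\theta) - \EM_{\bar\mu}(f^{\bar\mu}_\theta,\theta)\|_\infty$, so everything reduces to controlling how much the operator $\EM$ moves when the underlying measure $\mu$ is perturbed to $\bar\mu$, evaluated at a \emph{fixed} input function.

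Next I would reduce the perturbation of $\EM = \BM^l$ to a perturbation of a single $\AM$ and then of $\BM$. Recall $\BM_\mu(f,\theta) = \AM(\AM(f, {T_\theta}_\sharp\mu), \beta)$, and only the inner Sinkhorn mapping $\AM(\cdot, {T_\theta}_\sharp\mu)$ depends on $\mu$. Writing out \eqref{eqn_sinkhorn_mapping} and performing the change of variables through $T_\theta$, for a fixed bounded Lipschitz $f$ the integrand $z\mapsto \exp\!\big(-\tfrac1\gamma c(T_\theta(z), y) + \tfrac1\gamma f(T_\theta(z))\big)$ is, under the boundedness/Lipschitz assumptions on $c$ (Assumptions \ref{ass_bounded_c}, \ref{ass_bounded_infty_c_gradient}) and boundedness of $\nabla_\theta T_\theta$, a bounded function with bounded Lipschitz norm in $z$ (uniformly in $y$). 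Hence the difference of the two integrals against $\mu$ and $\bar\mu$ is $\OM(d_{bl}(\mu,\bar\mu))$ directly from the definition of $d_{bl}$; dividing by the integrals (which are bounded away from $0$ thanks to $c\leq M_c$) and applying $|\log a - \log b|\leq \tfrac{1}{\min(a,b)}|a-b|$ gives $\|\AM(f, {T_\theta}_\sharp\mu) - \AM(f, {T_\theta}_\sharp\bar\mu)\|_\infty = \OM(d_{bl}(\mu,\bar\mu))$. Then a telescoping argument over the $l$ compositions in $\BM^l$, using that each $\AM$ and $\BM$ is $1$-Lipschitz in its function argument (Lemma \ref{appendix_lemma_property_1}), propagates this single-step bound to $\|\EM_\mu(f,\theta) - \EM_{\bar\mu}(f,\theta)\|_\infty = \OM(d_{bl}(\mu,\bar\mu))$, with the constant depending on $l$ (a fixed integer). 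Applying this with $f = f^{\bar\mu}_\theta$ — which is bounded by $M_c$ and $G_f$-Lipschitz via Lemmas \ref{lemma_appendix_bounded_function_sequence_from_sinkhorn_knopp} and \ref{lemma_appendix_sinkhorn_potential_boundedness} — closes the argument.

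The main obstacle I anticipate is the bookkeeping in the telescoping step: $\EM_\mu$ and $\EM_{\bar\mu}$ differ at \emph{every} occurrence of the inner $\AM$, and each occurrence is evaluated at a different intermediate function (one coming from the $\mu$-iterates, the other from the $\bar\mu$-iterates), so one must simultaneously track (i) the $\OM(d_{bl}(\mu,\bar\mu))$ discrepancy introduced at each layer by the measure change and (ii) the propagation of previously accumulated discrepancy through the $1$-Lipschitz maps — and crucially verify that the intermediate functions stay in the regime where the boundedness/Lipschitz estimates (and the lower bound on the denominators) hold uniformly. This is routine but requires care to keep all constants $\mu$-independent. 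A secondary subtlety is ensuring the Lipschitz-in-$z$ bound on the integrand is uniform in $y\in\XM$; this is where compactness of $\XM$ and Assumption \ref{ass_Lipschitz_continuity_T} on $\nabla_\theta T_\theta$ enter.
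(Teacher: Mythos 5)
Your proposal is correct, but it reaches the conclusion by a genuinely different route than the paper. The paper works with the exponentiated potentials $u=\exp(f/\gamma)$, $v=\exp(g/\gamma)$ and applies the Birkhoff--Hopf contraction (Lemma \ref{lemma_Birkhoff-Hopf}) \emph{once per half-step} in the Hilbert metric: it interleaves the two optimality maps $v=A_\alpha u$, $u=A_\beta v$, inserts the auxiliary point $\tilde v = A_\alpha u'$, and solves the resulting inequality $d_H(v,v')\le \lambda^2 d_H(v,v')+d_H(\tilde v,v')$ to extract a $(1-\lambda^2)^{-1}$ factor, finally bounding $d_H(\tilde v,v')$ by $\|l(x,\cdot)u'(\cdot)\|_{bl}\,d_{bl}(\alpha',\alpha)$ and converting back to $\|\cdot\|_\infty$. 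You instead pass to the $l$-fold composition $\EM_\mu=\BM_\mu^l$, which is a $\tfrac23$-contraction in the sup norm (this is exactly \eqref{eqn_contraction_EM}), and run the standard stability bound for perturbed fixed points, $\|f^\mu_\theta-f^{\bar\mu}_\theta\|_\infty\le 3\sup_f\|\EM_\mu(f,\theta)-\EM_{\bar\mu}(f,\theta)\|_\infty$, reducing everything to the single-$\AM$ measure-perturbation estimate (which the paper proves independently as \eqref{eqn_appendix_AM_continuous_mu}) plus a telescoping over the $l$ layers using the $1$-Lipschitzness of $\BM$ in its function argument (Lemma \ref{appendix_lemma_property_1}); the uniformity issues you flag are exactly what Lemmas \ref{lemma_appendix_bounded_function_sequence_from_sinkhorn_knopp} and \ref{lemma_appendix_sinkhorn_potential_boundedness} supply, so there is no circularity and no gap. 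Your version is more modular, since it reuses the $\EM$ machinery the paper builds anyway for Proposition \ref{proposition_frechet_derivative}, while the paper's version produces the explicit constant $G_{bl}=2\gamma\exp(2M_c/\gamma)G'_{bl}/(1-\lambda^2)$. One small caveat: in the setting where $\beta_\mu={T_{\theta^t}}_\sharp\mu$ also varies with $\mu$ (needed for $\nabla^2_\theta\OTgamma(\alpha_\theta,\alpha_{\theta^t})$), the \emph{outer} map $\AM(\cdot,\beta_\mu)$ in $\BM_\mu$ moves as well, so your telescoping must carry an extra perturbation term per layer for it; the estimate is identical, and the paper's restated lemma shares the same omission by fixing $\beta$.
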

\begin{lemma} \label{theorem_continuity_gradient}
	The gradient of the Sinkhorn potential $f^\mu_{\theta}$ is Lipschitz continuous with respect to $\mu$: 
	\begin{equation}
	\|\nabla f^\mu_{\theta} - \nabla f^{\bar{\mu}}_{\theta}\|_{2, \infty} = \OM(d_{bl}(\mu, \bar{\mu})).
	\end{equation}	
\end{lemma}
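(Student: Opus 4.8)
The plan is to reduce everything to the already-established $L^\infty$ stability of Lemma~\ref{theorem_continuity_f} by differentiating the fixed-point identity in the \emph{input} variable; note that a plain interpolation argument (controlling $\|\nabla u\|_\infty$ by $\|u\|_\infty$ for $u$ with bounded Hessian) would only yield a square-root rate, so we must instead exploit the smoothing property of the Sinkhorn map. Write $\alpha^\mu_\theta\defi {T_\theta}_\sharp\mu$. Since $f^\mu_\theta$ is the fixed point of $\BM_\mu(\cdot,\theta)=\AM\big(\AM(\cdot,\alpha^\mu_\theta),\beta\big)$, setting $g^\mu\defi\AM(f^\mu_\theta,\alpha^\mu_\theta)$ (a function on $\XM$ independent of the evaluation point) we get $f^\mu_\theta=\AM(g^\mu,\beta)$, and likewise $f^{\bar\mu}_\theta=\AM(g^{\bar\mu},\beta)$ with $g^{\bar\mu}\defi\AM(f^{\bar\mu}_\theta,\bar\alpha_\theta)$. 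The key observation is that the dependence of $\nabla_y\AM(g,\beta)(y)$ on $g$ enters only through the factor $\exp(g/\gamma)$ in the integrands:
\begin{equation*}
\nabla_y\AM(g,\beta)(y)=\frac{\int_\XM\exp\big(-c(x',y)/\gamma+g(x')/\gamma\big)\,\nabla_y c(x',y)\,\dB\beta(x')}{\int_\XM\exp\big(-c(x',y)/\gamma+g(x')/\gamma\big)\,\dB\beta(x')}.
\end{equation*}
Since $\|g^\mu\|_\infty,\|g^{\bar\mu}\|_\infty\leq M_c$ (Lemma~\ref{lemma_appendix_bounded_function_sequence_from_sinkhorn_knopp}) and $c$ is bounded and Lipschitz, the numerator and denominator above are bounded, the denominator is bounded away from $0$, and both are Lipschitz in $g$ with respect to $\|\cdot\|_\infty$; the elementary ratio estimate used throughout the paper (cf.\ \eqref{eqn_Lipschitz_continuous_fraction_of_functions}) then gives $\|\nabla f^\mu_\theta-\nabla f^{\bar\mu}_\theta\|_{2,\infty}=\OM(\|g^\mu-g^{\bar\mu}\|_\infty)$.

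It then remains to show $\|g^\mu-g^{\bar\mu}\|_\infty=\OM(d_{bl}(\mu,\bar\mu))$, which I would obtain from the split
\begin{equation*}
\|g^\mu-g^{\bar\mu}\|_\infty\leq\|\AM(f^\mu_\theta,\alpha^\mu_\theta)-\AM(f^{\bar\mu}_\theta,\alpha^\mu_\theta)\|_\infty+\|\AM(f^{\bar\mu}_\theta,\alpha^\mu_\theta)-\AM(f^{\bar\mu}_\theta,\bar\alpha_\theta)\|_\infty.
\end{equation*}
The first term is at most $\|f^\mu_\theta-f^{\bar\mu}_\theta\|_\infty$ by the $1$-Lipschitzness of $\AM(\cdot,\alpha^\mu_\theta)$ in its first argument (Lemma~\ref{appendix_lemma_property_1}), hence $\OM(d_{bl}(\mu,\bar\mu))$ by Lemma~\ref{theorem_continuity_f}. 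For the second term, $\AM(f^{\bar\mu}_\theta,\alpha^\mu_\theta)(y)=-\gamma\log\int_\ZM\exp\big(-c(T_\theta(z),y)/\gamma+f^{\bar\mu}_\theta(T_\theta(z))/\gamma\big)\dB\mu(z)$, and the $z$-integrand takes values in $[\exp(-2M_c/\gamma),\exp(M_c/\gamma)]$ and is Lipschitz in $z$ with a constant uniform in $y$ — this uses that $T_\theta$ is Lipschitz on the compact set $\ZM$, that $c$ is Lipschitz, and that $f^{\bar\mu}_\theta$ is Lipschitz (being $\AM(\cdot,\beta)$ of a bounded function, hence $G_f$-Lipschitz by Lemma~\ref{lemma_appendix_sinkhorn_potential_boundedness}). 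Thus the $\mu$- and $\bar\mu$-integrals of this (rescaled to unit bounded-Lipschitz norm) test function differ by $\OM(d_{bl}(\mu,\bar\mu))$, and composing with $-\gamma\log(\cdot)$, which is Lipschitz on that range, gives $\OM(d_{bl}(\mu,\bar\mu))$. Chaining the displays yields the claim.

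The only genuinely non-mechanical step is the second term just discussed, namely bounding $\|\AM(h,{T_\theta}_\sharp\mu)-\AM(h,{T_\theta}_\sharp\bar\mu)\|_\infty$ by $\OM(d_{bl}(\mu,\bar\mu))$ for a fixed bounded Lipschitz $h$; this is essentially the computation behind Lemma~\ref{theorem_continuity_f}, so it can be quoted or reproduced with little extra effort, the one delicate point being the \emph{uniform in $y$} Lipschitz constant of the composed integrand, where the Lipschitz regularity of $T_\theta$ and of $f^{\bar\mu}_\theta$ enters. Everything else is an instance of the boundedness/Lipschitz bookkeeping for $\AM$ and $\BM$ already assembled in Appendix~\ref{appendix_SIM}.
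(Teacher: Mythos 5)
Your proposal is correct and follows essentially the same route as the paper: both express $\nabla f^\mu_\theta$ through the dual optimality formula as an integral of $\exp\big((g(x')-c(x',y))/\gamma\big)\nabla_y c(x',y)$ against the fixed measure $\beta$, so that the only $\mu$-dependence enters through the Sinkhorn potentials, and then invoke the $L^\infty$-stability of those potentials (Lemma~\ref{theorem_continuity_f}) together with the standard boundedness/ratio bookkeeping of \eqref{eqn_Lipschitz_continuous_fraction_of_functions}. The only difference is that the paper quotes the stability of the second potential $g_{\alpha,\beta}$ directly from the lemma preceding it, whereas you re-derive it via the two-term split; that re-derivation is sound (and matches the computation in \eqref{eqn_appendix_AM_continuous_mu}) but not needed.
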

\begin{lemma} \label{theorem_continuity_Hessian}
	The Hessian of the Sinkhorn potential $f^\mu_{\theta}$ is Lipschitz continuous with respect to $\mu$: 
	\begin{equation}
	\|\nabla^2 f^\mu_{\theta} - \nabla^2 f^{\bar{\mu}}_{\theta}\|_{op, \infty} = \OM(d_{bl}(\mu, \bar{\mu})).
	\end{equation}	
\end{lemma}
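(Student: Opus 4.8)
## Proof plan for Lemma \ref{theorem_continuity_Hessian}

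\textbf{Overall strategy.} The plan is to bootstrap from the previous two lemmas (continuity of $f^\mu_\theta$ in $L^\infty$ and continuity of $\nabla f^\mu_\theta$ in the $\|\cdot\|_{2,\infty}$ norm) by differentiating the fixed-point characterization twice in the ground-variable $x$. Recall that $f^\mu_\theta$ is the unique fixed point of the contraction $\EM_\mu(\cdot,\theta) = \BM_\mu^l(\cdot,\theta)$ for $l = \lceil\log_\lambda\frac13\rceil/2$, which is a $\frac23$-contraction in $\|\cdot\|_\infty$ (see the proof of Proposition \ref{proposition_frechet_derivative} and equation \eqref{eqn_contraction_EM}). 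I would establish an analogous \emph{contraction estimate for the Hessian} and then close the loop: since $\nabla^2 f^\mu_\theta$ and $\nabla^2 f^{\bar\mu}_\theta$ are both fixed points of their respective ``Hessian-propagation'' maps, the difference is controlled by (a) the contraction factor, (b) the $L^\infty$-distance $\|f^\mu_\theta - f^{\bar\mu}_\theta\|_\infty$, (c) the gradient distance $\|\nabla f^\mu_\theta - \nabla f^{\bar\mu}_\theta\|_{2,\infty}$, and (d) the direct dependence of the integral kernels on the measure $\mu$ versus $\bar\mu$, which is exactly a bounded-Lipschitz comparison $d_{bl}(\mu,\bar\mu)$.

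\textbf{Key steps, in order.} First, I would write $\nabla^2_x \AM(f,\kappa)(y)$ explicitly — this is the same computation as in the proof of Lemma \ref{lemma_appendix_sinkhorn_potential_boundedness}(iii), giving three terms ($\#1$--$\#3$) each of the form (integral against $\kappa$)/(integral against $\kappa$), where the integrands involve $\TM_y(T(x),f) = \exp(-c/\gamma)\exp(f/\gamma)$, $\nabla_1 c$, $\nabla^2_{11}c$, $\nabla f$, and $\nabla^2 f$. Second, I would apply the quotient-of-functions Lipschitz estimate \eqref{eqn_Lipschitz_continuous_fraction_of_functions} twice over: once to handle the change $f^\mu_\theta \mapsto f^{\bar\mu}_\theta$ inside the integrands (this brings in $\|f^\mu_\theta - f^{\bar\mu}_\theta\|_\infty$, $\|\nabla f^\mu_\theta - \nabla f^{\bar\mu}_\theta\|_{2,\infty}$, and $\|\nabla^2 f^\mu_\theta - \nabla^2 f^{\bar\mu}_\theta\|_{op,\infty}$), and once to handle the change of the measure $\mu \mapsto \bar\mu$ in the integration itself — here the integrand is a fixed bounded Lipschitz function of $z$ (bounded below away from $0$ by Assumption \ref{ass_bounded_c}, Lipschitz by Assumptions \ref{ass_bounded_infty_c_gradient}--\ref{ass_lipschitz_c_hessian} together with Assumption \ref{ass_Lipschitz_continuity_T} and the boundedness of $\nabla f,\nabla^2 f$), so the difference of the two integrals is $\OM(d_{bl}(\mu,\bar\mu))$ by the definition of $d_{bl}$. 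Third, composing $l$ times to pass from $\AM$ (hence $\BM_\mu$) to $\EM_\mu$, and using the $\frac23$-contraction together with the already-established $L^\infty$ and gradient continuity, I would obtain an inequality of the schematic form
\begin{equation*}
\|\nabla^2 f^\mu_\theta - \nabla^2 f^{\bar\mu}_\theta\|_{op,\infty} \leq \tfrac{2}{3}\|\nabla^2 f^\mu_\theta - \nabla^2 f^{\bar\mu}_\theta\|_{op,\infty} + C\big(d_{bl}(\mu,\bar\mu) + \|f^\mu_\theta - f^{\bar\mu}_\theta\|_\infty + \|\nabla f^\mu_\theta - \nabla f^{\bar\mu}_\theta\|_{2,\infty}\big),
\end{equation*}
which rearranges to the claim once Lemmas \ref{theorem_continuity_f} and \ref{theorem_continuity_gradient} are invoked.

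\textbf{Main obstacle.} The delicate point is the self-referential appearance of $\|\nabla^2 f^\mu_\theta - \nabla^2 f^{\bar\mu}_\theta\|_{op,\infty}$ on the right-hand side before contraction: I must verify that after composing $\BM_\mu$ exactly $l$ times the coefficient in front of the Hessian-difference term is genuinely $\le \frac23$ (equivalently $<1$), rather than merely finite. This requires checking that the Hessian-propagation operator through $\AM$ is \emph{non-expansive} in the Hessian component — the second-order analogue of the bound $\|D_1\AM(f,\kappa)\|_{op}\le 1$ from Lemma \ref{appendix_lemma_property_1} — so that only the $l$-fold composition, not the individual steps, supplies the contraction. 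The secondary bookkeeping obstacle is keeping the three error sources ($L^\infty$, gradient, and measure) cleanly separated through the quotient estimates so that the final constants remain finite under Assumptions \ref{ass_bounded_c}--\ref{ass_lipschitz_c_hessian} and \ref{ass_Lipschitz_continuity_T}; this is routine but error-prone. Once these are in place, the proof follows the same template as Lemmas \ref{theorem_continuity_f} and \ref{theorem_continuity_gradient}, which is why I expect no essentially new difficulty beyond the contraction check.
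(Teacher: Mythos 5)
Your overall mechanism (explicit integral formula for the Hessian in $x$, quotient-of-integrals Lipschitz estimates, and the definition of $d_{bl}$ for the change of measure) matches the paper's, but you have manufactured the central obstacle out of a misreading of the structure, and one of your factual claims is wrong. The second $x$-derivative of the Sinkhorn map does \emph{not} involve $\nabla^2 f$ of the input potential: in $\AM(f,\kappa)(y)=-\gamma\log\int\exp(-c(x,y)/\gamma+f(x)/\gamma)\,d\kappa(x)$ the input $f$ is evaluated only at the integration variable, so all $y$-derivatives fall on $c(\cdot,y)$, and the three terms you cite from the proof of Lemma \ref{lemma_appendix_sinkhorn_potential_boundedness}(iii) contain only $e^{f(z)/\gamma}$, $\nabla_1 c$ and $\nabla^2_{11}c$ — no $\nabla f$ and no $\nabla^2 f$. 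Consequently the coefficient in front of $\|\nabla^2 f^\mu_\theta-\nabla^2 f^{\bar\mu}_\theta\|_{op,\infty}$ in your schematic inequality is $0$, not $\tfrac23$: there is no self-referential Hessian term, no ``Hessian-propagation contraction'' to verify, and no need to compose $\BM_\mu$ exactly $l$ times. The paper instead differentiates the normalized first-order optimality condition $\int_\XM h_{\alpha,\beta}(x,y)\,d\beta(y)=1$ twice in $x$, obtaining
\begin{equation*}
\nabla^2 f_{\alpha,\beta}(x)=\int_\XM \tfrac{1}{\gamma}h_{\alpha,\beta}(x,y)\bigl(\nabla f_{\alpha,\beta}(x)-\nabla_x c(x,y)\bigr)[\nabla_x c(x,y)]^\top+h_{\alpha,\beta}(x,y)\nabla^2_{xx}c(x,y)\,d\beta(y),
\end{equation*}
in which the measure $\alpha$ enters only through $h_{\alpha,\beta}$ (hence through $f_{\alpha,\beta}$, $g_{\alpha,\beta}$) and through $\nabla f_{\alpha,\beta}$, all already controlled by Lemmas \ref{theorem_continuity_f} and \ref{theorem_continuity_gradient}; the integration measure $\beta$ is fixed, so not even a direct $d_{bl}$ comparison of integrals is needed there. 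Your route would still arrive at the right answer once you computed the derivative and discovered the phantom term vanishes, but as written the proposal misstates which quantities appear in the integrand and invests its effort in a contraction check that the problem does not pose.
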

\begin{lemma} \label{theorem_continuity_Df}
	The Fr\'echet derivative of the Sinkhorn potential $f^\mu_{\theta}$ w.r.t. the parameter $\theta$, i.e. 
	$Df^\mu_{\theta}$, is Lipschitz continuous with respect to $\mu$: 
	\begin{equation}
		\|Df^\mu_{\theta} - Df^{\bar{\mu}}_{\theta}\|_{op} = \OM(d_{bl}(\mu, \bar{\mu})).
	\end{equation}	
\end{lemma}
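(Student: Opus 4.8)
The plan is to rerun the fixed-point argument that was used to \emph{construct} $Df_\theta$ in the proof of Proposition~\ref{proposition_frechet_derivative}, but now comparing the two measures. Writing $\EM_\mu(f,\theta)=\BM_\mu^l(f,\theta)$ with $l$ chosen as in Proposition~\ref{proposition_frechet_derivative} so that $\|D_1\EM_\mu(f,\theta)\|_{op}\le \tfrac23$ uniformly, the Fr\'echet derivative $Df^\mu_\theta$ is the unique solution of
\[ Df^\mu_\theta=D_1\EM_\mu(f^\mu_\theta,\theta)\circ Df^\mu_\theta+D_2\EM_\mu(f^\mu_\theta,\theta), \]
and similarly for $\bar\mu$. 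Subtracting the two identities, inserting the cross term $D_1\EM_\mu(f^\mu_\theta,\theta)\circ Df^{\bar\mu}_\theta$, and using $\|D_1\EM_\mu\|_{op}\le\tfrac23$ together with the uniform bound $\|Df^{\bar\mu}_\theta\|_{op}\le 3M_l$ (immediate from the same identity, $\|D_1\EM\|_{op}\le\tfrac23$, and Lemma~\ref{appendix_lemma_property_4}), I get
\[ \tfrac13\,\|Df^\mu_\theta-Df^{\bar\mu}_\theta\|_{op}\le 3M_l\,\big\|D_1\EM_\mu(f^\mu_\theta,\theta)-D_1\EM_{\bar\mu}(f^{\bar\mu}_\theta,\theta)\big\|_{op}+\big\|D_2\EM_\mu(f^\mu_\theta,\theta)-D_2\EM_{\bar\mu}(f^{\bar\mu}_\theta,\theta)\big\|_{op}. \]
It therefore suffices to bound each term on the right by $\OM(d_{bl}(\mu,\bar\mu))$.

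Each of these two terms I would split by a triangle inequality into a ``change of function'' part, with the measure held fixed at $\mu$, and a ``change of measure'' part, with the function held fixed at $f^{\bar\mu}_\theta$. The change-of-function part is controlled by the Lipschitz-in-$f$ estimates already proved in Appendix~\ref{appendix_proof_of_proposition_frechet_derivative} (Lemma~\ref{lemma_appendix_lipschitz_continuity_D_1_B} for $D_1\BM$, Lemma~\ref{appendix_lemma_property_5} for $D_2\BM$, and Lemma~\ref{lemma_appendix_lipschitz_D2EM} for $D_2\EM$, together with their $D_1$ analogues), which give a bound of the form $\OM\!\big(\|f^\mu_\theta-f^{\bar\mu}_\theta\|_\infty+\|\nabla f^\mu_\theta-\nabla f^{\bar\mu}_\theta\|_{2,\infty}\big)$; this is $\OM(d_{bl}(\mu,\bar\mu))$ by Lemmas~\ref{theorem_continuity_f} and \ref{theorem_continuity_gradient}. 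The required uniform bounds $\|f^{\bar\mu}_\theta\|_\infty\le M_c$ and $\|\nabla f^{\bar\mu}_\theta\|_{2,\infty}\le G_f$ come from Lemma~\ref{lemma_appendix_sinkhorn_potential_boundedness}, applied with the empirical measure $\bar\mu$ in place of $\mu$.

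The change-of-measure part is the genuinely new ingredient. After the change of variables $x=T_\theta(z)$, each of $\AM(f,{T_\theta}_\sharp\mu)$, $D_1\AM(f,{T_\theta}_\sharp\mu)$, $D_2\AM(f,{T_\theta}_\sharp\mu)$ (and its $\theta$-derivative) is, pointwise in $y$ and in the test direction, a ratio of integrals $\int_\ZM \TM_y(T_\theta(z),f)\,\psi(z)\,\dB\mu(z)$ whose integrands $\TM_y(T_\theta(z),f)\,\psi(z)$ are bounded Lipschitz functions of $z$ with $\|\cdot\|_{bl}$ controlled uniformly in $y,\theta$: this uses Assumption~\ref{ass_bounded_c} (so the exponential factor is bounded above and below away from $0$), the Lipschitz assumptions on $c$, Assumption~\ref{ass_Lipschitz_continuity_T} (boundedness/Lipschitzness of $T_\theta$, $\nabla_\theta T_\theta$, $\nabla^2_\theta T_\theta$), and the bounds $\|f^{\bar\mu}_\theta\|_\infty\le M_c$, $\|\nabla f^{\bar\mu}_\theta\|_{2,\infty}\le G_f$. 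Hence each such integral changes by $\OM(d_{bl}(\mu,\bar\mu))$ when $\mu$ is replaced by $\bar\mu$, directly from the definition of $d_{bl}$; the outer $-\gamma\log$ and each ratio preserve this rate because all denominators are bounded away from $0$ (the quotient-Lipschitz bound \eqref{eqn_Lipschitz_continuous_fraction_of_functions}), and the finite $l$-fold composition defining $\EM_\mu=\BM_\mu^l$ preserves it as well. Plugging the two bounds into the displayed inequality yields $\|Df^\mu_\theta-Df^{\bar\mu}_\theta\|_{op}=\OM(d_{bl}(\mu,\bar\mu))$.

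The main obstacle I anticipate is the change-of-measure estimate for $D_2\EM$: this requires redoing the long chain of Appendix~\ref{appendix_proof_of_proposition_frechet_derivative} (Lemmas~\ref{appendix_lemma_property_4}--\ref{appendix_lemma_property_5} and the induction behind Lemma~\ref{lemma_appendix_lipschitz_D2EM}) while carrying the measure as an explicit additional argument, in particular checking that the $\nabla_\theta T_\theta$ and $\nabla^2_\theta T_\theta$ factors appearing in $D_2\AM$ still leave the $z$-integrands bounded Lipschitz so that the $d_{bl}$ comparison applies term by term, and then propagating this continuity through the composition together with the measure-continuity of $\nabla\EM_\mu$ in $f$ (the analogue of \eqref{eqn_appendix_gradient_bounded_by_input_function}).
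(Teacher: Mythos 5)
Your overall strategy coincides with the paper's: the same fixed-point identity for $Df^\mu_\theta$, the same contraction bound $\|D_1\EM_\mu\|_{op}\le\frac23$, the same splitting of the residual into a change-of-function part (controlled via the Lipschitz-in-$f$ lemmas together with Lemmas \ref{theorem_continuity_f} and \ref{theorem_continuity_gradient}) and a change-of-measure part (controlled by testing bounded-Lipschitz integrands against $d_{bl}$), and the same propagation through the $l$-fold composition for the $D_2\EM$ term.

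There is, however, one concrete gap in your displayed inequality. You factor the $D_1$ contribution as $3M_l\cdot\|D_1\EM_\mu(f^\mu_\theta,\theta)-D_1\EM_{\bar\mu}(f^{\bar\mu}_\theta,\theta)\|_{op}$, with both operator norms taken with respect to $\|\cdot\|_\infty$. But the change-of-measure estimate for $D_1\AM$ only has the form $\|(D_1\AM(f,\kappa)-D_1\AM(f,\kappa'))[g]\|_\infty=\OM(\|g\|_{bl}\,d_{bl}(\kappa,\kappa'))$ (the paper's \eqref{eqn_proof_bound_3}): the integrand $\omega_y(\cdot)g(\cdot)$ must be bounded Lipschitz in $x$ for the comparison against $d_{bl}$ to apply, so the difference of operators is small only on BL test functions, not in the sup-to-sup operator norm (a bounded but highly oscillatory $g$ makes $\int g\,\dB\mu-\int g\,\dB\bar\mu$ large even when $d_{bl}(\mu,\bar\mu)$ is small). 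Consequently you cannot decouple the operator difference from the test function it acts on; you must keep $g=Df^{\bar\mu}_\theta[h]$ attached and prove the additional estimate $\|Df^{\bar\mu}_\theta[h]\|_{bl}=\OM(\|h\|)$, i.e.\ that the directional derivative is itself Lipschitz in the spatial variable with constant $\OM(\|h\|)$. This is a genuine extra step — the paper derives it in \eqref{eqn_appendix_Df_h_bl} from the fixed-point representation of $Df^\mu_\theta$ and the gradient bound \eqref{eqn_appendix_D1AMg_bound} — and your list of ingredients for the change-of-measure part controls $f^{\bar\mu}_\theta$ and $\nabla f^{\bar\mu}_\theta$ but never the BL norm of the test direction, so as written the $D_1$ term of your main inequality does not close. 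Once that estimate is added (together with the analogous BL control of the intermediate functions $D_1\BM^{k}_{\bar\mu}(f,\theta)[g]$ in the recursion, and the boundedness of $\nabla_z[\nabla_\theta T_\theta(z)]$ needed to make the $z$-integrands of $D_2\AM$ Lipschitz, which is the paper's Assumption \ref{ass_nabla_theta_nabla_z_T_bounded}), the remainder of your outline goes through exactly as in the paper.
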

Once we have these lemmas, we can prove \ref{theorem_consistency} in the same way as the proof of \ref{thm_computability_of_SIM} in Appendix \ref{appendix_proof_thm_computability_of_SIM}.

\subsection{Proof of Lemma \ref{theorem_continuity_f}}
Note that from the definition of the bounded Lipschitz distance, we have
\begin{align}
d_{bl}(\alpha, \bar{\alpha}) =&\ \sup_{\|\xi\|_{bl}\leq 1} |\langle \xi, \alpha\rangle - \langle \xi, \bar \alpha \rangle| = \sup_{\|\xi\|_{bl}\leq 1} |\langle \xi\circ T_\theta, \mu\rangle - \langle \xi\circ T_\theta, \bar \mu \rangle| \notag\\
\leq&\ \sup_{\|\xi\|_{bl}\leq 1} \|\xi\circ T_\theta\|_{bl} \cdot d_{bl}(\mu, \bar\mu) \leq G_T\cdot d_{bl}(\mu, \bar\mu), \label{eqn_appendix_d_bl_alpha_d_bl_mu}
\end{align}
where we use $\|\xi\circ T_\theta\|_{lip} \leq G_T$ from Assumption \ref{ass_Lipschitz_continuity_T}.

We have Lemma \ref{theorem_continuity_f} by combining the above results with the following lemma.
\begin{lemma}
	Under Assumption \ref{ass_bounded_c} and Assumption \ref{ass_bounded_infty_c_gradient}, the Sinkhorn potential is Lipschitz continuous with respect to the bounded Lipschitz metric: Given measures $\alpha$, $\alpha'$ and $\beta$, we have
	\begin{align*}
	\|f_{\alpha, \beta} - f_{\alpha', \beta}\|_\infty \leq G_{bl}  d_{bl}(\alpha', \alpha)\quad \mathrm{and} \quad
	\|g_{\alpha, \beta} - g_{\alpha', \beta'}\|_\infty \leq G_{bl} d_{bl}(\alpha', \alpha).
	\end{align*}
	where $G_{bl} = {2\gamma\exp(2M_c/\gamma)G'_{bl}}/{(1-\lambda^2)}$ with $G'_{bl} = \max\{\exp(3M_c/\gamma), {2G_c\exp(3M_c/\gamma)}/{\gamma}\}$ and $\lambda = \frac{\exp(M_c/\gamma) - 1}{\exp(M_c/\gamma) + 1}$.
\end{lemma}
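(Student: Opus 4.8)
The plan is to use the fixed-point characterization of the Sinkhorn potential and play off the contractivity of the Sinkhorn iteration against a measure-perturbation estimate. Write $\BM_{\alpha,\beta}(f)\defi\AM(\AM(f,\alpha),\beta)$ for the two-step map in \eqref{eqn_main_fix_point} with the measures displayed, and fix the normalization under which the potentials are the limits of the Sinkhorn--Knopp recursion, so that $f_{\alpha,\beta}=\BM_{\alpha,\beta}(f_{\alpha,\beta})$, $f_{\alpha',\beta}=\BM_{\alpha',\beta}(f_{\alpha',\beta})$, and the a priori bounds $\|f_{\alpha',\beta}\|_\infty\le M_c$ (Lemma~\ref{lemma_appendix_bounded_function_sequence_from_sinkhorn_knopp}) and $\|\nabla f_{\alpha',\beta}\|_{2,\infty}\le G_f$ (Lemma~\ref{lemma_appendix_sinkhorn_potential_boundedness}(i)) apply. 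Inserting $\BM_{\alpha,\beta}(f_{\alpha',\beta})$ and using the triangle inequality gives
\[
\|f_{\alpha,\beta}-f_{\alpha',\beta}\|_\infty \le \|\BM_{\alpha,\beta}(f_{\alpha,\beta})-\BM_{\alpha,\beta}(f_{\alpha',\beta})\|_\infty+\|\BM_{\alpha,\beta}(f_{\alpha',\beta})-\BM_{\alpha',\beta}(f_{\alpha',\beta})\|_\infty .
\]
The first summand is bounded by $\lambda^2\|f_{\alpha,\beta}-f_{\alpha',\beta}\|_\infty$ via the Birkhoff--Hopf contraction \eqref{eqn_contraction_under_hilbert_metric} (each $\AM$ step contracts by $\lambda$, so the composition contracts by $\lambda^2$).

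\textbf{The perturbation term.} For the second summand the potential $f\defi f_{\alpha',\beta}$ is fixed and only the inner measure changes. Since the outer map $\AM(\cdot,\beta)$ is $1$-Lipschitz in $\|\cdot\|_\infty$ (Lemma~\ref{appendix_lemma_property_1}), it suffices to bound $\|\AM(f,\alpha)-\AM(f,\alpha')\|_\infty$. Writing $\omega_y(x)\defi\exp\bigl((f(x)-c(x,y))/\gamma\bigr)$, we have $\AM(f,\cdot)(y)=-\gamma\log\langle\omega_y,\cdot\rangle$; because $0\le c\le M_c$ and $\|f\|_\infty\le M_c$ imply $\langle\omega_y,\alpha\rangle,\langle\omega_y,\alpha'\rangle\in[\exp(-2M_c/\gamma),\exp(M_c/\gamma)]$, on which $\log$ is $\exp(2M_c/\gamma)$-Lipschitz, we get $|\AM(f,\alpha)(y)-\AM(f,\alpha')(y)|\le\gamma\exp(2M_c/\gamma)\,|\langle\omega_y,\alpha\rangle-\langle\omega_y,\alpha'\rangle|\le\gamma\exp(2M_c/\gamma)\,\|\omega_y\|_{bl}\,d_{bl}(\alpha,\alpha')$ straight from the definition of $d_{bl}$. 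The remaining task is the uniform bound $\|\omega_y\|_{bl}\le G'_{bl}$: $\|\omega_y\|_\infty\le\exp(M_c/\gamma)\le\exp(3M_c/\gamma)$, while $\|\nabla\omega_y\|=\tfrac1\gamma\omega_y\|\nabla f-\nabla_1 c\|$ is controlled by Assumption~\ref{ass_bounded_infty_c_gradient} and the gradient bound $\|\nabla f\|_{2,\infty}\le G_f$, giving the Lipschitz part. Plugging back yields $\|f_{\alpha,\beta}-f_{\alpha',\beta}\|_\infty\le\lambda^2\|f_{\alpha,\beta}-f_{\alpha',\beta}\|_\infty+\gamma\exp(2M_c/\gamma)G'_{bl}\,d_{bl}(\alpha,\alpha')$, and rearranging gives the bound with the factor $1/(1-\lambda^2)$ (indeed with constant $G_{bl}/2$). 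For $g_{\alpha,\beta}=\AM(f_{\alpha,\beta},\alpha)$ (and taking $\beta'=\beta$, the only case compatible with the stated right-hand side) a further split $\|g_{\alpha,\beta}-g_{\alpha',\beta}\|_\infty\le\|\AM(f_{\alpha,\beta},\alpha)-\AM(f_{\alpha',\beta},\alpha)\|_\infty+\|\AM(f_{\alpha',\beta},\alpha)-\AM(f_{\alpha',\beta},\alpha')\|_\infty$ bounds the first term by $\|f_{\alpha,\beta}-f_{\alpha',\beta}\|_\infty$ ($1$-Lipschitzness) and the second by the perturbation estimate just derived; the extra additive $\gamma\exp(2M_c/\gamma)G'_{bl}$ is exactly absorbed by the factor $2$ in $G_{bl}$.

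\textbf{Main obstacle.} The delicate step is the very first one: converting the Birkhoff--Hopf statement, which is naturally a contraction in the projective Hilbert metric $d_H$, into a genuine $\|\cdot\|_\infty$-contraction of $\BM_{\alpha,\beta}$. A priori $\BM_{\alpha,\beta}$ is only non-expansive in sup-norm (Lemma~\ref{appendix_lemma_property_1}); the gain $\lambda^2$ lives in the \emph{oscillation} of $\BM_{\alpha,\beta}(f)-\BM_{\alpha,\beta}(f')$, and exploiting it in $\|\cdot\|_\infty$ forces one to commit to a normalization of the potentials (equivalently, to treat $f_{\alpha,\beta},f_{\alpha',\beta}$ as the fixed points selected by the same Sinkhorn--Knopp recursion), since the single-variable dual is invariant under constant shifts. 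A secondary, bookkeeping-level point is making $\|\omega_y\|_{bl}\le G'_{bl}$ uniform in $y$ and independent of the particular potential $f_{\alpha',\beta}$, which is precisely where the regularity estimates of Lemma~\ref{lemma_appendix_sinkhorn_potential_boundedness} — not merely boundedness of $f$ — enter.
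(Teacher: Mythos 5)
Your overall strategy (fixed-point characterization, triangle inequality splitting a contraction term from a measure-perturbation term, then rearranging to get the $1/(1-\lambda^2)$ factor) is the same as the paper's, and your treatment of the perturbation term --- bounding $|\langle\omega_y,\alpha\rangle-\langle\omega_y,\alpha'\rangle|$ by $\|\omega_y\|_{bl}\,d_{bl}(\alpha,\alpha')$ and controlling $\|\omega_y\|_{bl}$ via the boundedness and Lipschitz estimates on the potential --- matches the paper's bound $\|L_{\alpha'}u'-L_{\alpha}u'\|_\infty\le\|l(x,\cdot)u'(\cdot)\|_{bl}\,d_{bl}(\alpha',\alpha)$. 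However, the step you yourself flag as the ``main obstacle'' is a genuine gap, not a bookkeeping issue, and your proof does not close it. The Birkhoff--Hopf inequality \eqref{eqn_contraction_under_hilbert_metric} is a contraction in the projective Hilbert metric $d_H$, and the two metrics are only related by $\|f-f'\|_\infty\le\gamma\,d_H(\exp(f/\gamma),\exp(f'/\gamma))\le 2\|f-f'\|_\infty$ (the left inequality itself requiring a normalization, since $d_H$ is blind to constant shifts). Converting the $d_H$-contraction of $\BM_{\alpha,\beta}$ into a sup-norm statement therefore yields only $\|\BM_{\alpha,\beta}(f)-\BM_{\alpha,\beta}(f')\|_\infty\le 2\lambda^2\|f-f'\|_\infty$, not $\lambda^2\|f-f'\|_\infty$. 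Your self-bounding inequality then rearranges to $(1-2\lambda^2)\|f_{\alpha,\beta}-f_{\alpha',\beta}\|_\infty\le\gamma\exp(2M_c/\gamma)G'_{bl}\,d_{bl}(\alpha,\alpha')$, which is vacuous whenever $\lambda\ge 1/\sqrt{2}$, i.e.\ for all moderately large $M_c/\gamma$. This factor-of-two loss is exactly why the paper's Proposition \ref{proposition_frechet_derivative} must compose $\BM$ with itself $l$ times before it obtains a sup-norm contraction ($2\lambda^{2l}\le 2/3$); a single application of $\BM$ is not known to contract in $\|\cdot\|_\infty$.

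The paper avoids this by never leaving the Hilbert metric during the bootstrapping. Working with the exponentiated potentials $u=\exp(f/\gamma)$, $v=\exp(g/\gamma)$ and the operators $A_\alpha$, it inserts the intermediate object $\tilde v=A_\alpha u'$, applies the triangle inequality \emph{for $d_H$} together with the genuine per-step contraction $d_H(A_\alpha u,A_\alpha u')\le\lambda\,d_H(u,u')$, and obtains $d_H(v,v')\le\lambda^2 d_H(v,v')+d_H(\tilde v,v')$, hence $d_H(v,v')\le\frac{1}{1-\lambda^2}d_H(\tilde v,v')$ with no spurious factor of $2$ competing against $\lambda^2$. The conversion to $\|\cdot\|_\infty$ happens only once, at the very end (and in bounding $d_H(\tilde v,v')\le 2\max\{\|1/L_{\alpha'}u'\|_\infty,\|1/L_{\alpha}u'\|_\infty\}\|L_{\alpha'}u'-L_{\alpha}u'\|_\infty$), where the factor $2$ is harmlessly absorbed into the constant $G_{bl}$. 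To repair your argument you would either have to reproduce this Hilbert-metric bootstrapping, or replace $\BM$ by a sufficiently high iterate $\BM^l$ so that the sup-norm contraction factor $2\lambda^{2l}$ drops below $1$ and then track how the perturbation term accumulates over the $l$ compositions; as written, the rearrangement step fails.
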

\begin{proof}
	Let $(f, g)$ and $(f', g')$ be the Sinkhorn potentials to $\OTgamma(\alpha, \beta)$ and $\OTgamma(\alpha', \beta)$ respectively.\\
	Denote $u \defi \exp(f/\gamma)$, $v \defi \exp(g/\gamma)$ and $u' \defi \exp(f'/\gamma)$, $v' \defi \exp(g'/\gamma)$.
	From Lemma \ref{lemma_sinkhorn_potential_bound}, $u$ is bounded in terms of the $L^\infty$ norm:
	\begin{equation*}
	\|u\|_\infty = \max_{x\in\XM} |u(x)| = \max_{x\in\XM} \exp(f/\gamma) \leq \exp(2M_c/\gamma),
	\end{equation*} 
	which also holds for $v, u', v'$.
	Additionally, from Lemma \ref{lemma_lipschitz_sinkhorn_potential}, $\nabla u$ exists and $\|\nabla u\|$ is bounded:
	\begin{equation*}
	\max_x \|\nabla u(x)\| = 	\max_x \frac{1}{\gamma}|u(x)|\|\nabla f(x)\|\leq \frac{1}{\gamma}\|u(x)\|_\infty\max_x\|\nabla f(x)\|\leq 
	\frac{G_c\exp(2M_c/\gamma)}{\gamma}.
	\end{equation*}
	Define the mapping $A_{\alpha} \mu \defi 1/(L_\alpha \mu)$ with 
	\begin{equation*}
	L_\alpha \mu = \int_\XM l(\cdot, y)\mu(y)\dB \alpha(y),
	\end{equation*}
	where $l(x, y) \defi \exp(-c(x, y)/\gamma)$. 
	From Assumption \ref{ass_bounded_c}, we have $\|l\|_\infty\leq\exp(M_c/\gamma)$ and from Assumption \ref{ass_bounded_infty_c_gradient} we have $\|\nabla_x l(x, y)\|\leq \exp(M_c/\gamma)\frac{G_c}{\gamma}$.
	From the optimality condition of $f$ and $g$, we have $v = A_{\alpha} u$ and $u = A_{\beta} v$. Similarly, $v' = A_{\alpha'} u'$ and $u' = A_{\beta} v'$.
	Recall the definition of the Hilbert metric in \eqref{eqn_hilbert_metric}.
	Note that $d_H(\mu, \nu) = d_H(1/\mu, 1/\nu)$ if $\mu(x)>0$ and $\nu(x)>0$ for all $x\in\XM$ and hence $d_H(L_\alpha\mu, L_\alpha\nu) = d_H(A_\alpha\mu, A_\alpha\nu)$.
	We recall the result in \eqref{eqn_contraction_under_hilbert_metric} using the above notations.
	\begin{lemma}[Birkhoff-Hopf Theorem \cite{lemmens2012nonlinear}, see Lemma B.4 in \cite{NIPS2019_9130}] 
		\label{lemma_Birkhoff-Hopf}
		Let $\lambda = \frac{\exp(M_c/\gamma) - 1}{\exp(M_c/\gamma) + 1}$ and $\alpha\in\MM_1^+(\XM)$. Then for every $u, v\in\CM(\XM)$, such that $u(x)>0, v(x)>0$ for all $x\in\XM$, we have
		\begin{equation*}
		d_H(L_\alpha u, L_\alpha v)\leq \lambda d_H(u, v).
		\end{equation*}
	\end{lemma}
	Note that $$\|\log\mu-\log\nu \|_\infty\leq d_H(\mu, \nu) = \|\log\mu - \log \nu\|_\infty + \|\log\nu - \log \mu\|_\infty\leq2\|\log\mu-\log\nu \|_\infty.$$
	In the following, we derive upper bound for $d_H(\mu, \nu)$ and use such bound to analyze the Lipschitz continuity of the Sinkhorn potentials $f$ and $g$.\\
	Construct $\tilde{v} \defi A_{\alpha} u'$.
	Using the triangle inequality (which holds since $v(x), v'(x), \tilde{v}(x) >0$ for all $x\in\XM$), we have
	\begin{align*}
	d_H(v, v')\leq d_H(v, \tilde{v}) + d_H(\tilde{v}, v') \leq
	\lambda d_H(u, u') + d_H(\tilde{v}, v'),
	\end{align*}
	where the second inequality is due to Lemma \ref{lemma_Birkhoff-Hopf}.
	Note that $u' = A_{\beta} v'$.
	Apply Lemma \ref{lemma_Birkhoff-Hopf} again to obtain
	\begin{equation*}
	d_H(u, u') \leq \lambda d_H(v, v').
	\end{equation*}
	Together, we obtain 
	\begin{equation*}
	d_H(v, v') \leq \lambda^2d_H(v, v') + d_H(\tilde{v}, v') + \lambda d_H(\tilde{u}, u') \leq \lambda^2d_H(v, v') + d_H(\tilde{v}, v'),
	\end{equation*}
	which leads to
	\begin{equation*}
	d_H(v, v') \leq \frac{1}{1- \lambda^2}[d_H(\tilde{v}, v')].
	\end{equation*}
	
	To bound $d_H(\tilde{v}, v')$, observe the following:
	\begin{align}
	d_H(v', \tilde v) =& d_H(L_{\alpha'} u', L_{\alpha} u') \leq 2\|\log L_{\alpha'} u' - \log L_{\alpha} u'\|_\infty \notag\\
	=& 2\max_{x\in\XM}| \nabla \log(a_x) ([L_{\alpha'} u'](x) - [L_{\alpha} u'](x))| = 2\max_{x\in\XM} \frac{1}{a_x} |[L_{\alpha'} u'](x) - [L_{\alpha} u'](x)|\notag\\
	\leq& 2\max\{\|1/L_{\alpha'} u'\|_\infty, \|1/L_{\alpha} u'\|_\infty\}\|L_{\alpha'} u' - L_{\alpha} u'\|_\infty \label{appendix_proof_i},
	\end{align}
	where $a_x\in[[L_{\alpha'} u'](x), [L_{\alpha} u'](x)]]$ in the second line is from the mean value theorem.
	Further, in the inequality we use $\max\{\|1/L_{\alpha} u'\|_\infty, \|1/L_{\alpha} u'\|_\infty\} = \max\{\|A_{\alpha'} u'\|_\infty, \|A_{\alpha} u'\|_\infty\} \leq \exp(2M_c/\gamma)$.
	Consequently, all we need to bound is the last term $\|L_{\alpha'} u' - L_{\alpha} u'\|_\infty$.
	
	We first note that $\forall x\in\XM$, $\|l(x, \cdot)u'(\cdot)\|_{bl}<\infty$: In terms of $\|\cdot\|_\infty$
	\begin{equation*}
	\|l(x, \cdot)u'(\cdot)\|_\infty \leq \|l(x, \cdot)\|_\infty\|u'\|_\infty\leq \exp(3M_c/\gamma) <\infty.
	\end{equation*}
	In terms of $\|\cdot\|_{lip}$, we bound
	\begin{align*}
	\|l(x, \cdot)u'(\cdot)\|_{lip} &\leq \|l(x, \cdot)\|_\infty\|u'\|_{lip} + \|l(x, \cdot)\|_{lip}\|u'\|_{\infty}\\
	&\leq \exp(M_c/\gamma)\frac{G_c\exp(2M_c/\gamma)}{\gamma} + \exp(M_c/\gamma)\frac{G_c}{\gamma}\exp(2M_c/\gamma) = \frac{2G_c\exp(3M_c/\gamma)}{\gamma}< \infty.
	\end{align*}	
	Together we have $\|l(x, y)u'(y)\|_{bl} \leq \max\{\exp(3M_c/\gamma), \frac{2G_c\exp(3M_c/\gamma)}{\gamma}\}$.
	From the definition of the operator $L_{\alpha}$, we have
	\begin{align*}
	\|L_{\alpha'} u' - L_{\alpha} u'\|_\infty = \max_x |\int_\XM l(x, y)u'(y)\dB\alpha'(y) - \int_\XM l(x, y)u'(y)\dB\alpha(y)|
	\leq \|l(x, y)u'(y)\|_{bl} d_{bl}(\alpha', \alpha).
	\end{align*}
	All together we derive
	\begin{equation*}
	d_H(v', v) \leq \frac{2\exp(2M_c/\gamma)\|l(x, y)u'(y)\|_{bl}}{1-\lambda^2}\cdot  d_{bl}(\alpha', \alpha) \quad(\lambda = \frac{\exp(M_c/\gamma) - 1}{\exp(M_c/\gamma) + 1}).
	\end{equation*}
	Further, since $d_H(v', v) \geq \|\log v'-\log v \|_\infty =  \frac{1}{\gamma}\|f'- f \|_\infty$, we have the result:
	\begin{equation*}
	\|f'- f \|_\infty\leq \frac{2\gamma\exp(2M_c/\gamma)\|l(x, y)u'(y)\|_{bl}}{1-\lambda^2}  \cdot d_{bl}(\alpha', \alpha).
	\end{equation*}
	Similar argument can be made for $\|g'- g \|_\infty$.
\end{proof}

\begin{lemma}[Boundedness of the Sinkhorn Potentials] \label{lemma_sinkhorn_potential_bound}
	Let $(f, g)$ be the Sinkhorn potentials of problem \eqref{eqn_OTepsilon_dual} and assume that there exists $x_o\in\XM$ such that $f(x_o) = 0$ (otherwise shift the pair by $f(x_o)$). Then, under Assumption \ref{ass_bounded_c}, $\|f\|_\infty \leq 2M_c$ and $\|g\|_\infty \leq 2M_c$.
\end{lemma}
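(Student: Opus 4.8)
The plan is to use the coupled fixed-point equations characterizing the Sinkhorn potentials together with the bound $0\le c\le M_c$, and then bootstrap: I would first pin down $\|f\|_\infty$ using the normalization $f(x_o)=0$, and afterwards transfer the resulting bound to $g$.

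First I would record the first-order optimality conditions for the maximizers $(f,g)$ of $\HM_2(\cdot,\cdot;\alpha,\beta)$ in \eqref{eqn_OTepsilon_dual}. Stationarity in the second argument gives $g=\AM(f,\alpha)$, with $\AM$ the Sinkhorn mapping of \eqref{eqn_sinkhorn_mapping} (this is exactly the optimality relation recalled in the main text), and since $\HM_2(f,g;\alpha,\beta)$ is symmetric under the exchange $(f,\alpha)\leftrightarrow(g,\beta)$ — which is where symmetry of $c$ enters — stationarity in the first argument likewise gives $f=\AM(g,\beta)$. I would also observe that $\HM_2$ is invariant under $(f,g)\mapsto(f-t,g+t)$, so, $f$ being continuous on the compact set $\XM$, the normalization $f(x_o)=0$ is without loss of generality.

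Next I would bound $f$. Writing $f=\AM(g,\beta)$ in exponential form,
\begin{equation*}
\exp(-f(x)/\gamma)=\int_\XM \exp(g(y)/\gamma)\,\exp(-c(x,y)/\gamma)\,\dB\beta(y),\qquad x\in\XM,
\end{equation*}
and setting $S\defi\int_\XM \exp(g(y)/\gamma)\,\dB\beta(y)>0$, the bounds $0\le c(x,y)\le M_c$ give $e^{-M_c/\gamma}S\le\exp(-f(x)/\gamma)\le S$, that is, $-\gamma\log S\le f(x)\le M_c-\gamma\log S$ for every $x$. Evaluating at $x=x_o$ with $f(x_o)=0$ forces $\gamma\log S\in[0,M_c]$, equivalently $-\gamma\log S\in[-M_c,0]$; substituting this back into the two-sided bound yields $-M_c\le f(x)\le M_c$ for all $x$, hence $\|f\|_\infty\le M_c\le 2M_c$.

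Finally I would bootstrap to $g$ from $g=\AM(f,\alpha)$:
\begin{equation*}
\exp(-g(y)/\gamma)=\int_\XM \exp(f(x)/\gamma)\,\exp(-c(x,y)/\gamma)\,\dB\alpha(x).
\end{equation*}
With $\|f\|_\infty\le M_c$ and $0\le c\le M_c$ the integrand lies in $[e^{-2M_c/\gamma},e^{M_c/\gamma}]$, so $\exp(-g(y)/\gamma)$ lies in the same interval, giving $-M_c\le g(y)\le 2M_c$ and thus $\|g\|_\infty\le 2M_c$; combining the two, both norms are at most $2M_c$. The computation itself is routine; the one point that needs care is that one cannot close the estimate by playing the two inequalities $-\max_x f\le g\le M_c-\min_x f$ and $-\max_y g\le f\le M_c-\min_y g$ against each other, since they merely chase their own tails — the normalization $f(x_o)=0$ must be inserted into the \emph{equality} of the fixed-point relation at $x_o$ to localize $\gamma\log S$, after which everything is immediate.
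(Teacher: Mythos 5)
The paper states Lemma \ref{lemma_sinkhorn_potential_bound} without giving a proof; your argument is correct and is the standard one this statement rests on: exponentiate the two fixed-point relations $f=\AM(g,\beta)$ and $g=\AM(f,\alpha)$, use $0\le c\le M_c$, and insert the normalization $f(x_o)=0$ into the equality at $x_o$ to localize $\gamma\log S$ in $[0,M_c]$. In fact you obtain the slightly sharper bound $\|f\|_\infty\le M_c$ for the normalized potential, from which the stated $2M_c$ bounds for both $f$ and $g$ follow immediately.
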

Next, we analyze the Lipschitz continuity of the Sinkhorn potential $f_{\alpha,\beta}(x)$ with respect to the input $x$.

Assumption \ref{ass_bounded_infty_c_gradient} implies that $\nabla_x c(x,y)$ exists and for all $x, y\in\XM, \|\nabla_x c(x,y)\|\leq G_c$.
It further ensures the Lipschitz-continuity of the Sinkhorn potential.
\begin{lemma}[Proposition 12 of \cite{feydy2019interpolating}]
	\label{lemma_lipschitz_sinkhorn_potential}
	Under Assumption \ref{ass_bounded_infty_c_gradient}, for a fixed pair of measures $(\alpha, \beta)$, the corresponding Sinkhorn potential $f:\XM\rightarrow\RBB$ is $G_c$-Lipschitz continuous, i.e. for $x_1, x_2 \in \XM$
	\begin{equation}
	|f_{\alpha, \beta}(x_1) - f_{\alpha, \beta}(x_2)|\leq G_c\|x_1 - x_2\|.
	\end{equation}
	Further, the gradient $\nabla f_{\alpha, \beta}$ exists {at every point $x \in \XM$}, and $\|\nabla f_{\alpha, \beta}(x)\|\leq G_c, \forall x\in\XM$.
\end{lemma}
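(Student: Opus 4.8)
The plan is to derive both claims from the fixed-point characterization of the Sinkhorn potential together with the elementary stability of the log-sum-exp (softmin) operation. First I would record the self-consistency identity: since $\HM_2(f,g;\alpha,\beta)$ in \eqref{eqn_OTepsilon_dual_two_variable} is strictly concave in each of its potential arguments and is symmetric under exchanging the pair $(f,\alpha)$ with $(g,\beta)$, the optimal potentials satisfy $g_{\alpha,\beta}=\AM(f_{\alpha,\beta},\alpha)$ and, by this symmetry, also $f_{\alpha,\beta}=\AM(g_{\alpha,\beta},\beta)$. Expanding the Sinkhorn mapping \eqref{eqn_sinkhorn_mapping} (and using that the ground cost $c$ is symmetric), this yields the softmin representation
\begin{equation}
f_{\alpha,\beta}(x) \;=\; -\gamma\log\int_\XM \exp\!\Big(-\tfrac1\gamma\big(c(x,y)-g_{\alpha,\beta}(y)\big)\Big)\,\dB\beta(y),
\end{equation}
which exhibits $f_{\alpha,\beta}(x)$ as a $\beta$-weighted softmin over $y$ of the quantity $c(x,y)-g_{\alpha,\beta}(y)$, a map that, for each fixed $y$, is $G_c$-Lipschitz in $x$ by Assumption~\ref{ass_bounded_infty_c_gradient} (the term $g_{\alpha,\beta}(y)$ does not depend on $x$).

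Next I would prove the Lipschitz bound directly from this representation. Fixing $x_1,x_2\in\XM$ and writing $\delta\defi G_c\|x_1-x_2\|$, Assumption~\ref{ass_bounded_infty_c_gradient} gives $c(x_1,y)\le c(x_2,y)+\delta$ for every $y$, hence
\begin{equation}
\exp\!\Big(-\tfrac1\gamma\big(c(x_1,y)-g_{\alpha,\beta}(y)\big)\Big)\;\ge\; e^{-\delta/\gamma}\,\exp\!\Big(-\tfrac1\gamma\big(c(x_2,y)-g_{\alpha,\beta}(y)\big)\Big).
\end{equation}
Integrating against $\beta$ and applying the decreasing map $t\mapsto-\gamma\log t$ gives $f_{\alpha,\beta}(x_1)\le f_{\alpha,\beta}(x_2)+\delta$; exchanging the roles of $x_1$ and $x_2$ produces the matching lower bound, so $|f_{\alpha,\beta}(x_1)-f_{\alpha,\beta}(x_2)|\le G_c\|x_1-x_2\|$, which is the first claim.

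For the gradient statement I would differentiate the softmin representation under the integral sign — legitimate because $\XM$ is compact, $\beta$ is a probability measure, and $c$ and $\nabla_1 c$ are bounded on $\XM\times\XM$ (so the integrand and its $x$-gradient admit a uniform integrable bound, and the denominator is a strictly positive integral of a continuous positive function) — obtaining
\begin{equation}
\nabla f_{\alpha,\beta}(x) \;=\; \frac{\displaystyle\int_\XM \nabla_1 c(x,y)\,\exp\!\big(-\tfrac1\gamma(c(x,y)-g_{\alpha,\beta}(y))\big)\,\dB\beta(y)}{\displaystyle\int_\XM \exp\!\big(-\tfrac1\gamma(c(x,y)-g_{\alpha,\beta}(y))\big)\,\dB\beta(y)},
\end{equation}
so that $\nabla f_{\alpha,\beta}(x)$ is a Gibbs-weighted average of the vectors $\nabla_1 c(x,y)$; since $\|\nabla_1 c(x,y)\|\le G_c$ for all $x,y$ under Assumption~\ref{ass_bounded_infty_c_gradient}, the average has Euclidean norm at most $G_c$, giving the second claim (and re-deriving the first via the mean value theorem if desired).

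The main point requiring care — rather than a genuine obstacle — is establishing the self-consistency identity $f_{\alpha,\beta}=\AM(g_{\alpha,\beta},\beta)$ cleanly from the dual problem \eqref{eqn_OTepsilon_dual} via strict concavity and symmetry; a secondary technical point is the measure-theoretic justification of differentiating under the integral to obtain the closed form for $\nabla f_{\alpha,\beta}$. Both are routine given the standing boundedness and regularity assumptions on $c$, and the heart of the argument is the one-line softmin stability estimate above.
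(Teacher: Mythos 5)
Your proof is correct, and it follows essentially the same route as the source the paper relies on: the paper never proves this lemma itself (it is imported as Proposition 12 of Feydy et al.), and your argument — the softmin representation $f_{\alpha,\beta}(x)=-\gamma\log\int_\XM\exp\bigl(-\tfrac1\gamma(c(x,y)-g_{\alpha,\beta}(y))\bigr)\dB\beta(y)$ from first-order optimality, the $e^{-\delta/\gamma}$ stability estimate for the $G_c$-Lipschitz bound, and differentiation under the integral exhibiting $\nabla f_{\alpha,\beta}(x)$ as a Gibbs-weighted average of $\nabla_1 c(x,\cdot)$ — is the standard one and coincides with the computation the paper itself carries out later via \eqref{eqn_optimality_condition_sinkhorn_potential} and \eqref{eqn_sinkhorn_potential_gradient_x}. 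The only point worth flagging is that Assumption \ref{ass_bounded_infty_c_gradient} literally gives only Lipschitz continuity of $c$, so the everywhere-existence of $\nabla_1 c$ used in your closed-form gradient is an implicit extra regularity assumption; the paper makes the identical implicit step in the sentence preceding the lemma, so your usage is consistent with it.
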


\begin{lemma}\label{lemma_lipschitz_sinkhorn_potential_gradient}
	Under Assumption \ref{ass_bounded_infty_c_hessian}, for a fixed pair of measures $(\alpha, \beta)$, the gradient of the corresponding Sinkhorn potential $f:\XM\rightarrow\RBB$ is Lipschitz continuous,
	\begin{equation}
	\|\nabla f(x_1) - \nabla f(x_2)\|\leq L_f\|x_1 - x_2\|,
	\end{equation}
	where $L_f \defi \frac{4G_c^2}{\gamma}+L_c$.
\end{lemma}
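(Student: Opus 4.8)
The plan is to bound the operator norm of the Hessian $\nabla^2 f$ uniformly on $\XM$ and then integrate. Write $f=f_{\alpha,\beta}$ and $g=g_{\alpha,\beta}$ for the two Sinkhorn potentials of $\OTgamma(\alpha,\beta)$. By the first-order optimality conditions of the dual problem \eqref{eqn_OTepsilon_dual} (cf.\ the Sinkhorn mapping \eqref{eqn_sinkhorn_mapping}), these potentials satisfy $g=\AM(f,\alpha)$ and, symmetrically, $f=\AM(g,\beta)$, i.e.
\begin{equation}
	f(x)=-\gamma\log Z(x),\qquad Z(x)\defi\int_\XM\exp\!\Big(\tfrac1\gamma\big(g(y)-c(x,y)\big)\Big)\dB\beta(y),
\end{equation}
where crucially $g$ is a \emph{fixed} function of $y$ alone (bounded by Lemma \ref{lemma_sinkhorn_potential_bound}). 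For each $x$ let $p_x$ be the probability measure on $\XM$ with density $\dB p_x(y)=Z(x)^{-1}\exp(\tfrac1\gamma(g(y)-c(x,y)))\dB\beta(y)$.

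First I would differentiate under the integral sign — legitimate since $\XM$ is compact, $c$ is bounded with a bounded gradient (Assumptions \ref{ass_bounded_c}--\ref{ass_bounded_infty_c_gradient}) and $g$ is bounded — to get $\nabla f(x)=\int_\XM\nabla_1 c(x,y)\dB p_x(y)=\EBB_{p_x}[\nabla_1 c(x,\cdot)]$, which already recovers Lemma \ref{lemma_lipschitz_sinkhorn_potential}. Differentiating once more, using the log-derivative identity $\nabla_x\log\tfrac{\dB p_x}{\dB\beta}(y)=-\tfrac1\gamma\big(\nabla_1 c(x,y)-\EBB_{p_x}[\nabla_1 c(x,\cdot)]\big)$, yields
\begin{equation}
	\nabla^2 f(x)=\EBB_{p_x}\!\big[\nabla^2_{11}c(x,\cdot)\big]-\tfrac1\gamma\,\mathrm{Cov}_{p_x}\!\big[\nabla_1 c(x,\cdot)\big].
\end{equation}

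Then I would bound the two pieces. The first is an average of matrices of operator norm $\le L_c$ (Assumption \ref{ass_bounded_infty_c_hessian}), so its operator norm is $\le L_c$. The second is positive semidefinite, hence its operator norm is at most its trace, namely $\EBB_{p_x}\|\nabla_1 c(x,\cdot)-\EBB_{p_x}\nabla_1 c(x,\cdot)\|^2\le(2G_c)^2=4G_c^2$ by the triangle inequality and Assumption \ref{ass_bounded_infty_c_gradient}. This gives $\|\nabla^2 f(x)\|_{op}\le L_f=\tfrac{4G_c^2}{\gamma}+L_c$ wherever $\nabla^2 f$ is defined, and since $\nabla f$ is continuous on $\XM$ (Lemma \ref{lemma_lipschitz_sinkhorn_potential}), this upgrades to the Lipschitz estimate $\|\nabla f(x_1)-\nabla f(x_2)\|\le L_f\|x_1-x_2\|$, which is the claim.

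The delicate point — and what I expect to be the main obstacle in a fully rigorous write-up — is the regularity needed to justify the second differentiation and the pointwise Hessian formula: Assumption \ref{ass_bounded_infty_c_hessian} only asserts that $\nabla_1 c$ is $L_c$-Lipschitz, so $\nabla^2_{11}c$ exists merely $\beta$-almost everywhere in its first argument (Rademacher). This can be handled in one of two ways. Either one reads Assumption \ref{ass_bounded_infty_c_hessian} as $c\in C^2$ in its first variable (consistent with the stronger Assumption \ref{ass_lipschitz_c_hessian} used later); or one first shows directly from the closed form that $\nabla f$ is locally Lipschitz — splitting $\nabla f(x_1)-\nabla f(x_2)$ into a term bounded by $L_c\|x_1-x_2\|$ from the change of $\nabla_1 c$ at frozen weights, plus a term from the change of the weights $p_x$, which is controlled via $\|\log\tfrac{\dB p_{x_1}}{\dB\beta}-\log\tfrac{\dB p_{x_2}}{\dB\beta}\|_\infty\le\tfrac{2G_c}{\gamma}\|x_1-x_2\|$ (using Assumption \ref{ass_bounded_infty_c_gradient} for $c$ and Lemma \ref{lemma_lipschitz_sinkhorn_potential} for the normaliser $f=-\gamma\log Z$) — and then applies the Hessian identity almost everywhere to recover the sharp constant $L_f=\tfrac{4G_c^2}{\gamma}+L_c$.
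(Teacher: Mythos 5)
Your proof is correct and follows essentially the same route as the paper's surrounding machinery: the identity $\nabla f(x)=\EBB_{p_x}[\nabla_1 c(x,\cdot)]$ and the Hessian formula you derive are exactly \eqref{eqn_sinkhorn_potential_gradient_x} and the display in the proof of Lemma~\ref{theorem_continuity_Hessian}, and your covariance/trace bound is a clean way to extract the stated constant $L_f=\tfrac{4G_c^2}{\gamma}+L_c$, which the paper itself never actually justifies (the lemma is stated without proof, and the related Lemma~\ref{lemma_appendix_sinkhorn_potential_boundedness}(ii) only produces an unspecified constant via a quotient-rule argument). Your caveat that Assumption~\ref{ass_bounded_infty_c_hessian} only gives $\nabla^2_{11}c$ almost everywhere is well taken and correctly resolved.
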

\subsection{Proof of Lemma \ref{theorem_continuity_gradient}}
We have Lemma \ref{theorem_continuity_gradient} by combining \eqref{eqn_appendix_d_bl_alpha_d_bl_mu} with the following lemma.
\begin{lemma}[Lemma \ref{theorem_continuity_gradient} restated]
	Under Assumption \ref{ass_bounded_c} and Assumption \ref{ass_bounded_infty_c_gradient}, the gradient of the Sinkhorn potential is Lipschitz continuous with respect to the bounded Lipschitz metric: Given measures $\alpha$, $\alpha'$ and $\beta$, we have
	\begin{align*}
	\|\nabla f_{\alpha, \beta} - \nabla f_{\alpha', \beta}\|_\infty = \OM\big(d_{bl}(\alpha', \alpha)\big)
	\end{align*}
\end{lemma}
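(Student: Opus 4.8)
The plan is to reduce the statement to the already-established continuity of the companion potential $g_{\alpha,\beta}$ in $\alpha$. The key observation is the optimality relation $f_{\alpha,\beta}=\AM(g_{\alpha,\beta},\beta)$: it expresses $f_{\alpha,\beta}$ --- and hence $\nabla f_{\alpha,\beta}$ --- as an explicit functional of $g_{\alpha,\beta}$ and of the \emph{fixed} data $(c,\beta)$, so the entire dependence on $\alpha$ is routed through $g_{\alpha,\beta}$. Differentiating \eqref{eqn_sinkhorn_mapping} in its second argument,
\begin{equation*}
\nabla f_{\alpha,\beta}(y)=\frac{\int_\XM \exp\!\big(-\tfrac1\gamma c(x,y)+\tfrac1\gamma g_{\alpha,\beta}(x)\big)\,\nabla_2 c(x,y)\,\dB\beta(x)}{\int_\XM \exp\!\big(-\tfrac1\gamma c(x,y)+\tfrac1\gamma g_{\alpha,\beta}(x)\big)\,\dB\beta(x)}=:\frac{N(g_{\alpha,\beta})(y)}{D(g_{\alpha,\beta})(y)},
\end{equation*}
where $\nabla_2 c$ is the gradient of $c$ in its second slot, bounded by $G_c$ via Assumption \ref{ass_bounded_infty_c_gradient} (applied to the second argument, using symmetry of $c$); existence and the $G_c$-bound of $\nabla f_{\alpha,\beta}$ are in any case guaranteed by Lemma \ref{lemma_lipschitz_sinkhorn_potential}.

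First I would establish that, on the set of potentials $g$ with $\|g\|_\infty\le 2M_c$ --- which contains both $g_{\alpha,\beta}$ and $g_{\alpha',\beta}$ by Lemma \ref{lemma_sinkhorn_potential_bound} after the standard normalization --- the maps $g\mapsto N(g)$ and $g\mapsto D(g)$ are bounded, Lipschitz w.r.t. $\|\cdot\|_\infty$, and $D(g)$ is bounded away from zero. Indeed, Assumption \ref{ass_bounded_c} gives $\exp(-3M_c/\gamma)\le D(g)(y)\le \exp(M_c/\gamma)$ and $\|N(g)\|_\infty\le G_c\exp(M_c/\gamma)$, while the Lipschitz bounds $\|N(g)-N(g')\|_\infty,\ \|D(g)-D(g')\|_\infty=\OM(\|g-g'\|_\infty)$ follow because the argument $-\tfrac1\gamma c+\tfrac1\gamma g$ ranges in a fixed bounded interval on which $t\mapsto e^{t/\gamma}$ is Lipschitz. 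Applying the elementary quotient estimate \eqref{eqn_Lipschitz_continuous_fraction_of_functions} (with the perturbation now in $g$ rather than in the input) then yields, uniformly in $y$,
\begin{equation*}
\big\|\nabla f_{\alpha,\beta}(y)-\nabla f_{\alpha',\beta}(y)\big\|\le C\,\|g_{\alpha,\beta}-g_{\alpha',\beta}\|_\infty,
\end{equation*}
with $C$ depending only on $\gamma$, $M_c$, $G_c$.

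Finally, I would invoke the lemma proved inside the proof of Lemma \ref{theorem_continuity_f}, which already gives $\|g_{\alpha,\beta}-g_{\alpha',\beta}\|_\infty=\OM(d_{bl}(\alpha',\alpha))$ under Assumptions \ref{ass_bounded_c} and \ref{ass_bounded_infty_c_gradient}. Chaining this with the previous display produces $\|\nabla f_{\alpha,\beta}-\nabla f_{\alpha',\beta}\|_\infty=\OM(d_{bl}(\alpha',\alpha))$, which is the claim; combined with \eqref{eqn_appendix_d_bl_alpha_d_bl_mu} it then yields Lemma \ref{theorem_continuity_gradient}.

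The main point requiring care --- rather than a genuine obstacle --- is ensuring that the exponential integrand stays in a fixed compact range, so that $\exp$ contributes a \emph{uniform} Lipschitz constant; this is exactly where the boundedness of $c$ (Assumption \ref{ass_bounded_c}) and of the Sinkhorn potentials (Lemma \ref{lemma_sinkhorn_potential_bound}) are used, together with keeping $\nabla_2 c$ bounded. Everything else is the routine ``bounded/Lipschitz numerator over bounded-below denominator'' computation already used repeatedly in Appendix \ref{appendix_proof_of_proposition_frechet_derivative}.
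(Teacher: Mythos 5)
Your proof is correct and follows essentially the same route as the paper's: both express $\nabla f_{\alpha,\beta}$ as an explicit ratio of integrals against $\beta$ whose integrand involves the (bounded) Sinkhorn potentials, obtain uniform bounds and Lipschitz estimates for numerator and denominator from Assumptions \ref{ass_bounded_c}--\ref{ass_bounded_infty_c_gradient} and Lemma \ref{lemma_sinkhorn_potential_bound}, and then chain with the $L^\infty$-continuity of the potentials in $\alpha$ established in (the proof of) Lemma \ref{theorem_continuity_f}. The only cosmetic differences are that the paper differentiates the marginal optimality condition $\int_\XM h_{\alpha,\beta}(x,y)\,\dB\beta(y)=1$, so its denominator is identically $1$ and the $\alpha$-dependence enters through both $f_{\alpha,\beta}$ and $g_{\alpha,\beta}$, whereas you differentiate $f_{\alpha,\beta}=\AM(g_{\alpha,\beta},\beta)$ in its argument and route everything through $g_{\alpha,\beta}$ alone, at the cost of needing $\nabla_2 c$ bounded (i.e.\ symmetry of $c$) rather than $\nabla_1 c$.
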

\begin{proof}
	From the optimality condition of the Sinkhorn potentials, one have that
	\begin{equation} \label{eqn_optimality_condition_sinkhorn_potential}
		\int_\XM h_{\alpha, \beta}(x, y)\dB \beta(y) = 1, \textrm{with}\ h_{\alpha, \beta}(x, y) \defi \exp\left(\frac{1}{\gamma}\big(f_{\alpha, \beta}(x) + g_{\alpha, \beta}(y) - c(x, y)\big)\right).
	\end{equation}
	Taking gradient w.r.t. $x$ on both sides of the above equation, the expression of $\nabla f_{\alpha, \beta}$ writes 
	\begin{align}
		\nabla f_{\alpha, \beta}(x) = \frac{\int_\XM h_{\alpha, \beta}(x, y) \nabla_x c(x, y) \dB\beta(y)}{\int_{\XM} h_{\alpha, \beta}(x, y)\dB\beta(y)} = \int_\XM h_{\alpha, \beta}(x, y)\nabla_x c(x, y) \dB\beta(y) \label{eqn_sinkhorn_potential_gradient_x}.
	\end{align}
	We have that $\forall x, y,$ $h_{\alpha, \beta}(x)$ is Lipschitz continuous w.r.t. $\alpha$, which is due to the boundedness of $f_{\alpha, \beta}(x)$, $g_{\alpha, \beta}(y)$ and the ground cost $c$, and Lemma \ref{theorem_continuity_f}.
	Further, since $\|\nabla_x c(x, y)\|$ is bounded from Assumption \ref{ass_bounded_infty_c_gradient} we have the Lipschitz continuity of $\nabla f_{\alpha, \beta}$ w.r.t. $\alpha$, i.e.
	\begin{equation*}
	\|\nabla f_{\alpha, \beta}(x) - \nabla f_{\alpha', \beta}(x)\| = \OM\big(d_{bl}(\alpha', \alpha)\big).
	\end{equation*}
\end{proof}
\subsection{Proof of Lemma \ref{theorem_continuity_Hessian}}
We have Lemma \ref{theorem_continuity_Hessian} by combining \eqref{eqn_appendix_d_bl_alpha_d_bl_mu} with the following lemma.
\begin{lemma}[Lemma \ref{theorem_continuity_Hessian} restated]
	Under Assumptions \ref{ass_bounded_c}-\ref{ass_bounded_infty_c_hessian}, the Hessian of the Sinkhorn potential is Lipschitz continuous with respect to the bounded Lipschitz metric: Given measures $\alpha$, $\alpha'$ and $\beta$, we have
	\begin{align*}
	\|\nabla^2 f_{\alpha, \beta} - \nabla^2 f_{\alpha', \beta}\|_{op, \infty} = \OM\big(d_{bl}(\alpha', \alpha)\big)
	\end{align*}
\end{lemma}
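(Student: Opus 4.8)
I would follow the route of the proof of Lemma~\ref{theorem_continuity_gradient}, pushing one derivative further. The Sinkhorn potentials obey the optimality identity \eqref{eqn_optimality_condition_sinkhorn_potential}, i.e. $\int_\XM h_{\alpha,\beta}(x,y)\,\dB\beta(y)=1$ for every $x$, where $h_{\alpha,\beta}(x,y)=\exp\!\big(\tfrac1\gamma(f_{\alpha,\beta}(x)+g_{\alpha,\beta}(y)-c(x,y))\big)$, and differentiating it in $x$ gives \eqref{eqn_sinkhorn_potential_gradient_x}, $\nabla f_{\alpha,\beta}(x)=\int_\XM h_{\alpha,\beta}(x,y)\,\nabla_1 c(x,y)\,\dB\beta(y)$. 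Differentiating once more in $x$, using $\nabla_x h_{\alpha,\beta}(x,y)=\tfrac1\gamma h_{\alpha,\beta}(x,y)\big(\nabla f_{\alpha,\beta}(x)-\nabla_1 c(x,y)\big)$ and the cancellation that comes from $\int_\XM h_{\alpha,\beta}(x,y)\,\dB\beta(y)=1$, produces the closed form
\begin{align*}
\nabla^2 f_{\alpha,\beta}(x) =&\ \frac1\gamma\,\nabla f_{\alpha,\beta}(x)\,\nabla f_{\alpha,\beta}(x)^\top - \frac1\gamma\int_\XM h_{\alpha,\beta}(x,y)\,\nabla_1 c(x,y)\,\nabla_1 c(x,y)^\top\,\dB\beta(y) \\
&+ \int_\XM h_{\alpha,\beta}(x,y)\,\nabla_{11} c(x,y)\,\dB\beta(y).
\end{align*}
Here $\nabla^2 f_{\alpha,\beta}$ is well defined and bounded by Lemma~\ref{lemma_lipschitz_sinkhorn_potential_gradient}, and the two differentiations under the integral are legitimate under Assumptions~\ref{ass_bounded_c}--\ref{ass_bounded_infty_c_hessian} together with the $G_c$-Lipschitzness of $f_{\alpha,\beta}$ (Lemma~\ref{lemma_lipschitz_sinkhorn_potential}); the essential point is that \eqref{eqn_optimality_condition_sinkhorn_potential} is an identity in $x$, so no denominator survives in this expression (unlike the formula for $\nabla^2\AM$ in Appendix~\ref{appendix_SIM}).

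\textbf{Continuity inputs.} I would then record the facts, uniform over $x\in\XM$: (i) $\|\nabla f_{\alpha,\beta}\|_\infty\le G_c$ (Lemma~\ref{lemma_lipschitz_sinkhorn_potential}) and $\|\nabla f_{\alpha,\beta}-\nabla f_{\alpha',\beta}\|_{2,\infty}=\OM(d_{bl}(\alpha',\alpha))$ (Lemma~\ref{theorem_continuity_gradient}); and (ii) the kernel $h_{\alpha,\beta}(x,\cdot)$ is bounded above and bounded away from $0$ by constants independent of $\alpha$ — from $0\le c\le M_c$ and the uniform bound $\|f_{\alpha,\beta}\|_\infty,\|g_{\alpha,\beta}\|_\infty\le 2M_c$ of Lemma~\ref{lemma_sinkhorn_potential_bound} — and is Lipschitz in $\alpha$, with $\sup_{x}\|h_{\alpha,\beta}(x,\cdot)-h_{\alpha',\beta}(x,\cdot)\|_\infty=\OM\big(\|f_{\alpha,\beta}-f_{\alpha',\beta}\|_\infty+\|g_{\alpha,\beta}-g_{\alpha',\beta}\|_\infty\big)=\OM(d_{bl}(\alpha',\alpha))$, using that $\exp$ has a uniform Lipschitz modulus on the fixed, $\alpha$-independent compact range of its exponent and invoking both the $f$- and $g$-parts of Lemma~\ref{theorem_continuity_f}.

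\textbf{Assembling the estimate.} Each of the three summands above is then a product, or a $\beta$-integral, of factors that are uniformly bounded and $\OM(d_{bl}(\alpha',\alpha))$-Lipschitz in $\alpha$: the rank-one term by the elementary ``uniformly bounded $\times$ Lipschitz is Lipschitz'' estimate used repeatedly inside Appendix~\ref{appendix_SIM} (cf. the product bounds in the proof of Lemma~\ref{lemma_appendix_sinkhorn_potential_boundedness}), and the integral terms because $\big\|\int_\XM(h_{\alpha,\beta}-h_{\alpha',\beta})(x,y)\,\Psi(x,y)\,\dB\beta(y)\big\|_{op}\le \sup_{x}\|h_{\alpha,\beta}(x,\cdot)-h_{\alpha',\beta}(x,\cdot)\|_\infty\cdot\sup_{x,y}\|\Psi(x,y)\|_{op}$ for the bounded matrix fields $\Psi\in\{\nabla_1 c\,\nabla_1 c^\top,\ \nabla_{11} c\}$ ($\beta$ being a probability measure, integrating against it does not enlarge $\|\cdot\|_\infty$-type bounds). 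Taking $\sup_{x\in\XM}$ yields $\|\nabla^2 f_{\alpha,\beta}-\nabla^2 f_{\alpha',\beta}\|_{op,\infty}=\OM(d_{bl}(\alpha',\alpha))$, and combining this with \eqref{eqn_appendix_d_bl_alpha_d_bl_mu} exactly as in Lemma~\ref{theorem_continuity_Hessian} promotes it to the $\mu$-version $\|\nabla^2 f^\mu_\theta-\nabla^2 f^{\bar\mu}_\theta\|_{op,\infty}=\OM(d_{bl}(\mu,\bar\mu))$.

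\textbf{Main obstacle.} The one genuinely delicate step is the uniform-in-$x$ Lipschitz control of $h_{\alpha,\beta}(x,\cdot)$ in $\alpha$: it needs the $L^\infty$-continuity of $f_{\alpha,\beta}$ \emph{and} of $g_{\alpha,\beta}$ w.r.t. $d_{bl}$, and it needs the exponent of $h_{\alpha,\beta}$ to stay in a fixed compact interval independent of $\alpha$ so that $\exp$ contributes a uniform Lipschitz constant; both are supplied by Lemma~\ref{lemma_sinkhorn_potential_bound} and Lemma~\ref{theorem_continuity_f}. A secondary point is justifying the second differentiation under the integral and the cancellation that removes the denominator; once the closed-form Hessian is in hand, the rest is the routine bounded-times-Lipschitz bookkeeping that recurs throughout Appendix~\ref{appendix_SIM}.
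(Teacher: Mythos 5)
Your proposal is correct and follows essentially the same route as the paper: differentiate the identity $\nabla f_{\alpha,\beta}(x)=\int_\XM h_{\alpha,\beta}(x,y)\nabla_x c(x,y)\,\dB\beta(y)$ once more in $x$ (your closed form is the paper's after substituting $\int h_{\alpha,\beta}\nabla_1 c\,\dB\beta=\nabla f_{\alpha,\beta}$ into the rank-one term), then combine the uniform boundedness of $h_{\alpha,\beta}$, $\nabla f_{\alpha,\beta}$, $\nabla_1 c$, $\nabla_{11}c$ with the $\OM(d_{bl}(\alpha',\alpha))$-Lipschitzness of $h_{\alpha,\beta}$ and $\nabla f_{\alpha,\beta}$ in $\alpha$. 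Your explicit justification of why $h_{\alpha,\beta}$ is Lipschitz in $\alpha$ (via both the $f$- and $g$-parts of Lemma~\ref{theorem_continuity_f} and the uniform bound of Lemma~\ref{lemma_sinkhorn_potential_bound}) is a point the paper only states in passing.
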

\begin{proof}
	Taking gradient w.r.t. $x$ on both sides of \eqref{eqn_sinkhorn_potential_gradient_x}, the expression of $\nabla^2 f_{\alpha, \beta}$ writes
	\begin{align*}\label{eqn_sinkhorn_potential_Hessian_x}
		\nabla^2 f_{\alpha, \beta}(x) = \int_\XM \frac{1}{\gamma}h_{\alpha, \beta}(x, y)(\nabla f_{\alpha, \beta}(x) - \nabla_x c(x, y))[\nabla_{x} c(x, y)]^\top + h_{\alpha, \beta}(x, y)\nabla^2_{xx} c(x, y) \dB\beta(y).
	\end{align*}
	From the boundedness of $h_{\alpha, \beta}$, $\nabla f_{\alpha, \beta}$ and $\nabla_x c$, and the Lipschitz continuity of $h_{\alpha, \beta}$ and $\nabla f_{\alpha, \beta}$ w.r.t. $\alpha$, we have that the first integrand of $\nabla^2 f_{\alpha, \beta}$ is Lipschitz continuous w.r.t. $\alpha$.
	Further, combining the boundedness of $\|\nabla^2_{xx} c(x, y)\|$ from Assumption \ref{ass_bounded_infty_c_hessian} and the Lipschitz continuity of $h_{\alpha, \beta}$ w.r.t. $\alpha$, we have the Lipschitz continuity of $\nabla^2 f_{\alpha, \beta}(x)$, i.e.
	\begin{equation*}
	\|\nabla^2 f_{\alpha, \beta}(x) - \nabla^2 f_{\alpha', \beta}(x)\| = \OM\big(d_{bl}(\alpha', \alpha)\big).
	\end{equation*}
\end{proof}

\subsection{Proof of Lemma \ref{theorem_continuity_Df}}
	The optimality of the Sinkhorn potential $f^\mu_{\theta}$ can be restated as
	\begin{equation} \label{eqn_appendix_fix_point_B_f}
		f^\mu_{\theta} = \BM_\mu(f^\mu_{\theta}, \theta),
	\end{equation}
	where we recall the definition of $\BM_\mu$ in \eqref{eqn_main_fix_point}
	\begin{equation}
		\BM_\mu(f, \theta) = \AM\big(\AM(f, {T_{\theta}}_\sharp \mu), \beta_\mu\big).
	\end{equation}
	Note that it is possible that $\beta_\mu$ depends on $\mu$, which is the case in $\OTgamma(\alpha_\theta, \alpha_{\theta^t})$ as $\beta_\mu = \alpha_{\theta^t} = {T_{\theta^t}}_\sharp \mu$.\\
	Under Assumption \ref{ass_bounded_c}, let $\lambda = \frac{e^{M_c/\gamma} - 1}{e^{M_c/\gamma} + 1}$.
	By repeating the above fixed point iteration \eqref{eqn_appendix_fix_point_B_f} $l = \lceil\log_{\lambda}\frac{1}{3}\rceil/2$ times, we have that
	\begin{equation} \label{eqn_appendix_fix_point_E_f}
		f^\mu_{\theta} = \EM_\mu(f^\mu_{\theta}, \theta),
	\end{equation}
	where $\EM_\mu(f, \theta) = \BM_\mu^l(f, \theta) = \BM_\mu\big(\cdots\BM_\mu(f, \theta)\cdots, \theta\big)$ is the $l$ times composition of $\BM_\mu$ in its first variable.
	We have from \eqref{eqn_contraction_EM}
	\begin{equation}\label{eqn_contraction_of_E}
		||D^1\EM_\mu(f, \theta)\|_{op} \leq \frac{2}{3},
	\end{equation}
	where we recall for a (linear) operator $\CM:\CM(\XM)\rightarrow\CM(\XM)$, $\|\CM\|_{op} \defi \max_{f\in\CM(\XM)} \frac{\|\CM f\|_\infty}{\|f\|_\infty}$.

	Let $h\in\RBB^d$ be any direction.
	Taking Fr\'echet derivative w.r.t. $\theta$ on both sides of \eqref{eqn_appendix_fix_point_E_f}, we derive
	\begin{equation} \label{eqn_appendix_fix_point_DE_f}
		Df^\mu_{\theta}[h] = D_1\EM_\mu(f^{\mu}_{\theta}, \theta)\big[Df^\mu_{\theta}[h]\big] + D_2\EM_\mu(f^\mu_{\theta}, \theta)[h].
	\end{equation}
	Using the triangle inequality, we bound
	\begin{equation} \label{eqn_proof_main}
	\begin{aligned}
	& \|Df^\mu_{\theta}[h] - Df^{\bar{\mu}}_{\theta}[h]\|_\infty\\
	\leq\quad & \|D_1\EM_\mu(f^{\mu}_{\theta}, \theta)\big[Df^\mu_{\theta}[h]\big]  - D_1\EM_{\bar{\mu}}(f_{\theta, \bar{\mu}}, \theta)\big[ Df^{\bar{\mu}}_{\theta}[h]\big]\|_\infty \\
	\quad & +  \|D_2\EM_\mu(f^\mu_{\theta}, \theta)[h] - D_2\EM_{\bar{\mu}}(f^{\bar{\mu}}_{\theta}, \theta)[h]\|_\infty\\
	\leq\quad & \|D_1\EM_{\mu}(f^{\mu}_{\theta}, \theta)\big[Df^\mu_{\theta}[h]\big]  - D_1\EM_{\mu}(f^{\mu}_{\theta}, \theta) \big[Df^{\bar{\mu}}_{\theta}[h]\big]\|_\infty && \textcircled{1}\\ 
	\quad&+ \|D_1\EM_{\mu}(f_{\theta, {\mu}}, \theta) \big[Df^{\bar{\mu}}_{\theta}[h]\big]  - D_1\EM_{\mu}(f_{\theta, \bar{\mu}}, \theta)\big[ Df^{\bar{\mu}}_{\theta}[h]\big]\|_\infty && \textcircled{2}\\
	\quad& + \|D_1\EM_{\mu}(f_{\theta, \bar{\mu}}, \theta) \big[Df^{\bar{\mu}}_{\theta}[h]\big]  - D_1\EM_{\bar{\mu}}(f_{\theta, \bar{\mu}}, \theta) \big[Df^{\bar{\mu}}_{\theta}[h]\big]\|_\infty && \textcircled{3}\\
	\quad&+ \|D_2\EM_{\mu}(f^\mu_{\theta}, \theta)[h] - D_2\EM_{\bar{\mu}}(f^{\bar{\mu}}_{\theta}, \theta)[h]\|_\infty. && \textcircled{4}
	\end{aligned}
	\end{equation}
	The following subsections analyze $\textcircled{1}$ to $\textcircled{4}$ individually. In summary, we have
	\begin{equation}
		\textcircled{1} \leq \frac{2}{3} \|Df^\mu_{\theta}[h] - Df^{\bar{\mu}}_{\theta}[h]\|_\infty,
	\end{equation}
	and \textcircled{2}, \textcircled{3}, \textcircled{4} are all of order $\OM(d_{bl}(\mu, \bar{\mu})\cdot\|h\|)$.
	Therefore we conclude
	\begin{equation}
		\frac{1}{3}\|Df^\mu_{\theta}[h] - Df^{\bar{\mu}}_{\theta}[h]\|_\infty = \OM(d_{bl}(\mu, \bar{\mu})\cdot\|h\|) \Rightarrow \|Df^\mu_{\theta} - Df^{\bar{\mu}}_{\theta}\|_{op} = \OM(d_{bl}(\mu, \bar{\mu})).
	\end{equation}
	
	\subsubsection{Bounding \textcircled{1}}
	From the linearity of $D_1\EM_\mu(f^{\mu}_{\theta}, \theta)$ and \eqref{eqn_contraction_of_E}, we bound
	\begin{align*}
		\textcircled{1}	=\ & \|D_1\EM_\mu(f^{\mu}_{\theta}, \theta)\big[Df^\mu_{\theta}[h] - Df^{\bar{\mu}}_{\theta}[h]\big]\|_\infty \\
		\leq\ & \|D_1\EM_\mu(f^{\mu}_{\theta}, \theta)\|_{op}\|Df^\mu_{\theta}[h] - Df^{\bar{\mu}}_{\theta}[h]\|_\infty \leq \frac{2}{3} \|Df^\mu_{\theta}[h] - Df^{\bar{\mu}}_{\theta}[h]\|_\infty.
	\end{align*}
	
	\subsubsection{Bounding \textcircled{2}}
	From Lemma \ref{lemma_appendix_lipschitz_continuity_D_1_B}, we know that $D_1\BM_\mu(f, \theta)$ is Lipschitz continuous w.r.t. its first variable:
	\begin{equation} \label{eqn_appendix_bound_2}
		\|D_1\BM_\mu(f, \theta)  - D_1\BM_{\mu}(f', \theta)\|_{op} = \OM(\|f - f'\|_\infty).
	\end{equation}
	
	Recall that $\EM_\mu(f, \theta) = \BM_\mu^l(f, \theta)$.
	Using the chain rule of the Fr\'echet derivative, we have
	\begin{equation}
		D_1\EM_{\mu}(f, \theta) = D_1\BM_\mu^l(f, \theta) = D_1\BM_{\mu}\big(\BM_{\mu}^{l-1}(f, \theta), \theta\big) \circ D_1\BM_{\mu}^{l-1}(f, \theta).
	\end{equation}
	Consequently, we can bound \textcircled{2} in a recursive way: for any two functions $f, f'\in\CM(\XM)$
	\begin{align*}
		&\|D_1\BM_{\mu}^l(f, \theta) - D_1\BM_{\mu}^l(f', \theta)\|_{op}\\
		=&\ \|D_1\BM_{\mu}\big(\BM_{\mu}^{l-1}(f, \theta), \theta\big) \circ D_1\BM_{\mu}^{l-1}(f, \theta) - D_1\BM_{\mu}\big(\BM_{\mu}^{l-1}(f', \theta), \theta\big) \circ D_1\BM_{\mu}^{l-1}(f', \theta)\|_{op}\\
		\leq&\ \|D_1\BM_{\mu}\big(\BM_{\mu}^{l-1}(f, \theta), \theta\big)\circ \big(D_1\BM_{\mu}^{l-1}(f, \theta) - D_1\BM_{\mu}^{l-1}(f', \theta)\big)\|_{op} \\
		&\ +\|\bigg(D_1\BM_{\mu}\big(\BM_{\mu}^{l-1}(f, \theta), \theta\big) - D_1\BM_{\mu}\big(\BM_{\mu}^{l-1}(f', \theta), \theta\big)\bigg)\circ D_1\BM_{\mu}^{l-1}(f', \theta)\|_{op} \\
		\leq&\ \|D_1\BM_{\mu}\big(\BM_{\mu}^{l-1}(f, \theta), \theta\big)\|_{op} \|D_1\BM_{\mu}^{l-1}(f, \theta) - D_1\BM_{\mu}^{l-1}(f', \theta)\|_\infty\\
		&\ + \OM(\|\BM_{\mu}^{l-1}(f, \theta) - \BM_{\mu}^{l-1}(f', \theta)\|_\infty\cdot\|D_1\BM_{\mu}^{l-1}(f', \theta)\|_{op})\\
		=&\ \OM(\|f - f'\|_\infty) + \|D_1\BM_{\mu}^{l-1}(f, \theta) - D_1\BM_{\mu}^{l-1}(f', \theta)\|_\infty,
	\end{align*}
	where in the first inequality we use the triangle inequality, in the second inequality, we use the definition of $\|\cdot\|_{op}$ and \eqref{eqn_appendix_bound_2}, and in the last equality we use \eqref{eqn_appendix_bound_2} and the fact that $\BM^k$ is Lipschitz continuous with respect its first argument for any finite $k$ (see Lemma \ref{appendix_lemma_property_3}).
	Besides, since $f^\mu_{\theta}$ is continuous with respect to $\mu$ (see Lemma \ref{theorem_continuity_f}), we have 
	\begin{equation}
		\|D_1\BM^l(f^{\mu}_{\theta}, \theta) - D_1\BM^l(f^{\bar \mu}_{\theta}, \theta)\|_{op} = \OM(d_{bl}(\mu, \bar \mu)).
	\end{equation}
	
	We then show that $\|Df^{\bar{\mu}}_{\theta}[h]\|_\infty = \OM(\|h\|_\infty)$: Using \eqref{eqn_appendix_fix_point_DE_f}, we have that
	\begin{equation*}
		\|Df^{\bar{\mu}}_{\theta}[h]\|_\infty \leq \frac{2}{3} \|Df^{\bar{\mu}}_{\theta}[h]\|_\infty + \|D_2\EM_\mu(f^\mu_{\theta}, \theta)[h]\|_\infty \Rightarrow \|Df^{\bar{\mu}}_{\theta}[h]\|_\infty \leq 3\|D_2\EM_\mu(f^\mu_{\theta}, \theta)\|_{op}\|[h]\|_\infty.
	\end{equation*}
	Lemma \ref{appendix_lemma_property_4} shows that $\|D_2\EM_\mu(f^\mu_{\theta}, \theta)\|_{op}$ is bounded and therefore we have
	\begin{equation} \label{eqn_Df_h_bounded_by_h}
		\|Df^{\bar{\mu}}_{\theta}[h]\|_\infty = \OM(\|h\|_\infty).
	\end{equation}
	
	Combining the above results, we obtain
	\begin{equation*}
		\textcircled{2} \leq \|D_1\BM^l(f^{\mu}_{\theta}, \theta) - D_1\BM^l(f^{\bar \mu}_{\theta}, \theta)\|_{op} \|Df^{\bar{\mu}}_{\theta}[h]\|_\infty = \OM(d_{bl}(\mu, \bar \mu)\cdot\|h\|_\infty).
	\end{equation*}
	\subsubsection{Bounding \textcircled{3}}
	Denote $\omega_y(x) = \exp(-\frac{c(x, y)}{\gamma})\exp(\bar f(x)/\gamma)$.
	Assume that $\|\bar f\|_\infty\leq M_c$ and $\|\nabla \bar f\|_{2,\infty}\leq G_f$. Then we have for any $y\in\XM$, 
	\begin{equation}\label{eqn_appendix_omega_y_bl_norm}
		\|\omega_y\|_\infty \leq \exp(M_c/\gamma), \quad \|\nabla \omega_y\|_{2,\infty}\leq\exp(M_c/\gamma)(G_c + G_f)/\gamma.
	\end{equation}
	Therefore, $\|\omega_y\|_{bl} = \max\{\exp(M_c/\gamma), \exp(M_c/\gamma)(G_c + G_f)/\gamma\}$ is bounded (recall the definition of bounded Lipschitz norm in Theorem \ref{theorem_consistency}).
	Besides, for any $y\in\XM$, $\omega_y(x)$ is positive and bounded away from zero
	\begin{equation}\label{eqn_appendix_omega_y_strictly_positive}
		\omega_y(x) \geq \exp(-2M_c/\gamma).
	\end{equation}
	
	For a fixed measure $\kappa$ and $g\in\CM(\XM)$, we compute that 
	\begin{equation} \label{eqn_appendix_bound_3_1}
	D_1\AM(\bar f, \kappa)[g] = \frac{\int_\XM\omega_y(x)g(x)\dB\kappa(x)}{\int_\XM\omega_y(x)\dB\kappa(x)}.
	\end{equation}
	This expression allows us to bound for two measures $\kappa$ and $\kappa'$
	\begin{align*}
		&\ \|\big(D_1\AM(\bar f, \kappa) - D_1\AM(\bar f, \kappa')\big)[g]\|_\infty = \|\frac{\int_\XM \omega_y(x)g(x)\dB\kappa(x)}{\int_\XM \omega_y(x)\dB\kappa(x)} - \frac{\int_\XM \omega_y(x)g(x)\dB\kappa'(x)}{\int_\XM \omega_y(x)\dB\kappa'(x)}\|_\infty\\
		\leq&\ \|\frac{\int_\XM \omega_y(x)g(x)\dB\kappa(x)}{\int_\XM \omega_y(x)\dB\kappa(x)} - \frac{\int_\XM \omega_y(x)g(x)\dB\kappa(x)}{\int_\XM \omega_y(x)\dB\kappa'(x)}\|_\infty  + \|\frac{\int_\XM \omega_y(x)g(x)\dB\kappa(x)}{\int_\XM \omega_y(x)\dB\kappa'(x)} - \frac{\int_\XM \omega_y(x)g(x)\dB\kappa'(x)}{\int_\XM \omega_y(x)\dB\kappa'(x)}\|_\infty.
	\end{align*}
	We now bound these two terms individually.
	For the first term, we have
	\begin{align*}
		&\ \|\frac{\int_\XM \omega_y(x)g(x)\dB\kappa(x)}{\int_\XM \omega_y(x)\dB\kappa(x)} - \frac{\int_\XM \omega_y(x)g(x)\dB\kappa(x)}{\int_\XM \omega_y(x)\dB\kappa'(x)}\|_\infty\\
		\leq&\ \|\int_\XM \omega_y(x)g(x)\dB\kappa(x)\|_\infty\|\frac{\int_\XM \omega_y(x)\left[\dB\kappa(x) - \dB\kappa'(x)\right]}{\int_\XM \omega_y(x)\dB\kappa(x)\int_\XM \omega_y(x)\dB\kappa'(x)}\|_\infty\\
		\leq&\ \|\omega_y\|_\infty\cdot\|g\|_\infty\cdot\|\omega_y(x)\|_{bl}\cdot d_{bl}(\kappa, \kappa')\cdot\exp(4M_c/\gamma) = \OM(\|g\|_\infty\cdot d_{bl}(\kappa, \kappa')),
	\end{align*}
	where we use \eqref{eqn_appendix_omega_y_bl_norm} and \eqref{eqn_appendix_omega_y_strictly_positive} in the last equality.
	For the second term, we bound
	\begin{align*}
		\|\frac{\int_\XM \omega_y(x)g(x)\dB\kappa(x)}{\int_\XM \omega_y(x)\dB\kappa(x)} - \frac{\int_\XM \omega_y(x)g(x)\dB\kappa'(x)}{\int_\XM \omega_y(x)\dB\kappa(x)}\|_\infty
		\leq\ \|\frac{\int_\XM \omega_y(x)g(x)[\dB\kappa(x) - \dB\kappa'(x)]}{\int_\XM \omega_y(x)\dB\kappa(x)} \|_\infty \\
		\leq \exp(M_c/\gamma)\cdot\|\omega_y(x)\|_{bl}\cdot\|g\|_{bl}\cdot d_{bl}(\kappa, \kappa') = \OM(\|g\|_{bl}\cdot d_{bl}(\kappa, \kappa')). 
	\end{align*}
	Combining the above inequalities, we have
	\begin{equation} \label{eqn_proof_bound_3}
		\|\big(D_1\AM(\bar f, \kappa) - D_1\AM(\bar f, \kappa')\big)[g]\|_\infty = \OM(\|g\|_{bl}\cdot d_{bl}(\kappa, \kappa')).
	\end{equation}

	Denote $\alpha = {T_\theta}_\sharp\mu$ and $\bar \alpha = {T_\theta}_\sharp \bar\mu$.
	From the chain rule of the Fr\'echet derivative, we compute
	\begin{align*}
		&\ \|\big(D_1\BM_{\mu}(f, \theta) - D_1\BM_{\bar \mu}(f, \theta)\big)[g]\|_\infty \\
		=&\ \big\|\bigg(D_1\AM\big(\AM(f, \alpha), \beta_{\mu}\big)\circ D_1\AM(f, \alpha) - D_1\AM\big(\AM(f, \bar \alpha), \beta_{\bar\mu}\big)\circ D_1\AM(f, \bar \alpha)\bigg) [g]\big\|_\infty\\
		\leq&\ \big\|D_1\AM\big(\AM(f, \alpha), \beta_{\mu}\big)\big[\big( D_1\AM(f, \alpha)  -  D_1\AM(f, \bar \alpha)\big) [g]\big]\big\|_\infty \\
		&+ \big\|\bigg(D_1\AM\big(\AM(f, \alpha), \beta_{\mu}\big) - D_1\AM\big(\AM(f, \alpha), \beta_{\bar\mu}\big)\bigg) \big[D_1\AM(f, \bar \alpha) [g]\big]\big\|_\infty\\
		&+ \big\|\bigg(D_1\AM\big(\AM(f, \alpha), \beta_{\bar \mu}\big) - D_1\AM\big(\AM(f, \bar \alpha), \beta_{\bar\mu}\big)\bigg) \big[D_1\AM(f, \bar \alpha) [g]\big]\big\|_\infty.
	\end{align*}
	We now bound these three terms one by one.\\
	For the first term, use \eqref{eqn_contraction_of_E} to derive
	\begin{align*}
		\big\|D_1\AM\big(\AM(f, \alpha), \beta_{\mu}\big)\big[\big( D_1\AM(f, \alpha)  -  D_1\AM(f, \bar \alpha)\big) [g]\big]\big\|_\infty \\
		\leq \|D_1\AM(f, \alpha) [g] - D_1\AM(f, \bar \alpha) [g]\|_\infty &= \OM(\|g\|_{bl}\cdot d_{bl}(\alpha, \bar{\alpha})),
	\end{align*}
	where we use {$\|D_1\AM\big(\AM(f, \alpha), \beta_\mu\big)\|_{op} \leq 1$} \eqref{eqn_proof_bound_0} and \eqref{eqn_proof_bound_3} in the second equality.
	
	Combining the above result with \eqref{eqn_appendix_d_bl_alpha_d_bl_mu} gives
	\begin{equation*}
		\big\|D_1\AM\big(\AM(f, \alpha), \beta_{\mu}\big)\big[\big( D_1\AM(f, \alpha)  -  D_1\AM(f, \bar \alpha)\big) [g]\big]\big\|_\infty = \OM(\|g\|_{bl}\cdot d_{bl}(\mu, \bar\mu)).
	\end{equation*}
	For the second term, use \eqref{eqn_proof_bound_3} to derive
	\begin{align*}
		\big\|\bigg(D_1\AM\big(\AM(f, \alpha), \beta_{\mu}\big) - D_1\AM\big(\AM(f, \alpha), \beta_{\bar\mu}\big)\bigg) \big[D_1\AM(f, \bar \alpha) [g]\big]\big\|_\infty \\
		= \OM(\|D_1\AM(f, \bar \alpha) [g]\|_{bl}\cdot d_{bl}(\beta_{\mu}, \beta_{\bar \mu})).
	\end{align*}
	We now bound $\|D_1\AM(f, \bar \alpha) [g]\|_{bl}$.
	From \eqref{eqn_proof_bound_0}, we have that $\|D_1\AM(f, \bar \alpha) [g]\|_{\infty}\leq \|g\|_\infty$.
	Besides, note that $D_1\AM(f, \bar \alpha)[g]$ is a function mapping from $\XM$ to $\RBB$ and recall the expression of $D_1\AM(f, \bar \alpha)[g]$ in \eqref{eqn_appendix_bound_3_1}.
	To show that $D_1\AM(f, \bar \alpha)[g](y)$ is Lipschitz continuous w.r.t. $y$, we use the similar argument as \eqref{eqn_Lipschitz_continuous_fraction_of_functions}:
	Under Assumption \ref{ass_bounded_c} and assume that $\|f\|_\infty\leq M_c$,
	the numerator and denominator of \eqref{eqn_Lipschitz_continuous_fraction_of_functions} are both Lipschitz continuous w.r.t. $y$ and bounded; the denominator is positive and bounded away from zero.
	Consequently, we can bound for any $y\in\XM$
	\begin{equation} \label{eqn_appendix_D1AMg_bound}
		{\|\nabla_y D_1\AM(f, \bar \alpha) [g]	(y)\| \leq 2\exp(4M_c/\gamma)\|g\|_\infty\cdot G_c,}
	\end{equation}
	and therefore
	\begin{equation*}
		\big\|\bigg(D_1\AM\big(\AM(f, \alpha), \beta_{\mu}\big) - D_1\AM\big(\AM(f, \alpha), \beta_{\bar\mu}\big)\bigg) \big[D_1\AM(f, \bar \alpha) [g]\big]\big\|_\infty = \OM(\|g\|_\infty\cdot d_{bl}(\beta_{\mu}, \beta_{\bar \mu})).
	\end{equation*}
	For the third term, first note that we can use \eqref{eqn_appendix_d_bl_alpha_d_bl_mu} and the mean value theorem to bound 
	{\begin{equation} \label{eqn_appendix_AM_continuous_mu}
	\|\AM(f, \alpha) - \AM(f, \bar \alpha)\|_\infty = \OM(\max_{y\in\XM}\|\omega_y\|_{bl}\cdot d_{bl}(\alpha, \bar \alpha)) = \OM(d_{bl}(\mu, \bar{\mu})).
	\end{equation}}
	Hence, we use Lemma \ref{lemma_appendix_bound_2_lemma} to derive
	\begin{align*}
		\big\|\bigg(D_1\AM\big(\AM(f, \alpha), \beta_{\bar \mu}\big) - D_1\AM\big(\AM(f, \bar \alpha), \beta_{\bar\mu}\big)\bigg) \big[D_1\AM(f, \bar \alpha) [g]\big]\big\|_\infty\\
		 = \OM(\|\AM(f, \alpha) - \AM(f, \bar \alpha)\|_\infty\cdot \|D_1\AM(f, \bar \alpha) [g]\|_\infty) = \OM(\|g\|_\infty\cdot d_{bl}(\mu, \bar{\mu})),
	\end{align*}
	where we use {\eqref{eqn_appendix_AM_continuous_mu}} and the fact that $\|D_1\AM(f, \bar \alpha)\|_{op}$ is bounded in the last equality.\\
	Combing the above three results, we have 
	\begin{equation} \label{eqn_appendix_bound_3}
		\|\big(D_1\BM_{\mu}(f, \theta) - D_1\BM_{\bar \mu}(f, \theta)\big)[g]\|_\infty = \OM(\|g\|_{bl}\cdot d_{bl}(\mu, \bar{\mu})).
	\end{equation}

	Recall that $\EM_\mu(f, \theta) = \BM_\mu^l(f, \theta)$.
	Using the chain rule of the Fr\'echet derivative, we have
	\begin{equation}
D_1\EM_{\mu}(f, \theta) = D_1\BM_\mu^l(f, \theta) = D_1\BM_{\mu}\big(\BM_{\mu}^{l-1}(f, \theta), \theta\big) \circ D_1\BM_{\mu}^{l-1}(f, \theta).
	\end{equation}
	Denote $g = Df^{\bar{\mu}}_{\theta}[h]$. We can bound \textcircled{3} in the following way:
	\begin{align*}
	\textcircled{3}=&\ \|D_1\BM_{\mu}\big(\BM_{\mu}^{l-1}(f, \theta), \theta\big)\big[D_1\BM^{l-1}_{\mu}(f, \theta)[g]\big] - D_1\BM_{\bar\mu}\big(\BM_{\bar \mu}^{l-1}(f, \theta), \theta\big)[D_1\BM^{l-1}_{\bar\mu}(f, \theta)[g]]\|_\infty\\
	\leq&\ \|D_1\BM_{\mu}\big(\BM_{\mu}^{l-1}(f, \theta), \theta\big)\big[\big(D_1\BM^{l-1}_{\mu}(f, \theta) - D_1\BM^{l-1}_{\bar\mu}(f, \theta)\big)[g]\big]\|_\infty \\
	&\ +\|\bigg(D_1\BM_{\mu}\big(\BM_{\mu}^{l-1}(f, \theta), \theta\big) - D_1\BM_{\mu}\big(\BM_{\bar \mu}^{l-1}(f, \theta), \theta\big)\bigg)\big[D_1\BM^{l-1}_{\bar\mu}(f, \theta)[g]\big]\|_\infty \\
	&\ +\|\bigg(D_1\BM_{\mu}\big(\BM_{\bar \mu}^{l-1}(f, \theta), \theta\big) - D_1\BM_{\bar \mu}\big(\BM_{\bar \mu}^{l-1}(f, \theta), \theta\big)\bigg) \big[D_1\BM^{l-1}_{\bar\mu}(f, \theta)[g]\big]\|_\infty\\
	\leq&\ \|D_1\BM_{\mu}\big(\BM_{\mu}^{l-1}(f, \theta), \theta\big)\|_{op} \|\big(D_1\BM^{l-1}_{\mu}(f, \theta) - D_1\BM^{l-1}_{\bar\mu}(f, \theta)\big)[g]\|_\infty && \#1\\
	&\ + \OM(\|\BM_{\mu}^{l-1}(f, \theta) - \BM_{\bar \mu}^{l-1}(f, \theta)\|_\infty\cdot\|D_1\BM_{\bar \mu}^{l-1}(f, \theta)[g]\|_\infty) && \#2\\
	&\ + \OM(\|D_1\BM^{l-1}_{\bar\mu}(f, \theta)[g]\|_{bl}\cdot d_{bl}(\mu, \bar\mu)), && \#3
	\end{align*}
	where in the first inequality we use the triangle inequality, in the second inequality we use the definition of $\|\cdot\|_{op}$, \eqref{eqn_appendix_bound_2} and \eqref{eqn_appendix_bound_3}.
	We now analyze the R.H.S. of the above inequality one by one.
	For the first term, use $\|D_1\BM_{\mu}\big(\BM_{\mu}^{l-1}(f, \theta), \theta\big)\|_{op}\leq 1$ and then use \eqref{eqn_appendix_bound_3}. We have
	\begin{equation*}
		\#1 \leq \|\big(D_1\BM_{\mu}(f, \theta) - D_1\BM_{\bar\mu}(f, \theta)\big)[g]\|_\infty = \OM(\|g\|_{bl}\cdot d_{bl}(\mu, \bar{\mu})).
	\end{equation*}
	For the second term, note that $\BM_\mu^k$ is the composition of the terms $\AM(f, \alpha)$ and $\AM(f, \beta_\mu)$. Using a similar argument like \eqref{eqn_appendix_AM_continuous_mu}, for any finite $k$, we have
	\begin{equation*}
		\|\BM_{\mu}^{l-1}(f, \theta) - \BM_{\bar \mu}^{l-1}(f, \theta)\|_\infty = \OM(d_{bl}(\mu, \bar{\mu})).
	\end{equation*}
	Together with the fact that  $\|D_1\BM(f, \theta)\|_{op}\leq 1$, we have
	\begin{equation*}
		\#2 = \OM(\|g\|_{\infty}\cdot d_{bl}(\mu, \bar{\mu})).
	\end{equation*}
	Finally, for the third term, note that $\BM_\mu$ is the composition of the terms $\AM(f, \alpha)$ and $\AM(f, \beta_\mu)$. Using a similar argument like \eqref{eqn_appendix_D1AMg_bound} to bound
	\begin{equation*}
		\#3 = \OM(\|g\|_{\infty}\cdot d_{bl}(\mu, \bar{\mu})).
	\end{equation*}
	Combining these three results, we have
	\begin{equation} \label{eqn_appendix_bound_3_last}
		\textcircled{3} = \|\big(D_1\BM_{\mu}^l(f, \theta) - D_1\BM_{\bar \mu}^l(f, \theta)\big)[g] \|_\infty = \OM(\|g\|_{bl}\cdot d_{bl}(\mu, \bar{\mu})).
	\end{equation}
	
	We now bound $\|Df^{\bar{\mu}}_{\theta}[h]\|_{bl}$ ($g = Df^{\bar{\mu}}_{\theta}[h]$). 
	From the fixed point definition of the Sinkhorn potential in \eqref{eqn_appendix_fix_point_B_f}, we can compute the Fr\'echet derivative $Df_\theta^\mu$ by
	\begin{equation}
		Df_\theta^\mu = D_1\AM\big(\AM(f_\theta^\mu, \alpha_\theta), \beta_\mu\big)\circ D_1\AM(f_\theta^\mu, \alpha_\theta)\circ Df_\theta^\mu + D_1\AM\big(\AM(f_\theta^\mu, \alpha_\theta), \beta_\mu\big) \circ D_2 \tilde \AM(f_\theta^\mu, \theta),
	\end{equation}
	where we recall $\tilde{\AM}(f, \theta) \defi \AM(f, \alpha_\theta)$.
	For any direction $h\in\RBB^d$ and any $y\in\XM$, $Df_\theta^\mu[h]$ is a function with its gradient bounded by
	\begin{align*}
		\|\nabla_y Df_\theta^\mu[h](y)\| \leq \|\nabla_y \bigg(D_1\AM\big(\AM(f_\theta^\mu, \alpha_\theta), \beta_\mu\big)\bigg[D_1\AM(f_\theta^\mu, \alpha_\theta)\big[Df_\theta^\mu[h]\big]\bigg]\bigg)(y)\| &&\#1\\
		+ \|\nabla_y \left(D_1\AM\big(\AM(f_\theta^\mu, \alpha_\theta), \beta_\mu\big) \big[D_2 \tilde \AM(f_\theta^\mu, \theta)[h]\big]\right)(y)\|. &&\#2
	\end{align*}
	We now bound the R.H.S. individually:\\
	For $\#1$, take $\bar f = \AM[f, \alpha_\theta]$, $\kappa = \beta_\mu$ and $g = D_1\AM(f_\theta^\mu, \alpha_\theta)\big[Df_\theta^\mu[h]\big]$ in \eqref{eqn_appendix_bound_3_1}.
	Using \eqref{eqn_appendix_D1AMg_bound} and \eqref{eqn_Df_h_bounded_by_h}, we have
	\begin{equation}
		\#1 = \OM(\|g\|_\infty) = \OM(\|Df_\theta^\mu[h]\|_\infty) = \OM(\|h\|).
	\end{equation}
	For $\#2$, take $\bar f = \AM[f, \alpha_\theta]$, $\kappa = \beta_\mu$ and $g = D_2 \tilde \AM(f_\theta^\mu, \theta)[h]$ in \eqref{eqn_appendix_bound_3_1}. Using \eqref{eqn_appendix_D1AMg_bound} and \eqref{eqn_appendix_D2A_bounded}, we have
	\begin{equation}
		\#2 = \OM(\|g\|_\infty) = \OM(\|D_2 \tilde \AM(f_\theta^\mu, \theta)[h]\|_\infty) = \OM(\|h\|).
	\end{equation}
	Combining these two bounds, we have 
	\begin{equation} \label{eqn_appendix_Df_h_bl}
		\|Df_\theta^\mu[h]\|_{bl} = \OM(\|h\|).
	\end{equation}
	By plugging the above result to \eqref{eqn_appendix_bound_3_last}, we bound
	\begin{equation}
	\textcircled{3} = \|\big(D_1\BM_{\mu}^l(f, \theta) - D_1\BM_{\bar \mu}^l(f, \theta)\big)[g] \|_\infty = \OM(d_{bl}(\mu, \bar\mu)\cdot\|h\|).
	\end{equation}

	\subsubsection{Bounding \textcircled{4}}
	We have from the triangle inequality
	\begin{equation} \label{eqn_appendix_bounding_4}
	\textcircled{4} \leq \|D_2\EM_\mu(f^\mu_{\theta}, \theta)[h] - D_2\EM_{\bar{\mu}}(f^\mu_{\theta}, \theta)[h]\|_\infty + \|D_2\EM_{\bar \mu}(f^{\mu}_{\theta}, \theta)[h] - D_2\EM_{\bar{\mu}}(f^{\bar{\mu}}_{\theta}, \theta)[h]\|_\infty.
	\end{equation}
	We analyze these two terms on the R.H.S..
	
	For the first term of \eqref{eqn_appendix_bounding_4}, use the chain rule of Fr\'echet derivative to compute
	\begin{equation}
	D_2\EM_\mu(f, \theta)[h] = D_1\BM_{\mu}\big(\BM_{\mu}^{l-1}(f, \theta), \theta\big)\big[D_2\BM^{l-1}_\mu(f, \theta)[h]\big] + D_2\BM_\mu\big(\BM_{\mu}^{l-1}(f, \theta), \theta\big)[h].
	\end{equation}
	Consequently, we can bound 
	\begin{align*}
	&\|\big(D_2\EM_\mu(f, \theta) - D_2\EM_{\bar \mu}(f, \theta)\big)[h]\|_\infty \\
	\leq &\| D_1\BM_{\mu}\big(\BM_{\mu}^{l-1}(f, \theta), \theta\big) \big[D_2\BM^{l-1}_\mu(f, \theta)[h]\big] - D_1\BM_{\bar\mu}\big(\BM_{\bar \mu}^{l-1}(f, \theta), \theta\big) \big[D_2\BM^{l-1}_{\bar\mu}(f, \theta)[h]\big] \|_\infty && \#1 \\
	&+ \|D_2\BM_{\mu}\big(\BM_{\mu}^{l-1}(f, \theta), \theta\big)[h] - D_2\BM_{\bar\mu}\big(\BM_{\bar \mu}^{l-1}(f, \theta), \theta\big)[h]\|_\infty. && \#2
	\end{align*}
	We analyze \#1 and \#2 individually.
	\paragraph{Bounding \#1.} We first note that $\AM(f, \alpha)$ is Lipschitz continuous w.r.t. $\alpha$ (see also \eqref{eqn_appendix_AM_continuous_mu}):
	\begin{equation}
	\|\AM(f, \alpha) - \AM(f, \alpha')\|_\infty \leq \exp(2M_c/\gamma)\cdot \|\omega_y\|_{bl} \cdot d_{bl}(\alpha, \alpha') = \OM(d_{bl}(\alpha, \alpha')),
	\end{equation}
	where in the equality we use \eqref{eqn_appendix_omega_y_bl_norm}.
	As $\BM_\mu^k$ is the composition of $\AM$, it is Lipschitz continuous with respect to $\mu$ for finite $k$.
	Note that the boundedness of $\|f\|_\infty$ and $\|\nabla f\|_\infty$ remains valid after the operator $\BM$ (Lemma \ref{lemma_appendix_bounded_function_sequence_from_sinkhorn_knopp} and (i) of Lemma \eqref{lemma_appendix_sinkhorn_potential_boundedness}).
	We then bound
	\begin{align*}
	\#1 \leq&\ \| D_1\BM_{\mu}\big(\BM_{\mu}^{l-1}(f, \theta), \theta\big) \big[\big(D_2\BM^{l-1}_\mu(f, \theta) -  D_2\BM^{l-1}_{\bar\mu}(f, \theta)\big)[h]\big] \|_\infty \\
	&\ + \|\bigg(D_1\BM_{\mu}\big(\BM_{\mu}^{l-1}(f, \theta), \theta\big) - D_1\BM_{\mu}\big(\BM_{\bar \mu}^{l-1}(f, \theta), \theta\big)\bigg) \big[D_2\BM^{l-1}_{\bar\mu}(f, \theta)[h]\big]\|_\infty \\
	&\ + \|\bigg(D_1\BM_{\mu}\big(\BM_{\bar \mu}^{l-1}(f, \theta), \theta\big) - D_1\BM_{\bar \mu}\big(\BM_{\bar \mu}^{l-1}(f, \theta), \theta\big)\bigg) \big[D_2\BM^{l-1}_{\bar\mu}(f, \theta)[h]\big]\|_\infty \\
	\leq&\ \|D_1\BM_{\mu}\big(\BM_{\mu}^{l-1}(f, \theta), \theta\big)\|_{op}\| D_2\BM^{l-1}_\mu(f, \theta)[h] - D_2\BM^{l-1}_{\bar\mu}(f, \theta)[h]\|_\infty \\
	&\ + \OM(\|\BM_{\mu}^{l-1}(f, \theta) - \BM_{\bar\mu}^{l-1}(f, \theta)\|_\infty\cdot\|D_2\BM^{l-1}_{\bar\mu}(f, \theta)[h]\|_\infty) \\
	&\ + \OM(d_{bl}(\mu, \bar \mu)\cdot \|D_2\BM^{l-1}_{\bar\mu}(f, \theta)[h]\|_\infty) \\
	\leq&\ \| D_2\BM^{l-1}_\mu(f, \theta)[h] - D_2\BM^{l-1}_{\bar\mu}(f, \theta)[h]\|_\infty + \OM(d_{bl}(\mu, \bar \mu)\cdot\|h\|),
	\end{align*}
	where in the second inequality we use the definition of $\|\cdot\|_{op}$, \eqref{eqn_appendix_bound_2} and \eqref{eqn_appendix_bound_3}, and in the last inequality we use the fact that $\|D_1\BM_{\mu}(f, \theta)\|_{op}\leq 1$, $\BM_{\mu}^k$ is Lipschitz continuous with respect to $\mu$ for finite $k$ (see the discussion above) and that $\|D_2\BM^{l-1}_{\bar\mu}(f, \theta)\|_{op}$ is bounded (see Lemma \ref{appendix_lemma_property_4}.
	\paragraph{Bounding \#2.} 
	To make the dependences of $\AM$ on $\theta$ and $\mu$ explicit, we denote $$\hat \AM(f, \theta, \mu) = \AM(f, {T_\theta}_\sharp\mu).$$	
	To bound the second term, we first establish that for any $k\geq 0$, $\nabla \BM_{\mu}^{k+1}(f, \theta)$ is Lipschitz continuous w.r.t. $\mu$, i.e.
	\begin{equation} \label{eqn_appendix_nabla_BM_k_continuous_mu}
	\|\nabla \BM_{\mu}^{k+1}(f, \theta) - \nabla \BM_{\bar \mu}^{k+1}(f, \theta)\|_{2, \infty} = \OM(d_{bl}(\mu, \bar \mu)),
	\end{equation}
	as follows: First note that $\nabla \hat \AM(f, \theta, \mu)$ is Lipschitz continuous w.r.t. $\mu$, i.e.
	\begin{equation} \label{eqn_appendix_nabla_AM_continuous_mu}
	\|\nabla \hat \AM(f, \theta, \mu)(y) - \nabla \hat \AM(f, \theta, \bar \mu)(y)\| = \OM(d_{bl}(\mu, \bar \mu)).
	\end{equation}
	This is because for any $y\in\XM$ (note that $\hat \AM(f, \theta, \mu)(\cdot):\XM\rightarrow\RBB$ is a function of $y$),
	\begin{align*}
	&\ \|\nabla \hat \AM(f, \theta, \mu)(y) - \nabla \hat \AM(f, \theta, \bar \mu)(y)\|\\
	=&\ \|\frac{\int_{\XM} \omega_y(x)\nabla_{1} c(y, x) \dB \alpha_\theta(x)}{\int_{\XM} \omega_y(x)\dB \alpha_\theta(x)} - \frac{\int_{\XM} \omega_y(x)\nabla_{1} c(y, x) \dB \bar \alpha_\theta(x)}{\int_{\XM} \omega_y(x)\dB \bar \alpha_\theta(x)}\|\\
	\leq&\  \|\frac{\int_{\XM} \omega_y(x)\nabla_{1} c(y, x) \big(\dB \alpha_\theta(x) - \dB \bar \alpha_\theta(x)\big)}{\int_{\XM} \omega_y(x)\dB \alpha_\theta(x)}\|\\
	&\ +\|\int_{\XM} \omega_y(x)\nabla_{1} c(y, x) \dB \bar \alpha_\theta(x)\|\cdot\|\frac{\int_{\XM} \omega_y(x)\big(\dB \alpha_\theta(x) - \dB \bar \alpha_\theta(x)\big)}{\int_{\XM} \omega_y(x)\dB \alpha_\theta(x)\int_{\XM} \omega_y(x)\dB \bar \alpha_\theta(x)}\| \\
	=&\ {\OM(d_{bl}(\mu, \bar \mu))}.
	\end{align*}
	Here in the last equality, we use the facts that $\|\omega_y(\cdot)\nabla_{1} c(y, \cdot)\|_{bl}$ and $\|\omega_y\|_{bl}$ are bounded, and $\int_{\XM} \omega_y(x)\dB \alpha_\theta(x)$ is strictly positive and bounded away from zero.
	Recall that $\BM_{\mu}(f, \theta) = \AM(\hat \AM(f, \theta, \mu), \beta_{\mu})$.
	We can then prove \eqref{eqn_appendix_nabla_BM_k_continuous_mu} by bounding
	\begin{align*}
	&\ \|\nabla \BM_{\mu}^{k+1}(f, \theta) - \nabla \BM_{\bar \mu}^{k+1}(f, \theta)\| \\
	=&\  \|\nabla \AM(\hat \AM(\BM^{k}_{\mu}(f, \theta), \theta, \mu), \beta_\mu) - \nabla \AM(\hat \AM(\BM^{k}_{\bar \mu}(f, \theta), \theta, \bar \mu), \bar \beta_\mu)\|\\
	\leq&\  \|\nabla  \AM(\hat \AM(\BM^{k}_{\mu}(f, \theta), \theta, \mu), \beta_\mu) - \nabla  \AM(\hat \AM(\BM^{k}_{\mu}(f, \theta), \theta, \mu), \bar \beta_\mu)\| && \&1\\
	&\ + \|\nabla  \AM(\hat \AM(\BM^{k}_{\mu}(f, \theta), \theta, \mu), \bar \beta_\mu) - \nabla  \AM(\hat \AM(\BM^{k}_{\mu}(f, \theta), \theta, \bar \mu), \bar \beta_\mu)\| && \&2\\
	&\ + \|\nabla  \AM(\hat \AM(\BM^{k}_{\mu}(f, \theta), \theta, \bar \mu), \bar \beta_\mu) - \nabla  \AM(\hat \AM(\BM^{k}_{\bar \mu}(f, \theta), \theta, \bar \mu), \bar \beta_\mu)\| && \&3\\
	=&\  \OM(d_{bl}(\mu, \bar \mu)) 
	\end{align*}
	Here we bound \&1 using \eqref{eqn_appendix_nabla_AM_continuous_mu}, the Lipschitz continuity of $\nabla {\AM}$ w.r.t. its second variable; we bound \&2 using the Lipschitz continuity of $\nabla \hat \AM$ w.r.t. its first variable and \eqref{eqn_appendix_AM_continuous_mu}, the Lipschitz continuity of $\hat \AM$ w.r.t. $\mu$; we bound \&3 using \eqref{eqn_appendix_AM_continuous_mu}, the Lipschitz continuity of $\hat \AM$ w.r.t. $\mu$, and the fact that $\BM_{\mu}^k$ is the composition of the terms $\AM(f, \alpha)$ and $\AM(f, \beta_\mu)$.
	
	We then establish that $D_2\BM_\mu(f, \theta)$ is Lipschitz continuous w.r.t. $\mu$.
	\begin{assumption} \label{ass_nabla_theta_nabla_z_T_bounded}
		$\|\nabla_z[\nabla_{\theta}T_{\theta}(z)]\|_{op}$ is bounded
	\end{assumption}
	\begin{lemma}\label{lemma_appendix_bounding_4}
		Assume that $\|f\|_\infty\leq M_c$, $\|\nabla f\|_{2,\infty}\leq G_f$, $\|\nabla^2 f\|_{op,\infty}\leq L_f$
		Under Assumptions \ref{ass_Lipschitz_continuity_T}, \ref{ass_nabla_theta_nabla_z_T_bounded} and \ref{ass_bounded_c}, we have
		\begin{equation} \label{eqn_appendix_bound_4_2}
		\|D_2\BM_\mu(f, \theta) - D_2\BM_{\bar \mu}(f, \theta)\|_{op} = \OM(d_{bl}(\mu, \bar{\mu})).
		\end{equation}
	\end{lemma}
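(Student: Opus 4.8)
The plan is to expand $D_2\BM_\mu(f,\theta)$ by the chain rule and reduce the statement to Lipschitz-in-$\mu$ estimates for the two factors, in the same spirit as \eqref{eqn_appendix_proof_composition_of_linear_operators}. Write $\BM_\mu(f,\theta)=\AM\big(\hat\AM(f,\theta,\mu),\beta_\mu\big)$ with $\hat\AM(f,\theta,\mu)\defi\AM(f,{T_\theta}_\sharp\mu)$, where in general $\beta_\mu$ may itself depend on $\mu$ (as in $\OTgamma(\alpha_\theta,\alpha_{\theta^t})$, for which $\beta_\mu={T_{\theta^t}}_\sharp\mu$; when $\beta$ is fixed the $\beta$-term below simply vanishes). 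The chain rule gives
\[
D_2\BM_\mu(f,\theta)=U_\mu\circ V_\mu,\qquad U_\mu\defi D_1\AM\big(\hat\AM(f,\theta,\mu),\beta_\mu\big),\quad V_\mu\defi D_2\hat\AM(f,\theta,\mu),
\]
and for any direction $h\in\RBB^d$ one has the identity
\[
\big(U_\mu\circ V_\mu-U_{\bar\mu}\circ V_{\bar\mu}\big)[h]=U_\mu\big[(V_\mu-V_{\bar\mu})[h]\big]+(U_\mu-U_{\bar\mu})\big[V_{\bar\mu}[h]\big].
\]
Hence it suffices to establish: (a) $\|U_\mu\|_{op}\le 1$; (b) $\|V_\mu[h]-V_{\bar\mu}[h]\|_\infty=\OM(\|h\|\,d_{bl}(\mu,\bar\mu))$; (c) $\|V_{\bar\mu}[h]\|_{bl}=\OM(\|h\|)$; (d) $\|(U_\mu-U_{\bar\mu})[g]\|_\infty=\OM(\|g\|_{bl}\,d_{bl}(\mu,\bar\mu))$ for every $g\in\CM(\XM)$. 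Item (a) is exactly \eqref{eqn_proof_bound_0}.

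For (b) and (c), use the explicit formula for $V_\mu$: for $y\in\XM$,
\[
\big(V_\mu[h]\big)(y)=\frac{\int_{\ZM}\omega_y(T_\theta(z))\,\big\langle[\nabla_\theta T_\theta(z)]^\top\big(-\nabla_1 c(T_\theta(z),y)+\nabla f(T_\theta(z))\big),\,h\big\rangle\,\dB\mu(z)}{\int_{\ZM}\omega_y(T_\theta(z))\,\dB\mu(z)},
\]
with $\omega_y(x)\defi\exp(-c(x,y)/\gamma)\exp(f(x)/\gamma)$, a ratio of integrals against $\mu$ whose denominator is uniformly in $[\exp(-2M_c/\gamma),\exp(M_c/\gamma)]$. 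The numerator and denominator integrands, viewed as functions of $z$, lie in a fixed bounded-Lipschitz ball: boundedness follows from $\|f\|_\infty\le M_c$ together with Assumption \ref{ass_Lipschitz_continuity_T} ($\|\nabla_\theta T_\theta\|_{op}\le G_T$), Assumption \ref{ass_bounded_infty_c_gradient} ($\|\nabla_1 c\|\le G_c$) and $\|\nabla f\|_{2,\infty}\le G_f$; Lipschitz continuity in $z$ follows by the product/chain rule from Assumption \ref{ass_nabla_theta_nabla_z_T_bounded} (controlling $\nabla_z[\nabla_\theta T_\theta(z)]$), $\|\nabla^2 f\|_{op,\infty}\le L_f$, and Assumption \ref{ass_bounded_infty_c_hessian} (controlling the variation of $\nabla_1 c$). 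The "ratio of bounded-Lipschitz integrals" estimate used in \eqref{eqn_Lipschitz_continuous_fraction_of_functions} and in the proof of Lemma \ref{theorem_continuity_gradient} then yields (b); differentiating the same ratio in $y$, exactly as in \eqref{eqn_appendix_D1AMg_bound}, gives the Lipschitz-in-$y$ bound needed for (c), the $\|\cdot\|_\infty$ part of (c) being \eqref{eqn_appendix_D2A_bounded}.

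For (d), split $U_\mu-U_{\bar\mu}$ into the change of the inner potential and the change of the outer measure. The potential part is bounded by Lemma \ref{lemma_appendix_bound_2_lemma} (Lipschitzness of $D_1\AM$ in its first argument, in operator norm) together with $\|\hat\AM(f,\theta,\mu)-\hat\AM(f,\theta,\bar\mu)\|_\infty=\OM(d_{bl}(\mu,\bar\mu))$, which is \eqref{eqn_appendix_AM_continuous_mu}. The measure part is bounded by \eqref{eqn_proof_bound_3}, which gives $\|(D_1\AM(\bar f,\beta_\mu)-D_1\AM(\bar f,\beta_{\bar\mu}))[g]\|_\infty=\OM(\|g\|_{bl}\,d_{bl}(\beta_\mu,\beta_{\bar\mu}))$, combined with $d_{bl}(\beta_\mu,\beta_{\bar\mu})=\OM(d_{bl}(\mu,\bar\mu))$ from \eqref{eqn_appendix_d_bl_alpha_d_bl_mu}. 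Feeding (a)--(d) into the displayed identity --- using (c) precisely because \eqref{eqn_proof_bound_3} measures its input in $\|\cdot\|_{bl}$ rather than $\|\cdot\|_\infty$ --- gives $\|(D_2\BM_\mu(f,\theta)-D_2\BM_{\bar\mu}(f,\theta))[h]\|_\infty=\OM(\|h\|\,d_{bl}(\mu,\bar\mu))$ for all $h$, i.e. \eqref{eqn_appendix_bound_4_2}.

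\textbf{Main obstacle.} The delicate step is (b)--(c). Unlike the earlier continuity lemmas, which tracked only the $\|\cdot\|_\infty$ and $\|\cdot\|_{2,\infty}$ norms of a scalar Sinkhorn potential, here the integrand is an $\RBB^q$-valued product of $\omega_y\circ T_\theta$, $[\nabla_\theta T_\theta]^\top$, $\nabla_1 c$ and $\nabla f$, and we must show this product has a uniformly bounded bounded-Lipschitz norm in the integration variable $z$. Differentiating the product in $z$ is exactly what forces Assumption \ref{ass_nabla_theta_nabla_z_T_bounded} into play, and it requires careful bookkeeping of which factor is controlled by which boundedness/Lipschitz assumption. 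A secondary but essential point is the $\|\cdot\|_{bl}$-versus-$\|\cdot\|_\infty$ mismatch: to feed $V_{\bar\mu}[h]$ into \eqref{eqn_proof_bound_3} one genuinely needs the stronger estimate (c), not merely the $\|\cdot\|_\infty$ bound \eqref{eqn_appendix_D2A_bounded}.
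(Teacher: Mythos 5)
Your proposal is correct and follows essentially the same route as the paper's proof: the same chain-rule factorization $D_2\BM_\mu=U_\mu\circ V_\mu$, the same three-way splitting of the difference (your (b) is the paper's bound \eqref{eqn_appendix_bound_4_1}, and your two sub-pieces of (d) are the paper's terms $\#\#2$ and $\#\#3$, controlled by \eqref{eqn_proof_bound_3} with \eqref{eqn_appendix_d_bl_alpha_d_bl_mu} and by Lemma \ref{lemma_appendix_bound_2_lemma} with \eqref{eqn_appendix_AM_continuous_mu} respectively), and the same use of Assumption \ref{ass_nabla_theta_nabla_z_T_bounded} to bound $\|\phi_y\|_{bl}$ via $\nabla_z\phi_y$. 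Your explicit flagging of the $\|\cdot\|_{bl}$-versus-$\|\cdot\|_\infty$ requirement on $V_{\bar\mu}[h]$ is a point the paper also makes, only more tersely.
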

	\begin{proof}
		Denote $\omega_y(x) = \exp\left(\frac{-c(x, y) + f(x)}{\gamma}\right)$ and $$\phi_y(z) = [\nabla_{\theta}T_{\theta}(z)]^\top\left[-\nabla_1 c(T_\theta(z), y) + \nabla f(T_\theta(z))\right]$$ where $\nabla_{\theta}T_{\theta}(z)$ denotes the Jacobian matrix of $T_\theta(z)$ with respect to $\theta$.
		
		The Fr\'echet derivative $D_2\hat \AM(f, \theta,\mu)[h]$ can be computed by
		\begin{equation} \label{eqn_appendix_D2_A_h}
		D_2\hat \AM(f, \theta,\mu)[h] = \frac{\int_\XM \omega_y\big(T_{\theta}(z)\big) \langle \phi_y(z), h\rangle\dB\mu(z)}{\int_\XM \omega_y\big(T_{\theta}(z)\big) \dB\mu(z)}.
		\end{equation}
		Recall that $\|f\|_\infty\leq M_c$, $\|\nabla f\|_{2,\infty}\leq G_f$.
		Using the above expression we can bound
		\begin{align*}
		&\ \|\big(D_2\hat \AM(f, \theta,\mu) - D_2\hat \AM(f, \theta, \bar\mu)\big)[h]\|_\infty \notag\\
		=&\ \big\|\frac{\int_\XM \omega_y\big(T_{\theta}(z)\big) \langle \phi_y(z), h\rangle\dB\mu(z)}{\int_\XM \omega_y\big(T_{\theta}(z)\big) \dB\mu(z)} - \frac{\int_\XM \omega_y\big(T_{\theta}(z)\big) \langle \phi_y(z), h\rangle\dB\bar{\mu}(x)}{\int_\XM \omega_y\big(T_{\theta}(z)\big) \dB\bar{\mu}(x)}\big\|_\infty \notag\\
		\leq&\ \big\|\frac{\int_\XM \omega_y\big(T_{\theta}(z)\big) \langle \phi_y(z), h\rangle\dB\mu(z)}{\int_\XM \omega_y\big(T_{\theta}(z)\big) \dB\mu(z)} - \frac{\int_\XM \omega_y\big(T_{\theta}(z)\big) \langle \phi_y(z), h\rangle\dB\bar{\mu}(x)}{\int_\XM \omega_y\big(T_{\theta}(z)\big) \dB\mu(z)}\big\|_\infty \notag\\
		&\ + \big\|\frac{\int_\XM \omega_y\big(T_{\theta}(z)\big) \langle \phi_y(z), h\rangle\dB\bar{\mu}(x)}{\int_\XM \omega_y\big(T_{\theta}(z)\big) \dB\mu(z)} - \frac{\int_\XM \omega_y\big(T_{\theta}(z)\big) \langle \phi_y(z), h\rangle\dB\bar{\mu}(x)}{\int_\XM \omega_y\big(T_{\theta}(z)\big) \dB\bar{\mu}(x)}\big\|_\infty \notag\\
		=&\ \big\|\frac{\int_\XM \omega_y\big(T_{\theta}(z)\big) \langle \phi_y(z), h\rangle\left[\dB\mu(z) - \dB\bar{\mu}(x)\right]}{\int_\XM \omega_y\big(T_{\theta}(z)\big) \dB\mu(z)}\big\|_\infty \notag\\
		&\ + \big\|\frac{\int_\XM \omega_y\big(T_{\theta}(z)\big) \langle \phi_y(z), h\rangle\dB\bar{\mu}(x) \int_\XM \omega_y\big(T_{\theta}(z)\big) [\dB\bar{\mu}(x) -  \dB\mu(z)]}{\int_\XM \omega_y\big(T_{\theta}(z)\big) \dB\mu(z)\int_\XM \omega_y\big(T_{\theta}(z)\big) \dB\bar{\mu}(x)}\big\|_\infty \notag\\
		\leq&\ \exp(2M_c/\gamma)\cdot\|\omega_y\big(T_{\theta}(z)\big) \langle \phi_y(z), h\rangle\|_{bl}\cdot d_{bl}(\mu, \bar\mu) \notag\\
		&\  + \exp(5M_c/\gamma)\cdot\|\phi_y\|_\infty\cdot\|h\|_\infty\cdot\|\omega_y\|_{bl}\cdot d_{bl}(\mu, \bar\mu). \notag
		\end{align*}
		For the first term, note that $\|\omega_y\big(T_{\theta}(z)\big) \langle \phi_y(z), h\rangle\|_{bl}\leq \|\omega_y\|_{bl}\cdot\|\phi_y\|_{bl}\cdot\|h\|_\infty $ and $\|\omega_y\|_{bl}$ is bounded (see \eqref{eqn_appendix_omega_y_bl_norm}). We just need to bound $\|\phi_y\|_{bl}$.
		Under Assumption \ref{ass_Lipschitz_continuity_T} that $\|\nabla_{\theta}T_{\theta}(z)\|_{op}\leq G_T$, we clearly have that  $\|\phi_y\|_{\infty}$ is bounded.
		For $\|\phi_y\|_{lip}$, compute that 
		\begin{align*}
		\nabla_z \phi_y(z) = \nabla_z[\nabla_{\theta}T_{\theta}(z)]\times_1 \left[-\nabla_1 c(T_\theta(z), y) + \nabla f(T_\theta(z))\right] \\
		+ \nabla_{\theta}T_{\theta}(z)^\top \left[-\nabla^2_{11} c(T_\theta(z), y) + \nabla^2 f(T_\theta(z))\right]\nabla_{\theta}T_{\theta}(z).
		\end{align*}
		Recall that $\|\nabla^2 f(x)\|_{op}$ is bounded.
		Consequently, under Assumption \ref{ass_nabla_theta_nabla_z_T_bounded}, we can see that $\|\nabla_z \phi_y(z)\|$ is bounded.
		Together, $\|\phi_y\|_{bl}$ is bounded.
		As a result, we have
		\begin{equation}
		\|\big(D_2\hat \AM(f, \theta,\mu) - D_2\hat \AM(f, \theta, \bar\mu)\big)[h]\|_\infty = \OM(d_{bl}(\mu, \bar{\mu})\cdot \|h\|). \label{eqn_appendix_bound_4_1}
		\end{equation}

		Based on the above result, we can further bound
		\begin{align*}
		&\ \|\big(D_2\BM_\mu(f, \theta) - D_2\BM_{\bar \mu}(f, \theta)\big)[h]\|_\infty \\
		= &\ \|\bigg(D_1\AM\big(\hat \AM(f, \theta,\mu), \beta\big)\circ D_2 \hat \AM(f, \theta, \mu) - D_1\AM\big(\hat \AM(f, \theta, \bar\mu), \bar\beta\big)\circ D_2\hat \AM(f, \theta, \bar\mu)\bigg) [h]\|_\infty\\
		\leq&\ \|D_1\AM\big(\hat \AM(f, \theta,\mu), \beta\big)\big[\big( D_2 \hat \AM(f, \theta, \mu) - D_2\hat \AM(f, \theta, \bar\mu)\big) [h]\big]\|_\infty &&\#\#1\\
		&\ + \|\bigg(D_1\AM\big(\hat \AM(f, \theta,\mu), \beta\big) - D_1\AM\big(\hat \AM(f, \theta, \mu), \bar\beta\big)\bigg) \big[D_2\hat \AM(f, \theta, \bar\mu) [h]\big]\|_\infty &&\#\#2\\
		&\ + \|\bigg(D_1\AM\big(\hat \AM(f, \theta,\mu), \bar \beta\big) - D_1\AM\big(\hat \AM(f, \theta, \bar\mu), \bar\beta\big)\bigg) \big[D_2\hat \AM(f, \theta, \bar\mu) [h]\big]\|_\infty. &&\#\#3
		\end{align*}
		For the first term, use $\|D_1\AM\big(\hat \AM(f, \theta,\mu), \beta\big)\|_{op}\leq 1$ \eqref{eqn_proof_bound_0} and \eqref{eqn_appendix_bound_4_1} to bound
		\begin{align*}
		\#\#1
		\leq \|D_2 \hat \AM(f, \theta, \mu) [h] - D_2\hat \AM(f, \theta, \bar\mu) [h]\|_\infty
		= \OM(d_{bl}(\mu, \bar{\mu})\cdot \|h\|).
		\end{align*}
		For the second term, recall the expression of $D_2\hat \AM(f, \theta, \bar\mu) [h]$ in \eqref{eqn_appendix_D2_A_h}. 
		Under Assumption \ref{ass_bounded_c} and assume that $\|f\|_\infty\leq M_c$, one can see that $\|D_2\hat \AM(f, \theta, \bar\mu) [h]\|_{bl} = \OM(\|h\|)$.
		Further, use \eqref{eqn_proof_bound_3} and $d_{bl}(\beta, \bar{\beta}) = \OM\big(d_{bl}(\mu, \bar{\mu})\big)$ from \eqref{eqn_appendix_d_bl_alpha_d_bl_mu} to bound
		\begin{align*}
		\#\#2 = \OM(\|D_2\hat \AM(f, \theta, \bar\mu) [h]\|_{bl}\cdot d_{bl}(\beta, \bar{\beta})) = \OM(\|h\|\cdot d_{bl}(\mu, \bar{\mu})).
		\end{align*}
		For the third term, use Lemma \ref{lemma_appendix_bound_2_lemma} to bound
		\begin{align*}
		\#\#3
		= \OM(\|D_2\hat \AM(f, \theta, \bar\mu) [h]\|_\infty\cdot\|\hat \AM(f, \theta,\mu) - \hat \AM(f, \theta, \bar\mu)\|_\infty)
		= \OM(d_{bl}(\mu, \bar{\mu}) \cdot \|h\|),
		\end{align*}
		where we use $\|D_2\hat \AM(f, \theta, \bar\mu) [h]\|_\infty = \OM(\|h\|)$ and \eqref{eqn_appendix_AM_continuous_mu}.
		Altogether, we have 
		\begin{equation}
		\|D_2\BM_\mu(f, \theta)[h] - D_2\BM_{\bar \mu}(f, \theta)[h]\|_\infty = \OM(d_{bl}(\mu, \bar{\mu}) \cdot \|h\|).
		\end{equation}
	\end{proof}
	
	We are now ready to bound \#2.
	\begin{align*}
	\#2 \leq&\ \|D_2\BM_{\mu}\big(\BM_{\mu}^{l-1}(f, \theta), \theta\big)[h] - D_2\BM_{\mu}\big(\BM_{\bar \mu}^{l-1}(f, \theta), \theta\big)[h]\|_\infty \\
	&\ + \|D_2\BM_{\mu}\big(\BM_{\bar \mu}^{l-1}(f, \theta), \theta\big)[h] - D_2\BM_{\bar\mu}\big(\BM_{\bar \mu}^{l-1}(f, \theta), \theta\big)[h]\|_\infty\\
	=&\ \OM(\|\BM_{\mu}^{l-1}(f, \theta) - \BM_{\bar \mu}^{l-1}(f, \theta)\|_\infty + \|\nabla \BM_{\mu}^{l-1}(f, \theta) - \nabla \BM_{\bar \mu}^{l-1}(f, \theta)\|_{2, \infty}) \\
	&\ + \OM(d_{bl}(\mu, \bar \mu)\cdot\|h\|) \\
	=&\	\OM(d_{bl}(\mu, \bar \mu)\cdot\|h\|),
	\end{align*}
	where we use Lemma \ref{appendix_lemma_property_5} and \eqref{eqn_appendix_bound_4_2} \eqref{eqn_appendix_AM_continuous_mu} in the first equality.
	\paragraph{Combining \#1 and \#2.}
	Combining the above results, we yield
	\begin{equation*}
	\|D_2\BM^l_\mu(f, \theta)[h] - D_2\BM^l_{\bar \mu}(f, \theta)[h]\|_\infty\leq \| D_2\BM^{l-1}_\mu(f, \theta)[h] - D_2\BM^{l-1}_{\bar\mu}(f, \theta)[h]\|_\infty + \OM(d_{bl}(\mu, \bar \mu)\cdot\|h\|_\infty),
	\end{equation*}
	which, via recursion, implies that (recall that $D_2\EM_\mu(f, \theta)[h] = D_2\BM^l_\mu(f, \theta)[h]$)
	\begin{equation}
	\|D_2\EM_\mu(f, \theta)[h] - D_2\EM_{\bar \mu}(f, \theta)[h]\|_\infty = \OM(d_{bl}(\mu, \bar \mu)\cdot\|h\|).
	\end{equation}
	
	To bound the second term of \eqref{eqn_appendix_bounding_4}, compute the expression of $D_2\EM_{\bar \mu}(f, \theta)[h]$ via the chain rule: 
	\begin{equation} \label{eqn_appendix_bounding_4_2}
		D_2\EM_{\bar \mu}(f, \theta)[h] = D_1\BM_{\bar \mu}\big(\BM_{\bar \mu}^{l-1}(f, \theta), \theta\big)\big[D_2 \BM_{\bar \mu}^{l-1}(f, \theta)[h]\big] + D_2\BM_{\bar \mu}\big(\BM_{\bar \mu}^{l-1}(f, \theta), \theta\big)[h].
	\end{equation}
	Recall that $\EM_{\bar \mu}(f, \theta) = \BM_{\bar \mu}^l(f, \theta)$.
	We then show in an inductive manner that the second term of \eqref{eqn_appendix_bounding_4} is of order $\OM(d_{bl}(\mu, \bar \mu)\cdot\|h\|)$:
	For any finite $k\geq 1$, 
	\begin{equation}\label{eqn_appendix_proof_bounding_4_inductive}
		\|D_2\BM_{\bar \mu}^k(f^{\mu}_{\theta}, \theta)[h] - D_2\BM_{\bar \mu}^k(f^{\bar{\mu}}_{\theta}, \theta)[h]\|_\infty = \OM(d_{bl}(\mu, \bar \mu)\cdot\|h\|).
	\end{equation}
	For the base case when $l=1$, we only have the second term of \eqref{eqn_appendix_bounding_4_2} in $D_2\EM_{\bar \mu}(f, \theta)[h]$.
	Consequently, from Lemma \ref{appendix_lemma_property_5}, we have
	\begin{equation}\label{eqn_appendix_proof_bounding_4_inductive_base}
		\begin{aligned}
			\|D_2\BM_{\bar \mu}\big(\BM_{\bar \mu}^{l-1}(f_{\theta}^{\mu}, \theta), \theta\big) - D_2\BM_{\bar \mu}\big(\BM_{\bar \mu}^{l-1}(f_{\theta}^{\bar \mu}, \theta), \theta\big)\|_{op} \\
			 = \OM(\|\BM_{\bar \mu}^{l-1}(f_{\theta}^{\mu}, \theta) - \BM_{\bar \mu}^{l-1}(f_{\theta}^{\bar \mu}, \theta)\|_\infty + \|\nabla \BM_{\bar \mu}^{l-1}&(f_{\theta}^{\mu}, \theta) - \nabla \BM_{\bar \mu}^{l-1}(f_{\theta}^{\bar \mu}, \theta)\|_{2, \infty})
			 = \OM(d_{bl}(\mu, \bar \mu)),
		\end{aligned}
	\end{equation}
	where we use \eqref{eqn_appendix_nabla_BM_k_continuous_mu} in the second equality.\\
	Now assume that for $l=k$ the statement \eqref{eqn_appendix_proof_bounding_4_inductive} holds.
	For any two function $f, f'\in\CM(\XM)$, we bound 
	\begin{align*}
		&\ \|D_2\BM_{\bar \mu}^k(f, \theta)[h] - D_2\BM_{\bar \mu}^k(f', \theta)[h]\|_\infty\\
		\leq&\  \|D_1\BM_{\bar \mu}\big(\BM_{\bar \mu}^{l-1}(f, \theta), \theta\big)\big[D_2 \BM_{\bar \mu}^{l-1}(f, \theta)[h] - D_2 \BM_{\bar \mu}^{l-1}(f', \theta)[h]\big]\|_\infty \\
		&\ + \|\bigg(D_1\BM_{\bar \mu}\big(\BM_{\bar \mu}^{l-1}(f, \theta), \theta\big) - D_1\BM_{\bar \mu}\big(\BM_{\bar \mu}^{l-1}(f', \theta), \theta\big)\bigg)\big[D_2 \BM_{\bar \mu}^{l-1}(f', \theta)[h]\big]\|_\infty\\
		&\ + \|\bigg(D_2\BM_{\bar \mu}\big(\BM_{\bar \mu}^{l-1}(f, \theta), \theta\big) - D_2\BM_{\bar \mu}\big(\BM_{\bar \mu}^{l-1}(f', \theta), \theta\big)\bigg)[h]\|_\infty.\\
		\leq&\ \|\big(D_2 \BM_{\bar \mu}^{l-1}(f, \theta) - D_2 \BM_{\bar \mu}^{l-1}(f', \theta)\big)[h]\|_\infty && \|D_1\BM_{\bar \mu}(f, \theta)\|_{op}\leq 1\\
		&\ + \OM(\|\BM_{\bar \mu}^{l-1}(f, \theta) - \BM_{\bar \mu}^{l-1}(f', \theta)\|_\infty\cdot\|D_2 \BM_{\bar \mu}^{l-1}(f', \theta)[h]\|_\infty) && \mathrm{Lemma\ \ref{lemma_appendix_lipschitz_continuity_D_1_B}}\\
		&\ + \OM(d_{bl}(\mu, \bar \mu)\cdot\|h\|). && \eqref{eqn_appendix_proof_bounding_4_inductive_base}\\
		= &\ \OM((\|f - f'\|_\infty + \|\nabla f - \nabla f'\|_{2, \infty})\cdot\|h\|) && \mathrm{Lemma\ \ref{lemma_appendix_lipschitz_D2EM}} \\
		&\ \OM((\|f - f'\|_\infty)\cdot\|h\|) \\
		&\ \OM(d_{bl}(\mu, \bar \mu)\cdot\|h\|).
	\end{align*}
	Plug in $f = f_\theta^\mu$ and $f' = f_\theta^{\bar \mu}$ and use Lemmas \ref{theorem_continuity_f} and \ref{theorem_continuity_gradient}.
	We prove the statement \eqref{eqn_appendix_proof_bounding_4_inductive} holds for $l = k+1$.
	Consequently, we have that
	\begin{equation}
		\|D_2\EM_{\bar \mu}(f^{\mu}_{\theta}, \theta)[h] - D_2\EM_{\bar{\mu}}(f^{\bar{\mu}}_{\theta}, \theta)[h]\|_\infty = \OM(d_{bl}(\mu, \bar \mu)\cdot\|h\|).
	\end{equation}
	
	In conclusion, we have
	\begin{equation}
		\textcircled{4} = \OM(d_{bl}(\mu, \bar \mu)\cdot\|h\|).
	\end{equation}

\clearpage
\section{Experiment Details} \label{appendix_experiment}
We use the generator from DC-GAN \cite{radford2015unsupervised}.
And the adversarial ground cost $c_\xi$ in the form of
\begin{equation}
	c_\xi(x, y) = \|\phi_\xi(x) - \phi_\xi(y)\|_2^2,
\end{equation}
where $\phi_\xi:\RBB^q \rightarrow\RBB^{\hat q}$ is an encoder that maps the original data point (and the generated image) to a higher dimensional space ($\hat q > q$).
We pick $\phi_\xi$ to be an CNN with a similar structure as the discriminator of DC-GAN except that we discard the last layer which was used for classification.
Specifically, the networks used are given in Table \ref{table_encoder} and \ref{table_generator}.

We set the step size $\beta$ of SiNG to be $30$ and set the maximum allow Sinkhorn divergence in each iteration to be $0.1$. Note that the step size is set after the normalization in \eqref{eqn_update_direction}.
For Adam, RMSprop, and AMSgrad, we set all of their initial step sizes to be $1.0\times e^{-3}$, which is in general recommended by the GAN literature.
The minibatch sizes of both the real images and the generated images for each iteration are set to $3000$.
We uniformly set the $\gamma$ parameter in the objective (recall that $\FM(\alpha_\theta) = \SM_{c_\xi}(\alpha_\theta, \beta)$) and the constraint to $100$.

The code is in \url{https://github.com/shenzebang/Sinkhorn_Natural_Gradient}.

\begin{table}
	\caption{Structure of the encoder}
	\centering
	\begin{tabular}{c c c}
		Layer (type) &              Output Shape   &      Param \#\\
		Conv2d-1    & [-1, 64, 32, 32]  & 4,800     \\
		LeakyReLU-2   & [-1, 64, 32, 32]  & 0         \\
		Conv2d-3    & [-1, 128, 16, 16] & 204,800   \\
		BatchNorm2d-4  & [-1, 128, 16, 16] & 256       \\
		LeakyReLU-5   & [-1, 128, 16, 16] & 0         \\
		Conv2d-6    & [-1, 256, 8, 8]   & 819,200   \\
		BatchNorm2d-7  & [-1, 256, 8, 8]   & 512       \\
		LeakyReLU-8   & [-1, 256, 8, 8]   & 0         \\
		Conv2d-9    & [-1, 512, 4, 4]   & 3,276,800 \\
		BatchNorm2d-10 & [-1, 512, 4, 4]   & 1,024     \\
		LeakyReLU-11  & [-1, 512, 4, 4]   & 0
	\end{tabular}
\label{table_encoder}
\end{table} 

\begin{table}
	\caption{Structure of the generator}
	\centering
	\begin{tabular}{c c c}
		   Layer (type)    & Output Shape     & Param \# \\
		ConvTranspose2d-1  & [-1, 256, 4, 4]  & 262,144  \\
		  BatchNorm2d-2    & [-1, 256, 4, 4]  & 512      \\
		      ReLU-3       & [-1, 256, 4, 4]  & 0        \\
		ConvTranspose2d-4  & [-1, 128, 8, 8]  & 524,288  \\
		  BatchNorm2d-5    & [-1, 128, 8, 8]  & 256      \\
		      ReLU-6       & [-1, 128, 8, 8]  & 0        \\
		ConvTranspose2d-7  & [-1, 64, 16, 16] & 131,072  \\
		  BatchNorm2d-8    & [-1, 64, 16, 16] & 128      \\
		      ReLU-9       & [-1, 64, 16, 16] & 0        \\
		ConvTranspose2d-10 & [-1, 3, 32, 32]  & 3,072    \\
		     Tanh-11       & [-1, 3, 32, 32]  & 0
	\end{tabular}
\label{table_generator}
\end{table}

\clearpage
\section{PyTorch Implementation} \label{appendix_pytorch}
In this section, we focus on the empirical version of SiNG, where we approximate the gradient of the function $F$ by a minibatch stochastic gradient and approximate SIM by eSIM.
In this case, all components involved in the optimization procedure can be represented by finite dimensional vectors.

It is known that the stochastic gradient admits an easy implementation in PyTorch.
However, at the first sight, the computation of eSIM is quite complicated as it requires to construct two sequences $f^t$ and $g^t$ to estimate the Sinkhorn potential and the Fr\'echet derivative.
As we discussed earlier, it is well known that we can solve the inversion of a p.s.d. matrix via the Conjugate Gradient (CG) method with only matrix-vector-product operations. In particular, in this case, we no longer need to explicitly form eSIM in the computer memory.
Consequently, to implement the empirical version of SiNG using CG and eSIM, one can resort to the auto-differential mechanism provided by PyTorch:
First, we use existing PyTorch package like geomloss\footnote{https://www.kernel-operations.io/geomloss/} to compute the tensor $\fB$ representing the Sinkhorn potential $f_\theta^\epsilon$. Note the the sequence $f^t$ is constructed implicitly by calling geomloss.
We then use the ".detach()" function in PyTorch to maintain only the value of the $\fB$ while discarding all of its "grad\_fn" entries.
We then enable the "autograd" mechanism is PyTorch and run several loops of Sinkhorn mapping $\AM(f, \alpha_\theta)$ ($\AM(f, \alpha_{\theta^t})$) so that the output tensor now records all the dependence on the parameter $\theta$ via the implicitly constructed computational graph.
We can then easily compute the matrix-vector-product use the Pearlmutter’s algorithm (Pearlmutter, 1994).

\clearpage
\end{document}